\documentclass[11pt]{article}
\usepackage{amssymb, amsmath, amsthm}

\usepackage[margin=1in]{geometry}
\usepackage{amsfonts,mathtools,mathrsfs,mathtools,color}
\usepackage[colorlinks]{hyperref}
\usepackage{graphicx}
\usepackage{xcolor}
\usepackage{tocloft}
\usepackage{bigints}
\usepackage{makecell}
\usepackage{algorithm}
\usepackage{algorithmic}
\usepackage{caption}
\captionsetup[algorithm]{labelsep=colon}
\usepackage{tikz}
\usepackage{pgfplots}
\pgfplotsset{compat=1.18}
\usepgfplotslibrary{groupplots}
\usepackage{centernot}
\usepackage{bm}
\usepackage{nicefrac}
\usepackage{dsfont}
\usepackage{enumitem}
\usepackage{booktabs}
\usepackage{xspace}

\usepackage{caption}
\captionsetup{justification=justified,singlelinecheck=false}

\usetikzlibrary{math}
\usetikzlibrary{decorations.markings, calc, arrows} 
\usetikzlibrary{arrows.meta}

\definecolor{coolblack}{rgb}{0.0, 0.0, 0.230}
\hypersetup{
  colorlinks,
  citecolor=coolblack,
  linkcolor=coolblack,
  urlcolor=coolblack}

\newtheorem{definition}{Definition}
\newtheorem{remark}{Remark}
\newtheorem{proposition}{Proposition}
\newtheorem{theorem}{Theorem}
\newtheorem*{theorem*}{Theorem}
\newtheorem{lemma}{Lemma}
\newtheorem{corollary}{Corollary}

\newtheorem{assumption}{Assumption}

\usepackage{romanbar}

\newcommand{\rn}[1]{\Romanbar{#1}}
\newcommand{\lb}[1]{{\lfloor #1 \rfloor_h}}
\newcommand{\ub}[1]{{\lceil #1 \rceil_h}}

\newcommand{\nll}{\centernot{\ll}}
\newcommand{\op}{\mathsf{op}}
\newcommand{\dvert}{\mathbin{\|}}

\newcommand{\bP}{\bm{\mathrm{P}}}
\newcommand{\bQ}{\bm{\mathrm{Q}}}

\renewcommand{\P}{\mathcal{P}}
\newcommand{\R}{\mathbb{R}} 
\newcommand{\N}{\mathcal{N}}

\newcommand{\E}{\mathop{\mathbb{E}}}

\newcommand{\mfm}{\mathfrak{m}}

\renewcommand{\part}[2]{\frac{\partial #1}{\partial #2}}

\newcommand{\bbN}{\mathbb{N}}

\newcommand{\Tr}{\mathsf{Tr}}

\newcommand{\KL}{\mathsf{KL}}

\newcommand{\TV}{\mathsf{TV}}

\newcommand{\law}{\mathcal{L}}

\newcommand{\Lip}{\mathsf{Lip}}

\newcommand{\sfD}{\mathsf{D}}

\newcommand{\sfR}{\mathsf{R}}
\newcommand{\sfP}{\mathsf{P}}
\newcommand{\sfc}{\mathsf{c}}
\newcommand{\sfj}{\mathsf{j}}
\newcommand{\sfp}{\mathsf{p}}
\newcommand{\sR}{\mathsf{R}}

\newcommand{\MI}{\mathsf{MI}}

\newcommand{\cC}{\mathcal{C}}
\newcommand{\HF}{\mathsf{HF}}
\newcommand{\uHMC}{\textup{uHMC}\xspace}
\newcommand{\uhmc}{\textup{uHMC}\xspace}
\newcommand{\uHMCv}{\textup{uHMC-v}\xspace}
\newcommand{\uhmcv}{\textup{uHMC-v}\xspace}

\newcommand{\uhmcs}{\textup{uHMC-s}\xspace}
\newcommand{\eHMC}{\textup{eHMC}\xspace}
\newcommand{\ehmc}{\textup{eHMC}\xspace}

\newcommand{\gammad}{\boldsymbol{\gamma}_d}

\renewcommand{\d}{\,d}
\newcommand{\dt}{\,dt}

\newcommand{\dx}{\,dx}

\newcommand{\ds}{\,ds}
\newcommand{\dz}{\,dz}

\setcounter{tocdepth}{1}
\setlength{\cftbeforesecskip}{0.7pt}
\setlength{\cftbeforesubsecskip}{0pt}
\setlength{\cftbeforesubsubsecskip}{0pt}

\title{Tail-Sensitive KL and R\'enyi Convergence of\\ Unadjusted Hamiltonian Monte Carlo via One-Shot Couplings}

\author{Nawaf Bou-Rabee\thanks{
   Department of Mathematical Sciences, Rutgers University, and Center for Computational Mathematics, Flatiron Institute.
    \texttt{nawaf.bourabee@rutgers.edu}
}
\and Siddharth Mitra\thanks{Department of Computer Science, Yale University. \texttt{siddharth.mitra@yale.edu}}
\and Andre Wibisono\thanks{Department of Computer Science, Yale University. \texttt{andre.wibisono@yale.edu}}
}

\date{January 2026}

\begin{document}

\maketitle

\begin{abstract}
Hamiltonian Monte Carlo (HMC) algorithms are among the most widely used sampling methods in high dimensional settings, yet their convergence properties are poorly understood in divergences that quantify relative density mismatch, such as Kullback–Leibler (KL) and~R\'enyi divergences. These divergences naturally govern acceptance probabilities and warm-start requirements for Metropolis-adjusted Markov chains. In this work, we develop a  framework for upgrading Wasserstein convergence guarantees for unadjusted Hamiltonian Monte Carlo (\uhmc) to guarantees in tail-sensitive KL and R\'enyi divergences. Our approach is based on \emph{one-shot~couplings}, which we use to establish a regularization property of the \uhmc transition kernel. This regularization allows Wasserstein--$2$ mixing-time and asymptotic bias bounds to be lifted to KL divergence, and analogous Orlicz–Wasserstein bounds to be lifted to R\'enyi divergence, paralleling earlier work of Bou-Rabee and Eberle (2023) that upgrade Wasserstein--$1$ bounds to total variation distance via kernel smoothing.  As a consequence, our results provide quantitative control of relative density mismatch, clarify the role of discretization bias in strong divergences, and yield principled guarantees relevant both for unadjusted sampling and for generating warm starts for Metropolis-adjusted Markov chains.
\end{abstract}

\tableofcontents

\section{Introduction}\label{sec:Introduction}

Let $\nu$ be an absolutely continuous probability measure on $\R^d$ with density proportional to $e^{-f}$. Sampling from $\nu$ is a fundamental algorithmic task arising in many applications, including computational statistics and machine learning.
When exact sampling from $\nu$ is infeasible, a standard approach is Markov chain Monte Carlo (MCMC), which constructs a suitable Markov chain $(X_k)_{k \ge 0}$ whose iterates are used to generate approximate samples from $\nu$ \cite{RobertCassella,asmussen2007,Liu,DoucMoulinesPriouretSoulier}.

A common paradigm in the design of modern MCMC algorithms is to begin with a continuous-time stochastic process that is ergodic with respect to $\nu$ and then discretize its dynamics; see, for example,
\cite{roberts1996exponential,mattingly2002ergodicity,durmus2024asymptotic}.
Such discretizations typically introduce a \emph{bias}:  when the resulting discrete-time Markov chain has an invariant distribution $\tilde \nu$, it generally differs from $\nu$.

A classical approach to eliminating this discretization bias is to incorporate a Metropolis–Hastings correction \cite{metropolis1953equation, hastings1970monte}, yielding an \emph{adjusted} Markov chain with invariant distribution $\nu$. In this work, however, we focus on \emph{unadjusted} schemes. 
The motivation for this choice is multifaceted.

First, in certain applications, the target distribution $\nu$ is itself defined only up to an approximation, so that Metropolis correction targets the invariant distribution of an approximate model rather than the idealized target.
This occurs, for example, in interacting particle approximations of nonlinear measures \cite{Sz91,delmoral2004feynman}; in finite-dimensional approximations of measures on infinite-dimensional spaces (as in Bayesian inverse problems and conditioned diffusions) \cite{stuart2010inverse,hairer2005analysis}; and in statistical models in which the likelihood is defined implicitly through a numerically discretized ordinary or partial differential equation \cite{matsuda2021estimation}.
In such settings, Metropolis adjustment does not remove --- and may not meaningfully address --- the dominant sources of bias, and unadjusted MCMC methods are therefore natural objects of study.

Second, even when exact Metropolis correction is available in principle, both theoretical analyses and empirical experience show that the performance of adjusted Markov chains can depend sensitively on the choice of initial distribution.
If the chain is initialized in the tail of the target distribution, proposals are typically local and acceptance probabilities can be extremely small, particularly in high dimensions.
Consequently, the chain may require a large number of iterations to reach the typical set of $\nu$, resulting in slow initial convergence and degraded practical performance.

To obtain meaningful convergence guarantees for adjusted Markov chains, and to avoid degeneracy of acceptance probabilities in practice, it is therefore crucial to control how much probability mass the chain assigns to regions that are rare under the target distribution $\nu$.
This is commonly achieved through a \emph{warm-start} condition, which requires the initial distribution to be close to $\nu$ in a tail-sensitive divergence such as Kullback--Leibler (KL) or R\'enyi divergence.
Such divergences control relative density mismatch and prevent the initialization from placing excessive mass in regions where $\nu$ is extremely small.  Such warm-start assumptions are standard in the theoretical analysis of adjusted MCMC methods
\cite{dwivedi2019log,chen2023does,altschuler2024faster,srinivasan2024fast,srinivasan25high,kook2025faster,kook2025sampling}.

By contrast, warm-start conditions formulated only in weaker metrics such as total variation distance do not control relative density in the tails. Total variation distance controls discrepancies additively (through absolute differences in probability mass) whereas KL and R\'enyi divergences control discrepancies multiplicatively, through ratios of densities.
 As a result, a distribution may be close to $\nu$ in total variation while still assigning nonnegligible mass to regions that are exponentially unlikely under $\nu$, a mismatch that can lead to vanishing acceptance probabilities and bottlenecks in adjusted chains (see Figure~\ref{fig:DivComparison}).

Obtaining a warm start directly for an adjusted Markov chain is often difficult, as it typically requires prior knowledge of the typical set of $\nu$.
Unadjusted Markov chains therefore play a natural auxiliary role: when they admit quantitative convergence guarantees in tail-sensitive divergences, they can be used to efficiently and rigorously generate warm initializations for adjusted chains.

From this perspective, unadjusted Markov chains play a dual role.
They provide natural sampling schemes in settings where the target is itself approximate, and they serve as effective mechanisms for generating warm starts for subsequent adjusted algorithms.
This two-phase strategy --- first running an unadjusted chain from a cold start to enter the typical set, and then switching to an adjusted chain for exact sampling --- has proved effective in both theory and practice
\cite{durmus2024asymptotic,altschuler2024faster,apers2024hamiltonian,sherlock2021efficiency,golightly2015delayed,stuart2019two}.

These considerations lead naturally to two quantitative questions for unadjusted Markov chains, which we study in tail-sensitive KL and R\'enyi divergences.
First, how rapidly does the law of $X_k$ converge to its invariant distribution $\tilde \nu$?
Second, how close is $\tilde \nu$ to the intended target $\nu$?
The former concerns mixing-time guarantees, while the latter concerns asymptotic bias.
Together, they determine both the computational cost required to reach stationarity and the accuracy with which the chain approximates the target distribution.

In this work, we assume that $f$ is differentiable and that its gradient can be evaluated.
In this context, we establish mixing-time  and asymptotic bias bounds for unadjusted Hamiltonian Monte Carlo (\uhmc) in KL and R\'enyi divergences.  Guarantees formulated in these tail-sensitive divergences are particularly well suited for warm-starting Metropolis-adjusted Markov chains, as they provide quantitative control of relative density mismatch prior to the introduction of a Metropolis correction.

\begin{figure}[t]
    \centering
    \pgfplotsset{
        every axis/.append style={
            width=\linewidth,       
            height=5.5cm,
            axis lines=left,        
            xmin=-5, xmax=15,
            xtick={0, 10},          
            xlabel={},            
            label style={font=\small},
            tick label style={font=\small}
        }
    }
    \hspace{-1.5em}
    \begin{minipage}{0.48\textwidth} 
        \centering                   
        \begin{tikzpicture}
            \begin{axis}[
                ymin=0, ymax=0.45,
                ylabel={density},
                title={}
            ]
                \addplot[domain=-5:14.7, samples=200, color=black, very thick] 
                    {exp(-0.5*x^2)/sqrt(2*pi)};
                
                \addplot[domain=-5:14.7, samples=200, color=red, dashed, very thick] 
                    {0.99*exp(-0.5*x^2)/sqrt(2*pi) + 0.01*exp(-0.5*(x-10)^2)/sqrt(2*pi)};
            \end{axis}
        \end{tikzpicture}
    \end{minipage}
    \hspace{0.5em}
    \begin{minipage}{0.48\textwidth} 
        \centering
        \begin{tikzpicture}
            \begin{axis}[
                ymode=log,       
                ymin=1e-30, ymax=1,
                ylabel={density (log-scale)},
                ytick={1e-30, 1e-15, 0.1},
                yminorticks=false,
                title={}
            ]
                \addplot[domain=-5:14, samples=500, color=black, very thick] 
                    {exp(-0.5*x^2)/sqrt(2*pi)};
                
                \addplot[domain=-5:14, samples=500, color=red, dashed, very thick] 
                    {0.99*exp(-0.5*x^2)/sqrt(2*pi) + 0.01*exp(-0.5*(x-10)^2)/sqrt(2*pi)};
            \end{axis}
        \end{tikzpicture}
    \end{minipage}
    \hfill

    \vspace{0.5em}
    
    \begin{minipage}[t]{0.48\textwidth}
        \centering
        {\small (a) linear--$y$ plot}
    \end{minipage}
    \hfill
    \begin{minipage}[t]{0.44\textwidth}
        \centering
        {\small (b) semilog--$y$ plot}
    \end{minipage}

    \caption{Distributions $\pi = \mathcal{N}(0,1)$ plotted as [\protect\tikz[baseline=-0.6ex]{\protect\draw[black, very thick] (0,0)--(0.5,0);}] and $\mu = 0.99\mathcal{N}(0,1) + 0.01\mathcal{N}(10,1)$ plotted as [\protect\tikz[baseline=-0.6ex]{\protect\draw[red, dashed, very thick] (0,0)--(0.5,0);}] depicted on a (a) linear--$y$ axis, and (b) semilog--$y$ axis.  
    For these distributions, $\TV(\mu, \pi) \leq 0.01$, $\KL(\mu \dvert \pi) \geq 0.4$, and $\sfR_2(\mu \dvert \pi) \geq 90$. The linear--$y$ scale shows the large overlap (small TV distance) and the semilog--$y$ scale highlights the discrepancy in the tails (large KL divergence and Rényi--$2$ divergence).}
    \label{fig:DivComparison}
\end{figure}

\subsection{Main Results}\label{subsec:MainResults}

Hamiltonian Monte Carlo algorithms generate proposals by approximately simulating Hamiltonian dynamics associated with the target density $\nu \propto e^{-f}$. In continuous time, these dynamics preserve $\nu$ exactly and give rise to \emph{exact} HMC (\ehmc). In practice, however, the Hamiltonian flow must be discretized, leading to \emph{unadjusted} HMC schemes whose invariant distribution generally differs from $\nu$. Throughout this paper, we study unadjusted HMC implemented using the \emph{velocity Verlet} (or leapfrog) integrator~\cite{bou2018geometric,hairer2006geometric}. This integrator is the standard discretization used in HMC due to its simplicity, time-reversibility, and symplectic structure. We denote the resulting Markov chain by \uhmcv. Each transition of \uhmcv\ consists of (i) a Gaussian velocity refreshment followed by (ii) several deterministic Verlet steps approximating the Hamiltonian flow.

Our main results establish quantitative convergence guarantees for \uhmcv\ in Kullback--Leibler (KL) divergence and R\'enyi divergence. These guarantees take two complementary forms: mixing-time bounds, which control convergence to the invariant distribution of \uhmcv, and asymptotic bias bounds, which quantify the discrepancy between this invariant distribution and the target distribution $\nu$. The results are presented in Sections~\ref{subsec:Main-KL} and~\ref{subsec:Main-Renyi}.

\medskip
\noindent\textbf{Regularization via one-shot couplings.}
The key technical ingredient underlying our analysis is a \emph{regularization} (or smoothing) property of the \uhmcv\ transition kernel in tail-sensitive divergences. This property is established using \emph{one-shot couplings}, which allow us to relate Wasserstein-type distances between initial distributions to stronger divergences after a single Markov transition.

To formalize this notion, let $\bP$ be a Markov kernel on $\R^d$, let $W$ denote a Wasserstein distance associated with a given cost function, and let $\sfD$ be a probability divergence or distance. We say that $\bP$ is \emph{regularizing} in $\sfD$ if there exists a constant $C>0$ such that
\begin{equation}\label{eq:Regularization}
    \sfD(\mu \bP \dvert \pi \bP) \;\le\; C\, W(\mu,\pi),
\end{equation}
for all probability measures $\mu,\pi \in \P(\R^d)$.

Inequality~\eqref{eq:Regularization} captures the idea that the stochastic update $\bP$ smooths singular or poorly aligned initial distributions. For example, if~\eqref{eq:Regularization} holds in KL or R\'enyi divergence, then applying $\bP$ to two distinct Dirac measures $\delta_x$ and $\delta_y$ yields a finite divergence, whereas the divergence is infinite before the update. Additional consequences follow from the variational representations of $\sfD$. For instance, when~\eqref{eq:Regularization} holds in total variation distance, the kernel $\bP$ maps bounded (possibly discontinuous) test functions to Lipschitz functions; see Section~\ref{app:Regularization}.

\medskip
\noindent\textbf{Why regularization is nontrivial for \uhmcv?}
The regularization effect in \uhmcv\ arises from the Gaussian velocity refreshment at the beginning of each transition. However, establishing~\eqref{eq:Regularization} is technically delicate because the injected randomness is subsequently propagated through multiple deterministic Verlet steps. In contrast to schemes where Gaussian noise is injected explicitly at each discretization step --- such as splitting schemes for kinetic Langevin dynamics~\cite[Propositions~3 and~22]{monmarche2021high} or exponential Euler integrators~\cite[Lemma~4.2]{altschuler2025shifted} --- the randomness in \uhmcv\ must be analyzed after being transported through a composition of nonlinear maps.

\medskip
\noindent\textbf{Cross-regularization and asymptotic bias.}
To obtain asymptotic bias bounds, it is not sufficient to compare two trajectories of \uhmcv. Instead, we must compare a single step of \uhmcv\ to a single step of \ehmc. This leads to a \emph{cross-regularization} problem, in which we bound divergences of the form
\[
    \sfD(\mu \bP \dvert \pi \bQ),
\]
where $\bP$ denotes the \uhmcv\ kernel and $\bQ$ denotes the \ehmc\ kernel.

This comparison is unavoidable because KL and R\'enyi divergences neither satisfy a triangle inequality nor are symmetric.  The required bounds are established in Lemmas~\ref{lem:KL_Target_OneStepGeneral} and~\ref{lem:Renyi_Target_OneStepGeneral} using a novel one-shot coupling construction (Figure~\ref{fig:coupling}(b)). To our knowledge, this type of cross-regularization analysis has not previously appeared in the study of \uhmcv.

By contrast, when working in total variation distance --- which is symmetric and satisfies a triangle inequality --- the cross comparison task reduces to a simpler same-initial-point comparison; see~\cite[Figure~1(b) and Theorem~7]{bou2023mixing}. Related regularization bounds comparing discretized and exact dynamics have been obtained for kinetic Langevin dynamics in KL divergence~\cite[Lemma~4.2]{altschuler2025shifted} and for the unadjusted Langevin algorithm in KL and R\'enyi divergence~\cite[Lemmas~4.7 and~C.2]{altschuler2024shifted}, typically via Girsanov’s theorem. Our approach instead relies on coupling arguments and accommodates the more intricate structure of the discrete flow of the Verlet integrator.

\medskip
\noindent\textbf{End-to-end guarantees and modularity.}
By combining the regularization and cross-regularization bounds with existing Wasserstein--$2$ and Orlicz--Wasserstein contractivity estimates for \uhmcv, we obtain end-to-end mixing-time and asymptotic bias bounds in KL and R\'enyi divergence (Section~\ref{subsubsec:Examples-Verlet}).

An important feature of our framework is its modularity. The Wasserstein contraction need not be driven by \uhmcv\ itself: any discrete-time Markov kernel satisfying suitable Wasserstein contractivity assumptions may be used, provided that the final step of the chain is an \uhmcv\ update. We illustrate this flexibility by applying our results to the stratified integrator for \uhmc\ introduced in~\cite{bou2025unadjusted} (Section~\ref{subsubsection:Examples-stratified}).

The resulting gradient complexity depends on the choice of kernel driving the Wasserstein contraction; see Corollaries~\ref{cor:complexity-KL-verlet} and~\ref{cor:complexity-KL-stratified}. Our framework also allows different kernels to be used for Wasserstein contraction and for regularization. In Section~\ref{sec:Discussion}, we outline how unadjusted Langevin dynamics may be used as a regularizing kernel and discuss the associated trade-offs.

\medskip
\noindent\textbf{Summary of contributions.}
Our main contributions are as follows:
\begin{itemize}[noitemsep]
    \item We establish mixing-time guarantees for \uhmcv\ in KL divergence (Theorem~\ref{thm:KLMixing}), verifying the required Wasserstein--$2$ assumptions for strongly log-concave targets.
    \item We derive asymptotic bias bounds for \uhmc\ in KL divergence (Theorem~\ref{thm:KLTarget}), applicable whenever the final step of the chain is \uhmcv, and instantiate these results for \uhmcv\ and for the stratified integrator \uhmcs~\cite{bou2025unadjusted}. 
    \item We establish mixing-time guarantees for \uhmcv\ in R\'enyi divergence (Theorem~\ref{thm:RenyiMixing}) based on Orlicz--Wasserstein contractivity.
    \item We derive asymptotic bias bounds for \uhmc\ in R\'enyi divergence (Theorem~\ref{thm:RenyiTarget}), again assuming a final \uhmcv\ update.
    \item We apply the KL mixing-time bounds to study information contraction along \uhmcv\ for strongly log-concave targets (Section~\ref{subsec:InformationContraction}).
\end{itemize}

\subsection{Consequences of Tail-Sensitive Convergence Guarantees}

Kullback--Leibler (KL) divergence (Definition~\ref{def:KLDivergence}) and R\'enyi divergence (Definition~\ref{def:RenyiDivergence}) are natural metrics in which to formulate convergence guarantees, as they control a range of other commonly used statistical distances. In particular, they imply bounds on total variation distance via Pinsker’s inequality, on Wasserstein distance under suitable transportation inequalities, and on $\chi^2$ divergence. These relationships are summarized by the chain of inequalities
\[
2\TV^2(\mu,\pi) \le \KL(\mu \dvert \pi) \le \log \mkern-0.5mu \bigl(1+\chi^2(\mu \dvert \pi)\bigr) \le \sfR_q(\mu \dvert \pi),
\]
valid for all probability measures $\mu,\pi$ and all $2 \le q < \infty$. Beyond these implications, convergence guarantees in KL and R\'enyi divergence have several further consequences:
\begin{itemize}[noitemsep]
\item \emph{Warm-start generation for adjusted samplers.} Control in tail-sensitive divergences provides quantitative bounds on relative density mismatch and ensures that the output of an unadjusted chain yields an initial distribution suitable for Metropolis-adjusted sampling; see, for example, 
\cite{dwivedi2019log,chen2023does,altschuler2024faster,srinivasan2024fast,srinivasan25high,kook2025faster,kook2025sampling}.
\item \emph{Dependence along Markov chains.} KL convergence controls information contraction along the chain and yields quantitative bounds on temporal dependence, thereby characterizing the rate at which approximately independent samples are obtained; see~\cite{liang25PhiMI} and Section~\ref{subsec:InformationContraction}.

\item \emph{Differential privacy.} R\'enyi divergence plays a central role in the analysis of differentially private iterative algorithms, where privacy guarantees are obtained via compositional and regularization properties of Markov kernels \cite{mironov2017renyi,feldman2018privacy,ganesh2020faster,chourasia2021differential,altschuler2024privacy,bravo2025mixing}.
\end{itemize}

\subsection{Theoretical Context and Related Work on Hamiltonian Monte Carlo}

Establishing rigorous convergence guarantees for algorithms based on Hamiltonian dynamics --- such as exact Hamiltonian Monte Carlo (\ehmc)~\cite{duane1987hybrid,neal2011mcmc}, unadjusted Hamiltonian Monte Carlo (\uhmc), Metropolis-adjusted HMC, and the No-U-Turn Sampler (NUTS)~\cite{hoffman2014no} --- has been an active area of research. Nevertheless, the theoretical understanding of Hamiltonian-based sampling methods remains significantly less developed than that of Langevin dynamics and their discrete-time counterparts, particularly when convergence is measured in strong, tail-sensitive divergences.

For discretizations of overdamped Langevin dynamics, including the unadjusted Langevin algorithm (uLA)~\cite{roberts1996exponential,roberts1998optimal} and the proximal sampler~\cite{lee2021structured}, convergence guarantees are available under relatively mild assumptions in Kullback--Leibler divergence, R\'enyi divergence, and even broader families of divergences such as $\Phi$-divergences~\cite{chafai2004entropies}; see, for example,~\cite{VW23,chewi2024analysis,mitra25fast,chen2022improved} and the survey~\cite{chewi2023log}. These results provide detailed control of relative density mismatch and have played a central role in understanding warm-start requirements and transient behavior for Langevin-based samplers.

In contrast, existing guarantees for Hamiltonian-based methods are more limited. Convergence results for \ehmc\ have been established primarily in Wasserstein distance~\cite{chen2019optimal,mangoubi2021mixing}. For \uhmc, convergence guarantees are available in Wasserstein distance and total variation distance~\cite{bou2020coupling,bou2023mixing,bou2023convergence}. For Metropolis-adjusted HMC, convergence has been analyzed in total variation distance~\cite{chen2023does} and in $\chi^2$ divergence~\cite{chen2020fast}. Results for NUTS are currently restricted to total variation distance for Gaussian targets~\cite{bou2024mixing}.

Asymptotic bias bounds for \uhmc\ in KL divergence were obtained in~\cite{camrud2023second} by analyzing the decay of a modified entropy functional. In the setting of strongly log-concave targets, that work yields an asymptotic KL bias of order $O(h^4 d^4)$, where $h$ denotes the Verlet step size and $d$ the dimension. By contrast, our results yield a bias of order $O(h^4 d^3)$ under comparable assumptions (see Section~\ref{subsubsec:Examples-Verlet} and Table~\ref{table:complexity}). To the best of our knowledge, no prior work establishes convergence guarantees for \uhmc --- or even \ehmc ---  in R\'enyi divergence.

This gap in the theory is particularly striking given the widespread practical use of Hamiltonian-based samplers, especially in high-dimensional Bayesian inference. HMC and its variants form the computational backbone of modern probabilistic programming systems such as Stan~\cite{carpenter2017stan} and PyMC~\cite{patil2010pymc}, where algorithmic performance is known to be highly sensitive to initialization and tail behavior.

\emph{The present work addresses this gap by developing a systematic framework for tail-sensitive convergence analysis of unadjusted Hamiltonian Monte Carlo.} By upgrading existing Wasserstein guarantees to KL and R\'enyi divergence via one-shot couplings and kernel regularization, our results provide quantitative control of relative density mismatch, clarify the role of discretization bias in strong divergences, and supply rigorous guarantees relevant both for unadjusted sampling and for warm-starting Metropolis-adjusted HMC.

\section{Preliminaries}

Let $\P(\R^d)$ denote the set of probability distributions on $\R^d$.
When a probability distribution $\rho \in \P(\R^d)$ is absolutely continuous with respect to Lebesgue measure on $\R^d$, we identify $\rho$ with its density (with respect to Lebesgue measure), which we also denote by $\rho : \R^d \to [0,\infty)$.
Throughout the paper, we assume that all densities satisfy the regularity conditions required for the statements and results to be well defined.

We let $\delta_x \in \P(\R^d)$ denote the Dirac distribution at a point $x \in \R^d$, and we denote by $\Gamma(\mu,\pi)$ the set of couplings between two probability distributions $\mu,\pi \in \P(\R^d)$.
The standard Gaussian distribution $\mathcal{N}(0,I)$ on $\R^d$ is denoted by $\gammad$.
The law of a random variable $X$ is denoted by $\law(X)$.
For any $\mu \in \P(\R^d)$ and integer $k \ge 1$, we define the $k$th moment
\[
m_k(\mu) \coloneqq \E_{X \sim \mu}\bigl[\|X\|^k\bigr].
\]

\subsection{Probability distances and divergences}

We begin by recalling the definition of Wasserstein--$p$ distance between two probability distributions. Our results in KL divergence (Theorems~\ref{thm:KLMixing} and~\ref{thm:KLTarget}) upgrade Wasserstein--$2$ guarantees to KL divergence.

\begin{definition}[Wasserstein--$p$ distance]\label{def:WassersteinDistance}
    Let $\mu, \pi \in \P(\R^d)$ and let $1 \leq p < \infty$. The Wasserstein--$p$ distance between $\mu$ and $\pi$ is defined by
    \begin{equation*}
        W_p(\mu, \pi) \coloneqq \left( \inf_{\gamma \in \Gamma(\mu, \pi)} \E_{(x,y) \sim \gamma} \Big[ \|x-y\|^p \Big] \right)^{\frac{1}{p}}.
    \end{equation*}
    
\end{definition}

The Rényi divergence results (Theorems~\ref{thm:RenyiMixing} and~\ref{thm:RenyiTarget}) strengthen Orlicz-Wasserstein guarantees by upgrading them to bounds in Rényi divergence.  To formalize these notions, we first recall the definition of the Orlicz norm of a random variable~\cite[Chapter~2.7.1]{vershynin2018high}, and then use it to define the Orlicz-Wasserstein distance between probability measures (Definition~\ref{def:OrliczWassersteinDistance}).

\begin{definition}[Orlicz norm]\label{def:OrliczNorm}
    Let $\psi: \R_{\geq 0} \to \R_{\geq 0}$ be an increasing, convex function satisfying $\psi(0) = 0$ and $\psi(x) \to \infty$ as $x \to \infty$.
    The Orlicz norm of a random variable $X \sim \mu$ with respect to the Orlicz function $\psi$ is defined as
    \[
    \|X\|_{\psi} \coloneqq \inf \left\{ \lambda > 0 : \E_{X \sim \mu} \left[ \psi \left(\frac{\|X\|}{\lambda}\right) \right] \leq 1   \right\} \,.
    \]
\end{definition}

\begin{remark}
Different choices of the Orlicz function $\psi$ encode different tail behaviors
of the random variable $X$. For example, if $\psi(x)=e^{x}-1$, finiteness of
$\|X\|_{\psi}$ is equivalent to $X$ having sub-exponential tails, whereas if
$\psi(x)=e^{x^{2}}-1$, finiteness of $\|X\|_{\psi}$ corresponds to sub-Gaussian
tail behavior. See \cite[Chapters~2.5 and~2.7]{vershynin2018high} for a detailed
discussion of these equivalences.
\end{remark}

\begin{remark}
Throughout this paper, we fix the Orlicz function
\[
\psi(x) \coloneqq e^{x^{2}} - 1,
\]
so that the associated Orlicz norm controls sub-Gaussian tails.
\end{remark}

The moment generating function (m.g.f.) of a real-valued random variable $X \sim \mu$ is the function $M_X: \R \to \R_{>0}$ defined by $M_X(t) \coloneqq \E_{X \sim \mu}[e^{tX}]$, whenever the expectation is finite. The following lemma, which bounds the m.g.f. of $\|X\|^2$ under a finite Orlicz norm assumption, will be useful in our analysis.

\begin{lemma}\label{lem:Helper_Orlicz}
    Suppose that $\|X\|_{\psi} \leq K$, where $\psi(x) = e^{x^2}-1$. Then for $c \in [0, K^{-2}]\,,$
    \[
    \E \left[ e^{c \|X\|^2} \right] \leq 2^{cK^2}.
    \]
\end{lemma}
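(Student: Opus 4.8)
The plan is to unwind the definition of the Orlicz norm for the specific choice $\psi(x) = e^{x^2}-1$ and then interpolate. First, I would note that $\E[\psi(\|X\|/\lambda)] \le 1$ is literally the statement $\E[\exp(\|X\|^2/\lambda^2)] \le 2$. Since the set of admissible $\lambda$ is an up-set (the integrand is nonincreasing in $\lambda$), the hypothesis $\|X\|_\psi \le K$ yields, after a short limiting argument, the single moment bound
\[
\E\bigl[\exp(\|X\|^2/K^2)\bigr] \le 2.
\]
For the limiting argument: by definition there is a sequence $\lambda_n \downarrow \|X\|_\psi$ with $\E[\exp(\|X\|^2/\lambda_n^2)] \le 2$; if $\|X\|_\psi < K$ one may simply pick some $\lambda_n \le K$ and use that $t \mapsto e^{t}$ is increasing together with $\|X\|^2/K^2 \le \|X\|^2/\lambda_n^2$, while if $\|X\|_\psi = K$ one applies the monotone convergence theorem, since $\exp(\|X\|^2/\lambda_n^2) \uparrow \exp(\|X\|^2/K^2)$ pointwise.

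The second step is a one-line interpolation. Given $c \in (0, K^{-2}]$, set $\theta \coloneqq cK^2 \in (0,1]$, so that
\[
\exp(c\|X\|^2) = \bigl(\exp(\|X\|^2/K^2)\bigr)^{\theta}.
\]
Because $t \mapsto t^{\theta}$ is concave on $[0,\infty)$ for $\theta \in (0,1]$, Jensen's inequality (equivalently, the power-mean inequality) gives
\[
\E\bigl[\exp(c\|X\|^2)\bigr] \le \bigl(\E[\exp(\|X\|^2/K^2)]\bigr)^{\theta} \le 2^{\theta} = 2^{cK^2},
\]
which is the claimed estimate. The endpoint $c = 0$ is the trivial identity $\E[1] = 1 = 2^{0}$.

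I do not expect any real obstacle here: the only mildly technical point is the passage from the infimum defining $\|X\|_\psi$ to the explicit inequality at $\lambda = K$, which the monotone convergence theorem settles cleanly, and the rest is just concavity of the power function. One could alternatively phrase the second step via Hölder's inequality with exponents $1/\theta$ and $1/(1-\theta)$, but the Jensen formulation is the most transparent.
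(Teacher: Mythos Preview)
Your proposal is correct and follows essentially the same approach as the paper: extract $\E[\exp(\|X\|^2/K^2)]\le 2$ from the Orlicz norm bound, then apply Jensen's inequality with the concave map $t\mapsto t^{cK^2}$. Your treatment is slightly more careful in justifying the passage from the infimum defining $\|X\|_\psi$ to the inequality at $\lambda=K$, which the paper simply asserts.
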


\begin{proof}
By definition of the Orlicz norm and the assumption $\|X\|_{\psi}\le K$,
\[
\E\!\left[e^{\|X\|^2/K^2}\right] \le 2.
\]
For $c\in[0,K^{-2}]$, we write
\[
\E\!\left[e^{c\|X\|^2}\right]
=
\E\!\left[\left(e^{\|X\|^2/K^2}\right)^{cK^2}\right].
\]
Since $cK^2\in[0,1]$ and the function $t\mapsto t^{cK^2}$ is concave on
$(0,\infty)$, Jensen’s inequality yields
\[
\E\!\left[e^{c\|X\|^2}\right]
\le
\left(\E\!\left[e^{\|X\|^2/K^2}\right]\right)^{cK^2}
\le
2^{cK^2}.
\]
\end{proof}

\begin{definition}[Orlicz-Wasserstein distance]\label{def:OrliczWassersteinDistance}
Let $\mu, \pi \in \P(\R^d)$. The Orlicz--Wasserstein distance between $\mu$ and $\pi$ is defined by
\[
W_{\psi}(\mu, \pi) \coloneqq \inf_{\substack{\gamma \in \Gamma(\mu, \pi)\\(X,Y) \sim \gamma}
} \|X-Y\|_{\psi} =  \inf_{\gamma \in \Gamma(\mu, \pi)} \inf \left\{ \lambda > 0 : \E_{(x,y)\sim\gamma}\left[ e^{\frac{\|x-y\|^2}{\lambda^2}}  \right] \leq 2 \right\}.
\]
\end{definition}

If $W_{\psi} (\mu, \pi) \leq c$ (for some constant $c > 0$), this means there exists a coupling $\gamma^* \in \Gamma(\mu, \pi)$ and some $\lambda^* \leq c$ such that 
\[
\E_{(x,y) \sim \gamma^*} \left[ e^{\frac{\|x-y\|^2}{{\lambda^{*}}^2}} \right] \leq 2\,.
\]
Thus, in the same way that Wasserstein--$2$ distance controls the expected squared difference under an optimal coupling, the Orlicz-Wasserstein distance controls an exponential moment of the squared difference.

Orlicz-Wasserstein distances have recently been used in sampling, in particular for analyses in Rényi divergence~\cite{altschuler2024faster, altschuler2023shifted, altschuler2024shifted}, and we note that Orlicz-Wasserstein upper bounds Wasserstein--$p$ distance for any finite $p$~\cite[Remark~3.4]{altschuler2024faster}, i.e., for any $\mu, \pi \in \P(\R^d)$ and $p \geq 1$
\[
\frac{1}{\sqrt{2p}} W_p(\mu, \pi) \leq W_{\psi}(\mu, \pi)\,.
\]

We now define the KL divergence and Rényi divergence between probability distributions.

\begin{definition}[KL divergence]\label{def:KLDivergence}
    The KL divergence between probability distributions $\mu, \pi \in \P(\R^d)$ with $\mu \ll \pi$ is defined by
    \begin{equation*}
        \KL(\mu \dvert \pi) \coloneqq  \E_{x \sim\mu} \left[\log \frac{\mu(x)}{\pi(x)}\right].
    \end{equation*}
    If $\mu \nll \pi$, then $\KL(\mu \dvert \pi) \coloneqq \infty$.
\end{definition}

\begin{definition}[Rényi--$q$ divergence]\label{def:RenyiDivergence}
    Let $q$ be an integer $1<q<\infty$. The Rényi divergence of order $q$ between probability distributions $\mu, \pi \in \P(\R^d)$ with $\mu \ll \pi$ is defined by
    \begin{equation*}
        \sR_q(\mu\dvert\pi) = \frac{1}{q-1} \log \E_{x \sim \pi}\left[\left(\frac{\mu(x)}{\pi(x)}\right)^q\right].
    \end{equation*}
    If $\mu \nll \pi$, then $\sfR_q(\mu \dvert \pi) \coloneqq \infty$.
\end{definition}

The Rényi divergence of order $q=1$ is defined in the limiting sense and coincides with the KL divergence (Definition~\ref{def:KLDivergence}), i.e., $\sR_1(\mu\dvert\pi) \coloneqq \lim_{q \to 1} \sR_q(\mu \dvert\pi) = \KL(\mu \dvert \pi)$. 
Further properties and background on KL divergence and Rényi divergence can be found in~\cite{van2014renyi, polyanskiywu_book}.
Rényi divergences are monotone in their order~\cite[Theorem~3]{van2014renyi}: for any distributions $\mu, \pi \in \P(\R^d)$ and $1\leq q \leq q' < \infty$\,,
\begin{equation}\label{eq:RenyiMonotonicity}
\sfR_q(\mu \dvert \pi) \leq \sfR_{q'}(\mu \dvert \pi)\,,
\end{equation}
and also satisfy the following weak triangle inequality~\cite[Proposition~11]{mironov2017renyi}\footnote{We present a simplified upper bound; a more precise statement can be found in~\cite[Proposition~11]{mironov2017renyi}.}, i.e., for any distributions $\mu, \pi, \rho \in \P(\R^d)$ and $q \geq 2$\,,
\begin{equation}\label{eq:RenyiWTI}
\sfR_q(\mu \dvert \pi) \leq \frac{3}{2}\sfR_{2q}(\mu \dvert \rho) + \sfR_{2q-1}(\rho \dvert \pi)\,.
\end{equation}

Rényi divergences for large enough order can be infinite even for absolutely continuous measures, e.g.,~\cite[Example~2]{VW23} shows that
\[
\sfR_q(\N(0, \sigma^2 I) \dvert \N(0, \lambda^2 I)) = \infty
\]
if $\sigma^2 > \lambda^2$ and $q \geq \frac{\sigma^2}{\sigma^2 - \lambda^2}$, reflecting a mismatch in Gaussian tail decay rates that causes divergence despite absolute continuity.

\subsection{Hamiltonian Monte Carlo}\label{subsec:HMC}

We briefly recall the Hamiltonian Monte Carlo (HMC) framework and fix notation.
As recounted below, HMC targets a probability distribution $\nu$ on $\R^d$
with density proportional to $e^{-f}$, where
$f \colon \R^d \to \R$ is a continuously differentiable, potential function.
The method augments the state space with auxiliary velocity variables and uses Hamiltonian dynamics in the resulting phase space to define a Markov transition kernel on $\mathbb{R}^d$, as originally proposed in \cite{duane1987hybrid} and described pedagogically in \cite{neal2011mcmc}.

We begin by introducing the joint position--velocity (phase) space $\R^{2d}$.
Throughout, we restrict attention to the simplest setting of a \emph{unit mass
matrix}.  This restriction entails no loss of generality: any constant
positive-definite mass matrix can be absorbed by a linear change of variables in
velocity (or momentum), also known as passing to \emph{mass-weighted
coordinates}, yielding an equivalent Hamiltonian system with unit mass.  Since
our arguments rely only on structural properties of the Hamiltonian flow and not
on the specific parametrization of phase space, we adopt the unit-mass setting
for clarity.

Accordingly, we define the Hamiltonian $H \colon \R^{2d} \to \R$ by
\[
H(x,v) \coloneqq f(x) + \tfrac12 \|v\|^2 .
\]
The associated Hamiltonian dynamics $(x_t,v_t)\in\R^{2d}$ are given by the
system of ordinary differential equations
\begin{equation}\label{eq:HamiltonianDynamics}
\frac{\d x_t}{\d t} = \nabla_v H(x_t,v_t) = v_t,
\qquad
\frac{\d v_t}{\d t} = -\nabla_x H(x_t,v_t) = -\nabla f(x_t).
\end{equation}
For an initial condition $(x_0,v_0)=(x,v)$, we denote by
$(x_t(x,v),v_t(x,v))$ the solution to \eqref{eq:HamiltonianDynamics} at time
$t\ge0$.
The corresponding Hamiltonian flow map is
\[
\HF_t \colon \R^{2d} \to \R^{2d}, \qquad
\HF_t(x,v) \coloneqq (x_t(x,v),v_t(x,v)).
\]
We further define the position projection
\[
q_t \coloneqq \Pi_1 \circ \HF_t ,
\]
so that $q_t(x,v)=x_t(x,v)$ for all $(x,v)\in\R^{2d}$.

The \emph{exact Hamiltonian Monte Carlo} (\eHMC) algorithm
\cite{bou2020coupling,bou2023convergence}
constructs a Markov chain $(X_k)_{k\ge0}$ on $\R^d$ as follows.
Given an initial distribution $X_0\sim\rho$, the update rule is
\begin{equation}\label{eq:eHMC_update}
X_{k+1} = q_T(X_k,\xi_k),
\end{equation}
where $(\xi_k)_{k\ge0}$ are independent draws from the standard Gaussian
distribution $\gammad$, and $T>0$ is a fixed \emph{integration time}.
We denote the associated transition kernel by
\[
\bP_T \colon \P(\R^d) \to \P(\R^d),
\]
so that if $X_k\sim\rho_k$, then $X_{k+1}\sim\rho_{k+1}=\rho_k \bP_T$, and hence
$\rho_k = \rho \bP_T^k$.
Although not immediate from the above construction, the target distribution
$\nu$ is invariant for \eHMC, i.e., $\nu \bP_T = \nu$.  This follows from the fact that the Hamiltonian flow preserves the Hamiltonian --- and hence the density $e^{-H}$ --- and that the velocity refreshment step preserves the auxiliary momentum distribution.  Moreover, \eHMC is reversible with respect to $\nu$; this additional property relies on the time-reversibility of the Hamiltonian flow together with the fact that the velocity refreshment step resamples the velocity independently from the auxiliary momentum distribution.  See Appendix~\ref{app:HMC-background} for a concise review.

The \eHMC update~\eqref{eq:eHMC_update} requires solving the Hamiltonian dynamics~\eqref{eq:HamiltonianDynamics} \textit{exactly}, which is generally infeasible.
Replacing the exact flow by a numerical integrator introduces discretization
error, and in the absence of a Metropolis correction the resulting method falls
under the umbrella of \emph{unadjusted Hamiltonian Monte Carlo} (\uHMC)
\cite{bou2023convergence,bou2023mixing}. The term \uHMC encompasses any discretization-based
approximation of the Hamiltonian flow. In this paper, we focus on the velocity Verlet (leapfrog) integrator
\cite{bou2018geometric,hairer2006geometric}, leading to the \uHMCv algorithm
studied here. Extensions to other discretization schemes, such as stratified
Monte Carlo integrators~\cite{bou2025unadjusted}, are discussed in
Section~\ref{subsubsection:Examples-stratified}.

\begin{algorithm}[t]
\caption{Velocity Verlet integrator ($\tilde \HF_{t_1,t_2}$)}
\label{alg:VerletFlow}
\begin{algorithmic}[1]
\STATE \textbf{Input:} Integration time $t_1>0$, step size $t_2>0$ with
$t_2 \mid t_1$, and initial condition $(x,v)\in\R^{2d}$.
\STATE Set $x_0 \coloneqq x$, $v_0 \coloneqq v$, and $N \coloneqq t_1/t_2$.
\FOR{$j = 0$ to $N-1$}
\STATE $x_{j+1} = x_j + t_2 v_j - \tfrac{t_2^2}{2}\nabla f(x_j)$
\STATE $v_{j+1} = v_j - \tfrac{t_2}{2}\bigl(\nabla f(x_j) + \nabla f(x_{j+1})\bigr)$
\ENDFOR
\STATE \textbf{Output:} $(x_N, v_N)$
\end{algorithmic}
\end{algorithm}

To define \uHMCv, we introduce a discrete-time approximation of the Hamiltonian
flow based on the velocity Verlet integrator. Let $\tilde \HF_{t_1,t_2} \colon \R^{2d} \to \R^{2d}$ denote the numerical flow
map associated with the velocity Verlet scheme with total integration time
$t_1>0$ and step size $t_2>0$, where $t_2$ divides $t_1$.
For any initial condition $(x,v)\in\R^{2d}$, the value
$\tilde \HF_{t_1,t_2}(x,v)$ is defined by Algorithm~\ref{alg:VerletFlow}.

Under standard regularity assumptions on $f$, the velocity Verlet integrator is
a second-order accurate, symplectic approximation of the Hamiltonian dynamics,
and $\tilde \HF_{t_1,t_2}(x,v)$ converges to the exact flow
$\HF_{t_1}(x,v)$ as $t_2 \to 0$ for fixed $t_1$; see, e.g.,
\cite{hairer2006geometric,bou2018geometric}.

We define the associated position map
\[
\tilde q_{t_1,t_2} \coloneqq \Pi_1 \circ \tilde \HF_{t_1,t_2},
\]
so that $\tilde q_{t_1,t_2}(x,v)$ equals the position component of the numerical
trajectory after $t_1/t_2$ integration steps.

\smallskip
Starting from an initial distribution $X_0 \sim \rho$, the \uHMCv algorithm
generates a Markov chain $(X_k)_{k\ge0}$ on $\R^d$ via the update
\begin{equation}\label{eq:uHMCv_update}
X_{k+1} = \tilde q_{T,h}(X_k,\xi_k),
\end{equation}
where $(\xi_k)_{k\ge0}$ are independent draws from the standard Gaussian
distribution $\gammad$, $T>0$ is the integration time, and $h>0$ is the step
size.
This update defines a Markov transition kernel
\[
\tilde \bP_{T,h} \colon \P(\R^d) \to \P(\R^d),
\]
so that the law of $X_k$ is given by $\rho \tilde \bP_{T,h}^k$.

Under suitable regularity and stability assumptions on $f$, $T$, and $h$, the kernel $\tilde \bP_{T,h}$ admits an invariant distribution, which we denote by $\tilde \nu_h$.  In general, $\tilde \nu_h$ depends on $h$ and differs from the target
distribution $\nu$.
Moreover, under additional assumptions, $\tilde \nu_h$ converges to $\nu$
as $h \to 0$; see, for example, \cite{bou2023convergence,bou2023mixing}.

Throughout this manuscript, we assume that whenever the velocity Verlet step
size $h>0$, it divides the integration time $T>0$, i.e., $T/h \in \mathbb{N}$.
We also note that in the formal limit $h=0$, the numerical flow
$\tilde \HF_{T,h}$ coincides with the exact Hamiltonian flow $\HF_T$, and
\uHMCv reduces to \eHMC.

\begin{remark}[Reversibility]
The \eHMC kernel $\bP_T$ is reversible with respect to
the target distribution $\nu$.
By contrast, the \uHMC kernel $\tilde \bP_{T,h}$ is generally not
reversible with respect to its invariant distribution $\tilde \nu_h$, when
such an invariant distribution exists.
This lack of reversibility is a consequence of discretization error in the
numerical integrator and persists even for symplectic schemes such as velocity
Verlet in the absence of a Metropolis correction.
\end{remark}

\section{One-shot Couplings}\label{sec:OneShotCouplings}

A \emph{one-shot coupling} is a coupling of the one-step transition kernels of two Markov chains \cite{roberts2002one,madras2010quantitative,bou2023mixing}.  Such
couplings are typically constructed to maximize the probability that the two
chains coalesce after a single transition. Traditionally, such couplings are applied to two copies of the same Markov chain and are used to control its mixing behavior.

In this work, we employ one-shot couplings in two conceptually distinct ways.
First, we use one-shot couplings between two copies of \uHMCv to establish regularization bounds, which quantify how a single transition smooths or contracts discrepancies between different initial conditions and lead directly to mixing-time guarantees.
Second, we construct one-shot couplings between \uHMCv and \eHMC to obtain cross-regularization bounds, which compare the evolution of the discretized chain to that of its continuous-time counterpart and allow us to control the asymptotic bias introduced by discretization.

Concretely, the KL regularization and cross-regularization bounds are established in Lemmas~\ref{lem:KL_Mixing_OneStepGeneral} and~\ref{lem:KL_Target_OneStepGeneral} respectively, while the corresponding R\'enyi regularization and cross-regularization bounds are established in Lemmas~\ref{lem:Renyi_Mixing_OneStepGeneral} and~\ref{lem:Renyi_Target_OneStepGeneral}.
All of these results are proved via appropriately constructed one-shot couplings.

One-shot couplings for \uHMCv were studied in~\cite{bou2023mixing} to upgrade
Wasserstein--$1$ convergence guarantees to total variation distance.
There, the velocity Verlet scheme is characterized as the solution of a
discrete-time variational problem.
Under appropriate conditions, this variational problem is shown to be strongly
convex, implying that the velocity Verlet update is the \emph{unique} minimizer
of the variational objective~\cite[Corollary~14]{bou2023mixing}.
This uniqueness property has important consequences for the analysis of
one-shot couplings for \uHMCv: it implies the existence and uniqueness of
one-shot coupling maps that couple the velocities of two copies of the Markov
chain so that the chains coalesce after a single step; see
\cite[Figure~1]{bou2023mixing}.

We adopt a closely related strategy here to obtain guarantees for \uHMCv in
Kullback--Leibler and R\'enyi divergences.
The one-shot coupling map used to derive mixing-time bounds
(Figure~\ref{fig:coupling}(a)) coincides with the map introduced
in~\cite{bou2023mixing}.
For the purpose of controlling asymptotic bias, we consider a variant of the
one-shot coupling construction (Figure~\ref{fig:coupling}(b)) that couples the
discretized kernel \uHMCv with the continuous-time kernel \eHMC.
While~\cite{bou2023mixing} focuses on couplings starting from the same initial
state, our analysis extends this perspective to couplings between discretized
and exact dynamics initialized at distinct points.

Related coupling arguments appear in Chapter~4 of the doctoral dissertation of Marsden \cite{Marsden2025Thesis}, where a one-shot coupling map is used to lower bound the restricted conductance profile for adjusted HMC implemented with randomized integrators. This analysis builds on the restricted conductance profile framework of \cite{chen2020fast}, which establishes mixing-time bounds for adjusted HMC by relating one-step overlap to conductance lower bounds.

\begin{remark}[Relation between kernel regularization and Harnack inequalities]
\label{rem:HarnackRenyiSide}
There are well-studied equivalences
between regularization properties of a Markov kernel in KL and Rényi divergences, 
Wang-type Harnack inequalities, and reverse functional inequalities~\cite{wang1997logarithmic,bobkov2001hypercontractivity, arnaudon2006harnack, bakry2015harnack, baudoin2021transportation, altschuler2023shifted}; see also Appendix~\ref{app:Regularization}. 
These equivalences provide a functional inequality perspective on regularization bounds, which we establish via one-shot couplings.
\end{remark}

We now introduce the regularity assumptions on the potential function $f$
(recall $\nu \propto e^{-f}$) under which estimates for the one-shot coupling
maps can be obtained.

\begin{assumption}\label{assumption:potential}
The potential function $f : \R^d \to \R$ satisfies the following conditions
\begin{itemize}[noitemsep]
    \item[(a)] The function $f$ has global minimum at $0$ and $f(0) = 0$.
    \item[(b)] The function $f \in \cC^2(\R^d)$ satisfies $\|\nabla^2f(x)\|_\op \leq L$ for all $x \in \R^d$.
    \item[(c)] The function $f \in \cC^3(\R^d)$ satisfies $\|\nabla^3f(x)\|_\op \leq M$ for all $x \in \R^d$.
    \item[(d)] The function $f \in \cC^4(\R^d)$ satisfies $\|\nabla^4f(x)\|_\op \leq N$ for all $x \in \R^d$.
    
\end{itemize}
Here, for $k \ge 2$, $\nabla^k f(x)$ is viewed as a symmetric $k$-linear form on
$(\R^d)^k$, and $\|\cdot\|_{\op}$ denotes the corresponding operator norm induced
by the Euclidean norm on $\R^d$, namely
\[
\|T\|_{\op}
\coloneqq
\sup\bigl\{ |T(u_1,\ldots,u_k)| : \|u_1\|=\cdots=\|u_k\|=1 \bigr\}.
\]
\end{assumption}

The one-shot coupling map introduced for the analysis of mixing times
(Section~\ref{sec:MixingTimeOneShotMap}) allows us to control the divergence
after a single transition by reducing the problem to bounding the divergence
between a standard and perturbed Gaussian distribution.
The perturbation is induced by the one-shot coupling map, as made precise in the
following lemma.
The proof of Lemma~\ref{lem:DiracToPerturbed_Mixing} is given in
Section~\ref{app:RegularityProofs}.

\begin{lemma}\label{lem:DiracToPerturbed_Mixing}
    Recall that $\tilde \bP_{T,h}$ denotes the \uhmcv transition kernel with update given by~\eqref{eq:uHMCv_update}.
    Let $x,y \in \R^d$ be any two points, $\sfD \in \{ \KL, \sfR_q\}$ be the KL divergence (Definition~\ref{def:KLDivergence}) or Rényi divergence (Definition~\ref{def:RenyiDivergence}), and $\varphi_{x,y} : \R^d \to \R^d$ be a map satisfying~\eqref{eq:MixingMap}. Then for potential function satisfying Assumption~\ref{assumption:potential}(b), any integration time $T>0$ such that $LT^2 \leq \frac{2}{5}\pi^2$, and step-size $h \geq 0$\,,
    \[
    \sfD(\delta_x \tilde \bP_{T,h} \dvert \delta_y \tilde \bP_{T,h}) \leq \sfD((\varphi_{x,y})_\# \gammad \dvert \gammad)\,.
    \]
\end{lemma}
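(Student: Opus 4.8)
The plan is to deduce the bound from the data-processing inequality, once the meaning of~\eqref{eq:MixingMap} is unpacked. Writing out the kernel, $\delta_x \tilde\bP_{T,h}$ is the law of $\tilde q_{T,h}(x,\xi)$ with $\xi\sim\gammad$, and $\delta_y\tilde\bP_{T,h}=\law(\tilde q_{T,h}(y,\xi'))$ with $\xi'\sim\gammad$. The defining relation~\eqref{eq:MixingMap} for the mixing one-shot map is the coalescence identity $\tilde q_{T,h}(y,\varphi_{x,y}(v)) = \tilde q_{T,h}(x,v)$ for every $v\in\R^d$ --- i.e.\ $\varphi_{x,y}$ transports the refreshment velocity of the $x$-chain to a velocity for the $y$-chain under which the two copies agree in position after one step. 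The hypothesis $LT^2\le\frac{2}{5}\pi^2$ is used only to guarantee that such a $\varphi_{x,y}$ exists: under this bound the discrete variational characterization of the velocity-Verlet update is strongly convex, so for each fixed $z\in\R^d$ the map $v\mapsto\tilde q_{T,h}(z,v)$ is a diffeomorphism of $\R^d$ (uniformly in the step size $h\ge0$, including the formal limit $h=0$ giving \eHMC), and one may take $\varphi_{x,y}=\bigl(\tilde q_{T,h}(y,\cdot)\bigr)^{-1}\circ\tilde q_{T,h}(x,\cdot)$; this construction and the required invertibility are carried out in Section~\ref{sec:MixingTimeOneShotMap}.

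Granting~\eqref{eq:MixingMap}, the conclusion follows in one line. Set $\mu\coloneqq(\varphi_{x,y})_\#\gammad$. If $Z\sim\mu$, then by the coalescence identity $\tilde q_{T,h}(y,Z)$ has the same law as $\tilde q_{T,h}(x,\xi)$, so $\delta_x\tilde\bP_{T,h}=\bigl(\tilde q_{T,h}(y,\cdot)\bigr)_\#\mu$, while trivially $\delta_y\tilde\bP_{T,h}=\bigl(\tilde q_{T,h}(y,\cdot)\bigr)_\#\gammad$. Thus the two measures on the left-hand side of the lemma are the images of $\mu$ and $\gammad$ under the \emph{same} measurable map $\tilde q_{T,h}(y,\cdot)\colon\R^d\to\R^d$, and since both $\KL$ and $\sfR_q$ (for integer $q>1$) are non-increasing under pushforwards by measurable maps (data-processing inequality), we obtain $\sfD(\delta_x\tilde\bP_{T,h}\dvert\delta_y\tilde\bP_{T,h})\le\sfD(\mu\dvert\gammad)=\sfD((\varphi_{x,y})_\#\gammad\dvert\gammad)$, which is exactly the claimed bound. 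Note that this simultaneously reduces the one-step divergence to a Gaussian-versus-perturbed-Gaussian comparison, the form exploited in the subsequent regularization estimates.

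The only substantive point, and the one I would expect to require care, is the existence of a map $\varphi_{x,y}$ satisfying~\eqref{eq:MixingMap} --- equivalently, the global invertibility in the velocity variable of the Verlet position flow at time $T$ --- which is precisely where the integration-time restriction $LT^2\le\frac{2}{5}\pi^2$ and the strong convexity of the discrete action enter; in the proof of this lemma it is only invoked. Everything else is a direct application of the data-processing inequality, and the argument is uniform in $h\ge0$, so the same statement covers \uHMCv and, at $h=0$, \eHMC.
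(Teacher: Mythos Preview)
Your proof is correct and follows essentially the same approach as the paper: rewrite $\delta_x\tilde\bP_{T,h}$ as the pushforward of $(\varphi_{x,y})_\#\gammad$ under $\tilde q_{T,h}(y,\cdot)$ and apply the data-processing inequality. The only cosmetic difference is that the paper factors $\tilde q_{T,h}(y,\cdot)=\Pi_1\circ\tilde\HF_{T,h}(y,\cdot)$ and applies data processing to the projection $\Pi_1$ (an inequality) and invariance under the bijection $\tilde\HF_{T,h}(y,\cdot)$ (an equality) separately, whereas you apply data processing to the composite map in one shot; your version is slightly more direct and loses nothing.
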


\begin{remark}
We assume that the step size $h>0$ divides the integration time $T$, i.e.,
$T = m h$ for some integer $m\ge 1$.  Combined with the constraint
$LT^2 \le \frac{2}{5}\pi^2$, this implies
\[
L h^2 = L\Bigl(\frac{T}{m}\Bigr)^2 \le L T^2 \le \frac{2}{5}\pi^2 < 4,
\]
and hence $h\sqrt{L}<2$, which is the stability condition for
the velocity Verlet integrator \cite{bou2018geometric}.
\end{remark}

To obtain bounds on the asymptotic bias, we prove cross-regularization bounds for \uhmcv.
The one-shot coupling map introduced for the analysis of asymptotic bias (Section~\ref{sec:TargetMap}) enables us to bound the divergence after one step in terms of the divergence between a standard and a perturbed Gaussian distribution.  As in the mixing-time analysis, the perturbation is induced by the one-shot coupling map itself, as formalized in the following lemma. The proof of Lemma~\ref{lem:DiracToPerturbed_Target} is given in Section~\ref{app:RegularityProofs}.

\begin{lemma}\label{lem:DiracToPerturbed_Target}
    Recall that $\tilde \bP_{T,h}$ and $\bP_T$ denote the \uhmcv and \ehmc transition kernels with updates given by~\eqref{eq:uHMCv_update} and~\eqref{eq:eHMC_update} respectively.
    Let $x,y \in \R^d$ be any two points, $\sfD \in \{ \KL, \sfR_q\}$ be the KL divergence (Definition~\ref{def:KLDivergence}) or Rényi divergence (Definition~\ref{def:RenyiDivergence}), and $\Phi_{x,y} : \R^d \to \R^d$ be a map satisfying~\eqref{eq:TargetMap}. Then for potential function satisfying Assumption~\ref{assumption:potential}(b), any integration time $T>0$ such that $LT^2 \leq \frac{2}{5}\pi^2$, and step-size $h \geq 0$\,,
    \[
    \sfD(\delta_x \tilde \bP_{T,h} \dvert \delta_y  \bP_{T}) \leq \sfD((\Phi_{x,y})_\# \gammad \dvert \gammad)\,.
    \]
\end{lemma}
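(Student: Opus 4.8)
The plan is to realize both one-step laws as pushforwards of the Gaussian velocity measure $\gammad$ through the relevant velocity-to-position maps, and then to collapse the comparison with a single application of the data-processing inequality. By the \uhmcv update~\eqref{eq:uHMCv_update}, starting from $\delta_x$ one has $X_1 = \tilde q_{T,h}(x,\xi)$ with $\xi\sim\gammad$, so $\delta_x\tilde\bP_{T,h} = (\tilde q_{T,h}(x,\cdot))_\#\gammad$; similarly, by the \ehmc update~\eqref{eq:eHMC_update}, $\delta_y\bP_T = (q_T(y,\cdot))_\#\gammad$. The defining property of the one-shot coupling map $\Phi_{x,y}$ in~\eqref{eq:TargetMap} is precisely that it makes the first map factor through the second, i.e. $\tilde q_{T,h}(x,\cdot) = q_T(y,\cdot)\circ\Phi_{x,y}$ (as maps on $\R^d$, or $\gammad$-a.e.). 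Substituting this factorization gives $\delta_x\tilde\bP_{T,h} = (q_T(y,\cdot))_\#\,(\Phi_{x,y})_\#\gammad$, so that $\delta_x\tilde\bP_{T,h}$ and $\delta_y\bP_T$ are the images of $(\Phi_{x,y})_\#\gammad$ and $\gammad$ under the \emph{same} deterministic (continuous, hence measurable) map $g\coloneqq q_T(y,\cdot)$.

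The proof then concludes by invoking the data-processing inequality, which holds for $\sfD\in\{\KL,\sfR_q\}$ under deterministic maps: $\sfD(g_\#\alpha\dvert g_\#\beta)\le\sfD(\alpha\dvert\beta)$ (for $\KL$ this is immediate from its being an $f$-divergence; for $\sfR_q$ see~\cite{van2014renyi}). Applying this with $\alpha=(\Phi_{x,y})_\#\gammad$ and $\beta=\gammad$ yields $\sfD(\delta_x\tilde\bP_{T,h}\dvert\delta_y\bP_T)\le\sfD((\Phi_{x,y})_\#\gammad\dvert\gammad)$, which is the claim. The hypothesis $LT^2\le\frac25\pi^2$ enters only to make~\eqref{eq:TargetMap} non-vacuous: it forces $LT^2<\pi^2$, under which the exact-flow position map $q_T(y,\cdot)\colon\R^d\to\R^d$ is a diffeomorphism (well-posedness of the two-point boundary value problem for~\eqref{eq:HamiltonianDynamics}), so one may take $\Phi_{x,y}\coloneqq (q_T(y,\cdot))^{-1}\circ\tilde q_{T,h}(x,\cdot)$; since $q_T(y,\cdot)$ is then injective the bound is actually an equality, but the stated one-sided form is all that is needed downstream.

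The reduction above is short; I expect the substantive work to lie elsewhere rather than in this lemma. The genuine obstacle is establishing that a map $\Phi_{x,y}$ satisfying~\eqref{eq:TargetMap} exists with quantitative control over how far it is from the identity — this is where the discrete Verlet flow must be compared against the exact Hamiltonian flow and the exact flow map inverted (carried out in Section~\ref{sec:TargetMap}) — and, downstream, bounding the Gaussian-perturbation divergence $\sfD((\Phi_{x,y})_\#\gammad\dvert\gammad)$ in Lemmas~\ref{lem:KL_Target_OneStepGeneral} and~\ref{lem:Renyi_Target_OneStepGeneral}. Relative to the mixing-time reduction of Lemma~\ref{lem:DiracToPerturbed_Mixing}, the only structural novelty here is that the factorization passes through two \emph{different} flow maps ($\tilde q_{T,h}$ and $q_T$) rather than a single one, which is exactly what makes this a \emph{cross}-regularization statement; the DPI step itself is identical in both cases.
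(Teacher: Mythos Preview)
Your proposal is correct and follows essentially the same approach as the paper: represent both one-step laws as pushforwards of $\gammad$, use the factorization $\tilde q_{T,h}(x,\cdot)=q_T(y,\cdot)\circ\Phi_{x,y}$ from~\eqref{eq:TargetMap}, and apply data-processing. The only cosmetic difference is that the paper lifts to the full phase-space flow $\HF_T$ (which is bijective, hence divergence-preserving) and then applies data-processing to the projection $\Pi_1$, whereas you apply data-processing directly to $q_T(y,\cdot)$; your observation that $q_T(y,\cdot)$ is itself a diffeomorphism under the hypothesis (making the bound an equality) is a nice sharpening the paper does not state explicitly.
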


Lemmas~\ref{lem:DiracToPerturbed_Mixing} and~\ref{lem:DiracToPerturbed_Target} show that both regularization and cross-regularization guarantees starting from Dirac initializations can be reduced to bounding the divergence between a standard Gaussian distribution and a perturbed Gaussian distribution. In each case, the perturbation is induced by the corresponding one-shot coupling map. The remainder of this section introduces these coupling maps and establishes regularity bounds for them, which are then used to control the divergence between the perturbed and standard Gaussian distributions.

\subsection{One-shot map for regularization}\label{sec:MixingTimeOneShotMap}

For any $x,y\in\R^d$, let $\varphi_{x,y}:\R^d\to\R^d$ denote a map satisfying
\begin{equation}\label{eq:MixingMap}
    \tilde q_{T,h}(x,v) = \tilde q_{T,h}\bigl(y,\varphi_{x,y}(v)\bigr),
    \qquad \forall\, v\in\R^d.
\end{equation}
That is, $\varphi_{x,y}$ maps an initial velocity $v$ at position $x$ to a
velocity at position $y$ such that the two discretized trajectories coincide in
position after time $T$.

This one-shot coupling map is illustrated in
Figure~\ref{fig:coupling}(a) and was introduced in~\cite{bou2023mixing} to obtain
mixing-time guarantees for \uhmcv in total variation distance.  For any
$x,y\in\R^d$, a map $\varphi_{x,y}$ satisfying~\eqref{eq:MixingMap} exists
whenever $f$ is $L$--smooth and $LT^2\le \frac{2}{5}\pi^2$
\cite[Corollary~14]{bou2023mixing}.

We have the following second-order regularity estimates for the map and its Jacobian. Lemmas~\ref{lem:MixingMapPtWise}
and~\ref{lem:MixingMapJacobian} are proved in
Section~\ref{app:MixingOneShotMap}.

\begin{lemma}\label{lem:MixingMapPtWise}
    Suppose Assumption~\ref{assumption:potential}(a)-(b) holds. Further suppose $h \geq 0$ and that $L(T^2+Th) \leq \frac{1}{12}$. Then for any $x,y,v \in \R^d$
    \begin{equation}\label{eq:MixingMapPtWise}
    \|\varphi_{x,y}(v)-v\| \leq \frac{3}{2T}\|x-y\|.
    \end{equation}
\end{lemma}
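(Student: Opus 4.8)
The plan is to work directly with the Verlet recursion (Algorithm~\ref{alg:VerletFlow}) applied to the two trajectories that~\eqref{eq:MixingMap} forces to share a terminal position. Write $N = T/h$, let $(x_j,v_j)_{j=0}^{N}$ be the Verlet trajectory from $(x,v)$ and $(y_j,w_j)_{j=0}^{N}$ the one from $\bigl(y,\varphi_{x,y}(v)\bigr)$, and set $p_j \coloneqq x_j-y_j$, $q_j \coloneqq v_j-w_j$, $\delta_j \coloneqq \nabla f(x_j)-\nabla f(y_j)$. Then $p_0 = x-y$, $q_0 = v-\varphi_{x,y}(v)$, relation~\eqref{eq:MixingMap} says exactly $p_N = 0$, and Assumption~\ref{assumption:potential}(b) gives the key Lipschitz control $\|\delta_j\|\le L\|p_j\|$ for all $j$.

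First I would subtract the two position updates $x_{j+1}-x_j = hv_j-\tfrac{h^2}{2}\nabla f(x_j)$ and telescope over $j=0,\dots,N-1$; using $p_N=0$ this gives $-p_0 = h\sum_{j=0}^{N-1}q_j - \tfrac{h^2}{2}\sum_{j=0}^{N-1}\delta_j$. Telescoping the velocity update $q_{j+1}=q_j-\tfrac h2(\delta_j+\delta_{j+1})$ writes each $q_j$ as $q_0$ minus an explicit weighted partial sum of $\delta_0,\dots,\delta_j$, and substituting into $h\sum_j q_j$ yields $h\sum_{j=0}^{N-1}q_j = Tq_0 - S$ with $S$ a weighted sum of the $\delta_j$ carrying weights of total size $O(h^2N^2)$. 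Combining isolates the velocity perturbation,
$$Tq_0 \;=\; -p_0 \;+\; \bigl(\text{a weighted sum of the }\delta_j\bigr),$$
whence, bounding $\|\delta_j\|\le L\|p_j\|$ and using $Nh=T$ to turn the step-count weights into powers of $T$ and $Th$,
$$T\|q_0\| \;\le\; \|x-y\| \;+\; C\,L(T^2+Th)\,\max_{0\le j\le N}\|p_j\|$$
for an explicit absolute constant $C$.

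It remains to close the argument with a discrete Grönwall bound on $P \coloneqq \max_j\|p_j\|$. Substituting the telescoped formulas for $p_j$ and $q_j$ back into the position recursion and again using $\|\delta_i\|\le L\|p_i\|\le LP$ and $Nh=T$ yields $P \le \|x-y\| + T\|q_0\| + C'L(T^2+Th)\,P$; since $L(T^2+Th)\le\tfrac1{12}$ (which in particular forces the Verlet stability bound $h\sqrt L<2$), the $P$-term is absorbable and $P \lesssim \|x-y\|+T\|q_0\|$. Feeding this into the previous display gives a scalar inequality $T\|q_0\| \le \|x-y\| + c\,L(T^2+Th)\bigl(\|x-y\|+T\|q_0\|\bigr)$, and the threshold $\tfrac1{12}$ is calibrated precisely so that solving for $T\|q_0\|$ produces $T\|q_0\|\le\tfrac32\|x-y\|$, which is~\eqref{eq:MixingMapPtWise}.

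The main obstacle is the bookkeeping in these last two steps: one must pin down the exact weights in the telescoped sum $S$ and in the Grönwall bound for $P$, and verify that the resulting absolute constants, together with the threshold $\tfrac1{12}$, actually close the clean factor $\tfrac32$ rather than a larger constant. Conceptually, the reason $T$ (and not $h$) is the natural scale is that the free-flight part of the Verlet map contributes exactly $Tq_0$ to the terminal position gap, while every force term carries an extra factor $h$ and is summed over $N=T/h$ steps, so it is controlled by $L(T^2+Th)$; the fact that $\nabla f$ is itself unbounded is harmless, since only the coupled differences $\delta_j=\nabla f(x_j)-\nabla f(y_j)$ enter and these are controlled by Assumption~\ref{assumption:potential}(b).
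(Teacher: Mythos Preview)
The paper's own proof of this lemma is a one-line citation: it notes that the hypothesis $L(T^2+Th)\le\tfrac1{12}$ implies the hypotheses of \cite[Lemma~25]{bou2023mixing}, and that lemma already yields \eqref{eq:MixingMapPtWise}. So there is nothing to compare at the level of argument here; the work is entirely deferred to the cited reference.

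Your proposal, by contrast, sketches an independent proof by subtracting the two Verlet recursions, telescoping, and closing via a discrete Gr\"onwall inequality. The structure is sound and is in fact essentially how such estimates are obtained in \cite{bou2023mixing}: the linear-in-$q_0$ term from the free-flight contribution is $Tq_0$, every force correction picks up a factor of $L(T^2+Th)$, and the smallness condition absorbs the self-referential $P$ term. Your identification of the remaining obstacle---pinning down the explicit weights in $S$ and in the Gr\"onwall step so that the threshold $\tfrac1{12}$ really delivers the constant $\tfrac32$---is accurate; this is genuine bookkeeping rather than a conceptual gap, but it is not done in your write-up, so as it stands the argument proves $\|\varphi_{x,y}(v)-v\|\le \tfrac{C}{T}\|x-y\|$ for some unspecified absolute $C$ rather than the stated $\tfrac{3}{2}$. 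If you want a self-contained proof rather than a citation, you would need to carry out that computation (and it may be that \cite[Lemma~25]{bou2023mixing} works under a slightly different threshold, with $\tfrac1{12}$ here simply being a convenient sufficient condition rather than the sharp one calibrated to $\tfrac32$).
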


\begin{lemma}\label{lem:MixingMapJacobian}
    Suppose Assumption~\ref{assumption:potential}(a)-(c) holds. Further suppose $h \geq 0$ and that $L(T^2 +Th) \leq \frac{1}{12}$. Then for any $x,y,v \in \R^d$
    \begin{equation}\label{eq:MixingMapJacobian}
    \|\nabla \varphi_{x,y}(v)-I\|_\op \leq \min \left\{\frac{2}{9}, \frac{11}{2}MT^2\|x-y\| \right\}.
    \end{equation}
\end{lemma}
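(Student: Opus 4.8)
The natural starting point is to differentiate the defining identity~\eqref{eq:MixingMap} of $\varphi_{x,y}$ in the velocity variable $v$. Writing $w \coloneqq \varphi_{x,y}(v)$ and letting $D_v \tilde q_{T,h}(z,u)$ denote the Jacobian of $\tilde q_{T,h}$ with respect to its velocity argument, the chain rule gives
\[
D_v \tilde q_{T,h}(x,v) \;=\; D_v \tilde q_{T,h}(y,w)\,\nabla\varphi_{x,y}(v),
\]
so that, once $D_v \tilde q_{T,h}(y,w)$ is known to be invertible,
\[
\nabla\varphi_{x,y}(v) - I \;=\; \bigl[D_v \tilde q_{T,h}(y,w)\bigr]^{-1}\Bigl(D_v \tilde q_{T,h}(x,v) - D_v \tilde q_{T,h}(y,w)\Bigr).
\]
(Differentiability of $\varphi_{x,y}$ itself follows from the implicit function theorem once invertibility is in hand; existence of a map satisfying~\eqref{eq:MixingMap} is \cite[Corollary~14]{bou2023mixing}.) Thus everything reduces to two ingredients: (i) $D_v \tilde q_{T,h}(z,u)$ is uniformly close to $T\,I$, hence invertible with $\|[D_v \tilde q_{T,h}(z,u)]^{-1}\|_\op \lesssim 1/T$; and (ii) a bound on the variation of $(z,u)\mapsto D_v \tilde q_{T,h}(z,u)$ along the segment joining $(x,v)$ to $(y,w)$.

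For (i), I would show that under the hypothesis $L(T^2+Th)\le\tfrac1{12}$ one has $\|D_v \tilde q_{T,h}(z,u) - T\,I\|_\op \le \tfrac{T}{10}$ for all $(z,u)$. This is a discrete Grönwall argument: the Jacobian of the Verlet flow map is a telescoping product over the $N = T/h$ leapfrog steps of the step Jacobians, each differing from the free-flight step Jacobian by a term of size $O(Lh^2)$ coming from $\nabla^2 f$; accumulating over $N$ steps and using $Nh = T$ turns the sum of these errors into a term of size $O(LT^3 + LhT^2)$, which the hypothesis is calibrated to keep below $T/10$ (the two summands $LT^2$ and $LhT$ are precisely why the assumption is phrased in terms of $L(T^2+Th)$; the $h=0$ case is just the variational equation $\partial_v x_T = T I - \int_0^T (T-s)\nabla^2 f(x_s)\,\partial_v x_s\,ds$ together with $\|\partial_v x_T - TI\|_\op \lesssim LT^3$). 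A Neumann series then gives invertibility and $\|[D_v \tilde q_{T,h}(z,u)]^{-1}\|_\op \le \tfrac{10}{9T}$. The uniform bound $\tfrac29$ in the lemma is immediate from this: writing $D_v\tilde q_{T,h}(x,v) = T(I + E_1)$ and $D_v\tilde q_{T,h}(y,w) = T(I+E_2)$ with $\|E_i\|_\op \le \tfrac1{10}$, the displayed identity becomes $\nabla\varphi_{x,y}(v) - I = (I+E_2)^{-1}(E_1 - E_2)$, whence $\|\nabla\varphi_{x,y}(v)-I\|_\op \le \tfrac{2/10}{1-1/10} = \tfrac29$.

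For (ii), which produces the $\|x-y\|$-dependent bound, I would write the difference of Jacobians as a line integral,
\[
D_v \tilde q_{T,h}(x,v) - D_v \tilde q_{T,h}(y,w) = \int_0^1 \frac{d}{d\tau}\, D_v \tilde q_{T,h}\bigl((1-\tau)x+\tau y,\ (1-\tau)v+\tau w\bigr)\,d\tau,
\]
so its operator norm is at most $\sup \|\partial_z D_v\tilde q_{T,h}\|_\op\,\|x-y\| + \sup\|\partial_v^2 \tilde q_{T,h}\|_\op\,\|v-w\|$. The two suprema are second-order derivative bounds for the Verlet flow map, obtained by the same discrete recursion as in (i), differentiated once more: these second derivatives involve $\nabla^3 f$ (hence $M$), vanish for free flight, and carry one (resp.\ two) extra powers of the effective time scale $T$, giving $\|\partial_z D_v\tilde q_{T,h}\|_\op \lesssim MT^3$ and $\|\partial_v^2 \tilde q_{T,h}\|_\op \lesssim MT^4$. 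I then substitute the pointwise estimate $\|v-w\| = \|\varphi_{x,y}(v)-v\| \le \tfrac{3}{2T}\|x-y\|$ from Lemma~\ref{lem:MixingMapPtWise}, which converts the velocity term into $\lesssim MT^3\|x-y\|$ as well, and combine with $\|[D_v\tilde q_{T,h}(y,w)]^{-1}\|_\op \le \tfrac{10}{9T}$ from (i); tracking the (somewhat lossy) constants through the discrete Grönwall bounds yields $\|\nabla\varphi_{x,y}(v)-I\|_\op \le \tfrac{11}{2}MT^2\|x-y\|$. Since both bounds hold simultaneously, the lemma records their minimum.

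\textbf{Main obstacle.} The crux is steps (i)–(ii): obtaining fully uniform, explicitly-constanted first- and second-order derivative estimates for the \emph{discrete} Verlet flow map $\tilde q_{T,h}$ — not merely for the continuous Hamiltonian flow — that are sharp enough to deliver the constants $\tfrac29$ and $\tfrac{11}{2}$. This means carefully propagating the derivatives of a composition of $N = T/h$ nonlinear leapfrog maps through a discrete variation-of-constants/Grönwall argument, verifying that the per-step $O(Lh^2)$ and $O(Mh^2\cdot\text{poly}(T))$ contributions sum to the claimed $O(LT^3 + LhT^2)$, $O(MT^3)$, and $O(MT^4)$ bounds uniformly in the base point $(z,u)$ and in $h$ (with the $h=0$ continuous flow serving as a sanity check). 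Once these flow-map estimates are established, the remaining argument is the elementary linear algebra sketched above.
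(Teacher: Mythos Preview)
Your proposal is correct and is essentially the argument underlying \cite[Lemma~26]{bou2023mixing}, which the paper simply cites: the paper's own proof just invokes that lemma for the $\tfrac{11}{2}MT^2\|x-y\|$ bound and then plugs $LT^2\le\tfrac1{12}$ into the global estimate $\|\nabla\varphi_{x,y}(v)-I\|_\op\le \frac{(12/5)LT^2}{1-(6/5)LT^2}$ from the same reference to obtain $\tfrac29$. Your derivation of $\tfrac29$ via $\|E_i\|_\op\le\tfrac1{10}$ matches this exactly (since $(6/5)\cdot\tfrac1{12}=\tfrac1{10}$), and your plan for the $\|x-y\|$-dependent bound --- differentiate \eqref{eq:MixingMap}, invert $D_v\tilde q_{T,h}$, and control the Jacobian variation via second derivatives of the Verlet flow combined with Lemma~\ref{lem:MixingMapPtWise} --- is precisely the structure of the cited argument.
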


\begin{remark}
Lemmas~\ref{lem:MixingMapPtWise} and~\ref{lem:MixingMapJacobian} are restatements,
in our notation and under our assumptions, of Lemmas~25 and~26
of~\cite{bou2023mixing}.  We include them here for completeness.
\end{remark}

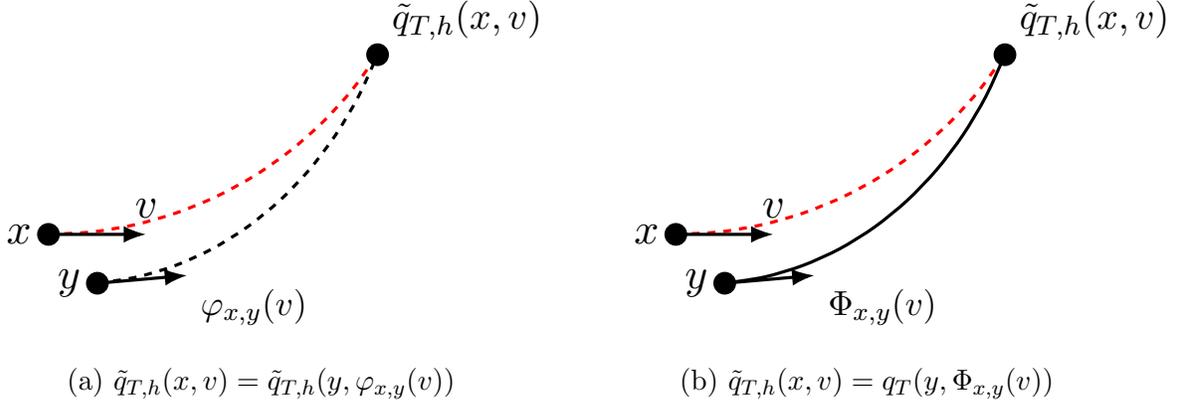
\begin{figure}[t]
\centering
\begin{minipage}{\textwidth} 
      \begin{minipage}[b]{0.50\textwidth}
      \begin{center}
\begin{tikzpicture}[scale=1.3]
\begin{scope}[very thick] 
       \tikzmath{
       \t=1.0; \x1 = 0.0; \x2 =-4.0; \y1=0.5; \y2=-4.5;  \u1=4.; \u2=0.0; \om=1.;
                      \v1=\u1+\om*cos(deg(\om*\t))/sin(deg(\om*\t))*(\x1-\y1); 
                      \v2=\u2+\om*cos(deg(\om*\t))/sin(deg(\om*\t))*(\x2-\y2); 
                      \z1=cos(deg(\om*\t))*\x1+sin(deg(\om*\t))*\u1/\om;
                      \z2=cos(deg(\om*\t))*\x2+sin(deg(\om*\t))*\u2/\om;
                      } 
                       \draw  [color=red,dashed,smooth,domain=0:\t,samples=10,variable=\xx]  plot ({cos(deg(\om*\xx))*\x1+sin(deg(\om*\xx))*\u1/\om},{cos(deg(\om*\xx))*\x2+sin(deg(\om*\xx))*\u2/\om});          
                       \draw  [color=black,dashed,smooth,domain=0:\t,samples=10,variable=\x]  plot ({cos(deg(\om*\x))*\y1+sin(deg(\om*\x))*\v1/\om},{cos(deg(\om*\x))*\y2+sin(deg(\om*\x))*\v2/\om});
\filldraw[color=black,fill=black] (\x1,\x2) circle (0.1) node [left,black, scale=1.5]  {$x$};
\filldraw[color=black,fill=black] (\y1,\y2) circle (0.1) node [left,black, scale=1.5]  {$y$};
\filldraw[color=black,fill=black] (\z1,\z2) circle (0.1) node [above right,black, scale=1.3] {$\tilde q_{T,h}(x,v)$};
\draw[->,-{Latex[length=3mm]}](\x1,\x2)--(\x1+0.25*\u1,\x2+0.25*\u2) node [above,black, scale=1.5] {$v$};
\draw[->,-{Latex[length=3mm]}](\y1,\y2)--(\y1+0.25*\v1,\y2+0.25*\v2) node [below right,black, scale=1.2] {$\varphi_{x,y}(v)$};
 \end{scope}
\end{tikzpicture}
\end{center}
\end{minipage}
      \begin{minipage}[b]{0.50\textwidth}
      \begin{center}
\begin{tikzpicture}[scale=1.3]
\begin{scope}[very thick] 
       \tikzmath{
       \t=1.0; \x1 = 0.0; \x2 =-4.0; \y1=0.5; \y2=-4.5;  \u1=4.; \u2=0.0; \om=1.;
                      \v1=\u1+\om*cos(deg(\om*\t))/sin(deg(\om*\t))*(\x1-\y1); 
                      \v2=\u2+\om*cos(deg(\om*\t))/sin(deg(\om*\t))*(\x2-\y2); 
                      \z1=cos(deg(\om*\t))*\x1+sin(deg(\om*\t))*\u1/\om;
                      \z2=cos(deg(\om*\t))*\x2+sin(deg(\om*\t))*\u2/\om;
                      } 
                       \draw  [color=red,dashed,smooth,domain=0:\t,samples=10,variable=\xx]  plot ({cos(deg(\om*\xx))*\x1+sin(deg(\om*\xx))*\u1/\om},{cos(deg(\om*\xx))*\x2+sin(deg(\om*\xx))*\u2/\om});          
                       \draw  [color=black,smooth,domain=0:\t,samples=10,variable=\x]  plot ({cos(deg(\om*\x))*\y1+sin(deg(\om*\x))*\v1/\om},{cos(deg(\om*\x))*\y2+sin(deg(\om*\x))*\v2/\om});
\filldraw[color=black,fill=black] (\x1,\x2) circle (0.1) node [left,black, scale=1.5]  {$x$};
\filldraw[color=black,fill=black] (\y1,\y2) circle (0.1) node [left,black, scale=1.5]  {$y$};
\filldraw[color=black,fill=black] (\z1,\z2) circle (0.1) node [above right,black, scale=1.3] {$\tilde q_{T,h}(x,v)$};
\draw[->,-{Latex[length=3mm]}](\x1,\x2)--(\x1+0.25*\u1,\x2+0.25*\u2) node [above,black, scale=1.5] {$v$};
\draw[->,-{Latex[length=3mm]}](\y1,\y2)--(\y1+0.25*\v1,\y2+0.25*\v2) node [below right,black, scale=1.2] {$\Phi_{x,y}(v)$};
 \end{scope}
\end{tikzpicture}
\end{center}
\end{minipage}
\end{minipage}
\begin{minipage}{\textwidth}
\vspace{1em}
      \begin{minipage}[b]{0.48\textwidth}
      \centering
{\normalsize (a) $\tilde q_{T,h}(x,v) = \tilde q_{T, h}(y, \varphi_{x,y}(v))$}
\end{minipage}
\begin{minipage}[b]{0.48\textwidth}
\centering
{\normalsize (b) $\tilde q_{T,h}(x,v) = q_T(y, \Phi_{x,y}(v))$}
\end{minipage}
\end{minipage}
\caption{One-shot couplings. (a) To obtain regularization bounds for \uHMCv, the initial velocities are coupled such that $\tilde q_{T,h}(x,v) = \tilde q_{T, h}(y, \varphi_{x,y}(v))$. (b) To obtain cross-regularization bounds for \uhmcv, the initial velocities are coupled such that $\tilde q_{T,h}(x,v) = q_T(y, \Phi_{x,y}(v))$.}
  \label{fig:coupling}
\end{figure}

\subsection{One-shot map for cross-regularization}\label{sec:TargetMap}

For any $x,y\in\R^d$, let $\Phi_{x,y}:\R^d\to\R^d$ denote a map satisfying
\begin{equation}\label{eq:TargetMap}
    \tilde q_{T,h}(x,v) = q_T\bigl(y,\Phi_{x,y}(v)\bigr),
    \qquad \forall\, v\in\R^d.
\end{equation}
That is, the one-shot map $\Phi_{x,y}$ maps an initial velocity $v$ at position $x$ for the
discretized dynamics to a velocity at position $y$ for the exact Hamiltonian
flow such that the two trajectories coincide in position after time $T$.

This map is illustrated in Figure~\ref{fig:coupling}(b).  For any $x,y\in\R^d$,
the existence of a map $\Phi_{x,y}$ satisfying~\eqref{eq:TargetMap} under the
assumptions that $f$ is $L$--smooth and $LT^2\le \frac{2}{5}\pi^2$ is established
in Lemma~\ref{lem:ExistenceOfTargetMap} in
Section~\ref{app:TargetMap}.  As described in
Section~\ref{app:TargetMap}, such a map can be analyzed by viewing it as a
composition of the one-shot coupling maps for \uhmcv studied
in~\cite{bou2023mixing}.

We have the following second-order regularity estimates for the map and its Jacobian which are helpful in obtaining asymptotic bias guarantees in KL divergence (Theorem~\ref{thm:KLTarget}).  Lemmas~\ref{lem:TargetMapPtWise}
and~\ref{lem:TargetMapJacobian} are proved in
Section~\ref{app:TargetMap}.

\begin{lemma}\label{lem:TargetMapPtWise}
    Suppose Assumption~\ref{assumption:potential}(a)-(c) holds. Further suppose $h \geq 0$ and that $L(T^2+Th) \leq \frac{1}{12}$. Then for any $x,y, v \in \R^d$
    \[
    \|\Phi_{x,y}(v)-v\| \leq \frac{3}{2T}\|x-y\| + 2h^2(LT^{-1}\|x\|+L\|v\|+MT^{-1}\|x\|^2+MT\|v\|^2)\,.
    \]
\end{lemma}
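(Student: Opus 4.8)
The plan is to factor the cross one-shot map $\Phi_{x,y}$ through the \emph{exact} one-shot map, which isolates the pure discretization error and lets us reuse the pointwise estimate for $\varphi_{x,y}$ already available from Lemma~\ref{lem:MixingMapPtWise}. Under Assumption~\ref{assumption:potential}(b) and $LT^2\le\frac{2}{5}\pi^2$, the map $q_T(x,\cdot)$ is a bijection of $\R^d$ in this regime (this is what underlies the existence and composition statements in Lemma~\ref{lem:ExistenceOfTargetMap} and Section~\ref{app:TargetMap}). Write $\Phi_{x,x}$ for the ``same-point'' map satisfying $q_T\bigl(x,\Phi_{x,x}(v)\bigr)=\tilde q_{T,h}(x,v)$ (the special case $y=x$ of~\eqref{eq:TargetMap}), and $\varphi^{(0)}_{x,y}$ for the $h=0$ specialization of~\eqref{eq:MixingMap}, i.e.\ $q_T\bigl(y,\varphi^{(0)}_{x,y}(w)\bigr)=q_T(x,w)$. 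Then $q_T\bigl(y,\varphi^{(0)}_{x,y}(\Phi_{x,x}(v))\bigr)=\tilde q_{T,h}(x,v)$, so by uniqueness of the solution of~\eqref{eq:TargetMap} we obtain the factorization $\Phi_{x,y}=\varphi^{(0)}_{x,y}\circ\Phi_{x,x}$, whence
\[
\|\Phi_{x,y}(v)-v\|\ \le\ \bigl\|\varphi^{(0)}_{x,y}(\Phi_{x,x}(v))-\Phi_{x,x}(v)\bigr\|\ +\ \|\Phi_{x,x}(v)-v\|\ \le\ \frac{3}{2T}\|x-y\|\ +\ \|\Phi_{x,x}(v)-v\|,
\]
where the first term is controlled by Lemma~\ref{lem:MixingMapPtWise} applied with step size $0$ (its hypothesis $L(T^2+T\cdot0)=LT^2\le\frac1{12}$ is implied by the assumption of the present lemma). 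It therefore remains to show $\|\Phi_{x,x}(v)-v\|\le 2h^2\bigl(LT^{-1}\|x\|+L\|v\|+MT^{-1}\|x\|^2+MT\|v\|^2\bigr)$.

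To this end, subtract $q_T(x,v)$ from the identity $q_T\bigl(x,\Phi_{x,x}(v)\bigr)=\tilde q_{T,h}(x,v)$ and apply the integral mean value theorem in the velocity variable:
\[
\tilde q_{T,h}(x,v)-q_T(x,v)\ =\ A_{x,v}\bigl(\Phi_{x,x}(v)-v\bigr),\qquad A_{x,v}\coloneqq\int_0^1 \partial_v q_T\bigl(x,\,v+s(\Phi_{x,x}(v)-v)\bigr)\,ds.
\]
The second-variation (Jacobi) equation $\ddot J_t=-\nabla^2 f(x_t)J_t$ with $J_0=0$, $\dot J_0=I$, combined with $\|\nabla^2 f\|_\op\le L$, gives $\|\partial_v q_T(x,w)-T\,I\|_\op\lesssim LT^3$ \emph{uniformly} in $w$; under $LT^2\le\frac1{12}$ this forces $A_{x,v}$ to be invertible with $\|A_{x,v}^{-1}\|_\op\le c_0/T$ for an explicit constant $c_0$ close to $1$. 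Crucially the Jacobi bound does not depend on the velocity argument, so there is no circularity with the unknown $\Phi_{x,x}(v)$. Hence $\|\Phi_{x,x}(v)-v\|\le (c_0/T)\,\|\tilde q_{T,h}(x,v)-q_T(x,v)\|$, and the problem reduces to a quantitative accuracy bound for the velocity Verlet position map.

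For the latter I would run a standard one-step-error plus discrete-Gronwall (``Lady Windermere's fan'') argument. Taylor-expanding one Verlet step against the exact time-$h$ flow from $(x_j,v_j)$ produces local truncation errors of order $h^3L\|v_j\|$ in position and of order $h^3\bigl(L^2\|x_j\|+M\|v_j\|^2\bigr)$ in velocity, the $M$-term coming from $\nabla^3 f$ in the third time derivative of the trajectory and from Taylor expanding $\nabla f(x_{j+1})$. Propagating each local error forward along the exact flow to time $T$ costs a factor $O(1)$ for position errors and $O(T)$ for velocity errors (again by the Jacobi bounds); summing the $N=T/h$ contributions and inserting the a priori trajectory bounds $\sup_{t\le T}\|x_t\|\lesssim\|x\|+T\|v\|$ and $\sup_{t\le T}\|v_t\|\lesssim\|v\|+LT\|x\|$ (themselves Gronwall estimates under $LT^2$ small) yields $\|\tilde q_{T,h}(x,v)-q_T(x,v)\|\lesssim h^2\bigl(TL\|v\|+T^2L^2\|x\|+T^2M\|v\|^2+T^4L^2M\|x\|^2\bigr)$, up to lower-order terms absorbed using $L(T^2+Th)\le\frac1{12}$. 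Multiplying by $c_0/T$ and coarsening every factor $LT^2$ (and $(LT^2)^2$) by $\frac1{12}$ collapses this into $2h^2\bigl(LT^{-1}\|x\|+L\|v\|+MT^{-1}\|x\|^2+MT\|v\|^2\bigr)$; in particular the displayed $MT^{-1}\|x\|^2$ term absorbs the $T^3L^2M\|x\|^2$ contribution since $L^2T^4\le(1/12)^2$.

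The conceptual steps — the factorization and the quantitative inverse-function argument for $q_T(x,\cdot)$ — are routine given the machinery of~\cite{bou2023mixing}; the main obstacle is the explicit constant bookkeeping in the velocity Verlet accuracy estimate, i.e.\ producing dimension-free constants sharp enough to land the final coefficient at exactly $2$ with the displayed powers of $T$. It is natural to record this Verlet error bound as a separate lemma (as in Section~\ref{app:TargetMap}) and feed it into the argument above.
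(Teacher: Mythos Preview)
Your proposal is correct and follows essentially the same route as the paper. Your $\Phi_{x,x}$ is exactly the paper's map $\phi_x$ from~\eqref{eq:BiasMap}, so your factorization $\Phi_{x,y}=\varphi^{(0)}_{x,y}\circ\Phi_{x,x}$ coincides with the paper's $\Phi_{x,y}=\varphi_{x,y}\circ\phi_x$ (at $h=0$), and your inverse-function-plus-Verlet-accuracy argument for $\|\Phi_{x,x}(v)-v\|$ is precisely the content of Lemma~\ref{lem:BiasMapPtWise} (i.e.\ \cite[Lemma~27]{bou2023mixing}), which the paper simply cites rather than reproves.
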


\begin{lemma}\label{lem:TargetMapJacobian}
    Suppose Assumption~\ref{assumption:potential}(a)-(d) holds. Further suppose $h \geq 0$ and that $L(T^2+Th) \leq \frac{1}{12}$. Then for any $x,y, v \in \R^d$
    \[
    \|\nabla\Phi_{x,y}(v)-I\|_\op \leq \min \left\{ \frac{15}{18}, \frac{11}{2}MT^2\|x-y\| + \frac{22}{9}h^2 Q(x,v) \right\}
    \]
    where 
    \[
    Q(x,v) \coloneqq L+M\|x\|+MT\|v\|+(M^2T^2+N)\|x\|^2+(M^2T^2+N)T^2\|v\|^2\,.
    \]
\end{lemma}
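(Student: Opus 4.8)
The plan is to factor the cross-regularization map through a \emph{single-point} discretization-comparison map and then read off the two bounds from Lemma~\ref{lem:MixingMapJacobian} via the chain rule. Fix $x,y,v$, let $\Phi_{x,x}$ be the specialization of the map in~\eqref{eq:TargetMap} to $y=x$, so $\tilde q_{T,h}(x,v)=q_T(x,\Phi_{x,x}(v))$, and let $\bar\varphi_{x,y}$ be the one-shot map for the \emph{exact} flow, i.e.\ $q_T(x,w)=q_T(y,\bar\varphi_{x,y}(w))$ for all $w$; both exist under $LT^2\le\frac{2}{5}\pi^2$ (Lemma~\ref{lem:ExistenceOfTargetMap}, and \cite[Corollary~14]{bou2023mixing} read at step size $0$, since $\tilde q_{T,0}=q_T$). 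Composing the defining identities gives $\tilde q_{T,h}(x,v)=q_T(y,\bar\varphi_{x,y}(\Phi_{x,x}(v)))$, hence $\Phi_{x,y}=\bar\varphi_{x,y}\circ\Phi_{x,x}$. Writing $E:=\nabla\Phi_{x,x}(v)-I$ and $F:=\nabla\bar\varphi_{x,y}(\Phi_{x,x}(v))-I$, the chain rule gives $\nabla\Phi_{x,y}(v)-I=F+FE+E$, so
\[
\|\nabla\Phi_{x,y}(v)-I\|_\op \;\le\; \|F\|_\op+\|F\|_\op\|E\|_\op+\|E\|_\op .
\]

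For $\|F\|_\op$, Lemma~\ref{lem:MixingMapJacobian} at step size $0$ (admissible since $LT^2\le L(T^2+Th)\le\frac{1}{12}$) gives $\|F\|_\op\le\min\{\frac{2}{9},\ \frac{11}{2}MT^2\|x-y\|\}$. The remaining, and central, input is the single-point estimate $\|\nabla\Phi_{x,x}(v)-I\|_\op \le \min\{\frac12,\ 2h^2 Q(x,v)\}$. Differentiating $\tilde q_{T,h}(x,v)=q_T(x,\Phi_{x,x}(v))$ in $v$ gives $\nabla\Phi_{x,x}(v)=G^{-1}\tilde J$ with $G:=\partial_w q_T(x,\Phi_{x,x}(v))$ and $\tilde J:=\partial_v\tilde q_{T,h}(x,v)$, whence
\[
E \;=\; G^{-1}\bigl[(\tilde J-\partial_v q_T(x,v))+(\partial_v q_T(x,v)-G)\bigr].
\]
The discrete variational recursion (differentiate lines~4--5 of Algorithm~\ref{alg:VerletFlow} in $v$) and the exact variational ODE both have right-hand sides bounded using only $\|\nabla^2 f\|\le L$; under $h\sqrt L<2$ (which follows from $Lh^2\le LT^2\le\frac{1}{12}$) a discrete Gr\"onwall argument yields $\|\tilde J-TI\|_\op,\ \|G-TI\|_\op\le cLT^3$ uniformly in $x,v$, so $\|G^{-1}\|_\op\le 2/T$ and $\|E\|_\op\le c'LT^2\le\frac12$. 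For the $h^2$ bound, the first bracketed difference is the Jacobian discretization error, controlled by $h^2$ times $\nabla^2 f,\nabla^3 f,\nabla^4 f$ along the Verlet trajectory issuing from $(x,v)$, whose iterates have norm $O(\|x\|+T\|v\|)$ by the same Gr\"onwall bound --- producing the terms $L+M\|x\|+MT\|v\|+(N+M^2T^2)(\|x\|^2+T^2\|v\|^2)$, the $M^2T^2$ cross-terms coming from compounded $\nabla^3 f$ contributions in the variational recursion --- while the second difference is handled via Lipschitz-in-velocity continuity of $\partial_v q_T(x,\cdot)$ (constant $O(MT^3)$) together with $\|\Phi_{x,x}(v)-v\|=O(h^2)$ from Lemma~\ref{lem:TargetMapPtWise} at $y=x$. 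Multiplying by $\|G^{-1}\|_\op\le 2/T$ collects everything into $2h^2 Q(x,v)$.

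To finish, I would combine $\|F\|_\op+\|F\|_\op\|E\|_\op+\|E\|_\op$ in two ways. Using $\|F\|_\op\le\frac{2}{9}$ and $\|E\|_\op\le\frac12$ gives $\frac{2}{9}+\frac{2}{9}\cdot\frac12+\frac12=\frac13+\frac12=\frac{15}{18}$. Using $\|F\|_\op\le\frac{11}{2}MT^2\|x-y\|$ for the isolated factor, $\|F\|_\op\le\frac{2}{9}$ inside the product, and $\|E\|_\op\le 2h^2 Q(x,v)$ for the two copies of $\|E\|_\op$ gives $\frac{11}{2}MT^2\|x-y\|+\frac{2}{9}\cdot 2h^2 Q(x,v)+2h^2 Q(x,v)=\frac{11}{2}MT^2\|x-y\|+\frac{22}{9}h^2 Q(x,v)$; the minimum of the two is the claim. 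The main obstacle is the single-point estimate: pinning down the polynomial degrees in $\|x\|,\|v\|$, correctly tracking the $M^2T^2$ and $N$ contributions, and ensuring the sum of the $T/h$ per-step local truncation errors loses exactly one power of $h$ (so the bound is $O(h^2)$, not $O(h)$), all while the Gr\"onwall constants stay small enough to fit inside $\frac12$ and $\frac{22}{9}$; the chain-rule bookkeeping elsewhere is routine.
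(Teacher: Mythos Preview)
Your approach is essentially the same as the paper's. Your map $\Phi_{x,x}$ is exactly the paper's $\phi_x$ (defined in~\eqref{eq:BiasMap}), your $\bar\varphi_{x,y}$ is the paper's $\varphi_{x,y}$ at $h=0$, the factorization $\Phi_{x,y}=\bar\varphi_{x,y}\circ\Phi_{x,x}$ is precisely~\eqref{eq:TargetMapAsComposition}, and the chain-rule bound $\|F\|_\op+\|F\|_\op\|E\|_\op+\|E\|_\op$ together with your two-way combination of the minima is line-for-line the paper's argument. The one difference is that the paper does not re-derive the single-point estimate $\|\nabla\phi_x(v)-I\|_\op\le\min\{\tfrac12,\,2h^2Q(x,v)\}$ but simply quotes it as Lemma~\ref{lem:BiasMapJacobian} (i.e.\ \cite[Lemma~28]{bou2023mixing}); your sketch of that bound is in the right spirit but, as you note, the constant-tracking there is the real work, and you can bypass it by citing that lemma directly.
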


Lemmas~\ref{lem:TargetMapPtWise} and~\ref{lem:TargetMapJacobian} are obtained by
analyzing a second-order trapezoidal approximation and therefore exhibit
$h^2$--dependence on the step size, together with quadratic dependence on the
position $x$ and velocity $v$.  The quadratic dependence on $v$ poses
difficulties for R\'enyi divergence bounds, since controlling the R\'enyi
divergence between a perturbed Gaussian and a standard Gaussian would require
bounds on moments of the form
$\E_{v\sim\gamma_d}[e^{\lambda\|v\|^4}]$, which diverge for all $\lambda>0$.

To obtain finite cross-regularization bounds in R\'enyi divergence
(Lemma~\ref{lem:Renyi_Target_OneStepRegularity}), we therefore derive weaker
pointwise and Jacobian estimates for $\Phi_{x,y}$ based on a first-order
approximation, which do not exploit the second-order accuracy of the velocity
Verlet integrator.  The resulting bounds are presented in
Lemmas~\ref{lem:TargetMap-First-PtWise}
and~\ref{lem:TargetMap-First-Jacobian} and are proved in
Appendix~\ref{app:TargetMap-First}.

\begin{lemma}\label{lem:TargetMap-First-PtWise}
    Suppose Assumption~\ref{assumption:potential}(a)-(b) holds. Further suppose $h \geq 0$ and that $L(T^2+Th) \leq \frac{1}{12}$. Then for any $x,y, v \in \R^d$
    \[
    \|\Phi_{x,y}(v)-v\| \leq \frac{3}{2T}\|x-y\| + \frac{7}{5}h \left( \frac{1}{5T}\|v\| + \frac{7}{36}L\|x\| \right).
    \]
\end{lemma}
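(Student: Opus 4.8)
The plan is to reduce to the diagonal case $y=x$ by factoring $\Phi_{x,y}$ through the exact-flow one-shot map, and then to bound the diagonal map by a \emph{first-order} control of the velocity-Verlet position error. Let $\varphi^{(0)}_{x,y}$ denote the one-shot map for the exact flow, i.e. the $h=0$ instance of~\eqref{eq:MixingMap}, characterized by $q_T(x,w)=q_T(y,\varphi^{(0)}_{x,y}(w))$; its existence and the bound $\|\varphi^{(0)}_{x,y}(w)-w\|\le\frac{3}{2T}\|x-y\|$ (uniform in $w$) are the $h=0$ instances of \cite[Corollary~14]{bou2023mixing} and Lemma~\ref{lem:MixingMapPtWise}, which apply since $L(T^2+Th)\le\frac1{12}$ forces $LT^2\le\frac1{12}$. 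The diagonal map $\Phi_{x,x}$, characterized by $\tilde q_{T,h}(x,v)=q_T(x,\Phi_{x,x}(v))$, exists by Lemma~\ref{lem:ExistenceOfTargetMap}. Since $q_T(y,\varphi^{(0)}_{x,y}(\Phi_{x,x}(v)))=q_T(x,\Phi_{x,x}(v))=\tilde q_{T,h}(x,v)$, the composition $\varphi^{(0)}_{x,y}\circ\Phi_{x,x}$ satisfies~\eqref{eq:TargetMap}, hence equals $\Phi_{x,y}$ by injectivity of $v\mapsto q_T(y,v)$. Therefore
\[
\|\Phi_{x,y}(v)-v\|\;\le\;\bigl\|\varphi^{(0)}_{x,y}\bigl(\Phi_{x,x}(v)\bigr)-\Phi_{x,x}(v)\bigr\|+\bigl\|\Phi_{x,x}(v)-v\bigr\|\;\le\;\tfrac{3}{2T}\|x-y\|+\bigl\|\Phi_{x,x}(v)-v\bigr\|,
\]
and it remains to prove directly that $\|\Phi_{x,x}(v)-v\|\le\frac{7}{5}h\bigl(\frac{1}{5T}\|v\|+\frac{7}{36}L\|x\|\bigr)$ (the $y=x$ case of the lemma).

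For the diagonal bound I would first pass from velocities to positions: since $q_T(x,\Phi_{x,x}(v))-q_T(x,v)=\tilde q_{T,h}(x,v)-q_T(x,v)$ and the velocity-to-position Jacobian $\nabla_w q_T(x,w)$ is invertible with $\|(\nabla_w q_T(x,w))^{-1}\|_{\op}$ of order $1/T$ under $LT^2\le\frac1{12}$ (a Jacobi-field estimate of the type used in~\cite{bou2023mixing}), a mean-value argument gives $\|\Phi_{x,x}(v)-v\|\le\sup_w\|(\nabla_w q_T(x,w))^{-1}\|_{\op}\cdot\|\tilde q_{T,h}(x,v)-q_T(x,v)\|$. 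Everything thus reduces to a first-order bound on the position discretization error. Writing $e^x_k\coloneqq\tilde x_k-x^{\mathrm{ex}}(kh)$, $e^v_k\coloneqq\tilde v_k-v^{\mathrm{ex}}(kh)$ for the Verlet iterates from $(x,v)$ against the exact flow from the same point, a one-step Taylor expansion of the exact flow together with Assumption~\ref{assumption:potential}(b) ($\nabla f$ $L$-Lipschitz, $\nabla f(0)=0$) yields coupled recursions of the shape $\|e^x_{k+1}\|\le(1+\tfrac{Lh^2}{2})\|e^x_k\|+h\|e^v_k\|+O(Lh^3V)$ and $\|e^v_{k+1}\|\le\|e^v_k\|+\tfrac{Lh}{2}(\|e^x_k\|+\|e^x_{k+1}\|)+O(Lh^2V)$, where $V\coloneqq\sup_{t\le T}\|v^{\mathrm{ex}}(t)\|$; crucially these estimates do \emph{not} invoke the trapezoidal cancellation in the Verlet velocity update, which would require $\nabla^3 f$ and introduce quadratic dependence on $\|v\|$. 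A discrete Gr\"onwall/telescoping argument over the $N=T/h$ steps (whose geometric factor is $O(1)$ because $\sqrt{L}\,T\le 1/\sqrt{12}$) gives $\|\tilde q_{T,h}(x,v)-q_T(x,v)\|=\|e^x_N\|\lesssim LhT^2 V$; the elementary a priori bound $V\lesssim\|v\|+LT\|x\|$ (Gr\"onwall on the exact flow, closed using $LT^2\le\frac1{12}$) then yields $\|e^x_N\|\lesssim h(LT^2\|v\|+L^2T^3\|x\|)$. Dividing by $\Theta(T)$ and rewriting $LT\cdot LT^2\lesssim L$, $LT^2\lesssim1$ gives $\|\Phi_{x,x}(v)-v\|\lesssim\frac{h}{T}\|v\|+hL\|x\|$ with absolute constants, which with careful bookkeeping sharpens to $\frac{7}{25}\frac{h}{T}\|v\|+\frac{49}{180}hL\|x\|=\frac{7}{5}h(\frac{1}{5T}\|v\|+\frac{7}{36}L\|x\|)$.

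The main obstacle is the last step: carrying out the coupled discrete Gr\"onwall estimate for $(e^x_k,e^v_k)$ with \emph{explicit} constants, while (i) using only Assumption~\ref{assumption:potential}(a)--(b) --- no $\nabla^3 f$ or higher --- so that the dependence on $\|v\|$ stays \emph{linear}; this linearity is the whole point of the first-order variant, since it is what keeps the R\'enyi moment $\E_{v\sim\gammad}[e^{c\|v\|^2}]$ (Lemma~\ref{lem:Helper_Orlicz}) finite, whereas a $\|v\|^2$ term would force $\E_{v\sim\gammad}[e^{c\|v\|^4}]=\infty$; and (ii) absorbing the geometric factors $(1-cLT^2)^{-1}$ into the numerical constants $\frac15,\frac{7}{36},\frac75$. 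The auxiliary ingredients --- the a priori trajectory bounds for the exact flow and the $\Theta(1/T)$ control of $(\nabla_w q_T)^{-1}$ --- are routine but must be tracked with comparable care; the full computation is deferred to Appendix~\ref{app:TargetMap-First}.
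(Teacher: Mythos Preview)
Your decomposition is exactly the paper's: writing $\Phi_{x,y}=\varphi^{(0)}_{x,y}\circ\Phi_{x,x}$ (the paper calls the diagonal map $\phi_x$), applying Lemma~\ref{lem:MixingMapPtWise} at $h=0$ to the outer factor, and reducing the inner factor to a position discretization error via the Jacobian inverse bound $\|(\nabla_w q_T(x,w))^{-1}\|_{\op}\lesssim 1/T$ (this is precisely the $\tfrac{7}{6T}$ factor from \cite[Lemma~27]{bou2023mixing} that the paper invokes in Lemma~\ref{lem:BiasMap-First-PtWise}).

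The only substantive difference is how you bound the position error $\|\tilde q_{T,h}(x,v)-q_T(x,v)\|$. The paper does this via an integral representation $q_T-\tilde q_T=\rn{1}+\rn{2}+\rn{3}$ inherited from \cite[Lemma~23]{bou2023mixing}, with the key term $\rn{3}$ controlled by a new \emph{first-order} trapezoidal error lemma (Lemma~\ref{lem:trapezoidal}) that requires only a bound on $u'$ rather than $u''$; this is what drops the $\nabla^3 f$ assumption and keeps the $\|v\|$-dependence linear. Your discrete Gr\"onwall on $(e^x_k,e^v_k)$ achieves the same thing more directly and is a perfectly valid alternative; it is arguably more elementary but less modular, since the paper's route recycles the existing integral decomposition. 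One caveat: your assertion that ``careful bookkeeping sharpens to'' the precise constants $\tfrac{7}{25},\tfrac{49}{180}$ is optimistic---those specific numbers fall out of the combination $\tfrac{7}{6T}\times(\tfrac{6}{25},\tfrac{7}{30}LT)$ from the paper's particular decomposition, and a direct Gr\"onwall will produce comparable but not identical constants (indeed, your sketch gives $E^x_N\lesssim LhT^2V$, which after dividing by $T$ and using $LT^2\le\tfrac1{12}$ is actually \emph{tighter} in the $\|v\|$ term than the stated bound, so matching the paper's constants exactly would require artificially loosening your estimate).
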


\begin{lemma}\label{lem:TargetMap-First-Jacobian}
    Suppose Assumption~\ref{assumption:potential}(a)-(c) holds. Further suppose $h \geq 0$ and that $L(T^2+Th) \leq \frac{1}{12}$. Then for any $x,y, v \in \R^d$
    \[
    \|\nabla\Phi_{x,y}(v)-I\|_\op \leq \min \left\{ \frac{15}{18}, \frac{11}{2}MT^2\|x-y\| + \frac{22}{135}h\, \left( \frac{2}{T} + \frac{16}{5}MT \|x\| + 20MT^2 \|v\| \right) \right\}\,.
    \]
\end{lemma}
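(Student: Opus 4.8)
The map $\Phi_{x,y}$ here is the same object as in Lemma~\ref{lem:TargetMapJacobian}, so the absolute bound $\|\nabla\Phi_{x,y}(v)-I\|_\op\le\tfrac{15}{18}$ is obtained exactly as there, and in fact needs only Assumption~\ref{assumption:potential}(b): differentiating the defining relation~\eqref{eq:TargetMap} in $v$ gives $\nabla\Phi_{x,y}(v)=\bigl[\nabla_v q_T(y,\Phi_{x,y}(v))\bigr]^{-1}\nabla_v\tilde q_{T,h}(x,v)$, and since both velocity-Jacobians solve (discrete) Jacobi-type equations whose coefficient is $\nabla^2 f$ along the relevant trajectory, of operator norm $\le L$, the hypothesis $L(T^2+Th)\le\tfrac1{12}$ forces each of them to stay within a small multiple of $T\,I$, uniformly in the base point and the initial velocity, so $\nabla\Phi_{x,y}(v)$ is within $\tfrac{15}{18}$ of $I$. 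For the $h$-linear bound I would use the factorization described in Section~\ref{app:TargetMap}, namely $\Phi_{x,y}=\varphi^{\mathsf{ex}}_{x,y}\circ\theta_x$, where $\varphi^{\mathsf{ex}}_{x,y}$ is the \ehmc one-shot map (the $h=0$ instance of the \uhmcv one-shot map of~\cite{bou2023mixing}, so that $q_T(x,u)=q_T(y,\varphi^{\mathsf{ex}}_{x,y}(u))$) and $\theta_x$ is the same-position map with $\tilde q_{T,h}(x,\cdot)=q_T(x,\theta_x(\cdot))$; existence of both under $LT^2\le\tfrac25\pi^2$ is part of Lemma~\ref{lem:ExistenceOfTargetMap}. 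The chain rule gives $\|\nabla\Phi_{x,y}(v)-I\|_\op\le\|\nabla\varphi^{\mathsf{ex}}_{x,y}(u)-I\|_\op\,\bigl(1+\|\nabla\theta_x(v)-I\|_\op\bigr)+\|\nabla\theta_x(v)-I\|_\op$ with $u=\theta_x(v)$, and Lemma~\ref{lem:MixingMapJacobian} at $h=0$ bounds $\|\nabla\varphi^{\mathsf{ex}}_{x,y}(u)-I\|_\op\le a\coloneqq\min\{\tfrac29,\tfrac{11}2MT^2\|x-y\|\}$. Thus the lemma reduces to showing that $\epsilon\coloneqq\|\nabla\theta_x(v)-I\|_\op$ satisfies both $\epsilon\le\tfrac12$ and $\epsilon\le\tfrac2{15}h\bigl(\tfrac2T+\tfrac{16}5MT\|x\|+20MT^2\|v\|\bigr)$; granting this, $\|\nabla\Phi_{x,y}(v)-I\|_\op\le a+(a+1)\epsilon$, and from $a\le\tfrac29$, $a\le\tfrac{11}2MT^2\|x-y\|$, $a+1\le\tfrac{11}9$ one checks $a+(a+1)\epsilon\le\min\bigl\{\tfrac29+\tfrac{11}9\cdot\tfrac12,\ \tfrac{11}2MT^2\|x-y\|+\tfrac{11}9\cdot\tfrac2{15}h(\cdots)\bigr\}=\min\bigl\{\tfrac{15}{18},\ \tfrac{11}2MT^2\|x-y\|+\tfrac{22}{135}h(\cdots)\bigr\}$, the claimed bound.

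\textbf{Bounding $\nabla\theta_x$.} Differentiating $\tilde q_{T,h}(x,v)=q_T(x,\theta_x(v))$ in $v$ and inverting $\nabla_v q_T$ — whose inverse has operator norm $\lesssim 1/T$ uniformly, because $LT^2\le\tfrac25\pi^2$ keeps the exact flow strictly below its first conjugate point (this is where the variational/strong-convexity input of~\cite{bou2023mixing} enters) — yields $\nabla\theta_x(v)-I=\bigl[\nabla_v q_T(x,\theta_x(v))\bigr]^{-1}(\Delta_1+\Delta_2)$, with $\Delta_1\coloneqq\nabla_v\tilde q_{T,h}(x,v)-\nabla_v q_T(x,v)$ the pure discretization error of the velocity-Jacobian and $\Delta_2\coloneqq\nabla_v q_T(x,v)-\nabla_v q_T(x,\theta_x(v))$. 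The uniform estimate $\epsilon\le\tfrac12$ follows as in the reduction step. For $\Delta_2$, a mean value estimate gives $\|\Delta_2\|_\op\le\sup_z\|\nabla^2_{vv}q_T(x,z)\|_\op\,\|\theta_x(v)-v\|$; the second velocity-Jacobi flow is sourced by $\nabla^3 f$ acting on the first Jacobi flow, so $\|\nabla^2_{vv}q_T\|_\op\lesssim MT^4$ uniformly, while $\|\theta_x(v)-v\|=\|\Phi_{x,x}(v)-v\|\le\tfrac75 h\bigl(\tfrac1{5T}\|v\|+\tfrac7{36}L\|x\|\bigr)$ by Lemma~\ref{lem:TargetMap-First-PtWise} at $y=x$ (where $\Phi_{x,x}=\theta_x$ since $\varphi^{\mathsf{ex}}_{x,x}=\mathrm{id}$); after multiplying by $\lesssim 1/T$ and using $LT^2\le\tfrac1{12}$, this is absorbed into the stated bound. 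The heart of the proof is $\Delta_1$: using the closed form $\tilde q_{T,h}(x,v)=x+Tv-h^2\sum_{j=0}^{N-1}\alpha_j\nabla f(x_j)$ (with $N=T/h$, trapezoidal weights $\alpha_0=N/2$ and $\alpha_j=N-j$, and $(x_j)$ the Verlet trajectory from $(x,v)$) together with $q_T(x,v)=x+Tv-\int_0^T(T-s)\nabla f(q_s(x,v))\ds$, differentiating in $v$ and subtracting, $\Delta_1$ splits into (a) the trapezoidal quadrature error of $\int_0^T(T-s)\nabla^2 f(q_s)\nabla_v q_s\ds$ against its weighted grid sum, and (b) the difference between the Verlet sensitivity matrices $\nabla_v x_j$ and the exact Jacobi flow $\nabla_v q_{jh}$, handled by a discrete Grönwall inequality along the Verlet sensitivity recursion — the requisite sensitivity bounds being exactly those of~\cite{bou2023mixing}. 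Throughout one uses only the a priori \emph{linear}-growth bounds $\|x_j\|,\|q_s\|\lesssim\|x\|+T\|v\|$ and $\|\nabla_v x_j\|_\op,\|\nabla_v q_s\|_\op\lesssim 1$ (valid since $LT^2\le\tfrac1{12}$) and never Taylor-expands $\nabla f$ beyond first order, which keeps every error term linear in $\|x\|$ and $\|v\|$ at the cost of an $O(h)$ (rather than $O(h^2)$) prefactor; collecting terms and multiplying by $\lesssim 1/T$ gives the claim, the $L$-free leading piece $\tfrac2T$ arising from the endpoint weight $\alpha_0=N/2$, i.e.\ from comparing $-\tfrac{hT}{2}\nabla^2 f(x)$ with the part of $\int_0^T(T-s)\nabla^2 f(q_s)\nabla_v q_s\ds$ near $s=0$.

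\textbf{Main obstacle.} The delicate step is item (b) above — the discrete Grönwall comparison of the Verlet sensitivity recursion with the exact Jacobi flow over $N=T/h$ steps, performed so that the resulting bound is linear in $\|x\|$ and $\|v\|$. This is precisely where the first-order philosophy is forced: the second-order local error of velocity Verlet contains a term $\propto h^3\,\nabla^3 f(q_t)[\dot q_t,\dot q_t]$, contributing $\|\dot q_t\|^2\sim\|v\|^2$ to any estimate that exploits the integrator's second-order accuracy — this is exactly the $\|v\|^2$ that appears in the sharper Lemma~\ref{lem:TargetMapJacobian} — so one must deliberately discard the $O(h^2)$ gain and settle for an $O(h)$ bound to keep the $v$-dependence linear, which is what renders the downstream Rényi moment $\E_{v\sim\gammad}\bigl[e^{\lambda\|v\|^2}\bigr]$ finite (cf.\ Lemma~\ref{lem:Renyi_Target_OneStepRegularity}). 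A secondary, more routine point is tracking the constants through the Grönwall step and the quadrature-error estimate carefully enough to land on the displayed coefficients $\tfrac2T$, $\tfrac{16}5MT$, $20MT^2$; this, like everything above, rests on the uniform $\lesssim 1/T$ bound on $[\nabla_v q_T]^{-1}$ inherited from the variational analysis of~\cite{bou2023mixing}.
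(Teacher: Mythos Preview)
Your proposal is correct and follows essentially the same route as the paper: the factorization $\Phi_{x,y}=\varphi_{x,y}\circ\phi_x$ (your $\varphi^{\mathsf{ex}}_{x,y}\circ\theta_x$), the chain-rule inequality $\|AB-I\|\le\|A-I\|+\|B-I\|+\|A-I\|\|B-I\|$, and the $\Delta_1,\Delta_2$ split for $\nabla\phi_x$ are exactly what the paper does, the only difference being that the paper packages the $\epsilon\le\min\{\tfrac12,\tfrac{2}{15}hQ\}$ step as a separate lemma (Lemma~\ref{lem:BiasMap-First-Jacobian}, proved via Lemmas~\ref{lem:BiasMap-First-PtWise} and~\ref{lem:L24analogue}) rather than arguing it inline. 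One small inaccuracy in your commentary: the $L$-free leading piece $\tfrac2T$ does not come from an endpoint weight but from applying $LT^2\le\tfrac1{12}$ to the $L$-dependent terms in the quadrature and Gr\"onwall errors (see the final simplification in the proof of Lemma~\ref{lem:L24analogue}).
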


To simplify notation in the remainder of the paper, we introduce the following
constants, which collect the coefficients appearing in the pointwise and
Jacobian estimates of Lemmas~\ref{lem:TargetMap-First-PtWise}
and~\ref{lem:TargetMap-First-Jacobian}.

\begin{definition}\label{def:Constants}
    Define the following constants which correspond to the coefficients of the pointwise and Jacobian estimates in Lemmas~\ref{lem:TargetMap-First-PtWise} and~\ref{lem:TargetMap-First-Jacobian}\,:
    \begin{align*}
        &\sfp_{xy} \coloneqq \frac{3}{2 \, T} \enspace,~~
        &\sfp_v \coloneqq \frac{7}{25 \, T} \enspace,~~\qquad
        &\sfp_x \coloneqq \frac{49}{180}L \enspace,\\
        &\sfj_{xy} \coloneqq \frac{11}{2}MT^2 \enspace,~~
        &\sfj_c \coloneqq \frac{44}{135 \, T} \enspace,~~\qquad
        &\sfj_v \coloneqq \frac{440}{135}MT^2 \enspace,~~\qquad
        \sfj_x \coloneqq \frac{352}{675}MT \enspace.
    \end{align*}
\end{definition}

With this notation, Lemma~\ref{lem:TargetMap-First-PtWise} can be written as
\[
\|\Phi_{x,y}(v)-v\|
\le
\sfp_{xy}\|x-y\| + h\,\sfp_v\|v\| + h\,\sfp_x\|x\|,
\]
and Lemma~\ref{lem:TargetMap-First-Jacobian} as
\[
\|\nabla \Phi_{x,y}(v)-I\|_\op
\le
\min\!\left\{
\frac{15}{18},\;
\sfj_{xy}\|x-y\| + h\,\sfj_c + h\,\sfj_v\|v\| + h\,\sfj_x\|x\|
\right\}.
\]

\section{Main Results and Examples}

In this section we present our main results establishing mixing time and asymptotic bias bounds for \uhmc in both KL and Rényi divergences. We then instantiate these results and discuss examples for strongly log-concave targets in Section~\ref{subsec:Examples}, and present an application to studying information contraction in Section~\ref{subsec:InformationContraction}.

\subsection{Guarantees in KL Divergence}\label{subsec:Main-KL}

We begin by stating the mixing time and asymptotic bias guarantees in KL divergence. 
Recall that our proof strategy upgrades Wasserstein--$2$ guarantees to KL divergence; therefore, to obtain mixing time guarantees for \uHMCv in KL divergence, we require the \uHMCv kernel $\tilde \bP_{T,h}$ to satisfy:

\begin{assumption}\label{assumption:W2Mixing}
    Let $\tilde \bQ$ be a discrete-time Markov transition kernel with stationary distribution $\tilde \nu_{\tilde \bQ}$ for which there exist constants $c_1 \geq 1$, $c_2 > 0$ such that for any integer $k \geq 1$ and initial probability distribution $\mu$
    \[
    W_2(\mu \tilde \bQ^k, \tilde \nu_{\tilde \bQ}) \leq c_1 e^{-c_2k} \,W_2(\mu, \tilde \nu_{\tilde \bQ})\,.
    \]
\end{assumption}

Our main result describing the mixing time of \uHMCv in KL divergence is the following theorem. We detail the main steps required to prove Theorem~\ref{thm:KLMixing} in Section~\ref{subsec:KLMixingSketch} and prove Theorem~\ref{thm:KLMixing} in Section~\ref{app:PfOf_ThmKLMixing}.

\begin{theorem}\label{thm:KLMixing}
    Let $\mu \in \P(\R^d)$.
    Assume Assumption~\ref{assumption:potential}(a)-(c) holds.  Assume that Assumption~\ref{assumption:W2Mixing} holds for
    $\tilde\bQ = \tilde\bP_{T,h}$, with invariant distribution $\tilde\nu_h$. 
    Assume $h \geq 0$ and $L(T^2 + Th) \leq \frac{1}{12}$.
    Then for any integer $k \geq 0$
    \[
    \KL(\mu \tilde \bP_{T,h}^{k+1} \dvert \tilde \nu_h) \leq c_1^2 e^{-2c_2 k} \left( \frac{9}{4T^2} + 20dM^2T^4 \right) W_2^2(\mu, \tilde \nu_h)\,.
    \]
\end{theorem}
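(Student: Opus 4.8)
The plan is to combine three ingredients: the Wasserstein--$2$ contraction supplied by Assumption~\ref{assumption:W2Mixing}, a one-step \emph{regularization} estimate showing that a single \uhmcv\ update converts a $W_2$-bound between two initial laws into a KL-bound, and joint convexity of the KL divergence to pass from Dirac initial conditions to arbitrary measures. Concretely, I would first establish the Dirac-to-Dirac regularization bound
\[
\KL\bigl(\delta_x\tilde\bP_{T,h}\dvert\delta_y\tilde\bP_{T,h}\bigr)\;\le\;\Bigl(\tfrac{9}{4T^2}+20dM^2T^4\Bigr)\|x-y\|^2\qquad\text{for all }x,y\in\R^d .
\]
Then, writing $\mu\tilde\bP_{T,h}^{k+1}=(\mu\tilde\bP_{T,h}^{k})\tilde\bP_{T,h}$ and using $\tilde\nu_h\tilde\bP_{T,h}=\tilde\nu_h$, joint convexity of $\KL$ gives, for any coupling $\gamma\in\Gamma(\mu\tilde\bP_{T,h}^{k},\tilde\nu_h)$,
\[
\KL\bigl(\mu\tilde\bP_{T,h}^{k+1}\dvert\tilde\nu_h\bigr)=\KL\Bigl(\textstyle\int\delta_x\tilde\bP_{T,h}\,d\gamma\,\Big\dvert\,\int\delta_y\tilde\bP_{T,h}\,d\gamma\Bigr)\le\Bigl(\tfrac{9}{4T^2}+20dM^2T^4\Bigr)\int\|x-y\|^2\,d\gamma(x,y),
\]
and taking the infimum over $\gamma$ replaces the integral by $W_2^2(\mu\tilde\bP_{T,h}^{k},\tilde\nu_h)$. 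Finally, Assumption~\ref{assumption:W2Mixing} (for $k\ge1$, and trivially for $k=0$ since $c_1\ge1$) bounds this by $c_1^2e^{-2c_2k}\,W_2^2(\mu,\tilde\nu_h)$, which is exactly the claimed estimate. Note that $LT^2\le\frac{1}{12}<\frac{2}{5}\pi^2$, so Lemma~\ref{lem:DiracToPerturbed_Mixing} and the regularity Lemmas~\ref{lem:MixingMapPtWise} and~\ref{lem:MixingMapJacobian} are all applicable.

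The heart of the argument is the Dirac-to-Dirac bound. By Lemma~\ref{lem:DiracToPerturbed_Mixing} it suffices to bound $\KL\bigl((\varphi_{x,y})_\#\gammad\dvert\gammad\bigr)$, where $\varphi_{x,y}$ is the one-shot coupling map of~\eqref{eq:MixingMap}. The Jacobian estimate $\|\nabla\varphi_{x,y}-I\|_\op\le\frac{2}{9}<1$ from Lemma~\ref{lem:MixingMapJacobian} shows $\varphi_{x,y}$ is a diffeomorphism, so the change-of-variables formula for the pushforward density applies and yields
\[
\KL\bigl((\varphi_{x,y})_\#\gammad\dvert\gammad\bigr)=\E_{v\sim\gammad}\Bigl[\tfrac12\|\varphi_{x,y}(v)\|^2-\tfrac12\|v\|^2-\log\det\nabla\varphi_{x,y}(v)\Bigr].
\]
Writing $u\coloneqq\varphi_{x,y}-\id$, the quadratic part equals $\langle v,u(v)\rangle+\tfrac12\|u(v)\|^2$, while the expansion $-\log\det(I+B)\le-\Tr B+\tfrac{\|B\|_{\HS}^2}{2(1-\|B\|_\op)}$ (valid for $\|B\|_\op<1$) bounds the log-determinant term by $-\Tr\nabla u(v)+\tfrac{9}{14}\|\nabla u(v)\|_{\HS}^2$. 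The crucial cancellation is Gaussian integration by parts: since $u$ is bounded (Lemma~\ref{lem:MixingMapPtWise}) and $C^1$ (using Assumption~\ref{assumption:potential}(c)), $\E_{v\sim\gammad}[\langle v,u(v)\rangle]=\E_{v\sim\gammad}[\Div u(v)]=\E_{v\sim\gammad}[\Tr\nabla u(v)]$, so the two first-order terms cancel and only $\tfrac12\E\|u(v)\|^2$ and $\tfrac{9}{14}\E\|\nabla u(v)\|_{\HS}^2$ remain. Lemma~\ref{lem:MixingMapPtWise} bounds the first by $O(T^{-2})\|x-y\|^2$; Lemma~\ref{lem:MixingMapJacobian} together with $\|\cdot\|_{\HS}^2\le d\|\cdot\|_\op^2$ bounds the second by $O(dM^2T^4)\|x-y\|^2$ --- the sole source of the dimension dependence. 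Careful constant bookkeeping (using both branches of the $\min$ in Lemma~\ref{lem:MixingMapJacobian}: the $2/9$ branch to control $1/(1-\|\nabla u\|_\op)$, and the $\tfrac{11}{2}MT^2\|x-y\|$ branch inside the $\HS$ norm) produces the stated coefficients $\tfrac{9}{4T^2}$ and $20dM^2T^4$.

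The main obstacle is this Dirac-to-Dirac regularization estimate, and within it two points. First, one must justify that $\varphi_{x,y}$ is a genuine diffeomorphism so the change-of-variables computation of $\KL\bigl((\varphi_{x,y})_\#\gammad\dvert\gammad\bigr)$ is legitimate; this is where the stability constraint $LT^2\le\frac{1}{12}$ and the uniqueness theory for the velocity Verlet variational problem (\cite[Corollary~14]{bou2023mixing}) are used. Second, one needs the Gaussian integration-by-parts cancellation of the first-order terms: without it, the displacement term $\langle v,u(v)\rangle$ would contribute a spurious $\sqrt d/T$ factor to the leading $T^{-2}$ coefficient instead of leaving it dimension-free, degrading the bound. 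Everything else --- the log-determinant expansion, the convexity step, and the combination with the $W_2$ contraction --- is routine once these are in place.
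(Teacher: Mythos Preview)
Your proposal is correct and follows essentially the same route as the paper: reduce to $\KL((\varphi_{x,y})_\#\gammad\dvert\gammad)$ via Lemma~\ref{lem:DiracToPerturbed_Mixing}, compute this KL by change of variables, use Gaussian integration by parts to cancel $\E\langle v,u(v)\rangle$ against $\E\Tr\nabla u(v)$, bound $\Tr B-\log\det(I+B)$ by $\tfrac{\|B\|_F^2}{2(1-\|B\|_\op)}$ (the paper cites \cite{rump2018estimates} for this), insert the pointwise and Jacobian estimates of Lemmas~\ref{lem:MixingMapPtWise}--\ref{lem:MixingMapJacobian}, then pass to general initial laws by joint convexity (the paper's Lemma~\ref{lem:KLConvexityOneStep}) and finish with Assumption~\ref{assumption:W2Mixing}. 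Your identification of the integration-by-parts cancellation as the key step and your use of both branches of the $\min$ in Lemma~\ref{lem:MixingMapJacobian} exactly mirror the paper's Lemmas~\ref{lem:KL_Perturbed_Mixing} and~\ref{lem:KL_Mixing_OneStepRegularity}.
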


Recall that $\tilde \nu_h$ is the invariant distribution of \uhmcv and Theorem~\ref{thm:KLMixing} shows that the law along \uhmcv converges to $\tilde \nu_h$ exponentially fast.
We verify that Assumption~\ref{assumption:W2Mixing} holds for \uhmcv for the case when $\nu$ is strongly log-concave in Proposition~\ref{prop:W2mixing-Verlet} in Section~\ref{subsubsec:Examples-Verlet}. Theorem~\ref{thm:KLMixing} together with Proposition~\ref{prop:W2mixing-Verlet} yields Corollary~\ref{cor:mixing-KL-verlet}, describing the number of iterations of \uhmcv required to make the KL divergence to the stationary distribution within the desired accuracy.

Asymptotic bias bounds in KL divergence (Theorem~\ref{thm:KLTarget}) hold for general kernels $\tilde \bQ$ so long as the last step of the Markov chain is the \uHMCv kernel $\tilde \bP_{T,h}$; these guarantees require the kernel $\tilde \bQ$ to satisfy Assumptions~\ref{assumption:W2Mixing} and~\ref{assumption:W2Bias}.

\begin{assumption}\label{assumption:W2Bias}
Let $\tilde \bQ$ be a discrete-time Markov transition kernel with stationary distribution $\tilde \nu_{\tilde \bQ}$ for which $W_2(\tilde \nu_{\tilde \bQ}, \nu) \leq \Delta_h$, where $\Delta_h \geq 0$ and $\Delta_h \to 0$ as $h \to 0$. 
\end{assumption}

Our main result describing the asymptotic bias of \uhmc in KL divergence is the following theorem. We sketch the proof of Theorem~\ref{thm:KLTarget} in Section~\ref{subsec:KLTargetSketch} and prove Theorem~\ref{thm:KLTarget} in Section~\ref{app:PfOf_ThmKLTarget}.

\begin{theorem}\label{thm:KLTarget}
    Let $\mu \in \P(\R^d)$ and suppose Assumption~\ref{assumption:potential}(a)-(d) holds. Further suppose $h \geq 0$, $L(T^2 + Th) \leq \frac{1}{12}$, and that Assumptions~\ref{assumption:W2Mixing} and~\ref{assumption:W2Bias} hold for a Markov kernel $\tilde \bQ$, with invariant distribution
$\tilde\nu_{\tilde\bQ}$. Then for any integer $k \geq 0$
    \[
    \KL(\mu \tilde \bQ^k \tilde \bP_{T,h} \dvert \nu) \leq 2c_1^2 e^{-2c_2 k} \Big( \frac{9}{4T^2} + \frac{363}{2}dM^2T^4 \Big) W_2^2(\mu, \tilde \nu_{\tilde \bQ}) + \mathsf{bias}
    \]
where 
\begin{align*}
        \mathsf{bias} &\leq 2 \Delta_h^2\Big( \frac{9}{4T^2} + \frac{363}{2}dM^2T^4 \Big) + 4h^4 \Bigg[ 45 dL^2 + \Big(4 \frac{L^2}{T^2} + 45dM^2 \Big)m_2(\mu \tilde \bQ^{k}) +(4L^2 + 45dM^2T^2)d\\
        & + \Big(4\frac{M^2}{T^2} + 45d(M^2T^2 +N)^2\Big)m_4(\mu \tilde \bQ^{k}) + \big(4M^2T^2 + 45d(M^2T^4 + NT^2)^2\big)d(d+2) \Bigg]\,.
\end{align*}
\end{theorem}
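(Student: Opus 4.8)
The plan is to reduce the theorem to a \emph{one-step cross-regularization bound} --- namely an estimate of the form $\KL(\rho\,\tilde\bP_{T,h}\dvert\nu)\le A\,W_2^2(\rho,\nu)+h^4\,G(\rho)$ for $A\coloneqq \tfrac{9}{4T^2}+\tfrac{363}{2}dM^2T^4$ and an explicit moment functional $G$ --- and then propagate the Wasserstein hypotheses through it. For the reduction, note that $\nu$ is invariant for exact HMC, so $\nu=\nu\bP_T$. Writing $\rho\coloneqq\mu\tilde\bQ^k$ and letting $\gamma$ be a $W_2$-optimal coupling of $\rho$ and $\nu$, the identities $\rho\,\tilde\bP_{T,h}=\int(\delta_x\tilde\bP_{T,h})\,\gamma(dx,dy)$ and $\nu\bP_T=\int(\delta_y\bP_T)\,\gamma(dx,dy)$ together with joint convexity of $\KL$ in its two arguments give
\[
\KL\bigl(\rho\,\tilde\bP_{T,h}\dvert\nu\bigr)\ \le\ \int \KL\bigl(\delta_x\tilde\bP_{T,h}\dvert\delta_y\bP_T\bigr)\,\gamma(dx,dy).
\]
By Lemma~\ref{lem:DiracToPerturbed_Target} with $\sfD=\KL$, each integrand is at most $\KL\bigl((\Phi_{x,y})_\#\gammad\dvert\gammad\bigr)$, where $\Phi_{x,y}$ is the cross-regularization one-shot map.

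The core estimate is therefore a Gaussian perturbation bound: writing $\Phi_{x,y}(v)=v+u(v)$ with $u(v)\coloneqq\Phi_{x,y}(v)-v$, and using that $\Phi_{x,y}$ is a global $C^1$ diffeomorphism (it is a composition of the invertible one-shot maps of~\cite{bou2023mixing}, and $\|\nabla u(v)\|_\op\le 15/18<1$ by Lemma~\ref{lem:TargetMapJacobian}), the change-of-variables formula gives
\[
\KL\bigl((\Phi_{x,y})_\#\gammad\dvert\gammad\bigr)=\E_{v\sim\gammad}\Bigl[\langle v,u(v)\rangle+\tfrac12\|u(v)\|^2-\log\det\bigl(I+\nabla u(v)\bigr)\Bigr].
\]
Gaussian integration by parts (Stein's identity) yields $\E_{v\sim\gammad}[\langle v,u(v)\rangle]=\E_{v\sim\gammad}[\Div u(v)]=\E_{v\sim\gammad}[\tr\nabla u(v)]$, so the linear term cancels the leading term of $-\log\det(I+\nabla u(v))$, leaving $\tfrac12\E\|u(v)\|^2+\E\bigl[\tr\nabla u(v)-\log\det(I+\nabla u(v))\bigr]$. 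Since $\|\nabla u(v)\|_\op\le 15/18$, the second expectation is bounded by $O\bigl(d\,\E\|\nabla u(v)\|_\op^2\bigr)$ via the standard estimate $0\le\tr A-\log\det(I+A)\le \tfrac{d\|A\|_\op^2}{2(1-\|A\|_\op)}$. Substituting the second-order pointwise bound of Lemma~\ref{lem:TargetMapPtWise} for $\|u(v)\|$ and the second-order Jacobian bound of Lemma~\ref{lem:TargetMapJacobian} for $\|\nabla u(v)\|_\op$, expanding the squares (using $(a+b)^2\le 2a^2+2b^2$ and $(\sum_{i=1}^m a_i)^2\le m\sum a_i^2$), and integrating over $v\sim\gammad$ with $\E\|v\|^2=d$ and $\E\|v\|^4=d(d+2)$, produces exactly a bound of the form $\KL((\Phi_{x,y})_\#\gammad\dvert\gammad)\le A\,\|x-y\|^2+h^4\,G(x)$, where $G(x)$ is a polynomial in $\|x\|^2,\|x\|^4,d$ with the coefficients collected in the statement. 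Here the $h^4$ scaling is inherited from the $h^2$ factors in Lemmas~\ref{lem:TargetMapPtWise}--\ref{lem:TargetMapJacobian}, and the quartic-in-$v$ contributions are harmless for KL precisely because $\E\|v\|^4<\infty$ (the obstruction that forces the weaker first-order estimates in the R\'enyi case does not arise here).

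Integrating this pointwise bound against $\gamma$ gives $\KL(\rho\,\tilde\bP_{T,h}\dvert\nu)\le A\,W_2^2(\rho,\nu)+h^4\,\E_{X\sim\rho}G(X)$, and since $\rho=\mu\tilde\bQ^k$, the second summand involves $m_2(\mu\tilde\bQ^k)$ and $m_4(\mu\tilde\bQ^k)$ exactly as in the statement. For the first summand, Assumptions~\ref{assumption:W2Mixing} and~\ref{assumption:W2Bias} together with the triangle inequality give $W_2(\rho,\nu)\le c_1e^{-c_2k}W_2(\mu,\tilde\nu_{\tilde\bQ})+\Delta_h$, so $W_2^2(\rho,\nu)\le 2c_1^2e^{-2c_2k}W_2^2(\mu,\tilde\nu_{\tilde\bQ})+2\Delta_h^2$; multiplying by $A$ splits off the stated mixing term $2c_1^2e^{-2c_2k}A\,W_2^2(\mu,\tilde\nu_{\tilde\bQ})$ and the contribution $2\Delta_h^2 A$ to $\mathsf{bias}$. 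Collecting the remaining $h^4$ terms completes the proof. The main obstacle is the Gaussian perturbation estimate in the middle paragraph: one must (i) invoke Stein's identity to eliminate the $O(\|u\|)$-order term --- without this, the bound would carry $\|x-y\|$ rather than $\|x-y\|^2$ and would not be summable against a contractive $W_2$ recursion --- and (ii) control $\tr\nabla u-\log\det(I+\nabla u)$ uniformly in $v$ even though the finer Jacobian bound is small only when $\|x-y\|$ and $h$ are small, which is why the $\min\{15/18,\cdot\}$ form of Lemma~\ref{lem:TargetMapJacobian} is essential. The subsequent bookkeeping --- expanding the squares, tracking the numerical constants such as $\tfrac{363}{2}$ and $45$, and controlling $m_2(\mu\tilde\bQ^k),m_4(\mu\tilde\bQ^k)$ --- is routine.
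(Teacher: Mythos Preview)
Your proposal is correct and follows essentially the same route as the paper: joint convexity of $\KL$ together with Lemma~\ref{lem:DiracToPerturbed_Target} reduces to bounding $\KL((\Phi_{x,y})_\#\gammad\dvert\gammad)$, which the paper handles (Lemma~\ref{lem:KL_Target_OneStepRegularity}) by the same change-of-variables plus integration-by-parts argument and the determinant inequality $\Tr A-\log\det(I+A)\le \tfrac{\|A\|_F^2}{2(1-\|A\|_\op)}$ (cited there as \cite[Theorem~1.1]{rump2018estimates}), followed by the same moment computations and the $W_2$ triangle inequality with Assumptions~\ref{assumption:W2Mixing}--\ref{assumption:W2Bias}. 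The only cosmetic difference is that the paper packages the intermediate steps as Lemmas~\ref{lem:KL_Target_OneStepRegularity} and~\ref{lem:KL_Target_OneStepGeneral} and treats $k=0$ separately, whereas you run the argument in one pass.
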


We verify Assumptions~\ref{assumption:W2Mixing} and~\ref{assumption:W2Bias} for \uhmcv in Section~\ref{subsubsec:Examples-Verlet}, and for uHMC implemented with a randomized integration scheme \cite{bou2025unadjusted} in Section~\ref{subsubsection:Examples-stratified}. As a consequence, we obtain explicit bounds on the total number of gradient evaluations required by each chain to produce a sample $X$ whose law satisfies $\KL(\law(X) \dvert \nu)\le \epsilon$; see Corollaries~\ref{cor:complexity-KL-verlet} and~\ref{cor:complexity-KL-stratified}. We find that the Verlet integration scheme exhibits more favorable dependence on the target accuracy $\epsilon$, whereas the stratified scheme~\cite{bou2025unadjusted} achieves better dependence on the dimension $d$ in the regime when~$\epsilon < \frac{1}{d}$.

\subsection{Guarantees in Rényi Divergence}\label{subsec:Main-Renyi}

Our results in Rényi divergence are obtained by upgrading Orlicz-Wasserstein guarantees to Rényi divergence. The mixing time of \uHMCv in Rényi divergence (Theorem~\ref{thm:RenyiMixing}) requires $\tilde \bP_{T,h}$ to satisfy:

\begin{assumption}\label{assumption:OWMixing}
    Let $\tilde \bQ$ be a discrete-time Markov transition kernel with stationary distribution $\tilde \nu_{\tilde \bQ}$ for which there exist constants $c_1 \geq 1$, $c_2 > 0$ such that for any integer $k \geq 1$ and initial probability distribution $\mu$
    \[
    W_{\psi}(\mu \tilde \bQ^k, \tilde \nu_{\tilde \bQ}) \leq c_1 e^{-c_2k} \,W_{\psi}(\mu, \tilde \nu_{\tilde \bQ})\,.
    \]
\end{assumption}

The following theorem describes the mixing time of \uhmcv in Rényi divergence. We outline the proof of Theorem~\ref{thm:RenyiMixing} in Section~\ref{subsec:RenyiMixingSketch} and prove Theorem~\ref{thm:RenyiMixing} in Section~\ref{app:PfOf_ThmRenyiMixing}.

\begin{theorem}\label{thm:RenyiMixing}
    Let $\mu \in \P(\R^d)$ and suppose Assumption~\ref{assumption:potential}(a)-(c) holds. Further suppose $h \geq 0$, $L(T^2 + Th) \leq \frac{1}{12}$, and that Assumption~\ref{assumption:OWMixing} holds for
    $\tilde\bQ = \tilde\bP_{T,h}$, with invariant distribution $\tilde\nu_h$. For any $1<q< \infty$, define
    \[
    \delta_1 \coloneqq (q-1)\left( 8d MT^2 + \frac{3\sqrt{d}}{2T} \right),\, \delta_2 \coloneqq \frac{9q(q-1)}{8T^2}\,.
    \]
    Then for any integer 
    \[
    k \geq \frac{1}{c_2} \log \left( \frac{c_1 W_{\psi}(\mu, \tilde \nu_h)}{4 \sqrt{\log 2}} \left(\delta_1 + \sqrt{\delta_1^2 + 16 (\log 2) \delta_2}\right) \right)
    \]
    it holds that
    \[
    \sfR_q(\mu \tilde \bP_{T,h}^{k+1} \dvert \tilde \nu_h) \leq  \frac{\delta_1 + \delta_2}{q-1} \sqrt{\log 2}\, c_1^2 e^{-c_2 k}\, \max\{1, W_{\psi}^2 (\mu, \tilde \nu_h) \}\,.
    \]
\end{theorem}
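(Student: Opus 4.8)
The plan is to lift the Orlicz--Wasserstein contraction of Assumption~\ref{assumption:OWMixing} to a R\'enyi bound by establishing a \emph{one-step R\'enyi regularization estimate}: for any $\rho,\pi\in\P(\R^d)$ there is a coupling $\gamma$ of $(\rho,\pi)$ such that
\[
e^{(q-1)\sfR_q(\rho\tilde\bP_{T,h}\dvert\pi\tilde\bP_{T,h})}
\;\le\;
\E_{(x,y)\sim\gamma}\bigl[e^{\delta_1\|x-y\|+\delta_2\|x-y\|^2}\bigr],
\]
with $\delta_1,\delta_2$ as in the statement. Given this, apply it with $\rho=\mu\tilde\bP_{T,h}^{k}$ and $\pi=\tilde\nu_h$, use invariance $\tilde\nu_h\tilde\bP_{T,h}=\tilde\nu_h$, choose $\gamma$ to be a near-optimal Orlicz--Wasserstein coupling of $(\mu\tilde\bP_{T,h}^k,\tilde\nu_h)$, and invoke Assumption~\ref{assumption:OWMixing} to control the sub-Gaussian scale of $\|x-y\|$ under $\gamma$; then Lemma~\ref{lem:Helper_Orlicz} converts the right-hand side into an explicit bound.

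\textbf{Proof of the one-step estimate.} The function $(a,b)\mapsto a^q b^{1-q}$ is jointly convex on $(0,\infty)^2$ (it is the perspective of the convex function $t\mapsto t^q$). Writing $(\rho\tilde\bP_{T,h})(z)=\int p_{T,h}(z\mid x)\,\gamma(dx,dy)$ and $(\pi\tilde\bP_{T,h})(z)=\int p_{T,h}(z\mid y)\,\gamma(dx,dy)$, where $p_{T,h}$ is the \uhmcv\ transition density, Jensen's inequality applied pointwise in $z$ followed by Fubini gives $e^{(q-1)\sfR_q(\rho\tilde\bP_{T,h}\dvert\pi\tilde\bP_{T,h})}\le\E_{(x,y)\sim\gamma}\bigl[e^{(q-1)\sfR_q(\delta_x\tilde\bP_{T,h}\dvert\delta_y\tilde\bP_{T,h})}\bigr]$. (This is the correct surrogate for a "convexity" step, since $\sfR_q$ itself is not jointly convex for $q>1$; it works because all the Dirac-to-Dirac comparisons use the same kernel.) By Lemma~\ref{lem:DiracToPerturbed_Mixing}, each term is bounded by $\sfR_q((\varphi_{x,y})_\#\gammad\dvert\gammad)$. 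To bound this perturbed-Gaussian R\'enyi divergence, change variables $w=\varphi_{x,y}(v)$; using the pointwise estimate $\|\varphi_{x,y}(v)-v\|\le\frac{3}{2T}\|x-y\|$ (Lemma~\ref{lem:MixingMapPtWise}) to control the quadratic factor $e^{\frac{q-1}{2}\|\varphi_{x,y}(v)\|^2-\frac q2\|v\|^2}$ and the Jacobian estimate $\|\nabla\varphi_{x,y}(v)-I\|_\op\le\min\{\frac29,\frac{11}{2}MT^2\|x-y\|\}$ (Lemma~\ref{lem:MixingMapJacobian}, the $\tfrac29$ branch ensuring invertibility and a clean $\log\det$ bound) to control $|\det\nabla\varphi_{x,y}(v)|^{1-q}$, one reduces to a residual Gaussian integral $\E_{v\sim\gammad}[e^{(q-1)\frac{3}{2T}\|x-y\|\,\|v\|}]$ that is handled by sub-Gaussian concentration of $\|v\|$ about $\sqrt d$. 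This yields exactly $\sfR_q((\varphi_{x,y})_\#\gammad\dvert\gammad)\le\frac{1}{q-1}\bigl(\delta_1\|x-y\|+\delta_2\|x-y\|^2\bigr)$, where the linear coefficient absorbs the $\|\log\det\|$ and the $\sqrt d$-shift terms and the quadratic coefficient is $\frac{q}{2}\cdot\frac{9}{4T^2}$; the estimates are uniform in $h\ge0$, so the same argument covers $h=0$.

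\textbf{Iteration and burn-in.} Assumption~\ref{assumption:OWMixing} gives $W_\psi(\mu\tilde\bP_{T,h}^k,\tilde\nu_h)\le c_1e^{-c_2k}W_\psi(\mu,\tilde\nu_h)=:s_k$, hence a $\lambda\le s_k$ (up to arbitrarily small slack) with $\E_\gamma[e^{\|x-y\|^2/\lambda^2}]\le2$. Splitting the linear term by AM--GM, $\delta_1\|x-y\|\le\frac{\delta_1\lambda}{2}+\frac{\delta_1}{2\lambda}\|x-y\|^2$, and applying Lemma~\ref{lem:Helper_Orlicz} with $c=\delta_2+\frac{\delta_1}{2\lambda}$ (legitimate once the induced quadratic inequality in $\lambda$ holds) bounds $\E_\gamma[e^{\delta_1\|x-y\|+\delta_2\|x-y\|^2}]$ by an explicit expression; taking logarithms, dividing by $q-1$, and using $\lambda\le s_k$, $c_1\ge1$, $e^{-c_2k}\le1$ together with $\max\{1,W_\psi^2(\mu,\tilde\nu_h)\}$ to unify the linear and quadratic dependence on $s_k$ delivers the stated bound. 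The quadratic constraint on $\lambda$ that makes Lemma~\ref{lem:Helper_Orlicz} applicable rearranges precisely to $\lambda\le\frac{4\sqrt{\log 2}}{\delta_1+\sqrt{\delta_1^2+16(\log2)\delta_2}}$, which is equivalent to the stated lower bound on $k$; one verifies this threshold is exactly what is needed.

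\textbf{Main obstacle.} The crux is the one-step estimate, and within it the two delicate points are: (i) the averaging step, where one must avoid appealing to joint convexity of $\sfR_q$ (false for $q>1$) and instead use joint convexity of $(a,b)\mapsto a^qb^{1-q}$ on the disintegration along the coupling; and (ii) obtaining a \emph{finite}, dimension-honest bound for $\sfR_q((\varphi_{x,y})_\#\gammad\dvert\gammad)$ — the cross term in $\|\varphi_{x,y}(v)\|^2$ gives a Gaussian integral of $e^{c\|v\|}$ that must be controlled via concentration of $\|v\|$ rather than a crude completion of squares, which would otherwise leave a spurious $d\log2$-type additive loss that does not vanish as $\|x-y\|\to0$. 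A secondary, structural point is that $\E_\gamma[e^{\delta_1\|x-y\|+\delta_2\|x-y\|^2}]$ is finite only when $s_k$ is small enough; this is exactly why — unlike the $W_2$-to-KL result, where squared distances suffice — the R\'enyi mixing guarantee is stated only past an explicit burn-in threshold, and why the Orlicz--Wasserstein distance (rather than $W_2$) is the correct input metric.
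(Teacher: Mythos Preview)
Your approach is essentially the paper's: the one-step R\'enyi regularization via joint convexity of $(a,b)\mapsto a^qb^{1-q}$ plus Lemma~\ref{lem:DiracToPerturbed_Mixing} is exactly Lemma~\ref{lem:Renyi_Mixing_OneStepGeneral}, the perturbed-Gaussian bound via the pointwise and Jacobian estimates and Gaussian concentration of $\|v\|$ is Lemma~\ref{lem:Renyi_Perturbed_Mixing}/Lemma~\ref{lem:Renyi_Mixing_OneStepRegularity}, and the endgame using the Orlicz coupling and Lemma~\ref{lem:Helper_Orlicz} is the content of Section~\ref{app:PfOf_ThmRenyiMixing}.

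There is one small slip in the iteration step. Your AM--GM split $\delta_1\|x-y\|\le\frac{\delta_1\lambda}{2}+\frac{\delta_1}{2\lambda}\|x-y\|^2$ ties the Young parameter to the Orlicz scale $\lambda$; this is a \emph{particular} instance of the paper's free-parameter Young's inequality $\delta_1\|x-y\|\le\frac{\beta^2\delta_1^2}{2}+\frac{\|x-y\|^2}{2\beta^2}$, and it is not the optimal one. With your choice, the constraint from Lemma~\ref{lem:Helper_Orlicz} reads $\delta_2+\frac{\delta_1}{2\lambda}\le\lambda^{-2}$, which solves to $\lambda\le\frac{4}{\delta_1+\sqrt{\delta_1^2+16\delta_2}}$, not the stated $\frac{4\sqrt{\log 2}}{\delta_1+\sqrt{\delta_1^2+16(\log 2)\delta_2}}$; similarly the resulting bound carries a factor $\tfrac{1+\log 2}{2}$ on the linear term rather than $\sqrt{\log 2}$. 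To recover the theorem's constants verbatim you must keep $\beta$ free, apply Lemma~\ref{lem:Helper_Orlicz}, and then optimize to $\beta^2=\lambda^*\sqrt{\log 2}/\delta_1$, which is precisely what produces the $\sqrt{\log 2}$ factors in both the threshold and the final inequality. (Also, a typo: after the change of variables the quadratic exponent is $\frac{q-1}{2}(\|\varphi_{x,y}(v)\|^2-\|v\|^2)$, not $\frac{q-1}{2}\|\varphi_{x,y}(v)\|^2-\frac q2\|v\|^2$.)
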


Recall that $\tilde \nu_h$ is the stationary distribution of \uhmcv and Theorem~\ref{thm:RenyiMixing} shows that the law of \uhmcv converges to $\tilde \nu_h$ exponentially fast once the number of iterations exceeds a certain threshold. This threshold is mild, exhibiting only logarithmic dependence on the parameters.
We verify that Assumption~\ref{assumption:OWMixing} holds for \uhmcv for strongly log-concave targets in Proposition~\ref{prop:OWmixing-Verlet}. In this setting, Corollary~\ref{cor:mixing-Renyi-verlet} describes the number of steps of \uhmcv required to output a sample within the desired accuracy to the stationary distribution in Rényi divergence.

Asymptotic bias guarantees in Rényi divergence hold for general kernels $\tilde \bQ$ as long as the last step of the Markov chain is the \uHMCv kernel $\tilde \bP_{T,h}$; these guarantees require the kernel $\tilde \bQ$ to satisfy Assumptions~\ref{assumption:OWMixing} and~\ref{assumption:OWBias}.

\begin{assumption}\label{assumption:OWBias}
    Let $\tilde \bQ$ be a discrete-time Markov transition kernel with stationary distribution $\tilde \nu_{\tilde \bQ}$ for which $W_{\psi}(\tilde \nu_{\tilde \bQ}, \nu) \leq \Delta_h$, where $\Delta_h \geq 0$ and $\Delta_h \to 0$ as $h \to 0$.
\end{assumption}

Our main result describing the asymptotic bias of \uhmc in Rényi divergence is the following theorem. We sketch the proof of Theorem~\ref{thm:RenyiTarget} in Section~\ref{subsec:RenyiTargetSketch} and prove Theorem~\ref{thm:RenyiTarget} in Section~\ref{app:PfOf_ThmRenyiTarget}.

\begin{theorem}\label{thm:RenyiTarget}
    Let $\mu \in \P(\R^d)$, suppose $X \sim \nu$ satisfies $\|X\|_{\psi} \leq K_\nu$ and that Assumption~\ref{assumption:potential}(a)-(c) holds. Also recall the constants defined in Definition~\ref{def:Constants}. Further suppose Assumptions~\ref{assumption:OWMixing} and~\ref{assumption:OWBias} hold for a Markov kernel $\tilde \bQ$ with stationary distribution $\tilde \nu_{\tilde \bQ}$. 
    Let $1 < q < \infty$ and define
    \[
    s \coloneqq 2q-1,~~~u^2 \coloneqq 1 + \frac{1}{4s},~~~ \delta_1 \coloneqq s\left( 8d MT^2 + \frac{3\sqrt{d}}{2T} \right),~~~ \delta_2 \coloneqq \frac{9qs}{4T^2}\,.
    \]
    Suppose
\[
\Delta_h \leq \min \left\{ \frac{\sqrt{2 \log 2}}{4s(\frac{\sqrt{d}\,(6\sqrt{d}\,\sfj_{xy} + \sfp_{xy}u)}{\sqrt{2}}+h\sqrt{d}\, (6\sqrt{d}\, \sfj_x + \sfp_xu))} , \frac{1}{\sqrt{8s(\sfp_{xy}^2(1+3su^2) + 2 h^2\sfp_x^2(1+3su^2))}}  \right\}\,,
\]
\[
h \leq \min \left\{ \frac{u-1}{\sfp_v} ,  \frac{\sqrt{\log 2}}{2 \sqrt{2}s\sqrt{d}\, (6\sqrt{d}\, \sfj_x + \sfp_xu) K_\nu} , \frac{1}{4K_\nu\sfp_x \sqrt{(1+3su^2)s}} \right\}\,,
\]
    and that $L(T^2+Th) \leq \frac{1}{12}$.
    Then for any integer 
    \[
    k \geq \frac{1}{c_2} \log \left( \frac{c_1 W_{\psi}(\mu, \tilde \nu_{\tilde \bQ})}{4 \sqrt{\log 2}} \left(\delta_1 + \sqrt{\delta_1^2 + 16 (\log 2) \delta_2}\right) \right)
    \]
    it holds that
    \begin{align*}
    \sfR_q (\mu \tilde \bQ^k \tilde \bP_{T,h} \dvert \nu) &\leq \frac{3(\delta_1 + \delta_2)}{2(2q-1)} \sqrt{\log 2}\, c_1^2 e^{-c_2 k}\, \max\{1, W_{\psi}^2 (\mu, \tilde \nu_{\tilde \bQ}) \} + \mathsf{bias}
    \end{align*}
    where 
    \begin{align*}
        \mathsf{bias} &= dh(6 \sfj_c + 6\sqrt{d}\, \sfj_v + 108sdh \sfj_v^2 + h \sfp_v^2 +2 \sfp_v) + 2\sqrt{d}\,(6\sqrt{d}\,\sfj_{xy} + \sfp_{xy}u)\Delta_h\\
        &+ \sfp_{xy}^2(1+3su^2)\Delta_h^2 + 2h\sqrt{d}\, (6\sqrt{d}\, \sfj_x + \sfp_xu)(\Delta_h + K_\nu + \sqrt{\Delta_h^2 + K_\nu^2}\,)\\
        &+ 2h^2\sfp_x^2(1+3su^2)(\Delta_h^2 + K_\nu^2)\,.
    \end{align*}
\end{theorem}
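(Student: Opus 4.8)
\medskip
\noindent\textbf{Proof strategy.}
The plan is to combine the R\'enyi weak triangle inequality~\eqref{eq:RenyiWTI} with the one-shot coupling reductions of Section~\ref{sec:OneShotCouplings}. First I would introduce the intermediate measure $\rho \coloneqq \tilde\nu_{\tilde\bQ}\tilde\bP_{T,h}$ (the \uhmcv image of the stationary distribution of $\tilde\bQ$, which equals $\tilde\nu_h$ when $\tilde\bQ=\tilde\bP_{T,h}$) and, using $\nu = \nu\bP_T$, apply~\eqref{eq:RenyiWTI} to get
\[
\sfR_q\bigl(\mu\tilde\bQ^k\tilde\bP_{T,h}\dvert\nu\bigr)
\;\le\;
\tfrac32\,\sfR_{2q}\bigl(\mu\tilde\bQ^k\tilde\bP_{T,h}\dvert\tilde\nu_{\tilde\bQ}\tilde\bP_{T,h}\bigr)
\;+\;
\sfR_{2q-1}\bigl(\tilde\nu_{\tilde\bQ}\tilde\bP_{T,h}\dvert\nu\bP_T\bigr)
\]
(for $1<q<2$ one first passes to $\sfR_2$ via monotonicity~\eqref{eq:RenyiMonotonicity}; this is the origin of $s=2q-1$). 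The first term is a pure \emph{regularization} quantity --- the same kernel $\tilde\bP_{T,h}$ acts on both sides --- and will produce the mixing part of the bound; the second term is a \emph{cross-regularization} quantity comparing \uhmcv from $\tilde\nu_{\tilde\bQ}$ with \ehmc from $\nu$, and will produce $\mathsf{bias}$.

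\medskip
For the first term I would take a near-optimal Orlicz--Wasserstein coupling $\gamma_1$ of $\mu\tilde\bQ^k$ and $\tilde\nu_{\tilde\bQ}$, whose radius is at most $c_1 e^{-c_2 k}W_\psi(\mu,\tilde\nu_{\tilde\bQ})$ by Assumption~\ref{assumption:OWMixing}. Disintegrating $\mu\tilde\bQ^k\tilde\bP_{T,h}=\int\delta_x\tilde\bP_{T,h}\,d\gamma_1$ and $\tilde\nu_{\tilde\bQ}\tilde\bP_{T,h}=\int\delta_y\tilde\bP_{T,h}\,d\gamma_1$ and using joint convexity of $(\mu,\pi)\mapsto\int\mu^{2q}\pi^{1-2q}$, I reduce matters to an expectation over $\gamma_1$ of $e^{(2q-1)\sfR_{2q}(\delta_x\tilde\bP_{T,h}\dvert\delta_y\tilde\bP_{T,h})}$; Lemma~\ref{lem:DiracToPerturbed_Mixing} together with the one-shot map estimates of Lemmas~\ref{lem:MixingMapPtWise} and~\ref{lem:MixingMapJacobian} (packaged as Lemma~\ref{lem:Renyi_Mixing_OneStepGeneral}) then bounds $\sfR_{2q}(\delta_x\tilde\bP_{T,h}\dvert\delta_y\tilde\bP_{T,h})$ by $\sfR_{2q}\bigl((\varphi_{x,y})_\#\gammad\dvert\gammad\bigr)$, which is crucially a quadratic expression in $\|x-y\|$ alone --- the mixing map $\varphi_{x,y}$ carries no dependence on the integration velocity or on $\|x\|$. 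Controlling the resulting exponential moment of $\|x-y\|$ through the Orlicz--Wasserstein radius and Lemma~\ref{lem:Helper_Orlicz}, and requiring the coefficient of $\|x-y\|^2$ times the squared radius to be at most one, forces the stated lower bound on $k$; solving the underlying quadratic in the radius produces the expression $\delta_1+\sqrt{\delta_1^2+16(\log2)\delta_2}$. In effect this is the bound of Theorem~\ref{thm:RenyiMixing}, applied at R\'enyi order $2q$ to the pair $(\mu\tilde\bQ^k,\tilde\nu_{\tilde\bQ})$.

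\medskip
For the second term I would take a near-optimal Orlicz--Wasserstein coupling $\gamma_2$ of $\tilde\nu_{\tilde\bQ}$ and $\nu$, of radius at most $\Delta_h$ by Assumption~\ref{assumption:OWBias}. Writing $\tilde\nu_{\tilde\bQ}\tilde\bP_{T,h}=\int\delta_x\tilde\bP_{T,h}\,d\gamma_2$ and $\nu\bP_T=\int\delta_y\bP_T\,d\gamma_2$, joint convexity and the cross-regularization reduction of Lemma~\ref{lem:DiracToPerturbed_Target} reduce matters to an expectation over $\gamma_2$ of $e^{(2q-2)\sfR_{2q-1}((\Phi_{x,y})_\#\gammad\dvert\gammad)}$, and the \emph{first-order} map estimates of Lemmas~\ref{lem:TargetMap-First-PtWise} and~\ref{lem:TargetMap-First-Jacobian} (packaged as Lemma~\ref{lem:Renyi_Target_OneStepRegularity}, with the constants of Definition~\ref{def:Constants}) bound $\sfR_{2q-1}((\Phi_{x,y})_\#\gammad\dvert\gammad)$ by a quadratic expression in $\|x-y\|$ and $\|x\|$ whose coefficients involve the Gaussian moments of $\|v\|$ (hence the factors of $d$ and $\sqrt d$), the step size $h$, and the dilation constant $u$ (the hypothesis $h\le(u-1)/\sfp_v$ ensures $\|\Phi_{x,y}(v)\|\le u\|v\|+\text{translation}$, keeping the relevant Gaussian integral convergent at order $2q-1$). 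The $\|x-y\|$ moments are controlled by the radius $\Delta_h$ and Lemma~\ref{lem:Helper_Orlicz}; the $\|x\|$ moments by the bound $\|X\|_\psi\le\Delta_h+K_\nu$ for $X\sim\tilde\nu_{\tilde\bQ}$ (triangle inequality for $W_\psi$ together with Assumption~\ref{assumption:OWBias} and the hypothesis $\|X\|_\psi\le K_\nu$ for $X\sim\nu$), again via Lemma~\ref{lem:Helper_Orlicz}. Finiteness of these exponential moments is exactly what the stated upper bounds on $\Delta_h$ and $h$ guarantee, and the purely $h$-dependent contributions (from the velocity-dependent part of $\Phi_{x,y}$) together with the $\Delta_h$- and $K_\nu$-dependent contributions assemble into $\mathsf{bias}$. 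Summing $\tfrac32$ times the first term and the second term then yields the claim.

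\medskip
The hard part is the second term. Because $\Phi_{x,y}$ is not affine --- it carries a velocity-dependent dilation of size $h\sfp_v\|v\|$ and a velocity-dependent Jacobian perturbation of size $h\sfj_v\|v\|$ --- the R\'enyi change of variables against $\gammad$ produces Gaussian integrals with effectively quartic exponents in $v$, and these must be tamed using only the first-order estimates (the second-order estimates of Lemmas~\ref{lem:TargetMapPtWise} and~\ref{lem:TargetMapJacobian} would force $\E_{v\sim\gammad}[e^{\lambda\|v\|^4}]$, which diverges for every $\lambda>0$) together with the smallness of $h$. Moreover $\|x-y\|$ enters the R\'enyi bound both linearly (through the trace/log-determinant term) and quadratically (through the translation term), so extracting the sharp $k$-threshold and the constants $\delta_1,\delta_2,u$ requires a careful balancing in the exponential-moment step; propagating all the $d$-, $q$-, $h$-, $\Delta_h$- and $K_\nu$-dependencies through this computation is the bulk of the work.
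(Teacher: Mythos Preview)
Your strategy is correct and matches the paper's proof: split via the weak triangle inequality into a regularization term (handled by Lemma~\ref{lem:Renyi_Mixing_OneStepGeneral} together with Assumption~\ref{assumption:OWMixing}, i.e.\ essentially Theorem~\ref{thm:RenyiMixing}/Corollary~\ref{cor:RenyiMixingGeneral} at order $2q$) and a cross-regularization term (handled by Lemma~\ref{lem:Renyi_Target_OneStepGeneral} together with Assumption~\ref{assumption:OWBias}).

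Two minor departures from the paper are worth flagging. First, the paper applies monotonicity to the \emph{second} term of the weak triangle inequality as well, working with $\sfR_{2q}(\tilde\nu_{\tilde\bQ}\tilde\bP_{T,h}\dvert\nu)$ rather than $\sfR_{2q-1}$; this is exactly why $s=2q-1$ appears --- it is $(q'-1)$ with $q'=2q$ in Lemma~\ref{lem:Renyi_Target_OneStepGeneral} --- and not, as you suggest, because of any special handling of the range $1<q<2$. Second, to control the $\|x\|$-dependent part of the exponential moment (with $x\sim\tilde\nu_{\tilde\bQ}$), the paper does not bound $\|X\|_\psi$ for $X\sim\tilde\nu_{\tilde\bQ}$ via an Orlicz-norm triangle inequality as you propose; instead it substitutes $\|x\|^2\le 2\|x-y\|^2+2\|y\|^2$ pointwise inside the exponent and then separates via H\"older, so that the second factor is an exponential moment of $\|y\|$ taken directly under $\nu$ (where $\|Y\|_\psi\le K_\nu$ is assumed). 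Your route would also work, but it is the paper's substitution, followed by optimizing the Young-inequality parameters $\beta_1,\beta_3$ (with the optimal $\beta_3^2$ proportional to $\sqrt{\Delta_h^2+K_\nu^2}$), that produces the specific combinations $\Delta_h+K_\nu+\sqrt{\Delta_h^2+K_\nu^2}$ and $\Delta_h^2+K_\nu^2$ in $\mathsf{bias}$, rather than the $(\Delta_h+K_\nu)$ and $(\Delta_h+K_\nu)^2$ one would naturally obtain from your approach.
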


The upper bound on $\Delta_h$ in Theorem~\ref{thm:RenyiTarget} amounts to an upper bound on the step-size $h$ having fixed an integration scheme (and having verified Assumption~\ref{assumption:OWBias} for it).
We verify Assumption~\ref{assumption:OWBias} for strongly log-concave targets for \uhmcv in Proposition~\ref{prop:OWbias-Verlet}.
Upon verifying Assumptions~\ref{assumption:OWMixing} and~\ref{assumption:OWBias}, Corollary~\ref{cor:complexity-Renyi-verlet} states the first-order oracle complexity for \uhmcv to output a sample within $\epsilon$--accuracy to the target distribution in Rényi divergence. 

\begin{remark}[Rényi vs.\ KL complexity]
The bounds on the total number of gradient evaluations required to achieve a
given accuracy in Rényi divergence
(Corollary~\ref{cor:complexity-Renyi-verlet}) are less favorable than the
corresponding bounds in KL divergence
(Corollaries~\ref{cor:complexity-KL-verlet}
and~\ref{cor:complexity-KL-stratified}).  This gap is intrinsic to our analysis:
in the Rényi divergence setting, the one-shot coupling construction can exploit
only the first-order accuracy of the velocity Verlet integrator, whereas the KL
analysis is able to leverage its second-order accuracy; see
Section~\ref{sec:TargetMap}.
\end{remark}

\subsection{Examples}\label{subsec:Examples}

We consider target distributions $\nu \propto e^{-f}$ which are $\alpha$-strongly log-concave ($\alpha$-SLC), which means that the potential function $f \in \cC^2(\R^d)$ is $\alpha$-strongly convex for some $\alpha >0$, i.e., $\nabla^2 f (x) \succeq \alpha I$ for all $x \in \R^d$.
We now discuss our theorems for this setting with two different discretization schemes, the velocity Verlet scheme (\uhmcv), and the stratified integration scheme~\cite{bou2025unadjusted} which is a randomized integrator.

\subsubsection{Strongly log-concave target with velocity Verlet integration scheme}\label{subsubsec:Examples-Verlet}

Under the condition that the target distribution is $\alpha$-SLC, we verify Assumptions~\ref{assumption:W2Mixing},~\ref{assumption:W2Bias},~\ref{assumption:OWMixing},~\ref{assumption:OWBias} for the \uhmcv kernel $\tilde \bP_{T,h}$ with stationary distribution $\tilde \nu_h$. 

\paragraph{Assumptions~\ref{assumption:W2Mixing} and~\ref{assumption:W2Bias} for Guarantees in KL Divergence.}

Mixing time guarantees for \uhmcv in Wasserstein--$p$ distance are established in~\cite[Theorem~18]{bou2023convergence}, which verifies Assumption~\ref{assumption:W2Mixing}. We repeat the result below.

\begin{proposition}[Theorem~$18$ from \cite{bou2023convergence} (adapted)]\label{prop:W2mixing-Verlet}
    Let $\mu, \pi \in \P(\R^d)$, $\nu$ be $\alpha$-strongly log-concave, and suppose Assumption~\ref{assumption:potential}(a)-(b) holds. Further suppose $h \geq 0$ and that $LT^2 \leq \frac{1}{20}$\,. Then for any $p \in [1, \infty)$ and integer $k \geq 0$
    \[
    W_p (\mu \tilde \bP_{T,h}^k, \pi \tilde \bP_{T,h}^k) \leq \Big( 1-\frac{\alpha T^2}{10} \Big)^k W_p(\mu, \pi)\,.
    \]
\end{proposition}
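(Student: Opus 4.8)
Since the asserted contraction factor is independent of $p$, the natural route is a \emph{synchronous coupling}: take a $W_p$-optimal coupling $(X_0,Y_0)$ of $\mu$ and $\pi$ and run two copies of \uhmcv driven by the \emph{same} Gaussian velocity refreshments $\xi_0,\dots,\xi_{k-1}\sim\gammad$. This produces a coupling of $\mu\tilde\bP_{T,h}^k$ and $\pi\tilde\bP_{T,h}^k$, so it suffices to prove the deterministic pointwise estimate
\[
\|\tilde q_{T,h}(x,v)-\tilde q_{T,h}(y,v)\|\ \le\ \Bigl(1-\tfrac{\alpha T^2}{10}\Bigr)\|x-y\|,\qquad \forall\,x,y,v\in\R^d,
\]
and then iterate it over the $k$ steps (at each step both chains use the common refreshed velocity), take $p$-th moments, and minimize over the initial coupling. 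It is precisely the pathwise nature of this argument that yields the same rate for every $p\in[1,\infty)$.

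For the pointwise estimate, fix $x,y,v$ and let $(x_j,v_j)_{j=0}^N$ and $(y_j,w_j)_{j=0}^N$ be the velocity-Verlet trajectories of Algorithm~\ref{alg:VerletFlow} from $(x,v)$ and $(y,v)$, with $N=T/h$. Write $e_j\coloneqq x_j-y_j$ and $G_j\coloneqq\int_0^1\nabla^2 f(y_j+s e_j)\,ds$, so that $\alpha$-strong convexity and Assumption~\ref{assumption:potential}(b) give $\alpha I\preceq G_j\preceq L I$ and $\nabla f(x_j)-\nabla f(y_j)=G_j e_j$. Eliminating the velocities from the two half-steps of Verlet, the position difference obeys the discrete, time-varying harmonic oscillator
\[
e_{j+1}-2e_j+e_{j-1}=-h^2 G_j e_j,\qquad e_0=x-y,\quad e_1=\Bigl(I-\tfrac{h^2}{2}G_0\Bigr)e_0,
\]
i.e.\ an oscillator started essentially at rest, the effective initial velocity $(e_1-e_0)/h=-\tfrac{h}{2}G_0 e_0$ being $O(h)$. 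Equivalently $e_N=\Xi e_0$, where $\Xi$ is the position-to-position propagator extracted from the product of transfer matrices $A_j=\bigl(\begin{smallmatrix}2I-h^2 G_j & -I\\ I & 0\end{smallmatrix}\bigr)$, and the claim reduces to $\|\Xi\|_\op\le 1-\tfrac{\alpha T^2}{10}$ uniformly over all admissible matrix sequences $(G_j)$.

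This is obtained by comparison with the constant-coefficient model $G_j\equiv G$, in which simultaneous diagonalization reduces matters to scalar oscillators of frequency $\sqrt\lambda$, $\lambda\in[\alpha,L]$; matching $a_0=1$, $a_1=1-\tfrac{h^2\lambda}{2}$ gives $a_N=\cos(N\theta_\lambda)$ \emph{exactly}, where $\cos\theta_\lambda=1-\tfrac{h^2\lambda}{2}$, and $\theta_\lambda\ge h\sqrt\lambda$ while $\theta_\lambda= h\sqrt\lambda\,(1+O(LT^2))$, so $T\sqrt\lambda\le N\theta_\lambda\le T\sqrt\lambda\,(1+O(LT^2))$. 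Since $LT^2\le\tfrac1{20}$ forces $N\theta_\lambda<\tfrac{\pi}{2}$, we get $|a_N|=\cos(N\theta_\lambda)\le 1-\tfrac12(N\theta_\lambda)^2\bigl(1-\tfrac1{12}(N\theta_\lambda)^2\bigr)\le 1-0.49\,\alpha T^2\le 1-\tfrac{\alpha T^2}{10}$, using $(N\theta_\lambda)^2\ge\alpha T^2$ and $(N\theta_\lambda)^2\le\tfrac{1.01}{20}$. The passage to the genuine time-varying, $h>0$ propagator is a perturbation-of-products argument: each $A_j$ differs from a frozen reference by $O(h^2\|G_j-G_{j-1}\|)$ and the leapfrog step differs from the exact Hamiltonian half-flow by $O(h^3L^{3/2})$-type terms, so summing $N=T/h$ such corrections perturbs $\|\Xi\|_\op$ by order $(\mathrm{const})\cdot LT^2$ — which is exactly what $LT^2\le\tfrac1{20}$ is calibrated to absorb into the gap between the sharp continuous factor $1-\tfrac{\alpha T^2}{2}(1+o(1))$ and the stated $1-\tfrac{\alpha T^2}{10}$; the same smallness also keeps $h\sqrt L<2$, so Verlet is stable.

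The main obstacle is this last step: controlling the \emph{accumulated} discretization and time-variation error over all $N=T/h$ steps without eroding the contraction factor. Since the oscillator is not monotone ($\|e_j\|$ may grow before time $T$), one cannot argue step by step and must instead bound the full product of (non-normal) transfer matrices uniformly along the trajectory, which is where all the hypotheses on $f$ and the smallness of $LT^2$ enter. As the proposition is quoted in adapted form from \cite[Theorem~18]{bou2023convergence}, the detailed bookkeeping can be imported from there; the point of the sketch is that $\alpha$-strong log-concavity is used only through $G_j\succeq\alpha I$, which drives the $\alpha T^2$ gain, while $G_j\preceq LI$ together with $LT^2\le\tfrac1{20}$ keeps the oscillation phase safely inside $(0,\pi/2)$.
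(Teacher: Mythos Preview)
Your reduction via synchronous coupling to the deterministic pointwise estimate
\[
\|\tilde q_{T,h}(x,v)-\tilde q_{T,h}(y,v)\|\le\Bigl(1-\tfrac{\alpha T^2}{10}\Bigr)\|x-y\|
\]
is exactly the paper's route: this estimate is \cite[Lemma~19]{bou2023convergence}, which the paper invokes explicitly when deriving the Orlicz--Wasserstein analogue (Proposition~\ref{prop:OWmixing-Verlet}). Your discrete-oscillator recurrence $e_{j+1}-2e_j+e_{j-1}=-h^2G_je_j$, the initial data $e_1=(I-\tfrac{h^2}{2}G_0)e_0$, and the constant-coefficient solution $a_N=\cos(N\theta_\lambda)$ are all correct.

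The genuine gap is in your passage from constant to time-varying $G_j$. You say the perturbation-of-products yields an error of order $(\mathrm{const})\cdot LT^2$ and that $LT^2\le\tfrac{1}{20}$ is ``calibrated to absorb'' it into the slack between $1-\tfrac{\alpha T^2}{2}$ and $1-\tfrac{\alpha T^2}{10}$. But that slack is $\tfrac{2}{5}\alpha T^2$, so you would need $(\mathrm{const})\cdot LT^2\le\tfrac{2}{5}\alpha T^2$, i.e.\ a bound on the condition number $L/\alpha$ --- which is nowhere assumed. Under only Assumption~\ref{assumption:potential}(a)--(b) there is no smallness of $\|G_j-G_{j'}\|_{\op}$ beyond the trivial $2L$, so freezing coefficients and perturbing cannot produce a contraction factor that depends on $\alpha$ uniformly in $L/\alpha$. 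The argument in \cite{bou2023convergence} is not perturbative in this sense: it works directly with the variation-of-constants (Duhamel) representation of the second-order recurrence, writing $e_N-e_0$ in terms of the summed forcing $-h^2\sum_j(N{-}j)G_je_j$, so that the gain from $G_j\succeq\alpha I$ and the growth control from $G_j\preceq LI$ act on the \emph{same} trajectory $(e_j)$ and the error term comes out as $O((LT^2)^2)$ relative to $\|e_0\|$ rather than $O(LT^2)$; this is what allows the $\alpha T^2$ contraction to survive for arbitrary condition number. You are right to defer the bookkeeping to the reference, but the mechanism you sketch would not close.
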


Hence, as $\tilde \nu_h \tilde \bP_{T,h} = \tilde \nu_h$, Assumption~\ref{assumption:W2Mixing} holds with $c_1 = 1$ and $c_2 = \frac{\alpha T^2}{10}$.

To verify Assumption~\ref{assumption:W2Bias}, we begin with the triangle inequality for Wasserstein--$2$ distance which implies
\begin{align*}
    W_2(\tilde \nu_h, \nu) &\leq W_2(\nu, \nu \tilde \bP_{T,h}) + W_2(\nu\tilde \bP_{T,h}, \tilde \nu_h)\\
    &\leq W_2(\nu, \nu \tilde \bP_{T,h}) + \Big(1-\frac{\alpha T^2}{10} \Big) W_2(\nu, \tilde \nu_h)
\end{align*}
where the second inequality is by Proposition~\ref{prop:W2mixing-Verlet}. Hence we get that
\begin{equation}\label{eq:W2bias-triangleineq}
\frac{\alpha T^2}{10} W_2(\tilde \nu_h, \nu) \leq W_2(\nu, \nu \tilde \bP_{T,h})\,.
\end{equation}
To bound this,~\cite[Lemma~23]{bou2023mixing} (with simplified constants) states that under Assumption~\ref{assumption:potential}(a)-(c) and $L(T^2 + Th) \leq \frac{1}{6}$, for any $x,v \in \R^d$
\begin{equation}\label{eq:L23_statement}
\max_{s \leq T} \|q_s(x,v) - \tilde q_{s,h}(x,v)\| \leq h^2 \Big( \frac{1}{5}L\|x\| + \frac{9}{10}LT\|v\| + \frac{1}{120}M\|x\|^2 + \frac{3}{10}MT^2\|v\|^2 \Big)\,.
\end{equation}
So, suppose $X_0 \sim \nu$ and $\xi \sim \gammad$, and define $X_1 \coloneqq q_T(X_0, \xi), \tilde X_1 \coloneqq \tilde q_{T,h}(X_0, \xi)$. Then~\eqref{eq:L23_statement} implies
\[
\|X_1 - \tilde X_1\| \leq h^2 \Big( \frac{1}{5}L\|X_0\| + \frac{9}{10}LT\|\xi\| + \frac{1}{120}M\|X_0\|^2 + \frac{3}{10}MT^2\|\xi\|^2 \Big)\,.
\]
Squaring both sides and applying $(a_1 + a_2 + a_3 + a_4)^2 \leq 4(a_1^2 + a_2^2 + a_3^2 + a_4^2)$ for $a_1, a_2, a_3, a_4 \in \R$, we obtain
\[
\|X_1 - \tilde X_1\|^2 \leq 4h^4 \Big( \frac{1}{25}L^2\|X_0\|^2 + \frac{81}{100}L^2T^2\|\xi\|^2 + \frac{1}{120^2}M^2\|X_0\|^4 + \frac{9}{100}M^2T^4\|\xi\|^4 \Big)\,.
\]
Taking expectation over $X_0 \sim \nu$ and $\xi \sim \gammad$, and from Definition~\ref{def:WassersteinDistance}, we obtain
\[
W_2^2(\nu, \nu \tilde \bP_{T,h}) \leq 4h^4 \Big( \frac{1}{25}L^2\E_\nu \big[\|X_0\|^2\big] + \frac{81}{100}L^2T^2\E_{\gammad}\big[\|\xi\|^2\big] + \frac{1}{120^2}M^2\E_{\nu}\big[\|X_0\|^4\big] + \frac{9}{100}M^2T^4\E_{\gammad}\big[\|\xi\|^4\big] \Big)\,.
\]
Using $\E_{\gammad}\big[\|\xi\|^2\big] = d$ and $\E_{\gammad}\big[\|\xi\|^4\big] = d(d+2)$ and combining with~\eqref{eq:W2bias-triangleineq} gives us the following proposition.

\begin{proposition}\label{prop:W2bias-Verlet}
    Let $\nu$ be $\alpha$-strongly log-concave, and suppose Assumption~\ref{assumption:potential}(a)-(c) holds. Further suppose $h \geq 0$ and that $L(T^2+Th) \leq \frac{1}{20}$\,. Then
    \[
    W_2(\tilde \nu_h, \nu) \leq h^2\frac{20}{\alpha T^2} \Big( \frac{1}{25}L^2\E_\nu \big[\|X_0\|^2\big] + \frac{81}{100}dL^2T^2 + \frac{1}{120^2}M^2\E_{\nu}\big[\|X_0\|^4\big] + \frac{9}{100}d(d+2)M^2T^4\big] \Big)^{\frac{1}{2}}\,.
    \]
\end{proposition}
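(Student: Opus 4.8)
The plan is to bound the asymptotic bias $W_2(\tilde\nu_h,\nu)$ by combining a \emph{one-step} comparison between \ehmc\ and \uhmcv\ with the Wasserstein contractivity of Proposition~\ref{prop:W2mixing-Verlet}. First I would note that the hypothesis $L(T^2+Th)\le\tfrac1{20}$ forces both $LT^2\le\tfrac1{20}$ (so that Proposition~\ref{prop:W2mixing-Verlet} applies, with one-step contraction factor $1-\tfrac{\alpha T^2}{10}$, which in particular gives existence and uniqueness of $\tilde\nu_h$ by the Banach fixed point theorem on $(\P_2(\R^d),W_2)$) and $L(T^2+Th)\le\tfrac16$ (so that the local-error estimate \eqref{eq:L23_statement}, i.e.\ Lemma~23 of \cite{bou2023mixing}, applies). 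Then, using the triangle inequality for $W_2$, invariance $\tilde\nu_h\tilde\bP_{T,h}=\tilde\nu_h$, and one step of the contraction, I get
\[
W_2(\tilde\nu_h,\nu)\le W_2(\nu,\nu\tilde\bP_{T,h})+\Bigl(1-\tfrac{\alpha T^2}{10}\Bigr)W_2(\nu,\tilde\nu_h),
\]
which rearranges to $\tfrac{\alpha T^2}{10}\,W_2(\tilde\nu_h,\nu)\le W_2(\nu,\nu\tilde\bP_{T,h})$. This reduces the task to bounding the one-step discretization error $W_2(\nu,\nu\tilde\bP_{T,h})$.

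For that one-step error, the key observation is that $\nu$ is invariant for \ehmc, so if $X_0\sim\nu$ and $\xi\sim\gammad$ are independent then $X_1\coloneqq q_T(X_0,\xi)$ has law $\nu$ while $\tilde X_1\coloneqq\tilde q_{T,h}(X_0,\xi)$ has law $\nu\tilde\bP_{T,h}$; hence $(X_1,\tilde X_1)$ is an admissible coupling of $\nu$ and $\nu\tilde\bP_{T,h}$, giving $W_2^2(\nu,\nu\tilde\bP_{T,h})\le\E\|X_1-\tilde X_1\|^2$. I would then apply the pointwise bound \eqref{eq:L23_statement} to $\|q_T(X_0,\xi)-\tilde q_{T,h}(X_0,\xi)\|$, square it, use $(a_1+a_2+a_3+a_4)^2\le 4\sum_i a_i^2$, take expectations over $X_0\sim\nu$ and $\xi\sim\gammad$, and substitute $\E_{\gammad}\|\xi\|^2=d$, $\E_{\gammad}\|\xi\|^4=d(d+2)$. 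This yields the bound on $W_2^2(\nu,\nu\tilde\bP_{T,h})$ in terms of $m_2(\nu)$, $m_4(\nu)$, $d$, $L$, $M$, $T$, $h$; taking square roots and multiplying through by $\tfrac{10}{\alpha T^2}$ produces the displayed estimate.

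There is no real analytic obstacle here — once the two ingredients (Wasserstein contraction and the pointwise Verlet-versus-exact-flow local error) are available, the argument is essentially bookkeeping. The one point that genuinely requires care is the choice of coupling: it is essential that $X_0$ is drawn from the \emph{target} $\nu$ (rather than from $\tilde\nu_h$), so that $q_T(X_0,\xi)$ lands exactly on $\nu$, and that the same randomness $(X_0,\xi)$ drives both flows; this is precisely what converts the deterministic pointwise error bound into a Wasserstein bound, and it is why the right-hand side ends up involving moments of $\nu$. A minor secondary check is that the scalar hypotheses line up, which they do since $\tfrac1{20}\le\tfrac16$. I would leave $m_2(\nu)$ and $m_4(\nu)$ unevaluated, since only strong log-concavity of $\nu$ is used; if explicit dependence on $d$ is wanted they can be controlled by standard moment estimates for strongly log-concave measures.
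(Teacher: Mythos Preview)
Your proposal is correct and follows essentially the same argument as the paper: triangle inequality plus one-step contraction from Proposition~\ref{prop:W2mixing-Verlet} to reduce to the one-step error $W_2(\nu,\nu\tilde\bP_{T,h})$, then the synchronous coupling $(q_T(X_0,\xi),\tilde q_{T,h}(X_0,\xi))$ with $X_0\sim\nu$, the pointwise Verlet error bound \eqref{eq:L23_statement}, the inequality $(a_1+\cdots+a_4)^2\le 4\sum a_i^2$, and Gaussian moments. The displayed constant $\tfrac{20}{\alpha T^2}$ arises as $\tfrac{10}{\alpha T^2}\cdot\sqrt{4}$ after taking the square root, so your final step matches once that factor is tracked.
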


This verifies Assumption~\ref{assumption:W2Bias} for \uhmcv. Hence, as Propositions~\ref{prop:W2mixing-Verlet} and~\ref{prop:W2bias-Verlet} verify Assumptions~\ref{assumption:W2Mixing} and~\ref{assumption:W2Bias} respectively for \uhmcv, we obtain the following corollaries from Theorems~\ref{thm:KLMixing} and~\ref{thm:KLTarget}.

\begin{corollary}[Mixing time of \uhmcv in KL divergence]\label{cor:mixing-KL-verlet}
    Under conditions such that Theorem~\ref{thm:KLMixing} and Proposition~\ref{prop:W2mixing-Verlet} hold, \uhmcv initialized from $\mu \in \P(\R^d)$ outputs a sample $X_k$ such that $\KL(\law(X_k) \dvert \tilde \nu_h) \leq \epsilon$ for
    \[
    k \geq 1 + \frac{5}{\alpha T^2} \log{\left( \frac{\left( \frac{9}{4T^2} + 20dM^2T^4 \right) W_2^2(\mu, \tilde \nu_h)}{\epsilon} \right)}\,.
    \]
\end{corollary}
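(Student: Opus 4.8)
The plan is to combine the KL mixing bound of Theorem~\ref{thm:KLMixing} with the explicit Wasserstein--$2$ contraction constants supplied by Proposition~\ref{prop:W2mixing-Verlet}, and then solve the resulting scalar exponential inequality for $k$. Concretely, Proposition~\ref{prop:W2mixing-Verlet} shows that for an $\alpha$-strongly log-concave target under Assumption~\ref{assumption:potential}(a)--(b) and the standing step-size/integration-time conditions, $W_p(\mu\tilde\bP_{T,h}^k, \pi\tilde\bP_{T,h}^k) \le (1-\tfrac{\alpha T^2}{10})^k W_p(\mu,\pi)$; taking $\pi = \tilde\nu_h$, using the invariance $\tilde\nu_h\tilde\bP_{T,h} = \tilde\nu_h$ (which together with the contraction also pins down uniqueness of the stationary distribution), and $1-t\le e^{-t}$ gives Assumption~\ref{assumption:W2Mixing} with $c_1 = 1$ and $c_2 = \tfrac{\alpha T^2}{10}$. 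Substituting these values into Theorem~\ref{thm:KLMixing} yields, for every integer $j\ge 0$,
\[
\KL(\mu\tilde\bP_{T,h}^{j+1} \dvert \tilde\nu_h) \;\le\; e^{-\alpha T^2 j/5}\Bigl(\tfrac{9}{4T^2} + 20dM^2T^4\Bigr)W_2^2(\mu,\tilde\nu_h).
\]

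Next I would handle the index shift: writing $k = j+1$, the left-hand side above is exactly $\KL(\law(X_k)\dvert\tilde\nu_h)$, so it suffices to force the right-hand side below $\epsilon$. This reduces to the elementary inequality $e^{-\alpha T^2 j/5} \le \epsilon\big/\bigl[(\tfrac{9}{4T^2}+20dM^2T^4)W_2^2(\mu,\tilde\nu_h)\bigr]$, which upon taking logarithms and rearranging is equivalent to $j \ge \tfrac{5}{\alpha T^2}\log\bigl((\tfrac{9}{4T^2}+20dM^2T^4)W_2^2(\mu,\tilde\nu_h)/\epsilon\bigr)$. Translating back via $k = j+1$ gives precisely the stated threshold $k \ge 1 + \tfrac{5}{\alpha T^2}\log(\cdots)$, completing the argument.

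There is no genuine analytic obstacle here: the corollary is a direct instantiation of Theorem~\ref{thm:KLMixing} using Proposition~\ref{prop:W2mixing-Verlet} to discharge Assumption~\ref{assumption:W2Mixing}. The only points needing care are bookkeeping — tracking the off-by-one between the exponent $k+1$ appearing in Theorem~\ref{thm:KLMixing} and the iterate index in the corollary, confirming that the hypotheses of the two cited results are simultaneously met (both are implied by the standing assumption $L(T^2+Th)\le\tfrac{1}{20}$, which in particular forces $LT^2\le\tfrac{1}{20}$ for Proposition~\ref{prop:W2mixing-Verlet}), and the monotone-logarithm manipulation used to invert the exponential bound. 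One may also observe that the stated threshold is only informative when $(\tfrac{9}{4T^2}+20dM^2T^4)W_2^2(\mu,\tilde\nu_h) > \epsilon$, in which case the logarithm is positive; otherwise the target accuracy already holds at $k=1$.
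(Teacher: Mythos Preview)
Your proposal is correct and matches the paper's approach: the paper simply states that Corollary~\ref{cor:mixing-KL-verlet} follows by combining Theorem~\ref{thm:KLMixing} with Proposition~\ref{prop:W2mixing-Verlet}, having already noted that Assumption~\ref{assumption:W2Mixing} holds with $c_1=1$ and $c_2=\tfrac{\alpha T^2}{10}$. Your explicit handling of the index shift $k=j+1$ and the logarithm inversion fills in exactly the bookkeeping the paper leaves implicit.
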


We also state the first-order gradient complexity of \uhmcv to obtain a $\epsilon$--accurate approximation to the target distribution $\nu$ in KL divergence. We specify the dependence on the accuracy $\epsilon$ and the dimension $d$ assuming that $W_2(\mu, \tilde \nu_h)$ is polynomial in $d$ where $\mu \in \P(\R^d)$ is the initial distribution.

\begin{corollary}[Oracle complexity for guarantee in KL divergence]\label{cor:complexity-KL-verlet}
    Under conditions such that Theorem~\ref{thm:KLTarget} and Propositions~\ref{prop:W2mixing-Verlet} and~\ref{prop:W2bias-Verlet} hold, \uhmcv outputs a sample $X$ such that $\KL(\law(X) \dvert \nu) \leq \epsilon$ with $O \left( d^{\nicefrac{3}{4}} \epsilon^{\nicefrac{-1}{4}} \log (\nicefrac{d}{\epsilon}) \right)$ first-order oracle calls.
\end{corollary}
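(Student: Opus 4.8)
The plan is to combine the asymptotic bias bound of Theorem~\ref{thm:KLTarget} with the explicit verification of Assumptions~\ref{assumption:W2Mixing} and~\ref{assumption:W2Bias} for \uhmcv (Propositions~\ref{prop:W2mixing-Verlet} and~\ref{prop:W2bias-Verlet}), and then optimize the free parameters $T$ and $h$ against the target accuracy $\epsilon$. First I would instantiate Theorem~\ref{thm:KLTarget} with $\tilde\bQ = \tilde\bP_{T,h}$, so that $c_1 = 1$, $c_2 = \alpha T^2/10$, and $\tilde\nu_{\tilde\bQ} = \tilde\nu_h$; the moment terms $m_2(\mu\tilde\bQ^k)$, $m_4(\mu\tilde\bQ^k)$ and the target moments $\E_\nu\|X_0\|^2$, $\E_\nu\|X_0\|^4$ are all $O(\mathrm{poly}(d))$ for an $\alpha$-SLC target (standard concentration for strongly log-concave measures, plus the Wasserstein contraction of Proposition~\ref{prop:W2mixing-Verlet} to transfer moment bounds from $\nu$ to $\mu\tilde\bQ^k$), and $\Delta_h$ is given by Proposition~\ref{prop:W2bias-Verlet} and is $O(h^2 \cdot \mathrm{poly}(d)/(\alpha T^2))$. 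Plugging these in, the $\mathsf{bias}$ term becomes $\Theta(h^4 \cdot \mathrm{poly}(d))$ up to the fixed $T$-dependent constants (recall $LT^2 \le \tfrac{1}{12}$ fixes $T$ up to constants once $L$ is treated as $\Theta(1)$), while the transient term decays as $e^{-2c_2 k}\,\mathrm{poly}(d)\,W_2^2(\mu,\tilde\nu_h)$.

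Next I would balance the two error sources. To make the total KL at most $\epsilon$, it suffices to choose $h$ so that the $\Theta(h^4 d^3)$ bias is $\le \epsilon/2$, i.e. $h = \Theta\!\big((\epsilon/d^3)^{1/4}\big)$ (this also needs to be checked against the standing constraint $L(T^2+Th)\le \tfrac{1}{12}$, which is satisfied for small enough $h$), and then take $k$ large enough that the transient term is $\le \epsilon/2$; by the logarithmic iteration count this requires $k = \Theta\!\big(\tfrac{1}{\alpha T^2}\log(\mathrm{poly}(d)\,W_2^2(\mu,\tilde\nu_h)/\epsilon)\big) = \Theta\!\big(\log(d/\epsilon)\big)$, using the polynomial-in-$d$ hypothesis on $W_2(\mu,\tilde\nu_h)$ and that $T$ is a fixed constant. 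Each of the $k$ outer iterations of $\tilde\bQ = \tilde\bP_{T,h}$ plus the final \uhmcv step costs $T/h = \Theta(1/h)$ gradient evaluations (Algorithm~\ref{alg:VerletFlow} performs $T/h$ Verlet steps, one gradient each), so the total gradient complexity is $k \cdot (T/h) = \Theta\!\big(h^{-1}\log(d/\epsilon)\big) = \Theta\!\big(d^{3/4}\epsilon^{-1/4}\log(d/\epsilon)\big)$, which is the claimed bound.

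The main obstacle I anticipate is bookkeeping rather than any conceptual difficulty: one must verify that every coefficient multiplying a power of $d$ in the $\mathsf{bias}$ expression of Theorem~\ref{thm:KLTarget} is indeed $O(\mathrm{poly}(d))$ after substituting the SLC moment bounds — in particular that the $m_4$ term contributes only $O(h^4 d^3)$ and not $O(h^4 d^4)$, which is exactly the improvement over~\cite{camrud2023second} claimed in the introduction and hinges on $\E_\nu\|X_0\|^4 = O(d^2)$ combined with the $d$-prefactors already present. A secondary point requiring care is confirming that the chosen $h$ and the fixed $T$ simultaneously satisfy all standing hypotheses ($LT^2 \le \tfrac{1}{12}$, $L(T^2+Th)\le \tfrac{1}{12}$) and that the iteration count $k$ absorbs the "$+1$" and the mixing-term polynomial factors into the $\log(d/\epsilon)$ without changing the stated asymptotics.
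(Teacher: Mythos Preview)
Your proposal is correct and follows essentially the same (implicit) approach as the paper: the paper does not write out a separate proof of this corollary, treating it as an immediate consequence of substituting Propositions~\ref{prop:W2mixing-Verlet} and~\ref{prop:W2bias-Verlet} into Theorem~\ref{thm:KLTarget} and reading off the $h\asymp(\epsilon/d^3)^{1/4}$ scaling, and your dimension counting (in particular the $O(h^4 d^3)$ bias from $m_4=O(d^2)$ and $\Delta_h^2=O(h^4 d^2)$) matches exactly what the paper claims. The only minor imprecision is that each \uhmcv transition uses $T/h+1$ rather than $T/h$ gradient evaluations (consecutive Verlet steps share one evaluation), but this does not affect the asymptotics.
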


\paragraph{Assumptions~\ref{assumption:OWMixing} and~\ref{assumption:OWBias} for Guarantees in Rényi Divergence.}

To verify Assumption~\ref{assumption:OWMixing}, we note that~\cite[Lemma~19]{bou2023convergence}, the key lemma in proving~\cite[Theorem~18]{bou2023convergence} (i.e., Proposition~\ref{prop:W2mixing-Verlet}), can be extended to establish mixing time guarantees for \uhmcv in Orlicz-Wasserstein distance. To this end,~\cite[Lemma~19]{bou2023convergence} claims that for any $x,y,v \in \R^d$
\[
\| \tilde q_{T,h}(x,v)-\tilde q_{T,h}(y,v)\| \leq \Big( 1-\frac{\alpha T^2}{10} \Big)\|x-y\|\,.
\]
Now suppose $X_0 \sim \mu$, $Y_0 \sim \pi$, and $\xi \sim \gammad$ and define $X_1 \coloneqq \tilde q_{T,h}(X_0, \xi)$, $Y_1 \coloneqq \tilde q_{T,h}(Y_0, \xi)$. From~\cite[Lemma~19]{bou2023convergence} we know that 
\[
\|X_1 - Y_1\| \leq \Big( 1-\frac{\alpha T^2}{10} \Big)\|X_0-Y_0\|\,.
\]
Due to Definition~\ref{def:OrliczNorm}, note that for two random variables $Z, Z'$, if $\|Z\| \leq \rho\|Z'\|$ for some $\rho > 0$, then $\|Z\|_{\psi} \leq \rho\|Z'\|_{\psi}$. Hence we obtain
\[
\|X_1 - Y_1\|_{\psi} \leq \Big( 1-\frac{\alpha T^2}{10} \Big)\|X_0-Y_0\|_{\psi}\,.
\]
Taking infimum over initial couplings of $X_0, Y_0$ and by Definition~\ref{def:OrliczWassersteinDistance}, we get that
\[
W_{\psi}(\mu \tilde \bP_{T,h}, \pi \tilde \bP_{T,h}) \leq \Big( 1-\frac{\alpha T^2}{10} \Big) W_{\psi}(\mu, \pi)\,.
\]
Iterating this over multiple steps leads to the following proposition.

\begin{proposition}\label{prop:OWmixing-Verlet}
    Let $\mu, \pi \in \P(\R^d)$, $\nu$ be $\alpha$-strongly log-concave, and suppose Assumption~\ref{assumption:potential}(a)-(b) holds. Further suppose $h \geq 0$ and that $LT^2 \leq \frac{1}{20}$\,. Then for any integer $k \geq 0$
    \[
    W_{\psi} (\mu \tilde \bP_{T,h}^k, \pi \tilde \bP_{T,h}^k) \leq \Big( 1-\frac{\alpha T^2}{10} \Big)^k W_{\psi}(\mu, \pi)\,.
    \]
\end{proposition}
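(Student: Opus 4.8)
The proof is a synchronous-coupling argument that transfers a deterministic, trajectory-wise contraction of the velocity Verlet position map to the Orlicz--Wasserstein distance; it is the $W_\psi$-analogue of the $W_p$ contraction in Proposition~\ref{prop:W2mixing-Verlet}, and is essentially laid out in the paragraph preceding the statement. The key structural input is the one-step contraction estimate \cite[Lemma~19]{bou2023convergence}: under $\alpha$-strong log-concavity, Assumption~\ref{assumption:potential}(a)--(b), and $LT^2 \le \tfrac{1}{20}$,
\[
\|\tilde q_{T,h}(x,v) - \tilde q_{T,h}(y,v)\| \le \rho\,\|x-y\|
\qquad\text{for all } x,y,v\in\R^d,\quad \rho \coloneqq 1-\tfrac{\alpha T^2}{10}\in(0,1).
\]
The plan is: (i) use synchronous coupling to lift this to a one-step contraction $W_\psi(\mu\tilde\bP_{T,h},\pi\tilde\bP_{T,h}) \le \rho\,W_\psi(\mu,\pi)$, and (ii) iterate.

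For step (i), I would fix an arbitrary coupling $\gamma\in\Gamma(\mu,\pi)$, draw $(X_0,Y_0)\sim\gamma$ together with an independent $\xi\sim\gammad$, and set $X_1 \coloneqq \tilde q_{T,h}(X_0,\xi)$, $Y_1 \coloneqq \tilde q_{T,h}(Y_0,\xi)$. By construction $\law(X_1,Y_1)\in\Gamma(\mu\tilde\bP_{T,h},\pi\tilde\bP_{T,h})$, and the pointwise estimate gives $\|X_1-Y_1\| \le \rho\,\|X_0-Y_0\|$ almost surely. I would then invoke the order-preserving property of the Orlicz norm recorded in the main text: if $\|Z\|\le\rho\|Z'\|$ a.s., then for any admissible $\lambda$ (i.e.\ $\E[\psi(\|Z'\|/\lambda)]\le1$) one has $\|Z\|/(\rho\lambda)\le\|Z'\|/\lambda$, so monotonicity of $\psi$ gives $\E[\psi(\|Z\|/(\rho\lambda))]\le1$; taking the infimum over admissible $\lambda$ yields $\|Z\|_\psi\le\rho\|Z'\|_\psi$. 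Applying this with $Z=X_1-Y_1$ and $Z'=X_0-Y_0$ gives $\|X_1-Y_1\|_\psi\le\rho\|X_0-Y_0\|_\psi$, hence by Definition~\ref{def:OrliczWassersteinDistance},
\[
W_\psi(\mu\tilde\bP_{T,h},\pi\tilde\bP_{T,h}) \le \|X_1-Y_1\|_\psi \le \rho\,\|X_0-Y_0\|_\psi,
\]
and taking the infimum over all couplings $\gamma$ gives the one-step contraction.

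For step (ii), apply the one-step bound with $(\mu,\pi)$ replaced by $(\mu\tilde\bP_{T,h}^{j},\pi\tilde\bP_{T,h}^{j})$ for $j=0,\dots,k-1$ and telescope to obtain $W_\psi(\mu\tilde\bP_{T,h}^k,\pi\tilde\bP_{T,h}^k)\le\rho^k W_\psi(\mu,\pi)$; the case $k=0$ is immediate. I do not expect a genuine obstacle here. The only points needing care are that the pointwise-to-Orlicz implication uses \emph{only} monotonicity of $\psi$ --- so it applies verbatim to the fixed choice $\psi(x)=e^{x^2}-1$, with no appeal to convexity or homogeneity beyond what is built into Definition~\ref{def:OrliczNorm} --- and that it is the \emph{synchronous} coupling, using the same refreshment velocity $\xi$ for both chains, that makes the deterministic per-trajectory contraction available inside the expectation defining $\|\cdot\|_\psi$.
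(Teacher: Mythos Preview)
Your proposal is correct and follows essentially the same argument as the paper: synchronous coupling together with the pointwise contraction from \cite[Lemma~19]{bou2023convergence}, transferred to the Orlicz norm via the monotonicity of $\psi$, then infimized over couplings and iterated. The paper's proof is exactly the paragraph preceding the proposition, which you have faithfully reproduced and slightly expanded.
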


It remains to verify Assumption~\ref{assumption:OWBias}. 
This follows similarly to the sketch of Proposition~\ref{prop:W2bias-Verlet}.
We begin by using triangle inequality along with Proposition~\ref{prop:OWmixing-Verlet} to obtain
\begin{equation}\label{eq:OWbias-triangleineq}
\frac{\alpha T^2}{10} W_{\psi}(\tilde \nu_h, \nu) \leq W_{\psi}(\nu, \nu \tilde \bP_{T,h})\,.
\end{equation}
Let $X_0 \sim \nu$, $\xi \sim \gammad$, and define $X_1 \coloneqq q_T(X_0, \xi), \tilde X_1 \coloneqq \tilde q_{T,h}(X_0, \xi)$. Then Lemma~\ref{lem:L23analogue} implies
\[
\|X_1 - \tilde X_1\| \leq h \left( \frac{6}{25}\|\xi\| + \frac{7}{30}LT\|X_0\| \right)\,.
\]
Dividing both sides by $\lambda > 0$ (determined later) and squaring we obtain
\[
\frac{\|X_1 - \tilde X_1\|^2}{\lambda^2} \leq \frac{2h^2}{\lambda^2} \left( \frac{36}{625}\|\xi\|^2 + \frac{49}{900}L^2T^2\|X_0\|^2 \right)
\]
and hence,
\[
\exp \left(\frac{\|X_1 - \tilde X_1\|^2}{\lambda^2} \right) \leq \exp \left(\frac{2h^2}{\lambda^2} \left( \frac{36}{625}\|\xi\|^2 + \frac{49}{900}L^2T^2\|X_0\|^2 \right) \right)\,.
\]
Taking expectation on both sides over $X_0 \sim \nu$ and $\xi \sim \gammad$ yields
\[
\E \left[ \exp \left(\frac{\|X_1 - \tilde X_1\|^2}{\lambda^2} \right) \right] \leq \E \left[ \exp \left(\frac{h^2}{\lambda^2} \frac{72}{625}\|\xi\|^2 \right) \right] \E \left[ \exp \left(\frac{h^2}{\lambda^2}  \frac{98}{900}L^2T^2\|X_0\|^2  \right) \right]
\]
Using properties of Gaussian distributions and Lemma~\ref{lem:Helper_Orlicz}, this can be further upper bounded by
\[
\E \left[ \exp \left(\frac{\|X_1 - \tilde X_1\|^2}{\lambda^2} \right) \right] \leq \left( 1-2 \frac{h^2}{\lambda^2} \frac{72}{625} \right)^{-\frac{d}{2}} 2^{\frac{h^2}{\lambda^2}  \frac{98}{900}L^2T^2}\,.
\]
so long as $\frac{h^2}{\lambda^2} \frac{72}{625} < \frac{1}{2}$ and $\frac{h^2}{\lambda^2}  \frac{98}{900}L^2T^2 \leq \frac{1}{K_\nu^2}$\,.
To ensure this, and to make each term on the right hand side less than $\sqrt{2}$, we choose $\lambda$ satisfying the following
\[
\lambda \geq h\, \max \left\{ \sqrt{d}, \frac{7}{15}LT K_\nu  \right\}\,.
\]
Hence, under this condition, the right hand side is $\leq 2$. Therefore, from Definition~\ref{def:OrliczWassersteinDistance} we get that
\[
W_{\psi}(\nu, \nu \tilde \bP_{T,h}) \leq h\, \max \left\{  \sqrt{d}, \frac{7}{15}LT K_\nu  \right\}\,,
\]
which upon combining with~\eqref{eq:OWbias-triangleineq} yields the following proposition.

\begin{proposition}\label{prop:OWbias-Verlet}
    Let $\nu$ be $\alpha$-strongly log-concave, suppose $X \sim \nu$ satisfies $\|X\|_{\psi} \leq K_{\nu}$, and suppose Assumption~\ref{assumption:potential}(a)-(b) holds. Further suppose $h \geq 0$ and that $L(T^2+Th) \leq \frac{1}{20}$\,. Then
    \[
    W_{\psi}(\tilde \nu_h, \nu) \leq h\frac{10}{\alpha T^2} \max \left\{  \sqrt{d}, \frac{7}{15}LT K_\nu  \right\} \,.
    \]
\end{proposition}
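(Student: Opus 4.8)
The plan is to formalize the computation carried out in the prose immediately preceding the statement. The key reduction is a triangle inequality combined with the Orlicz--Wasserstein contraction of Proposition~\ref{prop:OWmixing-Verlet}. Since the Luxemburg norm $\|\cdot\|_\psi$ is a genuine norm, $W_\psi$ is a metric (the gluing lemma applies exactly as for $W_p$), so
\[
W_\psi(\tilde\nu_h,\nu)\le W_\psi(\nu,\nu\tilde\bP_{T,h})+W_\psi(\nu\tilde\bP_{T,h},\tilde\nu_h).
\]
Applying Proposition~\ref{prop:OWmixing-Verlet} with $k=1$, $\mu=\nu$, $\pi=\tilde\nu_h$, and using the invariance $\tilde\nu_h\tilde\bP_{T,h}=\tilde\nu_h$, the last term is at most $\bigl(1-\tfrac{\alpha T^2}{10}\bigr)W_\psi(\nu,\tilde\nu_h)$. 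Rearranging gives \eqref{eq:OWbias-triangleineq}, i.e.\ $\tfrac{\alpha T^2}{10}W_\psi(\tilde\nu_h,\nu)\le W_\psi(\nu,\nu\tilde\bP_{T,h})$, so it remains only to show $W_\psi(\nu,\nu\tilde\bP_{T,h})\le h\max\{\sqrt d,\tfrac{7}{15}LTK_\nu\}$.

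For this I would exhibit an explicit coupling. Let $X_0\sim\nu$ and $\xi\sim\gammad$ be independent and set $X_1\coloneqq q_T(X_0,\xi)$, $\tilde X_1\coloneqq\tilde q_{T,h}(X_0,\xi)$. Because $\nu$ is invariant for the \ehmc\ kernel $\bP_T$, $X_1\sim\nu$, while $\tilde X_1\sim\nu\tilde\bP_{T,h}$ by construction, so $(X_1,\tilde X_1)$ is a coupling of the two measures and $W_\psi(\nu,\nu\tilde\bP_{T,h})\le\|X_1-\tilde X_1\|_\psi$. Lemma~\ref{lem:L23analogue} (the first-order pathwise comparison between the Verlet and exact flows, in the spirit of \eqref{eq:L23_statement}) then yields the pointwise estimate $\|X_1-\tilde X_1\|\le h\bigl(\tfrac{6}{25}\|\xi\|+\tfrac{7}{30}LT\|X_0\|\bigr)$.

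It remains to control the Orlicz norm of the right-hand side. For $\lambda>0$, squaring and using $(a+b)^2\le 2a^2+2b^2$,
\[
\frac{\|X_1-\tilde X_1\|^2}{\lambda^2}\le\frac{2h^2}{\lambda^2}\Bigl(\tfrac{36}{625}\|\xi\|^2+\tfrac{49}{900}L^2T^2\|X_0\|^2\Bigr),
\]
so, after exponentiating, taking expectations, and using independence of $\xi$ and $X_0$,
\[
\E\!\Bigl[e^{\|X_1-\tilde X_1\|^2/\lambda^2}\Bigr]\le\E\!\Bigl[e^{\frac{72h^2}{625\lambda^2}\|\xi\|^2}\Bigr]\,\E\!\Bigl[e^{\frac{98h^2L^2T^2}{900\lambda^2}\|X_0\|^2}\Bigr].
\]
The first factor equals $\bigl(1-\tfrac{144h^2}{625\lambda^2}\bigr)^{-d/2}$ whenever $\tfrac{72h^2}{625\lambda^2}<\tfrac12$, by the $\chi^2_d$ moment generating function; the second is at most $2^{\frac{98h^2L^2T^2K_\nu^2}{900\lambda^2}}$ whenever $\tfrac{98h^2L^2T^2}{900\lambda^2}\le K_\nu^{-2}$, by Lemma~\ref{lem:Helper_Orlicz} and the hypothesis $\|X_0\|_\psi\le K_\nu$. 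Choosing $\lambda=h\max\{\sqrt d,\tfrac{7}{15}LTK_\nu\}$ makes both admissibility constraints hold (using $\lambda\ge h$ for the first and $\lambda\ge\tfrac{7}{15}hLTK_\nu$ for the second) and forces each factor to be at most $\sqrt2$: the second factor is $\le 2^{1/2}$, while for the first one bounds $-\tfrac{d}{2}\log\bigl(1-\tfrac{144}{625d}\bigr)\le\tfrac{72}{481}<\tfrac12\log2$ for every $d\ge1$. Hence $\E[e^{\|X_1-\tilde X_1\|^2/\lambda^2}]\le2$, so $\|X_1-\tilde X_1\|_\psi\le\lambda$, and combining with \eqref{eq:OWbias-triangleineq} gives the proposition.

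The substantive input is Lemma~\ref{lem:L23analogue}; granted that, the only genuinely delicate points are bookkeeping. First, the chosen $\lambda$ must simultaneously keep the Gaussian exponent below $\tfrac12$, keep the $\nu$-exponent below $K_\nu^{-2}$ so Lemma~\ref{lem:Helper_Orlicz} applies, and make the product of the two exponential-moment factors at most $2$; decoupling via $(a+b)^2\le 2a^2+2b^2$ and independence is precisely what makes these three constraints compatible. Second, one should check that $W_\psi(\nu,\tilde\nu_h)$ is a priori finite so that the rearrangement in Step~1 is legitimate; this follows from the sub-Gaussian tail control on $\tilde\nu_h$ entailed by the Verlet stability bound together with the sub-Gaussianity of $\nu$. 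Everything else mirrors the triangle-inequality rearrangement already used for the $W_2$ bias bound in Proposition~\ref{prop:W2bias-Verlet}.
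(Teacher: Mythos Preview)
Your proposal is correct and follows essentially the same approach as the paper: reduce via triangle inequality and Proposition~\ref{prop:OWmixing-Verlet} to bounding $W_\psi(\nu,\nu\tilde\bP_{T,h})$, use the synchronous coupling with Lemma~\ref{lem:L23analogue}, split via $(a+b)^2\le 2a^2+2b^2$ and independence, then apply the Gaussian mgf and Lemma~\ref{lem:Helper_Orlicz} with the same choice of $\lambda$. Your write-up is in fact slightly more careful than the paper's sketch in verifying that each factor is at most $\sqrt2$ and in noting the finiteness caveat for the rearrangement step.
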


Propositions~\ref{prop:OWmixing-Verlet} and~\ref{prop:OWbias-Verlet} verify Assumptions~\ref{assumption:OWMixing} and~\ref{assumption:OWBias} for \uhmcv, leading to the following corollaries.

\begin{corollary}[Mixing time of \uhmcv in Rényi divergence]\label{cor:mixing-Renyi-verlet}
    Under conditions such that Theorem~\ref{thm:RenyiMixing} and Proposition~\ref{prop:OWmixing-Verlet} hold, \uhmcv initialized from $\mu \in \P(\R^d)$ outputs a sample $X_k$ such that $\sfR_q(\law(X_k) \dvert \tilde \nu_h) \leq \epsilon$ for
    \[
    k \geq 1 + \frac{10}{\alpha T^2} \log{\left( \frac{2 \left( \delta_1 + 2 \sqrt{\log 2} \max\left \{\delta_2, \sqrt{\delta_2}\right\} \right) \max \left\{1,  W_{\psi}^2(\mu, \tilde \nu_h) \right\}}{\epsilon} \right)}\,.
    \]
\end{corollary}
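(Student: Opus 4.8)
The plan is to specialize the Rényi mixing bound of Theorem~\ref{thm:RenyiMixing} to the velocity Verlet kernel $\tilde\bP_{T,h}$, feeding in the Orlicz--Wasserstein contraction already recorded in Proposition~\ref{prop:OWmixing-Verlet}, and then to convert the resulting exponential-decay estimate into an iteration count. Concretely, Proposition~\ref{prop:OWmixing-Verlet} verifies Assumption~\ref{assumption:OWMixing} for $\tilde\bQ = \tilde\bP_{T,h}$ with $c_1 = 1$ and $c_2 = \alpha T^2/10$ (the $k$-step contraction being the iteration stated there, together with $\tilde\nu_h\tilde\bP_{T,h} = \tilde\nu_h$), so Theorem~\ref{thm:RenyiMixing} applies with these constants.

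First I would substitute $c_1 = 1$ and $c_2 = \alpha T^2/10$ into Theorem~\ref{thm:RenyiMixing}. Since $\law(X_k) = \mu\tilde\bP_{T,h}^k$, I would invoke the theorem with iteration count $k-1$ in place of $k$ (valid once $k \ge 1$), which gives, for every $k$ exceeding the theorem's threshold plus one,
\[
\sfR_q(\law(X_k)\dvert\tilde\nu_h) \;\le\; \frac{\delta_1+\delta_2}{q-1}\,\sqrt{\log 2}\;e^{-\frac{\alpha T^2}{10}(k-1)}\,\max\{1,W_\psi^2(\mu,\tilde\nu_h)\}.
\]
Requiring the right-hand side to be at most $\epsilon$ and solving for $k$ produces a second lower bound, of the form $k \ge 1 + \tfrac{10}{\alpha T^2}\log\!\bigl(\tfrac{(\delta_1+\delta_2)\sqrt{\log 2}\,\max\{1,W_\psi^2\}}{(q-1)\epsilon}\bigr)$. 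It then remains to merge this accuracy bound with the threshold condition inherited from Theorem~\ref{thm:RenyiMixing} into the single displayed condition of the corollary.

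The merging is elementary: $\sqrt{\delta_1^2+16(\log 2)\delta_2}\le \delta_1+4\sqrt{(\log 2)\delta_2}$ simplifies the threshold argument, $W_\psi\le \max\{1,W_\psi^2\}$ and $\sqrt{\log 2}<1$ line up the surviving factors, $\sqrt{\delta_2}\le\max\{\delta_2,\sqrt{\delta_2}\}$ converts the remaining $\delta_2$-term, and the leftover numerical constants (valid once $\epsilon$ is below an absolute constant, as in the regime of interest) are absorbed by the overall factor of $2$. The only point that needs care is the factor $1/(q-1)$, which is not uniformly bounded as $q\to1^+$; I would dispose of it by working with $q\ge2$, so that $q-1\ge1$ and, from the definitions of $\delta_1,\delta_2$ and the bound $\delta_2\le 4\sqrt{\log 2}\max\{\delta_2,\sqrt{\delta_2}\}$, one gets $(\delta_1+\delta_2)/(q-1)\le \delta_1+\delta_2\le 2\bigl(\delta_1+2\sqrt{\log 2}\max\{\delta_2,\sqrt{\delta_2}\}\bigr)$; the case $1<q<2$ reduces to $q=2$ by monotonicity of Rényi divergence in its order~\eqref{eq:RenyiMonotonicity}. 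I expect this bookkeeping, rather than any conceptual difficulty, to be the only mildly delicate part of the argument.
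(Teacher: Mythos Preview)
Your overall route—specialize Theorem~\ref{thm:RenyiMixing} with $c_1=1$, $c_2=\alpha T^2/10$ from Proposition~\ref{prop:OWmixing-Verlet}, then absorb both the threshold and the accuracy requirement into a single iteration count—is exactly what the paper intends, and your threshold bookkeeping via $\sqrt{\delta_1^2+16(\log 2)\delta_2}\le\delta_1+4\sqrt{\log 2}\,\sqrt{\delta_2}$ and $W_\psi\le\max\{1,W_\psi^2\}$ is correct.

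There is, however, a gap in your handling of $q$ near~$1$, and the diagnosis is inverted. Since $\delta_1=(q-1)\bigl(8dMT^2+\tfrac{3\sqrt d}{2T}\bigr)$ and $\delta_2=\tfrac{9q(q-1)}{8T^2}$ both carry an explicit factor of $q-1$, the accuracy prefactor
\[
\frac{\delta_1+\delta_2}{q-1}=8dMT^2+\frac{3\sqrt d}{2T}+\frac{9q}{8T^2}
\]
is bounded as $q\to1^+$, not divergent. The actual obstruction lies on the other side: the corollary's displayed quantity $2\bigl(\delta_1+2\sqrt{\log 2}\,\max\{\delta_2,\sqrt{\delta_2}\}\bigr)$ tends to~$0$ as $q\to1^+$ and therefore cannot dominate this bounded-away-from-zero accuracy requirement for small $q-1$. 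Your monotonicity reduction to $q=2$ does not recover the displayed bound either: it produces an iteration count involving the order-$2$ constants $\delta_1|_{q=2}$, $\delta_2|_{q=2}$, which for $1<q<2$ are strictly \emph{larger} than $\delta_1,\delta_2$, so it establishes a weaker (larger) iteration threshold than the one displayed. In short, your direct argument is valid for $q\ge2$; for $1<q<2$ the corollary as written appears to be slightly over-stated, and your monotonicity fix is the right remedy but it yields the bound with $\delta_1,\delta_2$ evaluated at $q=2$ rather than at the given~$q$.
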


We now state the first-order gradient complexity of \uhmcv to obtain a $\epsilon$--accurate approximation to the target distribution $\nu$ in Rényi divergence. We specify the dependence on the accuracy $\epsilon$ and the dimension $d$ assuming that $W_{\psi}(\mu, \tilde \nu_h)$ is polynomial in $d$ where $\mu \in \P(\R^d)$ is the initial distribution.

\begin{corollary}[Oracle complexity for guarantee in Rényi divergence]\label{cor:complexity-Renyi-verlet}
    Under conditions such that Theorem~\ref{thm:RenyiTarget} and Propositions~\ref{prop:OWmixing-Verlet} and~\ref{prop:OWbias-Verlet} hold, \uhmcv outputs a sample $X$ such that $\sfR_q(\law(X) \dvert \nu) \leq \epsilon$ with $O \left( d^{\nicefrac{3}{2}} \epsilon^{-1} \log (\nicefrac{d}{\epsilon}) \right)$ first-order oracle calls.
\end{corollary}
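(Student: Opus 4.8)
The plan is to instantiate Theorem~\ref{thm:RenyiTarget} with the choice $\tilde\bQ = \tilde\bP_{T,h}$, so that the whole Markov chain coincides with \uhmcv and $\mu\tilde\bQ^k\tilde\bP_{T,h} = \mu\tilde\bP_{T,h}^{k+1}$, $\tilde\nu_{\tilde\bQ} = \tilde\nu_h$. Proposition~\ref{prop:OWmixing-Verlet} verifies Assumption~\ref{assumption:OWMixing} with $c_1 = 1$ and $c_2 = \alpha T^2/10$, and Proposition~\ref{prop:OWbias-Verlet} verifies Assumption~\ref{assumption:OWBias} with $\Delta_h \le \tfrac{10h}{\alpha T^2}\max\{\sqrt d, \tfrac{7}{15}LTK_\nu\}$. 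Throughout I would fix the integration time at the largest admissible scale, $T = \Theta(L^{-1/2})$ (forced by $L(T^2+Th)\le\tfrac1{12}$), and treat the condition number $\kappa = L/\alpha$ and the order $q$ as constants. Since an $\alpha$-strongly log-concave measure is sub-Gaussian with variance proxy $\Theta(1/\alpha)$ by the Bakry--Émery log-Sobolev inequality, its Orlicz norm satisfies $K_\nu = \Theta(\sqrt{d/\alpha})$; consequently $\Delta_h = \Theta(h\sqrt d)$, and in Theorem~\ref{thm:RenyiTarget} one has $\delta_1 = \Theta(d)$, $\delta_2 = \Theta(1)$, and $1/c_2 = \Theta(\kappa) = \Theta(1)$.

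With these substitutions, Theorem~\ref{thm:RenyiTarget} bounds $\sfR_q(\mu\tilde\bP_{T,h}^{k+1}\dvert\nu)$ by a mixing term proportional to $(\delta_1+\delta_2)e^{-c_2 k}\max\{1,W_\psi^2(\mu,\tilde\nu_h)\}$ plus $\mathsf{bias}$, and I would balance the two against $\epsilon$. For the mixing term: since $W_\psi(\mu,\tilde\nu_h)$ is polynomial in $d$ by hypothesis, taking $k = \Theta\!\bigl(\tfrac1{c_2}\log\tfrac{d\,W_\psi^2(\mu,\tilde\nu_h)}{\epsilon}\bigr) = O(\log(d/\epsilon))$ makes it at most $\epsilon/2$ and simultaneously meets the explicit lower bound on $k$ required in the statement. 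For $\mathsf{bias}$: substituting $\Delta_h = \Theta(h\sqrt d)$ and $K_\nu = \Theta(\sqrt d)$ and using that all constants in Definition~\ref{def:Constants} are $\Theta(1)$ in $d$, every summand in the displayed $\mathsf{bias}$ expression is $O(d^{3/2}h)$ or lower order --- the $O(d^{3/2}h)$ contributions coming from the $6\sqrt d\,\sfj_v$ term on the first line, the $\sqrt d\cdot\sqrt d\cdot\Delta_h$ term on the second line, and the $h\sqrt d\cdot\sqrt d\cdot K_\nu$ term on the fourth line, while the $O(d^2 h^2)$ and $O(d h^2)$ terms are negligible once $h = O(\epsilon d^{-3/2})$. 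Hence choosing $h = \Theta(\epsilon d^{-3/2})$ forces $\mathsf{bias}\le\epsilon/2$; one then checks that this $h$ also satisfies the admissibility hypotheses of Theorem~\ref{thm:RenyiTarget} (the bounds there on $\Delta_h$ and on $h$ both scale like $d^{-3/2}$ up to a factor depending only on $q$, hence hold once $\epsilon$ is below a dimension-independent threshold) and that $L(T^2+Th)\le\tfrac1{12}$.

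It remains to count gradient evaluations. Each \uhmcv transition runs $N = T/h$ velocity-Verlet steps, and since consecutive Verlet steps reuse the gradient $\nabla f(x_{j+1})$, a single transition costs $N+1 = O(T/h)$ first-order oracle calls; with $T = \Theta(L^{-1/2})$ (a constant in $d$) and $h = \Theta(\epsilon d^{-3/2})$ this is $O(d^{3/2}\epsilon^{-1})$ per transition. Running $k+1 = O(\log(d/\epsilon))$ transitions yields the claimed total of $O\!\bigl(d^{3/2}\epsilon^{-1}\log(d/\epsilon)\bigr)$. The main obstacle I anticipate is the middle step: pinning down which of the many summands in the $\mathsf{bias}$ formula of Theorem~\ref{thm:RenyiTarget} governs the dimension dependence, which hinges on knowing the $\sqrt d$-scaling of $K_\nu$ (hence of $\Delta_h$) for strongly log-concave targets, and then confirming that the resulting step size $h = \Theta(\epsilon d^{-3/2})$ is consistent with the (also $d$-dependent) admissibility restrictions imposed in the hypotheses of that theorem.
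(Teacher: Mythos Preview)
Your proposal is correct and follows precisely the route the paper intends: instantiate Theorem~\ref{thm:RenyiTarget} with $\tilde\bQ=\tilde\bP_{T,h}$, plug in Propositions~\ref{prop:OWmixing-Verlet} and~\ref{prop:OWbias-Verlet} (giving $c_2=\Theta(1)$ and $\Delta_h=\Theta(h\sqrt d)$ once $K_\nu=\Theta(\sqrt d)$), identify the $\Theta(d^{3/2}h)$ terms as dominant in $\mathsf{bias}$, set $h=\Theta(\epsilon d^{-3/2})$ and $k=\Theta(\log(d/\epsilon))$, and multiply $T/h$ by $k$. The paper itself does not write out these steps explicitly, so your careful identification of the three $d^{3/2}h$ contributions and your verification that the admissibility constraints on $h$ and $\Delta_h$ both reduce to $h\lesssim d^{-3/2}$ (hence are compatible with the chosen step size for small enough $\epsilon$) is exactly the missing bookkeeping.
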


\subsubsection{Strongly log-concave target with stratified integration scheme}\label{subsubsection:Examples-stratified}

We verify Assumptions~\ref{assumption:W2Mixing},~\ref{assumption:W2Bias},~\ref{assumption:OWMixing} for the \uhmc scheme given in~\cite{bou2025unadjusted}. We refer to the scheme as \uhmcs (for stratified), denote the transition kernel as $\tilde \bQ_{T,h}$, and the stationary distribution for the Markov chain by $\tilde \nu_{\tilde \bQ_{T,h}}$.

Assumption~\ref{assumption:W2Mixing} and~\ref{assumption:OWMixing} for \uhmcs follows from~\cite[Lemma~5]{bou2025unadjusted}. The almost sure contractivity in~\cite[Lemma~5]{bou2025unadjusted} immediately yields mixing time guarantees in any Wasserstein--$p$ distance (for finite $p$) and in Orlicz-Wasserstein distance (similar to the sketch of Proposition~\ref{prop:OWmixing-Verlet}). Theorem~6 in~\cite{bou2025unadjusted} is explicitly written for Wasserstein--$2$ distance, however we state the more general theorem following from~\cite[Lemma~5]{bou2025unadjusted} below. 

\begin{proposition}[Consequence of Lemma~5 from~\cite{bou2025unadjusted}]\label{prop:shmc-mixing}
    Let $\mu, \pi \in \P(\R^d)$, $\nu$ be $\alpha$-strongly log-concave, and suppose Assumption~\ref{assumption:potential}(a)-(b) holds. Further suppose $h \geq 0$ and that $LT^2 \leq \frac{1}{8}$\,. Then for $W \in \{ W_p, W_{\psi}\}$ for any $p \in [1, \infty)$, and any integer $k \geq 0$
    \[
    W (\mu \tilde \bQ_{T,h}^k, \pi \tilde \bQ_{T,h}^k) \leq \Big( 1-\frac{\alpha T^2}{6} \Big)^k W(\mu, \pi)\,.
    \]
\end{proposition}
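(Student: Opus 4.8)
The plan is to follow exactly the template of the sketch preceding Proposition~\ref{prop:OWmixing-Verlet}: extract a pathwise (almost sure) one-step contraction of the position map of \uhmcs, and then lift it simultaneously to $W_p$ and $W_\psi$. Write one step of \uhmcs\ from a point $x$ as $X_1 = F(x,\xi,\omega)$, where $\xi\sim\gammad$ is the refreshed velocity and $\omega$ denotes the internal randomness of the stratified integrator (the random stratification nodes), independent of $x$. The key input is~\cite[Lemma~5]{bou2025unadjusted}, which states that under Assumption~\ref{assumption:potential}(a)-(b) and $LT^2\le\tfrac18$ the position map is almost surely contractive: for all $x,y,\xi\in\R^d$ and every realization of $\omega$,
\begin{equation}\label{eq:strat-pathwise-contraction}
\|F(x,\xi,\omega)-F(y,\xi,\omega)\| \;\le\; \Bigl(1-\tfrac{\alpha T^2}{6}\Bigr)\|x-y\|.
\end{equation}
This is the \uhmcs\ counterpart of the almost-sure Verlet contraction used via~\cite[Lemma~19]{bou2023convergence} in the sketch of Proposition~\ref{prop:OWmixing-Verlet}, and it is precisely the contractivity underlying~\cite[Theorem~6]{bou2025unadjusted}; note that $\alpha\le L$ and $LT^2\le\tfrac18$ force the factor $\rho \coloneqq 1-\tfrac{\alpha T^2}{6}$ into $(0,1)$.

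Given~\eqref{eq:strat-pathwise-contraction}, I would argue as follows. Fix any coupling $\gamma\in\Gamma(\mu,\pi)$, draw $(X_0,Y_0)\sim\gamma$ together with a single pair $(\xi,\omega)$ independent of $(X_0,Y_0)$, and set $X_1 \coloneqq F(X_0,\xi,\omega)$ and $Y_1 \coloneqq F(Y_0,\xi,\omega)$, so that $X_1\sim\mu\tilde\bQ_{T,h}$ and $Y_1\sim\pi\tilde\bQ_{T,h}$. By~\eqref{eq:strat-pathwise-contraction}, $\|X_1-Y_1\|\le\rho\|X_0-Y_0\|$ almost surely. For $W_p$ this gives $\E\|X_1-Y_1\|^p\le\rho^p\,\E\|X_0-Y_0\|^p$, and taking the infimum over $\gamma$ yields $W_p(\mu\tilde\bQ_{T,h},\pi\tilde\bQ_{T,h})\le\rho\,W_p(\mu,\pi)$. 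For $W_\psi$ I would invoke the elementary fact (immediate from Definition~\ref{def:OrliczNorm}, and already used in the proof of Proposition~\ref{prop:OWmixing-Verlet}) that $\|Z\|\le\rho\|Z'\|$ pointwise implies $\|Z\|_\psi\le\rho\|Z'\|_\psi$; with $(Z,Z')=(X_1-Y_1,\,X_0-Y_0)$ this gives $\|X_1-Y_1\|_\psi\le\rho\|X_0-Y_0\|_\psi$, and taking the infimum over $\gamma$ via Definition~\ref{def:OrliczWassersteinDistance} yields $W_\psi(\mu\tilde\bQ_{T,h},\pi\tilde\bQ_{T,h})\le\rho\,W_\psi(\mu,\pi)$.

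Finally, iterating the one-step bound along the pairs $(\mu\tilde\bQ_{T,h}^{\,j},\pi\tilde\bQ_{T,h}^{\,j})$ for $j=0,\dots,k-1$ gives $W(\mu\tilde\bQ_{T,h}^k,\pi\tilde\bQ_{T,h}^k)\le\rho^k\,W(\mu,\pi)$ for both $W=W_p$ and $W=W_\psi$, which is the claim. The only substantive step is invoking~\cite[Lemma~5]{bou2025unadjusted} for~\eqref{eq:strat-pathwise-contraction}; the main thing I expect to have to check carefully is that that lemma's hypotheses and conclusion genuinely deliver the factor $1-\tfrac{\alpha T^2}{6}$ under $LT^2\le\tfrac18$ uniformly in $\xi$, $\omega$, and the step size $h$. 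Everything downstream — the lift from a pathwise bound to $W_p$ and to $W_\psi$, and the iteration — is purely formal and structurally identical to the argument for Proposition~\ref{prop:OWmixing-Verlet}; in particular the contraction factor carries no $h$-dependence because the stratified integrator contracts pathwise for every admissible step size.
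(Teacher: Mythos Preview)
Your proposal is correct and follows exactly the approach the paper indicates: invoke the almost-sure one-step contractivity of~\cite[Lemma~5]{bou2025unadjusted}, then lift it to $W_p$ and $W_\psi$ via the same template used in the sketch of Proposition~\ref{prop:OWmixing-Verlet}. The paper does not give a more detailed proof than this, so your write-up is in fact more explicit than what appears there.
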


Assumption~\ref{assumption:W2Bias} follows from~\cite[Theorem~10]{bou2025unadjusted}, stated below.

\begin{proposition}[Theorem~10 from~\cite{bou2025unadjusted}]\label{prop:shmc-bias}
    Let $\nu$ be $\alpha$-strongly log-concave, and suppose Assumption~\ref{assumption:potential}(a)-(b) holds. Further suppose $h \geq 0$ and that $LT^2 \leq \frac{1}{8}$\,. Then
    \[
    W_2(\tilde \nu_{\tilde \bQ_{T,h}}, \nu) \leq h^{\frac{3}{2}} 142 \sqrt{d} \frac{6}{\alpha T^2} \Big( \frac{L}{\alpha} \Big)^{\frac{1}{2}} L^{\frac{1}{4}}\,.
    \]
\end{proposition}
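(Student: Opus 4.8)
The plan is to mirror the argument used for Proposition~\ref{prop:W2bias-Verlet}: combine the Wasserstein contraction of Proposition~\ref{prop:shmc-mixing} with a synchronous one-step coupling against exact HMC, and then invoke a strong $L^2$ local-error estimate for the stratified integrator. First, since $\tilde\nu_{\tilde\bQ_{T,h}}$ is invariant for $\tilde\bQ_{T,h}$ and $\nu$ is invariant for the \ehmc\ kernel $\bP_T$, the triangle inequality for $W_2$ together with Proposition~\ref{prop:shmc-mixing} (applied with $k=1$, $\mu=\nu$, $\pi=\tilde\nu_{\tilde\bQ_{T,h}}$) gives
\[
W_2(\tilde\nu_{\tilde\bQ_{T,h}}, \nu)
\;\le\; W_2(\nu, \nu\tilde\bQ_{T,h}) + W_2(\nu\tilde\bQ_{T,h}, \tilde\nu_{\tilde\bQ_{T,h}})
\;\le\; W_2(\nu, \nu\tilde\bQ_{T,h}) + \Big(1 - \tfrac{\alpha T^2}{6}\Big) W_2(\nu, \tilde\nu_{\tilde\bQ_{T,h}}),
\]
and rearranging yields $\tfrac{\alpha T^2}{6}\, W_2(\tilde\nu_{\tilde\bQ_{T,h}}, \nu) \le W_2(\nu, \nu\tilde\bQ_{T,h})$. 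It therefore suffices to prove the one-step bound $W_2(\nu, \nu\tilde\bQ_{T,h}) \le 142\, h^{3/2}\sqrt d\, (L/\alpha)^{1/2} L^{1/4}$.

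To bound $W_2(\nu, \nu\tilde\bQ_{T,h})$, construct a synchronous coupling of a single \ehmc\ step and a single \uhmcs\ step from a common initialization: draw $X_0 \sim \nu$, refresh the velocity with a common Gaussian $\xi \sim \gammad$, draw the internal randomization $\omega$ of the stratified integrator, set $X_1 \coloneqq q_T(X_0,\xi)$, and let $\tilde X_1$ denote the \uhmcs\ update obtained from $(X_0,\xi,\omega)$. Since $\nu$ is invariant for \ehmc, $X_1 \sim \nu$, while $\tilde X_1 \sim \nu\tilde\bQ_{T,h}$, so $(X_1,\tilde X_1)$ is a coupling and $W_2^2(\nu,\nu\tilde\bQ_{T,h}) \le \E\|X_1 - \tilde X_1\|^2$. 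The claim then reduces to the strong local-error estimate of~\cite[Theorem~10]{bou2025unadjusted}: under Assumption~\ref{assumption:potential}(a)--(b), $LT^2 \le \tfrac18$, with $X_0 \sim \nu$ and $\xi\sim\gammad$,
\[
\E\big\|q_T(X_0,\xi) - \tilde X_1\big\|^2 \;\le\; 142^2\, h^3\, d\, \tfrac{L}{\alpha}\, L^{1/2}.
\]
Taking square roots and combining with the first display gives the stated bound. Converting the position-dependent pieces of the local error into the displayed $d/\alpha$ dependence uses the moment bound $m_2(\nu)=\E_\nu\|X_0\|^2 \le d/\alpha$, valid since $\nu$ is $\alpha$-strongly log-concave with mode at $0$ by Assumption~\ref{assumption:potential}(a), together with $\E_{\gammad}\|\xi\|^2 = d$.

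The crux is the local-error estimate, and in particular attaining the $O(h^{3/2})$ strong order over a full integration time with only $\sqrt d$ dimension dependence --- precisely the advantage of the stratified Monte Carlo integrator over Verlet in the $\epsilon<1/d$ regime --- and I expect this to be the main obstacle. The argument should: (i) expand each substep increment of the position as an exact Hamiltonian-flow increment plus the error of a randomized (stratified) quadrature of the force integral $\int \nabla f(x_s)\,\d s$; (ii) exploit that this quadrature is (essentially) unbiased for the relevant integral, so that the deterministic $O(h^2)$ Verlet-type bias cancels in expectation and only the quadrature \emph{fluctuation} survives at leading order; (iii) control the accumulation of these mean-zero fluctuations over the $N=T/h$ substeps using the (conditional) independence/variance-reduction structure of the stratified samples together with a discrete Grönwall stability estimate for the numerical flow (where $LT^2\le\tfrac18$ enters); and (iv) bound moments of $\|x_s\|$ and $\|\nabla f(x_s)\|$ along the perturbed trajectory via $L$-smoothness and $\alpha$-strong convexity, which is what produces the factor $(L/\alpha)^{1/2}L^{1/4}$. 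The two reduction steps above are routine once Proposition~\ref{prop:shmc-mixing} is in hand; the constant $142$ is an artifact of step (iv).
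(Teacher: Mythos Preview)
The paper does not prove this proposition; it is quoted verbatim as Theorem~10 of~\cite{bou2025unadjusted} and used as a black box, so there is no in-paper argument to compare against.

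Your reduction is correct and exactly parallels what the paper \emph{does} prove for the Verlet case (Proposition~\ref{prop:W2bias-Verlet}): triangle inequality plus the contraction of Proposition~\ref{prop:shmc-mixing} reduces the bias to the one-step error $W_2(\nu,\nu\tilde\bQ_{T,h})$, which is then controlled by a synchronous coupling and a strong local-error estimate for the integrator. Your description of the mechanism behind the $h^{3/2}$ order---mean-zero randomized-quadrature fluctuations accumulated over $T/h$ substeps via a discrete Gr\"onwall argument---is the right intuition for the stratified scheme.

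One small circularity to flag: you attribute the strong local-error estimate to ``\cite[Theorem~10]{bou2025unadjusted}'', but Theorem~10 there \emph{is} the proposition you are proving (the $W_2$ bias bound itself). The $L^2$-accuracy bound you need is a separate lemma in~\cite{bou2025unadjusted}; the present paper points to~\cite[Lemma~7]{bou2025unadjusted} as the relevant accuracy estimate when discussing the Orlicz analogue. With that citation corrected, your outline matches the structure of the argument in the cited work.
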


One approach to verify Assumption~\ref{assumption:OWBias} for \uhmcs is to extend~\cite[Lemma~7]{bou2025unadjusted} to $\psi$-Orlicz accuracy; we do not pursue this question in this work.
Propositions~\ref{prop:shmc-mixing} and~\ref{prop:shmc-bias}, together with Theorem~\ref{thm:KLTarget} yield the following first-order complexity guarantee for \uhmcs for accuracy to the target distribution in KL divergence. As expected, when compared to Corollary~\ref{cor:complexity-KL-verlet} for the Verlet integration scheme, we can see the improved dimension dependence with worse dependence on the error $\epsilon$, as described in~\cite{bou2025unadjusted} for Wasserstein--$2$ distance.
We show the dependence on the accuracy $\epsilon$ and the dimension $d$ assuming that $W_2(\mu, \tilde \nu_h)$ is polynomial in $d$ where $\mu \in \P(\R^d)$ is the initial distribution.

\begin{corollary}[Oracle complexity for guarantee in KL divergence for stratified scheme]\label{cor:complexity-KL-stratified}
    Under conditions such that Theorem~\ref{thm:KLTarget} and Propositions~\ref{prop:shmc-mixing} and~\ref{prop:shmc-bias} hold, \uhmcs (with last step as \uhmcv) outputs a sample $X$ such that $\KL(\law(X) \dvert \nu) \leq \epsilon$ with $O \left( \max\{d^{\nicefrac{2}{3}} \epsilon^{\nicefrac{-1}{3}}, d^{\nicefrac{3}{4}} \epsilon^{\nicefrac{-1}{4}} \} \log (\nicefrac{d}{\epsilon}) \right)$ first-order oracle calls.
\end{corollary}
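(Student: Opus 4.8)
The plan is to specialize Theorem~\ref{thm:KLTarget} to the driving kernel $\tilde\bQ=\tilde\bQ_{T,h}$ (the stratified integrator), feeding in the contraction and bias estimates of Propositions~\ref{prop:shmc-mixing} and~\ref{prop:shmc-bias}, and then to optimize the step size $h$ and the iteration count $k$ against the accuracy $\epsilon$. Throughout I treat $\alpha,L,M,N$ as $\Theta(1)$ and fix the integration time $T=\Theta(1)$ so that the constraints $LT^2\le\tfrac18$ (needed for Propositions~\ref{prop:shmc-mixing} and~\ref{prop:shmc-bias}) and $L(T^2+Th)\le\tfrac1{12}$ (needed for Theorem~\ref{thm:KLTarget}) all hold for every sufficiently small $h$; with this choice $c_1=1$ and $c_2=\alpha T^2/6=\Theta(1)$.

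First I would instantiate Theorem~\ref{thm:KLTarget}: Proposition~\ref{prop:shmc-mixing} verifies Assumption~\ref{assumption:W2Mixing} and Proposition~\ref{prop:shmc-bias} verifies Assumption~\ref{assumption:W2Bias} with $\Delta_h=\Theta(h^{3/2}\sqrt d)$. Substituting this $\Delta_h$ into the $\mathsf{bias}$ expression of Theorem~\ref{thm:KLTarget}, the $\Delta_h$-dependent terms contribute $\Theta(\Delta_h^2 d)=\Theta(h^3 d^2)$, while the $h^4$-block contributes $\Theta(h^4 d^3)$ provided the moments $m_2(\mu\tilde\bQ_{T,h}^{k})$ and $m_4(\mu\tilde\bQ_{T,h}^{k})$ are $O(d)$ and $O(d^2)$. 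To secure the latter I would invoke Proposition~\ref{prop:shmc-mixing} in $W_2$ and $W_4$ to get geometric decay of $W_p(\mu\tilde\bQ_{T,h}^k,\tilde\nu_{\tilde\bQ_{T,h}})$, combine with $m_2(\nu)=O(d)$ and $m_4(\nu)=O(d^2)$ for $\alpha$-strongly log-concave $\nu$ (sub-Gaussian tails of scale $O(1/\alpha)$) and with $W_2(\tilde\nu_{\tilde\bQ_{T,h}},\nu)$ small, so that $m_2,m_4=O(d),O(d^2)$ once $k$ exceeds a constant multiple of $\log d$. Hence $\mathsf{bias}=O(h^3 d^2+h^4 d^3)$.

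Next I would set $h=\Theta\!\big(\min\{(\epsilon/d^2)^{1/3},(\epsilon/d^3)^{1/4}\}\big)$ to force $\mathsf{bias}\le\epsilon/2$, and $k=\Theta(\log(d/\epsilon))$ so that the transient term $2e^{-2c_2k}\big(\tfrac9{4T^2}+\tfrac{363}{2}dM^2T^4\big)W_2^2(\mu,\tilde\nu_{\tilde\bQ_{T,h}})\le\epsilon/2$, using that $W_2^2(\mu,\tilde\nu_{\tilde\bQ_{T,h}})$ is polynomial in $d$. Adding the two halves yields $\KL(\mu\tilde\bQ_{T,h}^k\tilde\bP_{T,h}\dvert\nu)\le\epsilon$. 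Finally, each of the $k+1$ Markov steps (the $k$ stratified steps and the concluding Verlet step) uses $T/h$ substeps with $O(1)$ gradient evaluations each, hence $O(1/h)$ gradient evaluations per step; the total is $(k+1)\cdot O(1/h)=O\!\big(\max\{d^{2/3}\epsilon^{-1/3},\,d^{3/4}\epsilon^{-1/4}\}\log(d/\epsilon)\big)$, with the $\max$ arising because the $h^3 d^2$ bias term is the binding one when $\epsilon<1/d$ and the $h^4 d^3$ term otherwise.

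The main obstacle I anticipate is the uniform (for-large-$k$) control of the fourth moment $m_4(\mu\tilde\bQ_{T,h}^{k})$: Proposition~\ref{prop:shmc-bias} is stated only in $W_2$, so bounding $m_4$ cleanly requires either a $W_4$ companion (a crude fourth-moment bias estimate or sub-Gaussianity for $\tilde\nu_{\tilde\bQ_{T,h}}$) together with the $W_4$ contraction in Proposition~\ref{prop:shmc-mixing}, or a direct moment-propagation argument along the stratified chain. A lesser bookkeeping step is verifying which of the two bias terms $h^3 d^2$ and $h^4 d^3$ dominates in each regime of $\epsilon$ relative to $1/d$, so as to read off the $\max$ in the stated complexity; this is routine once the bias has been simplified as above.
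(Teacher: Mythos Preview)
Your proposal is correct and follows exactly the approach the paper intends: the paper does not give a detailed proof of this corollary but simply states that ``Propositions~\ref{prop:shmc-mixing} and~\ref{prop:shmc-bias}, together with Theorem~\ref{thm:KLTarget} yield'' the result, and your derivation fills in precisely those details with the correct bias scaling $\mathsf{bias}=O(h^3d^2+h^4d^3)$ and the right step-size optimization. Your concern about controlling $m_4(\mu\tilde\bQ_{T,h}^k)$ is a genuine technicality, but the paper treats it at the same level of informality in the analogous Corollary~\ref{cor:complexity-KL-verlet} (assuming moments of the iterates are polynomial in $d$), so your handling via $W_4$-contraction from Proposition~\ref{prop:shmc-mixing} plus sub-Gaussianity of $\nu$ is consistent with and arguably more careful than what the paper writes.
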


\subsection{Application to Information Contraction}\label{subsec:InformationContraction}

The mixing time and asymptotic bias guarantees we study in this paper describe the divergence between the marginal distribution along the Markov chain and the stationary or target distribution, however, they do not immediately quantify when two iterates along the Markov chain are (approximately) independent, which requires analyzing the joint distribution of two iterates and checking when it is close to a product distribution. The mutual information between two iterates therefore characterizes their dependence, and is a strong notion of independence in that it implies small covariance between the iterates~\cite[Lemma~9]{liang25PhiMI} and independence against test functions.
Quantifying the dependence between samples generated by a Markov chain is helpful when using them for downstream applications, and motivated~\cite{liang25PhiMI} to study the contraction of the mutual information functional along the Langevin dynamics and its discretizations such as the Unadjusted Langevin Algorithm~\cite{roberts1996exponential, VW23} and Proximal Sampler~\cite{lee2021structured, chen2022improved}. Using the general reduction from mixing time to information contraction discussed in~\cite[Appendix~I]{liang25PhiMI}, we use Theorem~\ref{thm:KLMixing} to obtain bounds on the information contraction along \uhmcv.

The mutual information between two random variables is a function of their joint distribution and is defined as follows.

\begin{definition}\label{def:MutualInformation}
    Given two random variables $X, Y$ on $\R^d$ such that $(X,Y) \sim \rho^{XY} \in \P(\R^{2d})$, the mutual information functional $\MI : \P(\R^{2d}) \to \R$ is defined as
    \begin{equation*}\label{eq:MIPhi}
        \MI(X; Y) \equiv \MI (\rho^{XY}) \coloneqq \KL \left(\rho^{XY} \dvert \rho^X \otimes \rho^Y \right)\,.
    \end{equation*}
\end{definition}

Let $X_0 \sim \mu$ where $\mu$ is arbitrary, and let $X_k \sim \mu \tilde \bP_{T,h}^k$, then, we would like to study $\MI (X_0; X_k)$. Note that when $\mu = \tilde \nu_h$, this will tell us when we obtain approximately independent and identically distributed samples along the \uhmcv Markov chain.
We now repeat~\cite[Lemma~13]{liang25PhiMI}, describing the reduction from mixing time to information contraction for a general Markov transition kernel $\tilde \bP$. 

\begin{proposition}[Lemma~13 from~\cite{liang25PhiMI}]\label{prop:InformationContraction} 
    Let $\tilde \bP$ be a Markov chain with stationary distribution $\tilde \nu$. Let $X_0 \sim \mu \in \P(\R^d)$ be the initial iterate and define successive iterates by $X_k \sim \mu \tilde \bP^k$ for integer $k \geq 1$. Given an error threshold $\epsilon \geq 0$, suppose there exists $\tau \in \bbN$ such that $\KL(\delta_x \tilde \bP^\tau \dvert  \tilde \nu) \leq \epsilon$ for all $x \in \R^d$. Then, we have
    \[
    \MI(X_0; X_\tau) \leq \epsilon\,.
    \]
\end{proposition}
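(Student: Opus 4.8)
The plan is to obtain the bound as a short consequence of the chain rule for KL divergence together with the elementary fact that mutual information is a lower bound on the KL divergence from the joint law to \emph{any} product of the first marginal with an arbitrary reference measure. Concretely, write the joint law of $(X_0, X_\tau)$ as $\rho^{XY}(dx,dy) = \mu(dx)\,\tilde\bP^\tau(x,dy)$, so that the $X$-marginal is $\rho^X = \mu$ and the $Y$-marginal is $\rho^Y = \mu\tilde\bP^\tau$. First I would record the ``golden-formula'' identity: for every $\sigma \in \P(\R^d)$,
\[
\KL\bigl(\rho^{XY} \dvert \mu \otimes \sigma\bigr) = \MI(X_0; X_\tau) + \KL\bigl(\mu\tilde\bP^\tau \dvert \sigma\bigr),
\]
which follows by factoring $\tfrac{d\rho^{XY}}{d(\mu\otimes\sigma)}(x,y) = \tfrac{d\rho^{XY}}{d(\mu\otimes\rho^Y)}(x,y)\cdot\tfrac{d\rho^Y}{d\sigma}(y)$, taking $\log$, and integrating against $\rho^{XY}$. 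Since the second term on the right is nonnegative, this yields $\MI(X_0; X_\tau) \le \KL(\rho^{XY} \dvert \mu \otimes \sigma)$ for every $\sigma$.

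Next I would specialize to $\sigma = \tilde\nu$ and disintegrate over the $X$-coordinate. Because $\rho^{XY}$ and $\mu \otimes \tilde\nu$ have the same $X$-marginal $\mu$, the chain rule for KL divergence gives
\[
\KL\bigl(\rho^{XY} \dvert \mu \otimes \tilde\nu\bigr) = \E_{x \sim \mu}\bigl[\KL\bigl(\tilde\bP^\tau(x,\cdot) \dvert \tilde\nu\bigr)\bigr] = \E_{x \sim \mu}\bigl[\KL\bigl(\delta_x\tilde\bP^\tau \dvert \tilde\nu\bigr)\bigr],
\]
using that the conditional law of $X_\tau$ given $X_0 = x$ is exactly $\delta_x \tilde\bP^\tau$. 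The hypothesis $\KL(\delta_x \tilde\bP^\tau \dvert \tilde\nu) \le \epsilon$ for all $x$ bounds the integrand pointwise, and taking the expectation over $x \sim \mu$ gives $\KL(\rho^{XY} \dvert \mu \otimes \tilde\nu) \le \epsilon$, hence $\MI(X_0; X_\tau) \le \epsilon$ by the previous paragraph.

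There is no genuinely hard step here: once the golden-formula inequality is in place, the argument is a two-line application of the KL chain rule. The only points needing (routine) care are measure-theoretic, namely the existence of the regular conditional distribution $x \mapsto \tilde\bP^\tau(x,\cdot)$ (valid since $\R^d$ is Polish) and the reduction of $\KL(\rho^{XY}\dvert\mu\otimes\tilde\nu)$ to the fibered form above. It is also worth noting that the proof does not use invariance of $\tilde\nu$ under $\tilde\bP$; the entire content is carried by the uniform $\tau$-step regularization hypothesis $\sup_x \KL(\delta_x\tilde\bP^\tau \dvert \tilde\nu) \le \epsilon$. Degenerate cases are harmless: if $\epsilon = \infty$ the claim is vacuous, and if $\tilde\bP^\tau(x,\cdot) \not\ll \tilde\nu$ on a $\mu$-nonnull set of $x$ then the hypothesis already fails.
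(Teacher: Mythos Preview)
Your proposal is correct and takes essentially the same approach as the paper. The paper does not prove Proposition~\ref{prop:InformationContraction} separately (it is cited from~\cite{liang25PhiMI}), but the proof of Corollary~\ref{cor:InformationContraction-Verlet} in Section~\ref{app:InformationContraction} unfolds exactly the same argument: it writes $\MI(X_0;X_k) = \E_{x\sim\mu}[\KL(\rho_{k\mid 0=x}\dvert\rho_k)] = \E_{x\sim\mu}[\KL(\rho_{k\mid 0=x}\dvert\tilde\nu)] - \KL(\rho_k\dvert\tilde\nu)$, drops the nonnegative term, and bounds the remaining expectation pointwise --- precisely your golden-formula identity rearranged.
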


As Theorem~\ref{thm:KLMixing}, and Corollary~\ref{cor:mixing-KL-verlet} in particular, satisfies the requirement of Proposition~\ref{prop:InformationContraction} of mixing guarantees from Dirac initializations, we obtain the following corollary describing the rate of information contraction for \uhmcv.

\begin{corollary}\label{cor:InformationContraction-Verlet}
Suppose that the conditions required for Theorem~\ref{thm:KLMixing} and Proposition~\ref{prop:W2mixing-Verlet} hold. Given $X_0 \sim \mu \in \P(\R^d)$, define $X_k \sim \mu \tilde \bP_{T,h}^k$ for integer $k \geq 1$. Then the following holds
    \[
    \MI(X_0; X_k) \leq e^{-\frac{\alpha T^2}{5}(k-1)}\left( \frac{9}{4T^2} + 20dM^2T^4 \right) \E_{(X,Y) \sim \mu \otimes \tilde \nu_h} \left[\|X-Y\|^2\right]\,.
    \]
\end{corollary}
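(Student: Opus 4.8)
The plan is to assemble the corollary from two ingredients already in hand: the variational identity that turns mutual information into an averaged KL divergence from Dirac initializations, and the Dirac-initialized KL mixing bound of Theorem~\ref{thm:KLMixing} specialized to \uhmcv. The starting point is the observation, underlying Proposition~\ref{prop:InformationContraction}, that for any Markov kernel $\tilde\bP$ and any reference measure $\sigma$,
\[
\MI(X_0; X_k) = \E_{x\sim\mu}\bigl[\KL(\delta_x\tilde\bP^k \dvert \mu\tilde\bP^k)\bigr] \le \E_{x\sim\mu}\bigl[\KL(\delta_x\tilde\bP^k \dvert \sigma)\bigr],
\]
where the equality is the usual disintegration of $\MI$ into conditional KL divergences and the inequality follows from the decomposition $\E_{x\sim\mu}[\KL(\delta_x\tilde\bP^k\dvert\sigma)] = \MI(X_0;X_k) + \KL(\mu\tilde\bP^k\dvert\sigma)$ together with nonnegativity of KL. This is precisely the argument in Proposition~\ref{prop:InformationContraction}, except that I retain the bound pointwise in $x$ rather than passing to a uniform threshold $\epsilon$; the latter would be too lossy here, since the relevant per-point quantity $W_2^2(\delta_x,\tilde\nu_h)$ is unbounded in $x$. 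Taking $\sigma = \tilde\nu_h$, the invariant distribution of $\tilde\bP_{T,h}$, yields $\MI(X_0; X_k) \le \E_{x\sim\mu}[\KL(\delta_x\tilde\bP_{T,h}^k \dvert \tilde\nu_h)]$.

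Next I would apply Theorem~\ref{thm:KLMixing} with the initial distribution taken to be $\delta_x$ (admissible since the theorem allows arbitrary $\mu\in\P(\R^d)$) and with the iteration count there replaced by $k-1$, which is valid for $k\ge 1$ as assumed in the corollary. By Proposition~\ref{prop:W2mixing-Verlet}, Assumption~\ref{assumption:W2Mixing} holds for $\tilde\bP_{T,h}$ with $c_1 = 1$ and $c_2 = \alpha T^2/10$, so $2c_2 = \alpha T^2/5$ and
\[
\KL(\delta_x\tilde\bP_{T,h}^k \dvert \tilde\nu_h) \le e^{-\frac{\alpha T^2}{5}(k-1)}\Bigl(\frac{9}{4T^2} + 20dM^2T^4\Bigr)\, W_2^2(\delta_x, \tilde\nu_h).
\]
Since the product coupling is the only coupling of a Dirac with any probability measure, $W_2^2(\delta_x, \tilde\nu_h) = \E_{Y\sim\tilde\nu_h}[\|x-Y\|^2]$.

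Finally I would take expectation over $x\sim\mu$ in the last display and combine with the inequality from the first step, using Tonelli to identify $\E_{x\sim\mu}\E_{Y\sim\tilde\nu_h}[\|x-Y\|^2] = \E_{(X,Y)\sim\mu\otimes\tilde\nu_h}[\|X-Y\|^2]$; this gives exactly the asserted bound. I do not expect a genuine obstacle: the argument is an assembly of existing results, and the only point requiring minor care is the passage from the uniform-$\epsilon$ statement of Proposition~\ref{prop:InformationContraction} to the pointwise/averaged form used here, which is immediate from the variational identity, and the verification that the hypotheses of Theorem~\ref{thm:KLMixing} and Proposition~\ref{prop:W2mixing-Verlet} (in particular $L(T^2+Th)\le\tfrac{1}{12}$, $LT^2\le\tfrac{1}{20}$, $\alpha$-strong log-concavity of $\nu$, and Assumption~\ref{assumption:potential}(a)--(c)) are exactly those posited in the corollary.
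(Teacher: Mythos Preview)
Your proposal is correct and follows essentially the same route as the paper: disintegrate $\MI(X_0;X_k)$ into an average over $x\sim\mu$ of $\KL(\delta_x\tilde\bP_{T,h}^k\dvert \rho_k)$, bound this by $\E_{x\sim\mu}[\KL(\delta_x\tilde\bP_{T,h}^k\dvert\tilde\nu_h)]$ via the same decomposition-plus-nonnegativity step, then apply Theorem~\ref{thm:KLMixing} with Dirac initialization and the constants $c_1=1$, $c_2=\alpha T^2/10$ from Proposition~\ref{prop:W2mixing-Verlet}. The paper's proof (Section~\ref{app:InformationContraction}) is line-for-line the same argument, and in fact your citation of Theorem~\ref{thm:KLMixing} is the correct one—the paper's reference to Theorem~\ref{thm:KLTarget} there appears to be a typo.
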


We prove Corollary~\ref{cor:InformationContraction-Verlet} in Section~\ref{app:InformationContraction}.
Note that if a Markov chain is initialized at $X_0 \sim \delta_x$ for some $x \in \R^d$, then the mutual information between $X_0 = x$ and any successive iterate is $0$.

\section{Key Steps for Obtaining Guarantees in KL Divergence}

\subsection{Mixing Time}\label{subsec:KLMixingSketch}

In this subsection, we fix the Markov transition kernel to be \uHMCv, i.e., $\tilde \bP_{T,h}$.
We begin from Lemma~\ref{lem:DiracToPerturbed_Mixing}, which shows that for any two points $x,y \in \R^d$
\[
\KL(\delta_x \tilde \bP_{T,h} \dvert \delta_y \tilde \bP_{T,h}) \leq \KL((\varphi_{x,y})_\# \gammad \dvert \gammad)\,,
\]
where recall that $\varphi_{x,y}$ is a map satisfying~\eqref{eq:MixingMap}.
Next, we bound the KL divergence between a standard and perturbed standard Gaussian in Lemma~\ref{lem:KL_Perturbed_Mixing}. We prove Lemma~\ref{lem:KL_Perturbed_Mixing} in Section~\ref{app:PfOf_KL_Perturbed_Mixing}.

\begin{lemma}\label{lem:KL_Perturbed_Mixing}
    Suppose $\psi : \R^d \to \R^d$ is a differentiable and bijective map. Assume there exist constants $0 \leq \mfm_1 < \infty$ and $0 \leq \mfm_2 <1$ such that for all $v \in \R^d$, $\|\psi(v)-v\| \leq \mfm_1$ and $\|\nabla \psi(v) - I\|_{\op} \leq \mfm_2$.
    Then
    \[
    \KL(\psi_{\#}\gammad \dvert \gammad) \leq \frac{1}{2}\mfm_1^2 + \frac{d\, \mfm_2^2}{2(1-\mfm_2)}\,.
    \]
\end{lemma}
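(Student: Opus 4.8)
The plan is to use the change-of-variables formula to write the density of $\psi_\#\gammad$ explicitly, and then compute the KL divergence directly as an integral against $\gammad$. Since $\psi$ is a differentiable bijection, the pushforward $\rho \coloneqq \psi_\#\gammad$ has density $\rho(\psi(v)) = \gammad(v) / |\det \nabla\psi(v)|$, or equivalently, writing $w = \psi(v)$ and $v = \psi^{-1}(w)$,
\[
\rho(w) = \frac{\gammad(\psi^{-1}(w))}{|\det \nabla\psi(\psi^{-1}(w))|}.
\]
Then $\KL(\rho \dvert \gammad) = \E_{w\sim\rho}[\log(\rho(w)/\gammad(w))]$, and changing variables back to $v\sim\gammad$ gives
\[
\KL(\rho\dvert\gammad) = \E_{v\sim\gammad}\!\left[ \log\frac{\gammad(v)}{\gammad(\psi(v))} - \log|\det\nabla\psi(v)| \right].
\]
Using $\gammad(v) \propto e^{-\|v\|^2/2}$, the first term is $\E_{v\sim\gammad}\big[\tfrac12(\|\psi(v)\|^2 - \|v\|^2)\big]$.

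**Key steps.** First I would bound the quadratic term: write $\|\psi(v)\|^2 - \|v\|^2 = 2\langle v, \psi(v)-v\rangle + \|\psi(v)-v\|^2$. Taking expectations, the cross term $\E_{v\sim\gammad}[\langle v,\psi(v)-v\rangle]$ should be handled by Gaussian integration by parts (Stein's identity): $\E[\langle v, g(v)\rangle] = \E[\Div g(v)] = \E[\tr(\nabla g(v))]$ with $g(v) = \psi(v)-v$, so this equals $\E_{v\sim\gammad}[\tr(\nabla\psi(v)-I)]$. Meanwhile $\E[\|\psi(v)-v\|^2] \le \mfm_1^2$ by hypothesis. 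Second, I would combine this with the Jacobian term $-\E[\log|\det\nabla\psi(v)|]$. Writing $A(v) \coloneqq \nabla\psi(v)-I$ with $\|A(v)\|_\op \le \mfm_2 < 1$, we have $\log\det\nabla\psi(v) = \log\det(I+A(v)) = \tr\log(I+A(v))$, so the $\tr(\nabla\psi(v)-I) = \tr A(v)$ contribution from the cross term exactly cancels the first-order part $\tr A(v)$ of $-(-\tr\log(I+A))$... let me be careful: we get $\E[\tr A(v)] - \E[\tr\log(I+A(v))] = -\E[\tr(\log(I+A(v)) - A(v))]$. Using the matrix inequality $\log(I+A) - A \succeq -\tfrac{A^2}{2(1-\|A\|_\op)}$ (equivalently, for eigenvalues $\lambda\in(-\mfm_2,\mfm_2)$, $\log(1+\lambda)-\lambda \ge -\lambda^2/(2(1-\mfm_2))$ via the Taylor series with remainder), we get $-\tr(\log(I+A)-A) \le \tfrac{\tr(A^2)}{2(1-\mfm_2)} \le \tfrac{d\,\mfm_2^2}{2(1-\mfm_2)}$. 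Collecting: $\KL(\rho\dvert\gammad) \le \tfrac12\mfm_1^2 + \tfrac{d\,\mfm_2^2}{2(1-\mfm_2)}$, as claimed.

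**Main obstacle.** The main technical point is justifying the Gaussian integration-by-parts step and ensuring all integrals converge; the pointwise bounds $\|\psi(v)-v\|\le\mfm_1$ and $\|\nabla\psi(v)-I\|_\op\le\mfm_2$ give the needed uniform control ($g$ and $\nabla g$ are bounded, $|\det\nabla\psi|$ is bounded above and below away from $0$), so this is routine but should be stated. The other mild subtlety is the scalar inequality $\log(1+\lambda) \ge \lambda - \frac{\lambda^2}{2(1-\mfm_2)}$ for $|\lambda|\le\mfm_2<1$: this follows by bounding the Lagrange remainder $\frac{\lambda^2}{2(1+\xi)^2}$ for $\xi$ between $0$ and $\lambda$, using $1+\xi \ge 1-\mfm_2$, and applying it eigenvalue-wise to the symmetric part — or, since $\nabla\psi$ need not be symmetric, one works with $\det(I+A) = \prod(1+\mu_i)$ over the (possibly complex) eigenvalues of $A$, noting $|\mu_i|\le\mfm_2$ and $\sum|\mu_i|^2 \le \|A\|_{\mathrm{HS}}^2 \le d\,\mfm_2^2$; the real part of $\log(1+\mu_i) - \mu_i$ is what enters and is bounded by the same scalar estimate. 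I would present the symmetric/eigenvalue version for clarity and remark that the general case follows from $\tr A^2 \le \|A\|_{\mathrm{HS}}^2$.
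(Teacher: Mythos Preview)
Your approach is essentially the same as the paper's: change of variables, Gaussian integration by parts to convert $\E_{v\sim\gammad}[\langle v,\psi(v)-v\rangle]$ into $\E_{v\sim\gammad}[\tr(\nabla\psi(v)-I)]$, and then the matrix inequality $\tr A - \log|\det(I+A)| \le \|A\|_F^2/(2(1-\|A\|_\op))$; the paper simply cites this last step as \cite[Theorem~1.1]{rump2018estimates} rather than re-deriving it via eigenvalues. One small correction to your justification of the scalar inequality: the Lagrange remainder $\frac{\lambda^2}{2(1+\xi)^2}$ with $1+\xi\ge 1-\mfm_2$ only yields $\lambda-\log(1+\lambda)\le \lambda^2/(2(1-\mfm_2)^2)$; to obtain the sharper denominator $2(1-\mfm_2)$ you stated, integrate $f'(s)=s/(1+s)$ directly and bound $1+s\ge 1-\mfm_2$ on the interval (or use the power-series bound $\sum_{k\ge 2}|\mu|^k/k \le |\mu|^2/(2(1-|\mu|))$, which also covers the complex-eigenvalue case you flagged).
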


Combining Lemma~\ref{lem:DiracToPerturbed_Mixing} and Lemma~\ref{lem:KL_Perturbed_Mixing} with $\psi = \varphi_{x,y}$~\eqref{eq:MixingMap} gives us a regularization bound from Dirac initializations for \uHMCv in KL divergence, stated in Lemma~\ref{lem:KL_Mixing_OneStepRegularity}. We prove Lemma~\ref{lem:KL_Mixing_OneStepRegularity} in Section~\ref{app:PfOf_KL_Mixing_OneStepRegularity}.

\begin{lemma}\label{lem:KL_Mixing_OneStepRegularity}
    Let $x,y \in \R^d$ and suppose Assumption~\ref{assumption:potential}(a)-(c) holds. Further suppose $h \geq 0$ and that $L(T^2 + Th) \leq \frac{1}{12}$. Then
    \[
    \KL(\delta_x \tilde \bP_{T,h} \dvert \delta_y \tilde \bP_{T,h}) \leq \left( \frac{9}{4T^2} + 20dM^2T^4 \right) \|x-y\|^2.
    \]
\end{lemma}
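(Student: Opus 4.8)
The statement follows by chaining the reduction lemma with the Gaussian perturbation estimate and the regularity bounds for the one-shot map $\varphi_{x,y}$, so essentially all of the work has been front-loaded into earlier results. First I would invoke Lemma~\ref{lem:DiracToPerturbed_Mixing}: since Assumption~\ref{assumption:potential}(b) holds and the hypothesis $L(T^2+Th)\le\frac{1}{12}$ implies $LT^2\le\frac{1}{12}<\frac{2}{5}\pi^2$, the lemma gives
\[
\KL(\delta_x \tilde \bP_{T,h} \dvert \delta_y \tilde \bP_{T,h}) \;\le\; \KL\bigl((\varphi_{x,y})_\#\gammad \dvert \gammad\bigr),
\]
where $\varphi_{x,y}$ is the one-shot map of~\eqref{eq:MixingMap}, which exists and is a bijection of $\R^d$ under the present assumptions by the variational (strong convexity) characterization of the velocity Verlet step from~\cite[Corollary~14]{bou2023mixing}; differentiability of $\varphi_{x,y}$ follows from the implicit function theorem applied to $v\mapsto \tilde q_{T,h}(y,v)$, since $\tilde q_{T,h}$ is a composition of the smooth Verlet maps.

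Next I would feed $\psi=\varphi_{x,y}$ into Lemma~\ref{lem:KL_Perturbed_Mixing}. Its two hypotheses are supplied by the regularity estimates already recorded: Lemma~\ref{lem:MixingMapPtWise} gives $\|\varphi_{x,y}(v)-v\|\le \frac{3}{2T}\|x-y\|=:\mfm_1$ uniformly in $v$, and Lemma~\ref{lem:MixingMapJacobian} gives $\|\nabla\varphi_{x,y}(v)-I\|_{\op}\le \min\{\frac29,\ \frac{11}{2}MT^2\|x-y\|\}=:\mfm_2$. In particular $\mfm_2\le\frac29<1$, so Lemma~\ref{lem:KL_Perturbed_Mixing} applies and yields
\[
\KL\bigl((\varphi_{x,y})_\#\gammad \dvert \gammad\bigr)\;\le\;\tfrac12\mfm_1^2 + \frac{d\,\mfm_2^2}{2(1-\mfm_2)}.
\]

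The only genuinely non-mechanical point is how to handle the last term: I would use the two available bounds on $\mfm_2$ in different places — the constant bound $\mfm_2\le\frac29$ in the denominator, so that $\frac{1}{2(1-\mfm_2)}\le\frac{9}{14}$, and the linear bound $\mfm_2\le\frac{11}{2}MT^2\|x-y\|$ in the numerator, to recover the quadratic scaling in $\|x-y\|$. This gives
\[
\frac{d\,\mfm_2^2}{2(1-\mfm_2)}\;\le\;\frac{9}{14}\,d\left(\frac{11}{2}MT^2\right)^2\|x-y\|^2 \;=\;\frac{1089}{56}\,dM^2T^4\|x-y\|^2\;\le\;20\,dM^2T^4\|x-y\|^2,
\]
while $\tfrac12\mfm_1^2=\frac{9}{8T^2}\|x-y\|^2\le\frac{9}{4T^2}\|x-y\|^2$. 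Adding the two pieces gives the claimed bound $\bigl(\frac{9}{4T^2}+20dM^2T^4\bigr)\|x-y\|^2$. I do not expect any real obstacle here; the proof is a short assembly, and the slack in the constants ($\frac98$ versus $\frac94$, and $\frac{1089}{56}\approx 19.4$ versus $20$) is deliberate so that the bound can be stated with clean coefficients.
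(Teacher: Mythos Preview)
Your proposal is correct and follows exactly the same route as the paper: combine Lemma~\ref{lem:DiracToPerturbed_Mixing} with Lemma~\ref{lem:KL_Perturbed_Mixing} applied to $\psi=\varphi_{x,y}$, plug in the bounds $\mfm_1,\mfm_2$ from Lemmas~\ref{lem:MixingMapPtWise} and~\ref{lem:MixingMapJacobian}, use the constant bound $\mfm_2\le\tfrac29$ in the denominator and the linear bound in the numerator, and then relax $\tfrac{1089}{56}$ to $20$. Your write-up is in fact more explicit than the paper's (which omits the denominator/numerator split and the slack $\tfrac{9}{8T^2}\le\tfrac{9}{4T^2}$), but the argument is identical.
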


We can now extend Lemma~\ref{lem:KL_Mixing_OneStepRegularity} from Dirac initializations to general initializations using Lemma~\ref{lem:KLConvexityOneStep} with $\bP = \bQ = \tilde \bP_{T,h}$, which yields a regularization guarantee for \uhmcv in KL divergence. We state this in Lemma~\ref{lem:KL_Mixing_OneStepGeneral} and prove it in Section~\ref{app:PfOf_KL_Mixing_OneStepGeneral}.

\begin{lemma}\label{lem:KL_Mixing_OneStepGeneral}
    Let $\mu, \pi \in \P(\R^d)$ and suppose Assumption~\ref{assumption:potential}(a)-(c) holds. Further suppose $h \geq 0$ and that $L(T^2 + Th) \leq \frac{1}{12}$. Then
    \[
    \KL(\mu \tilde \bP_{T,h} \dvert \pi \tilde \bP_{T,h}) \leq \left( \frac{9}{4T^2} + 20dM^2T^4 \right) W_2^2(\mu, \pi)\,.
    \]
\end{lemma}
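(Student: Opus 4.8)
The plan is to lift the Dirac-initialization estimate of Lemma~\ref{lem:KL_Mixing_OneStepRegularity} to arbitrary initial distributions by combining the joint convexity of KL divergence with an optimal Wasserstein--$2$ coupling; this is exactly the abstract reduction packaged in Lemma~\ref{lem:KLConvexityOneStep}, specialized to the case $\bP=\bQ=\tilde\bP_{T,h}$.

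Concretely, I would first fix an optimal coupling $\gamma^\star\in\Gamma(\mu,\pi)$ attaining the infimum in Definition~\ref{def:WassersteinDistance}, so that $\E_{(x,y)\sim\gamma^\star}\bigl[\|x-y\|^2\bigr]=W_2^2(\mu,\pi)$ (existence of such a coupling is standard). Writing the pushed-forward measures as mixtures indexed by this coupling, namely $\mu\tilde\bP_{T,h}=\int(\delta_x\tilde\bP_{T,h})\,d\gamma^\star(x,y)$ and $\pi\tilde\bP_{T,h}=\int(\delta_y\tilde\bP_{T,h})\,d\gamma^\star(x,y)$, the joint convexity of the map $(\rho,\sigma)\mapsto\KL(\rho\dvert\sigma)$ gives
\[
\KL\bigl(\mu\tilde\bP_{T,h}\dvert\pi\tilde\bP_{T,h}\bigr)\le\int_{\R^d\times\R^d}\KL\bigl(\delta_x\tilde\bP_{T,h}\dvert\delta_y\tilde\bP_{T,h}\bigr)\,d\gamma^\star(x,y).
\]
I would then substitute the pointwise bound from Lemma~\ref{lem:KL_Mixing_OneStepRegularity}, valid under Assumption~\ref{assumption:potential}(a)--(c) and $L(T^2+Th)\le\frac{1}{12}$, which controls the integrand by $\bigl(\frac{9}{4T^2}+20dM^2T^4\bigr)\|x-y\|^2$. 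Integrating against $\gamma^\star$ and invoking optimality of the coupling yields
\[
\KL\bigl(\mu\tilde\bP_{T,h}\dvert\pi\tilde\bP_{T,h}\bigr)\le\Bigl(\frac{9}{4T^2}+20dM^2T^4\Bigr)\E_{(x,y)\sim\gamma^\star}\bigl[\|x-y\|^2\bigr]=\Bigl(\frac{9}{4T^2}+20dM^2T^4\Bigr)W_2^2(\mu,\pi),
\]
which is the assertion.

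There is no hard analytic step in this argument; the only points meriting care are the justification of the mixture representation and the interchange of integration with the divergence — precisely what Lemma~\ref{lem:KLConvexityOneStep} supplies, relying on convexity of $(a,b)\mapsto a\log(a/b)$ and a Jensen-type inequality for $f$-divergences — together with measurability of $(x,y)\mapsto\KL(\delta_x\tilde\bP_{T,h}\dvert\delta_y\tilde\bP_{T,h})$, which follows from the explicit Gaussian-pushforward representation in Lemma~\ref{lem:DiracToPerturbed_Mixing}. Both are covered by the standing regularity assumptions on densities, so the proof is essentially a one-line consequence of the earlier lemmas.
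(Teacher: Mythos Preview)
Your proposal is correct and follows essentially the same route as the paper: invoke Lemma~\ref{lem:KLConvexityOneStep} with $\bP=\bQ=\tilde\bP_{T,h}$, apply the Dirac-level bound of Lemma~\ref{lem:KL_Mixing_OneStepRegularity} inside the expectation, and evaluate at the optimal $W_2$ coupling. The paper's proof is the one-line version of exactly this argument.
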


Combining Lemma~\ref{lem:KL_Mixing_OneStepGeneral} with Assumption~\ref{assumption:W2Mixing} for $\tilde \bQ = \tilde \bP_{T,h}$ yields Theorem~\ref{thm:KLMixing}. We prove Theorem~\ref{thm:KLMixing} in Section~\ref{app:PfOf_ThmKLMixing}.

\subsection{Asymptotic Bias}\label{subsec:KLTargetSketch}

We begin with Lemma~\ref{lem:DiracToPerturbed_Target}, which shows that for any two points $x,y \in \R^d$
\[
\KL(\delta_x \tilde \bP_{T,h} \dvert \delta_y \bP_T) \leq \KL((\Phi_{x,y})_\# \gammad \dvert \gammad)\,,
\]
where recall that $\Phi_{x,y}$ is a map satisfying~\eqref{eq:TargetMap}.
We bound $\KL((\Phi_{x,y})_\# \gammad \dvert \gammad)$ in the following lemma. We prove Lemma~\ref{lem:KL_Target_OneStepRegularity} in Section~\ref{app:PfOf_KL_Target_OneStepRegularity}.

\begin{lemma}\label{lem:KL_Target_OneStepRegularity}
    Suppose Assumption~\ref{assumption:potential}(a)-(d) holds. Further suppose $h \geq 0$ and that $L(T^2+Th) \leq \frac{1}{12}$. Then for any $x,y \in \R^d$
    \begin{align*}
        \KL((\Phi_{x,y})_\# \gammad \dvert \gammad) \leq \Big( \frac{9}{4T^2} + &\frac{363}{2}dM^2T^4 \Big)\|x-y\|^2 +\\
        4h^4 \Bigg[ 45 dL^2& + \Big(4 \frac{L^2}{T^2} + 45dM^2 \Big)\|x\|^2 + \Big(4\frac{M^2}{T^2} + 45d(M^2T^2 +N)^2\Big)\|x\|^4 \\
         +(4&L^2 + 45dM^2T^2)d + \big(4M^2T^2 + 45d(M^2T^4 + NT^2)^2\big)d(d+2) \Bigg]\,.
    \end{align*}
\end{lemma}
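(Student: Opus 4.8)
The plan is to bound $\KL((\Phi_{x,y})_\#\gammad\dvert\gammad)$ directly via the change-of-variables formula for a near-identity pushforward, since Lemma~\ref{lem:KL_Perturbed_Mixing} cannot be applied verbatim: the displacement bound of Lemma~\ref{lem:TargetMapPtWise} grows quadratically in $\|v\|$, so the uniform hypothesis $\|\psi(v)-v\|\le\mfm_1$ there fails. Instead I would re-derive the identity underlying that lemma while carrying the $v$-dependence through the expectation. Because $\Phi_{x,y}$ is a bijection (Lemma~\ref{lem:ExistenceOfTargetMap}) and $\|\nabla\Phi_{x,y}(v)-I\|_\op\le\tfrac{15}{18}<1$ by Lemma~\ref{lem:TargetMapJacobian} — so $\det\nabla\Phi_{x,y}>0$ pointwise — the density of $(\Phi_{x,y})_\#\gammad$ is $v\mapsto\gammad(\Phi_{x,y}^{-1}(v))\,|\det\nabla\Phi_{x,y}^{-1}(v)|$, and substituting into the definition of KL gives
\[
\KL\big((\Phi_{x,y})_\#\gammad\dvert\gammad\big)=\E_{V\sim\gammad}\Big[\tfrac12\|\Phi_{x,y}(V)\|^2-\tfrac12\|V\|^2-\log\det\nabla\Phi_{x,y}(V)\Big].
\]

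Next I would split $\tfrac12\|\Phi_{x,y}(V)\|^2-\tfrac12\|V\|^2=\tfrac12\|\Phi_{x,y}(V)-V\|^2+\langle V,\Phi_{x,y}(V)-V\rangle$ and apply Gaussian integration by parts, $\E_{V\sim\gammad}[\langle V,g(V)\rangle]=\E_{V\sim\gammad}[\Div g(V)]$ with $g=\Phi_{x,y}-\id$ (valid since $g$ has at most polynomial growth by Lemmas~\ref{lem:TargetMapPtWise}--\ref{lem:TargetMapJacobian}), turning the middle term into $\E_V[\tr(\nabla\Phi_{x,y}(V)-I)]$. Combining this with the log-determinant and applying, eigenvalue by eigenvalue, the scalar bound $t-\log(1+t)\le\frac{t^2}{2(1-|t|)}$ for $|t|<1$ to $A\coloneqq\nabla\Phi_{x,y}(V)-I$ (where $\|A\|_\op\le\tfrac{15}{18}$, hence $\tfrac1{2(1-\|A\|_\op)}\le 3$) gives $\tr(A)-\log\det(I+A)\le 3\|A\|_{\HS}^2\le 3d\|A\|_\op^2$. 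The net reduction is
\[
\KL\big((\Phi_{x,y})_\#\gammad\dvert\gammad\big)\le\tfrac12\,\E_{V\sim\gammad}\big[\|\Phi_{x,y}(V)-V\|^2\big]+3d\,\E_{V\sim\gammad}\big[\|\nabla\Phi_{x,y}(V)-I\|_\op^2\big].
\]

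The remaining step is to insert the pointwise bound of Lemma~\ref{lem:TargetMapPtWise} and the Jacobian bound of Lemma~\ref{lem:TargetMapJacobian} (with $Q(x,v)$ as defined there), square each with $(\sum_{i=1}^n a_i)^2\le n\sum_i a_i^2$ — peeling off the $\|x-y\|$ term from the rest so the surviving $h^2$-part appears with an $h^4$ prefactor — and take expectations using $\E\|V\|^2=d$ and $\E\|V\|^4=d(d+2)$ (this splitting produces only even powers of $\|V\|$). The $\|x-y\|^2$ coefficient is then $\tfrac12\cdot2\cdot(\tfrac3{2T})^2=\tfrac9{4T^2}$ from the displacement term plus $3d\cdot2\cdot(\tfrac{11}2MT^2)^2=\tfrac{363}2dM^2T^4$ from the Jacobian term; the remaining $h^4$-terms assemble, after rounding numerical constants up (e.g.\ $\tfrac{14520}{81}\le180=4\cdot45$), into the claimed bracketed expression, whose $4L^2/T^2$, $4M^2/T^2$, $4L^2$, $4M^2T^2$ coefficients come from the four summands of the displacement bound and whose $45dL^2$, $45dM^2$, $45d(M^2T^2+N)^2$, $45dM^2T^2$, $45d(M^2T^4+NT^2)^2$ coefficients come from the five summands of $Q$.

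I expect the main obstacle to be not the bookkeeping in the last step but the justification of the first step: verifying that $\Phi_{x,y}$ is a global $C^1$-diffeomorphism with everywhere-positive Jacobian and that $\Phi_{x,y}-\id$ is sufficiently integrable against $\gammad$ for the Stein identity to apply. Both follow from the existence statement (Lemma~\ref{lem:ExistenceOfTargetMap}) and the polynomial pointwise and Jacobian bounds in hand, so morally this is the proof of Lemma~\ref{lem:KL_Perturbed_Mixing} with its uniform hypotheses relaxed. A secondary point to track carefully is the constant $\tfrac1{2(1-\|A\|_\op)}\le 3$: together with the factor $2$ from $(a+b)^2\le 2a^2+2b^2$ and $(\tfrac{11}2)^2=\tfrac{121}4$, it is exactly what yields the coefficient $\tfrac{363}2$ rather than something smaller.
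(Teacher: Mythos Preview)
Your proposal is correct and matches the paper's proof essentially line for line: the same change-of-variables identity, the same Stein/integration-by-parts reduction, the same $\tr(A)-\log\det(I+A)\le 3d\|A\|_\op^2$ bound (the paper cites \cite[Theorem~1.1]{rump2018estimates} rather than arguing eigenvalue by eigenvalue, which is safer since $\nabla\Phi_{x,y}$ need not be symmetric), and the same splitting and moment computations with the same rounding $\tfrac{4840}{27}\le 180$. The only cosmetic difference is that the paper states the starting identity by reference to the proof of Lemma~\ref{lem:KL_Perturbed_Mixing} rather than re-deriving it.
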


Now we can obtain a cross-regularization bound for \uHMCv in KL divergence. We prove Lemma~\ref{lem:KL_Target_OneStepGeneral} in Section~\ref{app:PfOf_KL_Target_OneStepGeneral}.

\begin{lemma}\label{lem:KL_Target_OneStepGeneral}
    Let $\mu$ be a distribution such that $\E_\mu[\|x\|^2] = m_2(\mu) < \infty$ and $\E_\mu[\|x\|^4] = m_4(\mu) < \infty$ and $\pi$ be a distribution such that $W_2(\mu, \pi) < \infty$. Suppose Assumption~\ref{assumption:potential}(a)-(d) holds. Further suppose $h \geq 0$ and that $L(T^2+Th) \leq \frac{1}{12}$. Then
    \begin{align*}
        \KL(\mu \tilde \bP_{T,h} \dvert \pi \bP_T) \leq \Big( \frac{9}{4T^2} + &\frac{363}{2}dM^2T^4 \Big)W_2^2(\mu, \pi) +\\
        4h^4 \Bigg[ 45 dL^2& + \Big(4 \frac{L^2}{T^2} + 45dM^2 \Big)m_2(\mu) + \Big(4\frac{M^2}{T^2} + 45d(M^2T^2 +N)^2\Big)m_4(\mu) \\
         +(4&L^2 + 45dM^2T^2)d + \big(4M^2T^2 + 45d(M^2T^4 + NT^2)^2\big)d(d+2) \Bigg]\,.
    \end{align*}
\end{lemma}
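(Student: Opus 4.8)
The plan is to lift the pointwise (Dirac-to-Dirac) cross-regularization bound of Lemma~\ref{lem:KL_Target_OneStepRegularity} to the general statement by a joint-convexity argument for KL divergence over couplings, exactly paralleling the way Lemma~\ref{lem:KL_Mixing_OneStepGeneral} follows from Lemma~\ref{lem:KL_Mixing_OneStepRegularity}. First I would fix an arbitrary coupling $\gamma \in \Gamma(\mu,\pi)$ realized by a pair $(X,Y)$ with $X \sim \mu$ and $Y \sim \pi$. Writing $\mu \tilde\bP_{T,h} = \E_{(X,Y)\sim\gamma}[\delta_X \tilde\bP_{T,h}]$ and $\pi\bP_T = \E_{(X,Y)\sim\gamma}[\delta_Y \bP_T]$ as mixtures indexed by the \emph{same} source of randomness, the joint convexity of $(\rho,\sigma)\mapsto\KL(\rho\dvert\sigma)$ gives
\[
\KL(\mu \tilde\bP_{T,h} \dvert \pi\bP_T) \;\le\; \E_{(X,Y)\sim\gamma}\bigl[\KL(\delta_X \tilde\bP_{T,h} \dvert \delta_Y \bP_T)\bigr].
\]
This is the step that packages the ``cross'' structure cleanly: even though KL is not symmetric and has no triangle inequality, joint convexity over a common mixing variable is all that is needed, and it does not require the two kernels to agree.

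Next I would insert the Dirac-to-Dirac bound from Lemma~\ref{lem:KL_Target_OneStepRegularity}, which controls $\KL(\delta_X \tilde\bP_{T,h} \dvert \delta_Y \bP_T)$ by $\bigl(\tfrac{9}{4T^2} + \tfrac{363}{2}dM^2T^4\bigr)\|X-Y\|^2$ plus an additive term that is a polynomial in $\|X\|$ and $\|X\|^2$ (with no dependence on $Y$, since the second map $\Phi$ in Lemma~\ref{lem:DiracToPerturbed_Target} perturbs the velocity at $y$ and the regularity estimates in Lemmas~\ref{lem:TargetMapPtWise}--\ref{lem:TargetMapJacobian} are expressed through $\|x\|$ and $\|v\|$ only). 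Taking expectation over $\gamma$, the $\|X-Y\|^2$ term becomes $\E_\gamma[\|X-Y\|^2]$; the $\|X\|$- and $\|X\|^2$-terms become $m_1(\mu)$-type and $m_2(\mu), m_4(\mu)$ quantities after the Gaussian $\|v\|^2,\|v\|^4$ contributions (which produce the $d$ and $d(d+2)$ factors) are absorbed. Crucially, the additive part is independent of the choice of coupling $\gamma$, so I may then take the infimum over $\gamma\in\Gamma(\mu,\pi)$ in the $\|X-Y\|^2$ term only; by Definition~\ref{def:WassersteinDistance} this infimum equals $W_2^2(\mu,\pi)$, yielding the claimed bound. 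The hypotheses $m_2(\mu),m_4(\mu)<\infty$ guarantee the additive term is finite, and $W_2(\mu,\pi)<\infty$ guarantees the first term is, so the right-hand side is well-defined.

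The routine-but-careful parts are: (i) justifying the mixture representation and the interchange of expectation with KL rigorously (standard, via the Donsker--Varadhan variational formula or the data-processing/convexity lemma for $f$-divergences applied to the Markov kernel $(X,Y)\mapsto(\tilde\bP_{T,h}\text{-draw}, \bP_T\text{-draw})$), and (ii) matching constants so that the expectation of the Dirac bound reproduces exactly the coefficients in the theorem statement — in particular verifying that the Gaussian moment identities $\E_{\gamma_d}\|\xi\|^2 = d$ and $\E_{\gamma_d}\|\xi\|^4 = d(d+2)$ convert the $\|v\|^2$- and $\|v\|^4$-coefficients of Lemma~\ref{lem:KL_Target_OneStepRegularity} into the $d$ and $d(d+2)$ factors appearing in the $\mathsf{bias}$-type additive term. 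I expect no genuine obstacle here: the only subtlety is bookkeeping, since Lemma~\ref{lem:KL_Target_OneStepRegularity} already contains the $d$, $d(d+2)$ factors explicitly (having been stated after taking the Gaussian expectation over the velocity), so the main work is simply replacing $\|x\|^2 \mapsto m_2(\mu)$, $\|x\|^4\mapsto m_4(\mu)$, and $\|x-y\|^2 \mapsto W_2^2(\mu,\pi)$. The one point requiring a line of care is that the infimum over couplings can be pulled inside only because the $h^4$-bracket does not involve $\pi$ at all; I would state this explicitly before passing to $W_2$.
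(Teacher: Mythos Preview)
Your proposal is correct and follows essentially the same route as the paper: the paper invokes Lemma~\ref{lem:KLConvexityOneStep} (joint convexity of KL over a common coupling) with $\bP = \tilde\bP_{T,h}$ and $\bQ = \bP_T$, then applies Lemmas~\ref{lem:DiracToPerturbed_Target} and~\ref{lem:KL_Target_OneStepRegularity} under the optimal $W_2$ coupling, which is exactly your joint-convexity-then-Dirac-bound-then-infimum argument. The only cosmetic difference is that the paper picks the optimal $W_2$ coupling up front rather than taking an infimum at the end; your observation that the $h^4$-bracket is coupling-independent (depending only on the $\mu$-marginal) is precisely why either order works.
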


Lemma~\ref{lem:KL_Target_OneStepGeneral} together with Assumptions~\ref{assumption:W2Mixing} and~\ref{assumption:W2Bias} yields Theorem~\ref{thm:KLTarget}, which we prove in Section~\ref{app:PfOf_ThmKLTarget}.

\section{Key Steps for Obtaining Guarantees in Rényi Divergence}

\subsection{Mixing Time}\label{subsec:RenyiMixingSketch}

In this subsection we fix the transition kernel to be \uhmcv, i.e., $\tilde \bP_{T,h}$. Lemma~\ref{lem:DiracToPerturbed_Mixing} implies that for any two points $x,y \in \R^d$
\[
\sfR_q(\delta_x \tilde \bP_{T,h} \dvert \delta_y \tilde \bP_{T,h}) \leq \sfR_q((\varphi_{x,y})_\# \gammad \dvert \gammad)\,,
\]
where recall that $\varphi_{x,y}$ is a map satisfying~\eqref{eq:MixingMap}.
Next, we bound the Rényi divergence between a perturbed standard Gaussian and a standard Gaussian in Lemma~\ref{lem:Renyi_Perturbed_Mixing}. We prove Lemma~\ref{lem:Renyi_Perturbed_Mixing} in Section~\ref{app:PfOf_Renyi_Perturbed_Mixing}.

\begin{lemma}\label{lem:Renyi_Perturbed_Mixing}
    Suppose $\psi : \R^d \to \R^d$ is a differentiable and bijective map. Assume there exist constants $0 \leq \mfm_1 < \infty$ and $0 \leq \mfm_2 <1$ such that for all $v \in \R^d$, $\|\psi(v)-v\| \leq \mfm_1$ and $\|\nabla \psi(v) - I\|_{\op} \leq \mfm_2$.
    Then for any $1<q< \infty$
    \begin{align*}
    \sfR_q(\psi_\#\gammad \dvert \gammad) &\leq \frac{d \mfm_2}{1-\mfm_2} + \sqrt{d}\,\mfm_1  + \frac{q\mfm_1^2}{2}\,.
    \end{align*}
\end{lemma}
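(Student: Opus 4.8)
The plan is to reduce $\sfR_q(\psi_\#\gammad\dvert\gammad)$ to a Gaussian moment computation via the change-of-variables formula, and then bound the resulting expectation by peeling off a deterministic part (coming from the squared displacement and the Jacobian) from a part that is linear in $v$ and can be controlled by Gaussian concentration.

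First I would record that $\psi$ is a $C^1$ diffeomorphism of $\R^d$ with everywhere-positive Jacobian determinant. Indeed, $\|\nabla\psi(v)-I\|_\op\le\mfm_2<1$ makes $\nabla\psi(v)$ invertible (Neumann series), so by the inverse function theorem $\psi^{-1}$ is locally $C^1$; since $\psi$ is bijective by hypothesis, $\psi^{-1}$ is globally $C^1$. Moreover, writing $E(v)\coloneqq\nabla\psi(v)-I$, the map $t\mapsto\det(I+tE(v))$ is continuous, never zero on $[0,1]$, and equals $1$ at $t=0$, so $\det\nabla\psi(v)>0$; and since every eigenvalue of $\nabla\psi(v)=I+E(v)$ has modulus at least $1-\mfm_2$, we in fact get $\det\nabla\psi(v)\ge(1-\mfm_2)^d$. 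With $(\psi_\#\gammad)(x)=\gammad(\psi^{-1}(x))/\det\nabla\psi(\psi^{-1}(x))$, the substitution $x=\psi(v)$ in Definition~\ref{def:RenyiDivergence}, together with $\|\psi(v)\|^2-\|v\|^2=\|w(v)\|^2+2\langle v,w(v)\rangle$ where $w(v)\coloneqq\psi(v)-v$, would yield
\[
(q-1)\,\sfR_q(\psi_\#\gammad\dvert\gammad)=\log\E_{v\sim\gammad}\!\left[\exp\!\Bigl(\tfrac{q-1}{2}\|w(v)\|^2+(q-1)\langle v,w(v)\rangle-(q-1)\log\det(I+E(v))\Bigr)\right].
\]

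Next I would bound the two deterministic contributions pointwise in $v$ and factor them out of the expectation. The displacement term satisfies $\tfrac{q-1}{2}\|w(v)\|^2\le\tfrac{q-1}{2}\mfm_1^2$, and from $\det\nabla\psi(v)\ge(1-\mfm_2)^d$ together with $-\log(1-x)\le x/(1-x)$ we get $-(q-1)\log\det(I+E(v))\le -(q-1)d\log(1-\mfm_2)\le (q-1)\,d\,\mfm_2/(1-\mfm_2)$. Since both bounds are independent of $v$,
\[
(q-1)\,\sfR_q(\psi_\#\gammad\dvert\gammad)\le (q-1)\Bigl(\tfrac{\mfm_1^2}{2}+\tfrac{d\,\mfm_2}{1-\mfm_2}\Bigr)+\log\E_{v\sim\gammad}\!\left[e^{(q-1)\langle v,w(v)\rangle}\right].
\]
For the remaining Gaussian expectation, Cauchy--Schwarz and $\|w(v)\|\le\mfm_1$ give $\langle v,w(v)\rangle\le\mfm_1\|v\|$, so (using $q-1>0$) $\E_{v\sim\gammad}[e^{(q-1)\langle v,w(v)\rangle}]\le\E_{v\sim\gammad}[e^{(q-1)\mfm_1\|v\|}]$. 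Since $v\mapsto\|v\|$ is $1$-Lipschitz, Gaussian concentration gives $\E_{v\sim\gammad}[e^{t(\|v\|-\E\|v\|)}]\le e^{t^2/2}$ for all $t\in\R$, and $\E_{v\sim\gammad}\|v\|\le\sqrt d$ by Jensen; taking $t=(q-1)\mfm_1$ yields $\log\E_{v\sim\gammad}[e^{(q-1)\mfm_1\|v\|}]\le (q-1)\sqrt d\,\mfm_1+\tfrac12(q-1)^2\mfm_1^2$.

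Substituting and dividing through by $q-1$ would give
\[
\sfR_q(\psi_\#\gammad\dvert\gammad)\le\tfrac{\mfm_1^2}{2}+\tfrac{d\,\mfm_2}{1-\mfm_2}+\sqrt d\,\mfm_1+\tfrac{q-1}{2}\mfm_1^2=\tfrac{d\,\mfm_2}{1-\mfm_2}+\sqrt d\,\mfm_1+\tfrac{q}{2}\mfm_1^2,
\]
which is exactly the claimed bound. I expect the main obstacle to be the first step --- verifying carefully that $\psi$ is a genuine $C^1$ diffeomorphism with positive Jacobian, so that the change-of-variables identity and the density formula for $\psi_\#\gammad$ are valid; once that is in place, the rest is a routine combination of eigenvalue/operator-norm estimates for $I+E(v)$ and the standard sub-Gaussian moment bound for $\|v\|$ under $\gammad$.
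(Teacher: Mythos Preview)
Your proof is correct and follows essentially the same route as the paper: the same change of variables to an expectation over $v\sim\gammad$, the same pointwise bounds $\|\psi(v)\|^2-\|v\|^2\le 2\mfm_1\|v\|+\mfm_1^2$ and $\det\nabla\psi(v)\ge(1-\mfm_2)^d$, the same Gaussian-concentration estimate for $\E[e^{(q-1)\mfm_1\|v\|}]$, and the same use of $-\log(1-x)\le x/(1-x)$. If anything, you are more careful than the paper in justifying the diffeomorphism property and the positivity of the Jacobian determinant before invoking the change-of-variables formula.
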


Combining Lemmas~\ref{lem:DiracToPerturbed_Mixing} and~\ref{lem:Renyi_Perturbed_Mixing} with $\psi = \varphi_{x,y}$~\eqref{eq:MixingMap} gives a regularization bound from Dirac initializations for \uHMCv in Rényi divergence, stated in Lemma~\ref{lem:Renyi_Mixing_OneStepRegularity}. We prove Lemma~\ref{lem:Renyi_Mixing_OneStepRegularity} in Section~\ref{app:PfOf_Renyi_Mixing_OneStepRegularity}.

\begin{lemma}\label{lem:Renyi_Mixing_OneStepRegularity}
    Let $x,y \in \R^d$ and suppose Assumption~\ref{assumption:potential}(a)-(c) holds. Further suppose $h \geq 0$ and that $L(T^2 + Th) \leq \frac{1}{12}$. Then for any $1<q< \infty$
    \[
    \sfR_q(\delta_x \tilde \bP_{T,h} \dvert \delta_y \tilde \bP_{T,h}) \leq \left( 8d MT^2 + \frac{3\sqrt{d}}{2T} \right)\|x-y\| + \frac{9q}{8T^2}\|x-y\|^2\,.
    \]
\end{lemma}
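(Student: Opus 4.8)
The plan is to chain together three results already established in the excerpt. First I would invoke Lemma~\ref{lem:DiracToPerturbed_Mixing}: its hypotheses hold here, since $L(T^2+Th)\le\tfrac1{12}$ forces $LT^2\le\tfrac1{12}\le\tfrac25\pi^2$ and Assumption~\ref{assumption:potential}(b) is assumed, so
\[
\sfR_q(\delta_x\tilde\bP_{T,h}\dvert\delta_y\tilde\bP_{T,h})\le\sfR_q\bigl((\varphi_{x,y})_\#\gammad\dvert\gammad\bigr),
\]
with $\varphi_{x,y}$ the one-shot map of~\eqref{eq:MixingMap}. It then remains to control the R\'enyi divergence between the pushforward of a standard Gaussian under $\varphi_{x,y}$ and the Gaussian itself, for which I would apply Lemma~\ref{lem:Renyi_Perturbed_Mixing} with $\psi=\varphi_{x,y}$.

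Second, I would check the hypotheses of Lemma~\ref{lem:Renyi_Perturbed_Mixing} and record the perturbation constants. The map $\varphi_{x,y}$ satisfying~\eqref{eq:MixingMap} exists by~\cite[Corollary~14]{bou2023mixing} (using $LT^2\le\tfrac25\pi^2$), and the Jacobian estimate of Lemma~\ref{lem:MixingMapJacobian} gives $\|\nabla\varphi_{x,y}(v)-I\|_\op\le\tfrac29<1$ for every $v$, so $\nabla\varphi_{x,y}(v)$ is everywhere invertible; hence $\varphi_{x,y}$ is a global diffeomorphism (a Hadamard-type conclusion, consistent with the uniqueness/regularity of the one-shot map recorded in~\cite{bou2023mixing}), and in particular differentiable and bijective. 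Lemmas~\ref{lem:MixingMapPtWise} and~\ref{lem:MixingMapJacobian} then let me take
\[
\mfm_1=\frac{3}{2T}\|x-y\|,\qquad \mfm_2=\min\Bigl\{\tfrac29,\ \tfrac{11}{2}MT^2\|x-y\|\Bigr\},
\]
and $\mfm_2\le\tfrac29<1$ as required by Lemma~\ref{lem:Renyi_Perturbed_Mixing}.

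Third, I would substitute into the bound $\sfR_q(\psi_\#\gammad\dvert\gammad)\le\frac{d\mfm_2}{1-\mfm_2}+\sqrt d\,\mfm_1+\frac{q\mfm_1^2}{2}$ of Lemma~\ref{lem:Renyi_Perturbed_Mixing}. The term $\sqrt d\,\mfm_1$ equals $\frac{3\sqrt d}{2T}\|x-y\|$ and $\frac{q\mfm_1^2}{2}$ equals $\frac{9q}{8T^2}\|x-y\|^2$, matching the second and third terms of the claim. For the first term, $\mfm_2\le\tfrac29$ gives $1-\mfm_2\ge\tfrac79$, so
\[
\frac{d\,\mfm_2}{1-\mfm_2}\le\frac{9d}{7}\,\mfm_2\le\frac{9d}{7}\cdot\frac{11}{2}MT^2\|x-y\|=\frac{99}{14}\,dMT^2\|x-y\|\le 8dMT^2\|x-y\|,
\]
and summing the three contributions gives the lemma. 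I do not anticipate a genuinely hard step: the existence and regularity of $\varphi_{x,y}$ are inherited from~\cite{bou2023mixing}, and the only points needing care are verifying $\mfm_2<1$ so that Lemma~\ref{lem:Renyi_Perturbed_Mixing} is applicable, and the trivial numerical inequality $\tfrac{99}{14}\le 8$ that lets the diffeomorphism-Jacobian contribution be absorbed into the stated coefficient $8dMT^2$.
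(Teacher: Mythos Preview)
Your proposal is correct and follows essentially the same approach as the paper: apply Lemma~\ref{lem:DiracToPerturbed_Mixing}, then Lemma~\ref{lem:Renyi_Perturbed_Mixing} with the constants $\mfm_1,\mfm_2$ supplied by Lemmas~\ref{lem:MixingMapPtWise} and~\ref{lem:MixingMapJacobian}, arriving at the intermediate coefficient $\tfrac{99}{14}dMT^2$ which is then relaxed to $8dMT^2$. You are in fact more explicit than the paper in justifying the bijectivity hypothesis of Lemma~\ref{lem:Renyi_Perturbed_Mixing}.
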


We can now extend Lemma~\ref{lem:Renyi_Mixing_OneStepRegularity} from Dirac initializations to general initializations using Lemma~\ref{lem:RenyiConvexityOneStep} with $\bP = \bQ = \tilde \bP_{T,h}$. We state this in Lemma~\ref{lem:Renyi_Mixing_OneStepGeneral} and prove it in Section~\ref{app:PfOf_Renyi_Mixing_OneStepGeneral}.

\begin{lemma}\label{lem:Renyi_Mixing_OneStepGeneral}
    Let $\mu, \pi \in \P(\R^d)$ and suppose Assumption~\ref{assumption:potential}(a)-(c) holds. Further suppose $h \geq 0$ and that $L(T^2 + Th) \leq \frac{1}{12}$. Then for any $1<q< \infty$
    \[
    \sfR_q(\mu \tilde \bP_{T,h} \dvert \pi \tilde \bP_{T,h}) \leq \inf_{\gamma \in \Gamma(\mu, \pi)} \frac{1}{q-1} \log \E_{(x,y) \sim \gamma} \left[ e^{(q-1)\left[ \left( 8d MT^2 + \frac{3\sqrt{d}}{2T} \right)\|x-y\| + \frac{9q}{8T^2}\|x-y\|^2   \right]} \right]\,.
    \]
\end{lemma}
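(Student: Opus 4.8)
The plan is to promote the Dirac-to-Dirac regularization estimate of Lemma~\ref{lem:Renyi_Mixing_OneStepRegularity} to arbitrary initial distributions $\mu,\pi$ using the R\'enyi convexity-under-a-kernel inequality of Lemma~\ref{lem:RenyiConvexityOneStep}, exactly paralleling the way Lemma~\ref{lem:KL_Mixing_OneStepGeneral} was deduced from Lemma~\ref{lem:KL_Mixing_OneStepRegularity} via the joint convexity Lemma~\ref{lem:KLConvexityOneStep}.

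First I would fix an arbitrary coupling $\gamma\in\Gamma(\mu,\pi)$ and apply Lemma~\ref{lem:RenyiConvexityOneStep} with $\bP=\bQ=\tilde\bP_{T,h}$, which gives
\[
\sfR_q(\mu\tilde\bP_{T,h}\dvert\pi\tilde\bP_{T,h}) \;\le\; \frac{1}{q-1}\log\E_{(x,y)\sim\gamma}\Big[e^{(q-1)\sfR_q(\delta_x\tilde\bP_{T,h}\dvert\delta_y\tilde\bP_{T,h})}\Big].
\]
For this to make sense one needs $\delta_x\tilde\bP_{T,h}\ll\delta_y\tilde\bP_{T,h}$ for all $x,y$, which follows from Lemma~\ref{lem:DiracToPerturbed_Mixing} together with the finiteness of $\sfR_q\big((\varphi_{x,y})_\#\gammad\dvert\gammad\big)$; here the hypothesis $L(T^2+Th)\le\tfrac1{12}$ is used, as it implies $LT^2\le\tfrac25\pi^2$ and hence the existence of the one-shot map $\varphi_{x,y}$.

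Next I would bound the exponent pointwise. Under Assumption~\ref{assumption:potential}(a)--(c) and $L(T^2+Th)\le\tfrac1{12}$, Lemma~\ref{lem:Renyi_Mixing_OneStepRegularity} yields, for all $x,y\in\R^d$,
\[
\sfR_q(\delta_x\tilde\bP_{T,h}\dvert\delta_y\tilde\bP_{T,h}) \;\le\; \Big(8dMT^2+\tfrac{3\sqrt d}{2T}\Big)\|x-y\| + \tfrac{9q}{8T^2}\|x-y\|^2.
\]
Because $q>1$, the function $s\mapsto e^{(q-1)s}$ is increasing, so this estimate may be substituted inside the expectation; applying the monotone maps $\log(\cdot)$ and $\tfrac1{q-1}(\cdot)$ and then taking the infimum over $\gamma\in\Gamma(\mu,\pi)$ (the left-hand side being independent of $\gamma$) produces precisely the claimed inequality.

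I do not expect a genuine obstacle in this step: the analytic content is concentrated in Lemma~\ref{lem:RenyiConvexityOneStep} (the joint convexity of the Hellinger integral $(\rho,\sigma)\mapsto\int\rho^q\sigma^{1-q}$) and in Lemma~\ref{lem:Renyi_Mixing_OneStepRegularity}. The only matters requiring care are the measurability of $(x,y)\mapsto\sfR_q(\delta_x\tilde\bP_{T,h}\dvert\delta_y\tilde\bP_{T,h})$, which is transparent from the explicit Gaussian-pushforward form of $\delta_x\tilde\bP_{T,h}$, and the fact that --- in contrast with the KL case, where $\E_\gamma[\|x-y\|^2]$ minimizes in closed form to $W_2^2(\mu,\pi)$ --- the R\'enyi exponential moment does not simplify, which is why the statement retains $\inf_\gamma$; this moment is subsequently controlled in Theorem~\ref{thm:RenyiMixing} through the Orlicz--Wasserstein distance together with Lemma~\ref{lem:Helper_Orlicz}, rather than through $W_2$.
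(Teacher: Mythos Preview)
Your proposal is correct and matches the paper's proof exactly: the paper's argument is the one-line ``Direct application of Lemma~\ref{lem:RenyiConvexityOneStep} with Lemma~\ref{lem:Renyi_Mixing_OneStepRegularity} proves the result.'' You have simply spelled out the intermediate steps and added the (correct) observations about measurability and why the $\inf_\gamma$ cannot be resolved into a closed-form Wasserstein distance here.
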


Assumption~\ref{assumption:OWMixing} for $\tilde \bQ = \tilde \bP_{T,h}$ together with Lemma~\ref{lem:Renyi_Mixing_OneStepGeneral} yields Theorem~\ref{thm:RenyiMixing}, which we prove in Section~\ref{app:PfOf_ThmRenyiMixing}.

\subsection{Asymptotic Bias}\label{subsec:RenyiTargetSketch}

We begin with Lemma~\ref{lem:DiracToPerturbed_Target}, which shows that for any two points $x,y \in \R^d$
\[
\sfR_q(\delta_x \tilde \bP_{T,h} \dvert \delta_y \bP_T) \leq \sfR_q((\Phi_{x,y})_\# \gammad \dvert \gammad)\,,
\]
where recall that $\Phi_{x,y}$ is a map satisfying~\eqref{eq:TargetMap}.
We bound $\sfR_q((\Phi_{x,y})_\# \gammad \dvert \gammad)$ in the following lemma. We prove Lemma~\ref{lem:Renyi_Target_OneStepRegularity} in Section~\ref{app:PfOf_Renyi_Target_OneStepRegularity}.

\begin{lemma}\label{lem:Renyi_Target_OneStepRegularity}
    Suppose Assumption~\ref{assumption:potential}(a)-(c) holds and recall the constants defined in Definition~\ref{def:Constants}. Further suppose
    \[
    0 \leq h \leq \frac{25T}{7} \bigg[\Big(1+\frac{1}{4(q-1)}\Big)^{\frac{1}{2}}- 1 \bigg]
    \]
    and that $L(T^2+Th) \leq \frac{1}{12}$. Then for any $x,y \in \R^d$ and $1<q< \infty$
\begin{align*}
        \sfR_q ((\Phi_{x,y})_\# \gammad \dvert \gammad) &\leq \sfp_{xy}^2\|x-y\|^2(1+3(q-1)(h\sfp_v+1)^2) + \sqrt{d}\,\|x-y\|(6\sqrt{d}\,\sfj_{xy} + \sfp_{xy}(h \sfp_v+1))\\
        &+ h^2\sfp_x^2\|x\|^2(1+3(q-1)(h\sfp_v+1)^2) + h\sqrt{d}\, \|x\|(6\sqrt{d}\, \sfj_x + \sfp_x(h \sfp_v +1))\\
        &+dh(6 \sfj_c + 6\sqrt{d}\, \sfj_v + 108(q-1)dh \sfj_v^2 + h \sfp_v^2 +2 \sfp_v)\,.
    \end{align*}

\end{lemma}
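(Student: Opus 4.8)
The plan is to evaluate $\sfR_q((\Phi_{x,y})_\#\gammad \dvert \gammad)$ by a change of variables and then reduce the whole estimate to an explicit Gaussian moment, feeding in the \emph{first}-order pointwise and Jacobian estimates of Lemmas~\ref{lem:TargetMap-First-PtWise} and~\ref{lem:TargetMap-First-Jacobian} (rewritten through Definition~\ref{def:Constants}). First, write $\Phi:=\Phi_{x,y}$ and note that Lemma~\ref{lem:TargetMap-First-Jacobian} gives $\|\nabla\Phi(v)-I\|_\op\le\tfrac{15}{18}<1$ for all $v$, so $\nabla\Phi(v)$ is invertible with positive determinant; combined with $\|\Phi(v)\|\ge\|v\|-\|\Phi(v)-v\|\to\infty$ as $\|v\|\to\infty$ (Lemma~\ref{lem:TargetMap-First-PtWise} bounds $\|\Phi(v)-v\|$ by a function linear in $\|v\|$ with small slope $h\sfp_v$), $\Phi$ is a proper local diffeomorphism, hence a diffeomorphism of $\R^d$. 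Therefore $(\Phi)_\#\gammad\ll\gammad$, and the substitution $w=\Phi(v)$ yields
\[
e^{(q-1)\sfR_q((\Phi)_\#\gammad \dvert \gammad)} = \E_{v\sim\gammad}\!\left[\exp\!\bigl((q-1)G(v)\bigr)\right], \qquad G(v):=\tfrac12\bigl(\|\Phi(v)\|^2-\|v\|^2\bigr)-\log\det\nabla\Phi(v).
\]

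Next I would bound $G(v)$ pointwise. From Lemma~\ref{lem:TargetMap-First-PtWise} and the triangle inequality, $\|\Phi(v)\|\le(1+h\sfp_v)\|v\|+\sfp_{xy}\|x-y\|+h\sfp_x\|x\|$; squaring, subtracting $\|v\|^2$, and using $(a+b)^2\le2a^2+2b^2$ gives
\[
\tfrac12\bigl(\|\Phi(v)\|^2-\|v\|^2\bigr)\le\tfrac{(1+h\sfp_v)^2-1}{2}\|v\|^2+(1+h\sfp_v)\bigl(\sfp_{xy}\|x-y\|+h\sfp_x\|x\|\bigr)\|v\|+\sfp_{xy}^2\|x-y\|^2+h^2\sfp_x^2\|x\|^2 .
\]
For the determinant term, with $E:=\nabla\Phi(v)-I$ and $\|E\|_\op\le\tfrac{15}{18}$ one has $-\log\det(I+E)\le\tfrac{d\|E\|_\op}{1-\|E\|_\op}\le6d\|E\|_\op$, so Lemma~\ref{lem:TargetMap-First-Jacobian} gives $-\log\det\nabla\Phi(v)\le6d\bigl(\sfj_{xy}\|x-y\|+h\sfj_c+h\sfj_v\|v\|+h\sfj_x\|x\|\bigr)$. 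Adding the two bounds produces $G(v)\le A_0+A_1\|v\|+A_2\|v\|^2$, where $A_2=\tfrac{(1+h\sfp_v)^2-1}{2}$, $A_1=(1+h\sfp_v)(\sfp_{xy}\|x-y\|+h\sfp_x\|x\|)+6dh\sfj_v$, and $A_0=\sfp_{xy}^2\|x-y\|^2+h^2\sfp_x^2\|x\|^2+6d(\sfj_{xy}\|x-y\|+h\sfj_c+h\sfj_x\|x\|)$.

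Then I would bound the Gaussian moment. By Cauchy--Schwarz, $\E_{v\sim\gammad}[e^{(q-1)G(v)}]\le e^{(q-1)A_0}\E_{v\sim\gammad}[e^{2(q-1)A_1\|v\|}]^{1/2}\E_{v\sim\gammad}[e^{2(q-1)A_2\|v\|^2}]^{1/2}$. The stated hypothesis $0\le h\le\tfrac{25T}{7}[(1+\tfrac{1}{4(q-1)})^{1/2}-1]$ is exactly $(1+h\sfp_v)^2\le1+\tfrac{1}{4(q-1)}$ (recall $\sfp_v=\tfrac{7}{25T}$), so $(q-1)A_2\le\tfrac18$ and $2(q-1)A_2\le\tfrac14<\tfrac12$; hence $\E[e^{2(q-1)A_2\|v\|^2}]=(1-4(q-1)A_2)^{-d/2}$ is finite with $(1-4(q-1)A_2)^{-1}\le2$. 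For the linear part, $v\mapsto\|v\|$ is $1$-Lipschitz with $\E_{\gammad}\|v\|\le\sqrt d$, so $\E[e^{2(q-1)A_1\|v\|}]\le e^{2(q-1)A_1\sqrt d+2(q-1)^2A_1^2}$. Taking $\tfrac1{q-1}\log$, applying $-\log(1-t)\le\tfrac{t}{1-t}$, and expanding $A_1^2\le3[(1+h\sfp_v)^2\sfp_{xy}^2\|x-y\|^2+(1+h\sfp_v)^2h^2\sfp_x^2\|x\|^2+36d^2h^2\sfj_v^2]$ together with $2dA_2=d(2h\sfp_v+h^2\sfp_v^2)$ reproduces, term by term, the claimed inequality.

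The main obstacle will be controlling the $\|v\|^2$ term of $G(v)$, whose coefficient is $A_2=\tfrac{(1+h\sfp_v)^2-1}{2}$. Only the first-order estimate of $\Phi_{x,y}$ can be used here: the second-order estimates of Lemmas~\ref{lem:TargetMapPtWise}--\ref{lem:TargetMapJacobian} carry a term quadratic in $\|v\|$ inside $\|\Phi(v)-v\|$, which would force an $\E_{v\sim\gammad}[e^{\lambda\|v\|^4}]$-type moment, infinite for every $\lambda>0$. Consequently $\|\Phi(v)-v\|$ inevitably contains $h\sfp_v\|v\|$, and finiteness of the Gaussian $\chi^2$-moment requires $(q-1)A_2$ to remain strictly below $\tfrac14$ — which is precisely what the upper bound on $h$ buys. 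The remaining work is bookkeeping: keeping the linear-in-$\|v\|$ and quadratic-in-$\|v\|$ parts separated through Cauchy--Schwarz, and tracking the $d$ versus $\sqrt d$ scaling (the $\|v\|$ terms acquire $\sqrt d$ from $\E\|v\|$, the determinant term an extra factor $d$) so that the constants collapse to those of Definition~\ref{def:Constants}.
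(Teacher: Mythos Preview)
Your proposal is correct and follows essentially the same route as the paper's proof: the same change-of-variables identity for $\sfR_q((\Phi_{x,y})_\#\gammad\dvert\gammad)$, the same pointwise bound $G(v)\le A_0+A_1\|v\|+A_2\|v\|^2$ built from Lemmas~\ref{lem:TargetMap-First-PtWise}--\ref{lem:TargetMap-First-Jacobian} (your $A_i$ coincide with the paper's $\sfc_i$, noting $(1+h\sfp_v)^2-1=h\sfp_v(h\sfp_v+2)$), the same Cauchy--Schwarz split into $\|v\|$ and $\|v\|^2$ Gaussian moments, and the same step-size constraint arising from $(q-1)A_2\le\tfrac18$. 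The final term-by-term simplification via $(a+b+c)^2\le 3(a^2+b^2+c^2)$ and $-\log(1-t)\le 2t$ also matches.
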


We prove Lemma~\ref{lem:Renyi_Target_OneStepGeneral} in Section~\ref{app:PfOf_Renyi_Target_OneStepGeneral}.

\begin{lemma}\label{lem:Renyi_Target_OneStepGeneral}
    Let $\mu, \pi \in \P(\R^d)$, suppose Assumption~\ref{assumption:potential}(a)-(c) holds, and recall the constants defined in Definition~\ref{def:Constants}. Further suppose
    \[
    0 \leq h \leq \frac{25T}{7} \bigg[\Big(1+\frac{1}{4(q-1)}\Big)^{\frac{1}{2}}- 1 \bigg]
    \]
    and that $L(T^2+Th) \leq \frac{1}{12}$.
    Then for any $1<q< \infty$
    \[
    \sfR_q(\mu \tilde \bP_{T,h} \dvert \pi \bP_T) \leq \inf_{\gamma \in \Gamma(\mu, \pi)} \frac{1}{q-1} \log \E_{(x,y) \sim \gamma} \left[ e^{(q-1)(\alpha_0 + \alpha_1 \|x-y\| + \alpha_2 \|x-y\|^2 + \alpha_3 \|x\| + \alpha_4 \|x\|^2)} \right]
    \]
    where
    \begin{align*}
        \alpha_0 &\coloneqq dh(6 \sfj_c + 6\sqrt{d}\, \sfj_v + 108(q-1)dh \sfj_v^2 + h \sfp_v^2 +2 \sfp_v)\\
        \alpha_1 &\coloneqq \sqrt{d}\,(6\sqrt{d}\,\sfj_{xy} + \sfp_{xy}(h \sfp_v+1))\\
        \alpha_2 &\coloneqq \sfp_{xy}^2(1+3(q-1)(h\sfp_v+1)^2)\\
        \alpha_3 &\coloneqq h\sqrt{d}\, (6\sqrt{d}\, \sfj_x + \sfp_x(h \sfp_v +1))\\
        \alpha_4 &\coloneqq h^2\sfp_x^2(1+3(q-1)(h\sfp_v+1)^2)\,.
    \end{align*}
\end{lemma}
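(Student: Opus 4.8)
The plan is to mirror the KL argument used in the proof of Lemma~\ref{lem:KL_Target_OneStepGeneral}: upgrade the pointwise cross-regularization estimate from Dirac initializations to arbitrary initializations via the joint-convexity structure of R\'enyi divergence. All of the analytic content — the explicit constants and the admissible range of $h$ — has already been extracted in Lemma~\ref{lem:Renyi_Target_OneStepRegularity}, so the present lemma is essentially an assembly step.

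Concretely, I would proceed in two steps. \emph{Step 1 (Dirac-to-Dirac bound).} Fix $x,y\in\R^d$. Combining Lemma~\ref{lem:DiracToPerturbed_Target} with Lemma~\ref{lem:Renyi_Target_OneStepRegularity} — both valid under Assumption~\ref{assumption:potential}(a)-(c), the stated bound on $h$, and $L(T^2+Th)\le\frac1{12}$ — yields
\[
\sfR_q(\delta_x\tilde\bP_{T,h}\dvert\delta_y\bP_T)\;\le\;\sfR_q\big((\Phi_{x,y})_\#\gammad\dvert\gammad\big)\;\le\;g(x,y),
\]
where, reading off the right-hand side of Lemma~\ref{lem:Renyi_Target_OneStepRegularity} together with Definition~\ref{def:Constants},
\[
g(x,y)\;\coloneqq\;\alpha_0+\alpha_1\|x-y\|+\alpha_2\|x-y\|^2+\alpha_3\|x\|+\alpha_4\|x\|^2,
\]
with $\alpha_0,\dots,\alpha_4$ exactly the constants in the statement. \emph{Step 2 (convexity lifting).} Fix a coupling $\gamma\in\Gamma(\mu,\pi)$. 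Since the $x$- and $y$-marginals of $\gamma$ are $\mu$ and $\pi$, we may write $\mu\tilde\bP_{T,h}=\int\delta_x\tilde\bP_{T,h}\,d\gamma(x,y)$ and $\pi\bP_T=\int\delta_y\bP_T\,d\gamma(x,y)$ as mixtures over the \emph{common} index measure $\gamma$. Applying the R\'enyi one-step convexity lemma (Lemma~\ref{lem:RenyiConvexityOneStep}) with $\bP=\tilde\bP_{T,h}$ and $\bQ=\bP_T$ — which rests on joint convexity of $(\rho,\sigma)\mapsto\E_\sigma[(d\rho/d\sigma)^q]$, equivalently of $(\rho,\sigma)\mapsto e^{(q-1)\sfR_q(\rho\dvert\sigma)}$ — gives
\[
e^{(q-1)\sfR_q(\mu\tilde\bP_{T,h}\dvert\pi\bP_T)}\;\le\;\E_{(x,y)\sim\gamma}\!\big[e^{(q-1)\sfR_q(\delta_x\tilde\bP_{T,h}\dvert\delta_y\bP_T)}\big]\;\le\;\E_{(x,y)\sim\gamma}\!\big[e^{(q-1)g(x,y)}\big].
\]
Taking logarithms, dividing by $q-1$, and then the infimum over $\gamma\in\Gamma(\mu,\pi)$ gives the claimed bound.

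A few bookkeeping points should be checked. The convexity lemma is here applied with two \emph{distinct} kernels $\bP\neq\bQ$ — exactly what distinguishes cross-regularization from the mixing-time argument (where $\bP=\bQ=\tilde\bP_{T,h}$) — but its proof goes through without change, since it uses only that $\mu\tilde\bP_{T,h}$ and $\pi\bP_T$ are mixtures indexed by the same measure $\gamma$. No moment conditions on $\mu$ are needed: if $\E_{(x,y)\sim\gamma}[e^{(q-1)g(x,y)}]=+\infty$ for every coupling the bound is vacuous, and otherwise the interchange of expectation with the mixture inequality is routine (all integrands being nonnegative). Finally, the bound on $h$ and the condition $L(T^2+Th)\le\frac1{12}$ are inherited verbatim from Lemma~\ref{lem:Renyi_Target_OneStepRegularity}; they guarantee existence of $\Phi_{x,y}$ (Lemma~\ref{lem:ExistenceOfTargetMap}), validity of Lemma~\ref{lem:DiracToPerturbed_Target}, and convergence of the Gaussian moment integrals underlying Lemma~\ref{lem:Renyi_Target_OneStepRegularity}. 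I do not anticipate a genuine obstacle: the only real work lives upstream, in Lemma~\ref{lem:Renyi_Target_OneStepRegularity} and in the first-order regularity estimates for $\Phi_{x,y}$ (Lemmas~\ref{lem:TargetMap-First-PtWise} and~\ref{lem:TargetMap-First-Jacobian}); the present lemma is a measure-theoretic repackaging, and the one point that requires care is stating the mixture/convexity step cleanly.
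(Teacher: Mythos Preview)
Your proposal is correct and follows essentially the same route as the paper: apply Lemma~\ref{lem:RenyiConvexityOneStep} with $\bP=\tilde\bP_{T,h}$ and $\bQ=\bP_T$, then bound the Dirac-to-Dirac R\'enyi divergence via Lemma~\ref{lem:DiracToPerturbed_Target} and Lemma~\ref{lem:Renyi_Target_OneStepRegularity}. The paper's proof is stated more tersely (it cites only Lemmas~\ref{lem:RenyiConvexityOneStep} and~\ref{lem:Renyi_Target_OneStepRegularity}), but the logical content is identical to your two steps.
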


Lemma~\ref{lem:Renyi_Target_OneStepGeneral} together with Assumptions~\ref{assumption:OWMixing} and~\ref{assumption:OWBias} yields Theorem~\ref{thm:RenyiTarget}, proven in Section~\ref{app:PfOf_ThmRenyiTarget}.

\section{Discussion}\label{sec:Discussion}

In this work, we study the mixing time and asymptotic bias of \uhmc in KL divergence and Rényi divergence. 
To obtain our results, we use one-shot couplings to derive regularization and cross-regularization results for \uhmcv in KL and Rényi divergence, which enable us to upgrade Wasserstein guarantees to KL and Rényi divergence.

We now discuss our theoretical guarantees and their consequences, compare them with those of other sampling methods, notably the unadjusted Langevin Algorithm (uLA)~\cite{roberts1996exponential, roberts1998optimal, VW23}, and outline directions for future work.

We summarize gradient oracle complexities for different sampling methods to output a sample $X$ such that $\sfD(\law(X) \dvert \nu) \leq \epsilon$ in Table~\ref{table:complexity}, for the setting of strongly log-concave target distributions $\nu$, accuracy $\epsilon > 0$, and divergence $\sfD$; recall that we verify all of our assumptions for the strongly log-concave setting in Section~\ref{subsec:Examples}. Rows $1$--$6$ in Table~\ref{table:complexity} state guarantees for the \emph{fixed kernel scheme}, where the same Markov kernel is used throughout the sampling procedure, and Rows $7$--$10$ correspond to a \emph{composite kernel scheme}, corresponding to the modularity of our asymptotic bias bounds, where the first Markov kernel drives the Wasserstein contraction (e.g., \uhmcs in Row~7) and second Markov kernel enables upgrading the guarantee to a strong divergence via cross-regularization (e.g., \uhmcv in Row~7).

To the best of our knowledge, the only prior work studying guarantees strictly for \uhmcv in KL divergence is~\cite{camrud2023second} (Table~\ref{table:complexity}, Row~3), where they provide asymptotic bias guarantees in KL divergence under a log-Sobolev inequality assumption on the target distribution using hypocoercivity techniques. In comparison to this work, for the common setting of strongly log-concave target distributions, we obtain an improved gradient complexity in Corollary~\ref{cor:complexity-KL-verlet} (Table~\ref{table:complexity}, Row~2). Additionally, asymptotic bias bounds do not in general imply mixing time guarantees, and as far as we are aware, Theorem~\ref{thm:KLMixing} is the first mixing time guarantee for \uhmcv in KL divergence, which has several applications, e.g., studying the contraction of mutual information along the Markov chain (see Section~\ref{subsec:InformationContraction}). 
Apart from our mixing time and asymptotic bias bounds in Rényi divergence (Section~\ref{subsec:Main-Renyi}), we know of no guarantees for \ehmc or \uhmc that go beyond KL divergence. As described in Section~\ref{sec:Introduction}, our guarantees in Rényi divergence enable \uhmcv to generate warm starts for adjusted Markov chains (Table~\ref{table:complexity}, Row~4).
As highlighted in Section~\ref{subsec:MainResults}, analyzing the velocity Verlet integrator is challenging due to its structural complexities, and we find it worth noting that our gradient complexity for \uhmcv in KL divergence (Table~\ref{table:complexity}, Row~2) matches the prior best known complexity for \uhmcv in total variation distance (Table~\ref{table:complexity}, Row~1). Row~2 in Table~\ref{table:complexity} also indicates how \uhmcv surpasses the gradient complexity of uLA in KL divergence (Table~\ref{table:complexity}, Row~5) in both dimension and accuracy dependence; in fact, \uhmcv possesses the best accuracy dependence among all of the methods presented in Table~\ref{table:complexity} when the second order accuracy of the velocity Verlet integrator can be exploited (Table~\ref{table:complexity}, Rows~1~to~3).

\begin{table}[t]
\centering
\renewcommand{\arraystretch}{1.3}
\begin{tabular}{ccccc}
 & \textbf{{Algorithm}} & \textbf{{Gradient Complexity}} & \textbf{{Divergence}} ($\sfD$) & \textbf{{Reference}} \\
\cmidrule(lr){2-5}
& \underline{\emph{fixed kernel scheme}}\\
$1$ & \uhmcv & $d^{3/4}/\epsilon^{1/4}$ & $\TV^2$ & \cite{bou2023mixing} \\
$2$ & \uhmcv & $d^{3/4}/\epsilon^{1/4}$ & $\KL$ & Corollary~\ref{cor:complexity-KL-verlet} \\
$3$ & \uhmcv & $d/\epsilon^{1/4}$ & $\KL$ & \cite{camrud2023second} \\
$4$ & \uhmcv & $d^{3/2}/\epsilon$ & $\sfR_q$ & Corollary~\ref{cor:complexity-Renyi-verlet} \\
$5$ & uLA & $d/\epsilon$ & $\KL$ & \cite{VW23}\\
$6$ & uLA & $d/\epsilon$ & $\sfR_q$ & \cite{chewi2024analysis}\\
\cmidrule(lr){2-5}
& \underline{\emph{composite kernel scheme}}\\
$7$ & \uhmcs + \uhmcv  & $\max\{d^{2/3}/\epsilon^{1/3}, d^{3/4}/\epsilon^{1/4}\}$ & $\KL$ & Corollary~\ref{cor:complexity-KL-stratified}\\
$8$ & \uhmcv + uLA & $d^{5/8}/\epsilon^{3/8}$ & $\KL$ & Section~\ref{app:uLA-CR-KL}\\
$9$ & \uhmcs + uLA & $d^{1/2}/\epsilon^{1/2}$ & $\KL$ & Section~\ref{app:uLA-CR-KL}\\
$10$ & \uhmcv + uLA & $d^{3/4}/\epsilon^{3/4}$ & $\sfR_q$ & Section~\ref{app:uLA-CR-Renyi}
\end{tabular}
\caption{Gradient oracle complexities for outputting a sample $X$ such that $\sfD(\law(X) \dvert \nu) \leq \epsilon$ where $\nu$ is a strongly log-concave target distribution and $\sfD$ denotes a statistical distance or divergence. For ease of presentation, the Gradient Complexity column omits logarithmic factors and indicates dependence only on dimension $d$ and accuracy $\epsilon$. 
Rows~1--6 correspond to MCMC methods where a fixed kernel is used throughout the sampling procedure and Rows $7$--$10$ indicate the modular aspect of our framework and correspond to methods where the Wasserstein contraction is driven by the first Markov kernel and the cross-regularization by the second kernel, e.g., Row $7$ corresponds to the Wasserstein contraction driven by \uhmcs and the cross-regularization from \uhmcv. In Row~1, we state guarantees for squared total variation distance to put it on a comparable scale to KL and Rényi divergence --- Pinsker's inequality and~\eqref{eq:RenyiMonotonicity} implies $2\TV^2(\mu, \pi) \leq \KL(\mu \dvert \pi) \leq \sfR_q(\mu \dvert \pi)$ for $q>1$, $\mu, \pi \in \P(\R^d)$.}
\label{table:complexity}
\end{table}

A key aspect of our proof strategy to obtain mixing time and asymptotic bias bounds is the decoupling of the relevant Wasserstein assumptions from the regularization and cross-regularization properties of the kernel.
This decoupling is central to our modular framework, which allows asymptotic bias bounds to be established for composite kernel schemes in which the Wasserstein contraction is driven by one kernel, while the final cross-regularization step responsible for upgrading the guarantees to KL or Rényi divergence is implemented by a different kernel.
This modularity is particularly valuable in settings where establishing a cross-regularization property in KL or Rényi divergence for a given kernel is difficult, yet Wasserstein mixing and asymptotic bias bounds can still be derived.
To the best of our knowledge, the only prior method which extends Wasserstein guarantees to asymptotic bias bounds in stronger divergences is the local error framework studied in~\cite{altschuler2024shifted}. However, as the authors point out in~\cite[Section~A.2]{altschuler2024shifted}, the approach is currently limited to KL divergence as the Wasserstein--$\infty$ assumption in their framework for Rényi divergence is too restrictive to yield asymptotic bias bounds in Rényi divergence.
Rows~7--10 in Table~\ref{table:complexity} correspond to complexity guarantees derived using our framework where two different kernels are responsible for the Wasserstein contraction and the cross-regularization --- Rows~7--9 pertain to KL divergence and Row~10 to Rényi divergence.
Row~7 corresponds to the Wasserstein contraction governed by \uhmcs~\cite{bou2025unadjusted} and cross-regularization provided by \uhmcv, which we discuss in Section~\ref{subsubsection:Examples-stratified}.
We sketch the gradient complexities when the uLA kernel provides the cross-regularization in Appendix~\ref{app:uLA-CrossRegularization}, which correspond to the last three rows in Table~\ref{table:complexity}.
Guarantees corresponding to cross-regularization via uLA highlight that using \uhmc methods to drive the Wasserstein contraction shows significant improvements over using uLA throughout (Table~\ref{table:complexity}, Row~5), and that the cross-regularization properties of uLA enjoy milder dimension dependence than \uhmcv cross-regularization, making it appealing when dimension dependence is more important than accuracy or when guarantees are desired in Rényi divergence.
In future work, it would be interesting to characterize the gradient complexity guarantees that can be obtained by considering other Markov kernels, and also study how this framework interacts with other problem parameters such as the strong log-concavity and smoothness, which we do not focus on here.

Looking ahead, it would also be of interest to investigate if mixing time and asymptotic bias bounds can be obtained for \uhmc in $\Phi$-divergence~\cite{chafai2004entropies}, as would be obtaining Wasserstein--$p$ guarantees for \uhmc beyond strong log-concavity.
We note that we state Assumptions~\ref{assumption:W2Mixing} and~\ref{assumption:OWMixing} as exponentially fast contractions for convenience, however, our proof technique extends to Wasserstein contraction at weaker rates too, with that rate impacting the eventual KL or Rényi convergence.

\paragraph{Acknowledgements.} SM and AW are supported by NSF awards CCF-2403391 and CCF-2443097.

\newpage

\appendix

\section{Background on Hamiltonian Monte Carlo}\label{app:HMC-background}

We discuss several key properties of exact HMC~\cite{duane1987hybrid, neal2011mcmc} for sampling from a target probability distribution $\nu \propto e^{-f}$ supported on $\R^d$.
Throughout, we use the notation from Section~\ref{subsec:HMC}.

\begin{itemize}
    \item \textbf{Conservation of Hamiltonian.} Let $(x,v) \in \R^{2d}$ be some initial data. Then for any $t \geq 0$, $H(\HF_t(x,v)) = H(x,v)$ and therefore the Hamiltonian $H$ is conserved along the Hamiltonian dynamics~\eqref{eq:HamiltonianDynamics}. To see this, denote $(x_t, v_t) \equiv (x_t(x,v), v_t(x,v)) =  \HF_t(x,v)$. Then, using chain rule and the definition of Hamiltonian dynamics~\eqref{eq:HamiltonianDynamics}, we see that
    \[
    \frac{\d}{\dt} H(x_t, v_t) = \langle \nabla f(x_t), v_t \rangle + \langle v_t, -\nabla f(x_t) \rangle = 0 \,,
    \]
    and hence, the Hamiltonian is conserved along the Hamiltonian dynamics~\eqref{eq:HamiltonianDynamics}.
    \item \textbf{Symplecticity.} 
    Define the following $2d \times 2d$ matrix
    \[
    J \coloneq \begin{pmatrix}
        0 & I\\
        -I & 0
    \end{pmatrix}\,.
    \]
    The flow $\HF_t$ is symplectic if for all $(x,v) \in \R^{2d}$ and $t \geq 0$,
    \begin{equation}\label{eq:Symplectic}
    (\nabla \HF_t(x,v))^\top J \, \nabla \HF_t(x,v) = J. 
    \end{equation}
    Fix $(x,v) \in \R^{2d}$ and denote $M_t \coloneqq (\nabla \HF_t(x,v))^\top J \, \nabla \HF_t(x,v)$. Note that $M_0 = J$. It can be checked that $\dot M_t = 0$ and therefore, for all $t \geq 0$, $M_t = M_0 = J$. Hence the symplecticity condition~\eqref{eq:Symplectic} is true for all $(x,v) \in \R^{2d}$ and $t \geq 0$.

    An important consequence of the symplectic structure is that the Hamiltonian flow preserves volume, i.e., $\det (\nabla \HF_t(x,v)) = 1$ for all $(x,v) \in \R^{2d}$ and $t \geq 0$. To see this, fix $(x,v) \in \R^{2d}$ and take determinants on both sides of~\eqref{eq:Symplectic} to obtain $(\det (\nabla \HF_t(x,v)))^2 = 1$. Hence, $\det (\nabla \HF_t(x,v)) = \pm 1$. As $t \mapsto \det (\nabla \HF_t(x,v))$ is continuous and $\det (\nabla \HF_0(x,v)) = 1$, $\det (\nabla \HF_t(x,v)) = 1$ for all $t \geq 0$.
    \item \textbf{Time-reversal.} Let the solution along Hamiltonian dynamics~\eqref{eq:HamiltonianDynamics} at time $t \geq 0$ when starting from $(x,v)$ be $(x_t(x,v), v_t(x,v))$. Then the Hamiltonian dynamics satisfies the following time reversal property: For $t \geq 0$,
    \[
    \HF_t(x_t(x,v), -v_t(x,v)) = (x, -v)\,.
    \]
    \item \textbf{Stationary distribution.} We want to show that the target distribution $\nu$ is stationary for the \eHMC kernel $\bP_T$, i.e., $\nu \bP_T = \nu$. Our proof will use the previously mentioned properties of Hamiltonian dynamics.
    
    Let $\nu_{\sfP}(x,v) \propto  e^{-H(x,v)} \in \P(\R^{2d})$ denote the joint target distribution supported on phase space $\R^{2d}$, and therefore, $\nu_\sfP = \nu \otimes \gammad$. Showing that $\nu_\sfP$ is invariant under one step of exact HMC on phase space implies that the target distribution $\nu$ is stationary for the \eHMC kernel $\bP_T$ on position space. Recall that exact HMC consists of (i) fresh velocity refreshment drawn from $\gammad$ (independent of position) and (ii) evolution along Hamiltonian flow. Step (i) does not affect the position marginal and samples the velocity from $\gammad$, which is the velocity marginal of $\nu_\sfP$, therefore $\nu_\sfP$ is invariant under step (i). Thus it remains to check that $\nu_\sfP$ is unchanged under Hamiltonian flow, i.e., $(\HF_t)_\# \nu_\sfP = \nu_{\sfP}$.

    Let $(x,v) \in \R^{2d}$ and define $(x_t, v_t) \equiv (x_t(x,v), v_t(x,v)) =  \HF_t(x,v)$. 
    Let $ \tilde \nu_\sfP \coloneqq (\HF_t)_\# \nu_\sfP$. Then, by the change of variable formula we have that
    \[
    \nu_\sfP(x,v) = \tilde \nu_\sfP(\HF_t(x,v))\, \big| \det (\nabla \HF_t(x,v))\big| = \tilde \nu_\sfP(\HF_t(x,v))\,,
    \]
    where the last equality is because of the volume preservation under Hamiltonian flow. So, it follows that
    \[
    \tilde \nu_\sfP(\HF_t(x,v)) = \nu_\sfP(x,v) = Z^{-1}e^{-H(x,v)} = Z^{-1}e^{-H(\HF_t(x,v))}
    \]
    where the final equality is due to conservation of the Hamiltonian along the Hamiltonian dynamics, and $Z$ is some normalizing constant. Therefore, we conclude that $\tilde \nu_\sfP = \nu_\sfP$.

\end{itemize}

Unadjusted HMC with the velocity Verlet integrator also has the symplecticity and time-reversal properties, however, it does not conserve the Hamiltonian and consequently leads to a biased stationary distribution.

\section{Regularization Properties of Probability Kernels}\label{app:Regularization}

Let $\bP$ be a one-step probability kernel acting on distributions $\mu \in \P(\R^d)$ via
\[
(\mu \bP) (x) = \int \mu(z) \bP(z, x) \d z
\]
and acting on a real valued functions or observables $f : \R^d \to \R$ via
\[
(\bP f) (x) = \int f(z) \bP(x, z) \d z\,.
\]
In this way, transition kernels act both on distributions and functions to output distributions and functions respectively.

Throughout the section, we use $f,g$ to denote real valued functions and $\mu, \pi$ to denote probability distributions. We also denote the expectation of a function $f$ under distribution $\mu$ as
\[
\mu(f) = \int f(x) \mu(x) \dx\,.
\]
By the above definitions and Fubini's theorem,
\[
(\mu \bP)(f) = \int f(z) \Big( \int \mu(x) \bP(x,z) \dx \Big) \d z = \int \mu(x) \Big( \int f(z) \bP(x,z) \d z \Big) \dx = \int \mu(x) (\bP f)(x) \dx = \mu (\bP f)\,,
\]
and therefore for a transition kernel $\bP$, probability distribution $\mu$, and function $f$,
\begin{equation}\label{eq:SemigroupIdentity}
(\mu \bP)(f) = \mu (\bP f)\,.
\end{equation}

\subsection{Wasserstein--1 to TV}

In this section we study~\eqref{eq:Regularization} for the case where $\sfD = \TV$ and $W = W_1$. The total variation (TV) distance between $\mu, \pi$ is defined as
\[
\TV (\mu, \pi) = \frac{1}{2} \int | \mu(x) - \pi(x)| \dx = \frac{1}{2} \sup_{\|f\|_\infty \leq 1} \Big| \int f(x) (\mu(x) - \pi(x))  \dx \Big|
\]
where $\|f\|_\infty = \sup_{x \in \R^d}|f(x)|$\,. 
Let
\[
\|f\|_\Lip = \sup_{x \neq y} \frac{|f(x)-f(y)|}{\|x-y\|}\,.
\]
Now suppose that~\eqref{eq:Regularization} is true for $\sfD = \TV$ and $W = W_1$, i.e.,
\begin{equation}\label{eq:RegularizationTV}
\TV(\mu \bP, \pi \bP) \leq C\, W_1(\mu, \pi)
\end{equation}
holds for a kernel $\bP$ and for any distributions $\mu, \pi$. We show an implication of the regularization~\eqref{eq:RegularizationTV} in the following lemma.

\begin{lemma}
    Suppose~\eqref{eq:RegularizationTV} holds for a Markov kernel $\bP$ and for any two distributions $\mu, \pi \in \P(\R^d)$. Let $f$ be a real valued function such that $\|f\|_\infty < \infty$. Then
    \begin{equation}\label{eq:BdToLipRegularization}
    \|\bP f\|_\Lip \leq 2C\, \|f\|_\infty\,.
    \end{equation}
\end{lemma}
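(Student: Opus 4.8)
The plan is to test the regularization hypothesis~\eqref{eq:RegularizationTV} against Dirac initializations and then convert the resulting total variation bound into a Lipschitz estimate using the dual representation of $\TV$ together with the semigroup identity~\eqref{eq:SemigroupIdentity}.

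First I would fix two arbitrary points $x,y\in\R^d$ and apply~\eqref{eq:RegularizationTV} with $\mu=\delta_x$ and $\pi=\delta_y$. Since $W_1(\delta_x,\delta_y)=\|x-y\|$ (the only coupling of two Dirac masses is the product), this yields
\[
\TV(\delta_x\bP,\delta_y\bP)\le C\,\|x-y\|.
\]
Next I would observe, via~\eqref{eq:SemigroupIdentity}, that $(\bP f)(x)=(\delta_x\bP)(f)$ and $(\bP f)(y)=(\delta_y\bP)(f)$, so that
\[
(\bP f)(x)-(\bP f)(y)=\int f(z)\,\bigl(\delta_x\bP-\delta_y\bP\bigr)(z)\,\dz.
\]
Then I would invoke the variational characterization of total variation, $\TV(\rho,\sigma)=\tfrac12\sup_{\|g\|_\infty\le1}\bigl|\int g\,(\rho-\sigma)\bigr|$, which for a general (possibly unbounded-normalized) $f$ with $\|f\|_\infty<\infty$ gives the bound $\bigl|\int f\,(\rho-\sigma)\bigr|\le 2\|f\|_\infty\,\TV(\rho,\sigma)$. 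Applying this with $\rho=\delta_x\bP$ and $\sigma=\delta_y\bP$ and combining with the previous display,
\[
|(\bP f)(x)-(\bP f)(y)|\le 2\|f\|_\infty\,\TV(\delta_x\bP,\delta_y\bP)\le 2C\,\|f\|_\infty\,\|x-y\|.
\]
Dividing by $\|x-y\|$ and taking the supremum over $x\ne y$ yields $\|\bP f\|_{\Lip}\le 2C\,\|f\|_\infty$, which is~\eqref{eq:BdToLipRegularization}.

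There is no substantive obstacle here: the argument is a direct two-line reduction, and the only points requiring a word of care are (i) noting that the Wasserstein-1 distance between Dirac masses is exactly the Euclidean distance, so that the hypothesis specializes cleanly, and (ii) using the factor-of-$2$ form of the $\TV$ dual bound for a bounded test function $f$ (rather than for one normalized to $\|f\|_\infty\le 1$), which accounts for the constant $2C$ in the conclusion.
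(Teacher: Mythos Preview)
Your proof is correct and follows essentially the same approach as the paper: both use the dual characterization of $\TV$ together with the semigroup identity~\eqref{eq:SemigroupIdentity} and specialize to Dirac initializations, the only difference being the order in which these steps are applied (you specialize to Diracs first and then invoke the $\TV$ bound, while the paper does the reverse and rescales at the end).
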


\begin{proof}
By the definition of TV distance,
\begin{align*}
    \TV(\mu \bP, \pi \bP) &= \frac{1}{2} \sup_{\|g\|_\infty \leq 1} \Big| \int g(x) ((\mu \bP)(x) - (\pi\bP)(x))  \dx \Big|\\
    &= \frac{1}{2} \sup_{\|g\|_\infty \leq 1} \big| \mu(\bP g) - \pi(\bP g) \big|\\
    &\geq \frac{1}{2} \big| \mu(\bP g) - \pi(\bP g) \big|
\end{align*}
where the second equality holds due to~\eqref{eq:SemigroupIdentity}, and the final claim is therefore
\[
\big| \mu(\bP g) - \pi(\bP g) \big| \leq 2 \TV(\mu\bP, \pi \bP)\,,
\]
for any bounded function $g$ such that $\|g\|_\infty \leq 1$. Using~\eqref{eq:RegularizationTV}, we obtain
\[
\big| \mu(\bP g) - \pi(\bP g) \big| \leq 2C \, W_1(\mu, \pi)
\]
for any $\mu, \pi \in \P(\R^d)$.
Let $x, y \in \R^d$ be arbitrary such that $x \neq y$. Selecting $\mu = \delta_x$ and $\pi = \delta_y$ implies that $W_1(\delta_x, \delta_y) = \|x-y\|$ and consequently,
\[
\big| (\bP g)(x) - (\bP g)(y)\big| \leq 2C\, \|x-y\|\,.
\]
Hence for any $g$ such that $\|g\|_\infty < 1$ we conclude that
\[
\|\bP g\|_\Lip \leq 2C\,.
\]
For a function $f$ such that $\|f\|_\infty < \infty$, the above argument applies to $g = \frac{f}{\|f\|_\infty}$ and we can conclude that for all bounded functions $f$,
\begin{equation*}
\|\bP f\|_\Lip \leq 2C\, \|f\|_\infty\,.
\end{equation*}
This completes the proof.
\end{proof}

Hence we can see that under~\eqref{eq:RegularizationTV}, the Markov kernel regularizes bounded functions to Lipschitz functions. The converse also holds, i.e., we can conclude~\eqref{eq:RegularizationTV} from~\eqref{eq:BdToLipRegularization}, but we do not specify that here.

\subsection{Wasserstein--2 to KL}\label{app:W2ToKL}

Consider~\eqref{eq:Regularization} with $\sfD = \KL$ and $W = W_2^2$, i.e., suppose that there exists a constant $C>0$ such that
\begin{equation}\label{eq:RegularizationKL}
    \KL(\mu \bP \dvert \pi \bP) \leq C \, W_2^2(\mu, \pi)
\end{equation}
holds for all $\mu, \pi \in \P(\R^d)$\,.
The Donsker-Varadhan variational representation for KL divergence states that for all $\mu, \pi \in \P(\R^d)$,
\begin{equation}\label{eq:DV}
\KL(\pi_1 \dvert \pi_2) = \sup_{f: \R^d \to \R} \Big( \E_{\pi_1} [f] - \log \E_{\pi_2}[e^f] \Big)\,,
\end{equation}
where the supremum is taken over all measurable functions.  Hence,~\eqref{eq:DV} applied to~\eqref{eq:RegularizationKL} along with~\eqref{eq:SemigroupIdentity} implies that for all $f$,
\[
\mu (\bP f) - \log \pi(\bP e^f) \leq C\, W_2^2(\mu, \pi)\,.
\]
In particular, taking $\mu = \delta_x$, $\pi = \delta_y$, and $f = \log g$ for a function $g$ implies that
\begin{equation}\label{eq:logHarnack}
(\bP\log g)(x) \leq \log (\bP g)(y) + C\, \|x-y\|^2\,.
\end{equation}
For $x=y$,~\eqref{eq:logHarnack} reduces to Jensen's inequality for the concave function $\log$. For $x \neq y$,~\eqref{eq:logHarnack} is a log-Harnack inequality~\cite{wang1997logarithmic} and quantifies the one-step entropic smoothing~\eqref{eq:RegularizationKL} in terms of observables; see~\cite{altschuler2023shifted} for a detailed discussion on the duality between regularization~\eqref{eq:RegularizationKL} and Harnack inequalities~\eqref{eq:logHarnack}, along with analogous statements for Rényi divergence.

\section{Regularization via One-shot Couplings}\label{app:RegularityProofs}

\begin{proof}[Proof of Lemma~\ref{lem:DiracToPerturbed_Mixing}]
    By definition of the \uHMC transition kernel, we have the following
    \begin{align*}
        \sfD(\delta_x \tilde \bP_{T,h} \dvert \delta_y \tilde \bP_{T,h}) &\stackrel{\eqref{eq:uHMCv_update}}= \sfD (\law (\tilde q_{T,h}(x, \xi)) \dvert \law (\tilde q_{T,h}(y, \xi)))\\
        &\stackrel{\eqref{eq:MixingMap}}= \sfD (\law (\tilde q_{T,h}(y, \varphi_{x,y}(\xi))) \dvert \law (\tilde q_{T,h}(y, \xi)))\\
        &\leq \sfD (\law (\tilde \HF_{T,h}(y, \varphi_{x,y}(\xi))) \dvert \law (\tilde \HF_{T,h}(y, \xi)))\\
        &= \sfD (\law (y, \varphi_{x,y}(\xi)) \dvert \law (y, \xi))\\
        &= \sfD (\law (\varphi_{x,y}(\xi)) \dvert \law (\xi))\\
        &= \sfD((\varphi_{x,y})_\# \gammad \dvert \gammad)
    \end{align*}
    where the inequality is because projection cannot increase divergence and the third equality is because $\tilde \HF_{T,h}$ is a bijective map.
    The assumptions in the statement ensure the existence and uniqueness of the map~\eqref{eq:MixingMap}, which are due to~\cite[Corollary~14]{bou2023mixing}.
\end{proof}

\begin{proof}[Proof of Lemma~\ref{lem:DiracToPerturbed_Target}]
    By definition of the \uHMC and \eHMC transition kernels, we have the following
    \begin{align*}
        \sfD(\delta_x \tilde \bP_{T,h} \dvert \delta_y  \bP_{T}) &\stackrel{\eqref{eq:eHMC_update}, \eqref{eq:uHMCv_update}}= \sfD (\law (\tilde q_{T,h}(x, \xi)) \dvert \law ( q_{T}(y, \xi)))\\
        &\stackrel{\eqref{eq:TargetMap}}= \sfD (\law ( q_{T}(y, \Phi_{x,y}(\xi))) \dvert \law ( q_{T}(y, \xi)))\\
        &\leq \sfD (\law ( \HF_{T}(y, \Phi_{x,y}(\xi))) \dvert \law ( \HF_{T}(y, \xi)))\\
        &= \sfD (\law (y, \Phi_{x,y}(\xi)) \dvert \law (y, \xi))\\
        &= \sfD (\law (\Phi_{x,y}(\xi)) \dvert \law (\xi))\\
        &= \sfD((\Phi_{x,y})_\# \gammad \dvert \gammad)
    \end{align*}
    where the inequality is because projection cannot increase divergence and the third equality is because $ \HF_{T}$ is a bijective map. The assumptions in the statement ensure the existence and uniqueness of the map~\eqref{eq:TargetMap}, which are due to Lemma~\ref{lem:ExistenceOfTargetMap}.
\end{proof}

\section{One-shot Coupling Map for Mixing Time}\label{app:MixingOneShotMap}

\begin{proof}[Proof of Lemma~\ref{lem:MixingMapPtWise}]
    The assumptions stated in Lemma~\ref{lem:MixingMapPtWise} satisfy the assumptions of~\cite[Lemma~25]{bou2023mixing} and hence~\cite[Lemma~25]{bou2023mixing} implies~\eqref{eq:MixingMapPtWise}.
\end{proof}

\begin{proof}[Proof of Lemma~\ref{lem:MixingMapJacobian}]
    The assumptions stated in Lemma~\ref{lem:MixingMapJacobian} satisfy the assumptions of~\cite[Lemma~26]{bou2023mixing}. Hence the $\frac{11}{2}MT^2\|x-y\|$ upper bound in~\eqref{eq:MixingMapJacobian} follows from~\cite[Lemma~26]{bou2023mixing}. It remains to obtain the (global) upper bound of $\frac{2}{9}$ as mentioned in~\eqref{eq:MixingMapJacobian}. 
    As the assumptions in Lemma~\ref{lem:MixingMapJacobian} imply those of~\cite[Lemma~26]{bou2023mixing}, the global analysis of~\cite[Lemma~26]{bou2023mixing} states
    \[
    \|\nabla \varphi_{x,y}(v)-I\|_\op \leq \frac{(12/5)LT^2}{1-(6/5)LT^2}. 
    \]
    Using $LT^2 \leq \frac{1}{12}$, one gets
    \[
    \|\nabla \varphi_{x,y}(v)-I\|_\op \leq \frac{2}{9}.
    \]
\end{proof}

\section{One-shot Coupling Map for Asymptotic Bias}\label{app:TargetMap}

Recall the one-shot map for obtaining bounds on the asymptotic bias~\eqref{eq:TargetMap}. For any $x,y \in \R^d$, it is a map $\Phi_{x,y}: \R^d \to \R^d$ such that for any $v \in \R^d$
\[
\tilde q_{T,h}(x,v) = q_T(y, \Phi_{x,y}(v))\,.
\]

We now illustrate how such a map can be obtained by composing the one-shot map for bounding mixing time~\eqref{eq:MixingMap} and another map $\phi_x$, defined as follows. For any $x \in \R^d$, let $\phi_x : \R^d \to \R^d$ be a map such that for any $v \in \R^d$
\begin{equation}\label{eq:BiasMap}
    \tilde q_{T,h}(x,v) = q_T (x, \phi_x(v))\,.
\end{equation}
The existence of $\phi_x$ satisfying~\eqref{eq:BiasMap} holds under $L$--smoothness of the potential and $LT^2 \leq \frac{2}{5}\pi^2$ due to~\cite[Corollary~14]{bou2023mixing}.

Given~\eqref{eq:MixingMap} and~\eqref{eq:BiasMap}, note the following
\begin{equation}\label{eq:TargetMapAsComposition}
\tilde q_{T,h}(x,v) \stackrel{\eqref{eq:BiasMap}}= q_T (x, \phi_x(v)) \stackrel{\eqref{eq:MixingMap}}= q_T (y, \varphi_{x,y}(\phi_x(v))) = q_T(y, \Phi_{x,y}(v))
\end{equation}
and therefore a map $\Phi_{x,y}$ satisfying~\eqref{eq:TargetMap} is $\Phi_{x,y} = \varphi_{x,y} \circ \phi_x$. Note that when using~\eqref{eq:MixingMap} in the sequence of equalities, we use it for $h =0$.

\begin{lemma}\label{lem:ExistenceOfTargetMap}
    For any $x,y \in \R^d$, a map $\Phi_{x,y} : \R^d \to \R^d$ satisfying~\eqref{eq:TargetMap} exists if the potential satisfies Assumption~\ref{assumption:potential}(b) and if $LT^2 \leq \frac{2}{5}\pi^2$.
\end{lemma}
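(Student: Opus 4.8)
The plan is to exhibit $\Phi_{x,y}$ explicitly as the composition $\Phi_{x,y} = \varphi_{x,y}\circ\phi_x$ announced in~\eqref{eq:TargetMapAsComposition}, and to verify that each of the two factors is well defined under the stated hypotheses. The verification of~\eqref{eq:TargetMap} is then purely formal: it is exactly the chain of equalities in~\eqref{eq:TargetMapAsComposition}, using $\tilde q_{T,h}(x,v) = q_T(x,\phi_x(v))$ from~\eqref{eq:BiasMap} followed by $q_T(x,\phi_x(v)) = q_T(y,\varphi_{x,y}(\phi_x(v)))$, where the latter is the defining relation~\eqref{eq:MixingMap} for the \emph{exact} flow, i.e.\ instantiated at step size $h=0$ (so that $\tilde q_{T,0} = q_T$). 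So the real content is existence of the two constituent maps.

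First I would invoke the existence of $\phi_x$ satisfying~\eqref{eq:BiasMap}. As noted right after that display, this is exactly \cite[Corollary~14]{bou2023mixing}: under $L$--smoothness of $f$ (Assumption~\ref{assumption:potential}(b)) and $LT^2 \le \frac{2}{5}\pi^2$, the Verlet position map $v\mapsto \tilde q_{T,h}(x,v)$ and the exact position map $v\mapsto q_T(x,v)$ are both bijections of $\R^d$ (they arise as the unique minimizers of a strongly convex discrete-time, respectively continuous-time, variational problem), so $\phi_x \coloneqq (q_T(x,\cdot))^{-1}\circ \tilde q_{T,h}(x,\cdot)$ is well defined. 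Note that this step uses the hypothesis for both $h>0$ (for $\tilde q_{T,h}$) and $h=0$ (for $q_T$); the bound $LT^2\le\frac{2}{5}\pi^2$ is the one required by \cite[Corollary~14]{bou2023mixing} in the continuous-time case, and it is the weakest of the constraints appearing in the paper, so no sharper step-size restriction is needed here.

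Second, I would invoke existence of $\varphi_{x,y}$ for the exact flow. By \cite[Corollary~14]{bou2023mixing} again, applied with $h=0$, under the same hypotheses $v\mapsto q_T(y,v)$ is a bijection of $\R^d$, hence there is a (unique) map $\varphi_{x,y}$ with $q_T(x,v) = q_T(y,\varphi_{x,y}(v))$ for all $v$, namely $\varphi_{x,y} = (q_T(y,\cdot))^{-1}\circ q_T(x,\cdot)$. Composing, $\Phi_{x,y} = \varphi_{x,y}\circ\phi_x = (q_T(y,\cdot))^{-1}\circ \tilde q_{T,h}(x,\cdot)$ is a well-defined map $\R^d\to\R^d$, and by construction $q_T(y,\Phi_{x,y}(v)) = \tilde q_{T,h}(x,v)$, which is~\eqref{eq:TargetMap}. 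This completes the proof.

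There is essentially no obstacle here beyond bookkeeping: the lemma is a soft consequence of the bijectivity of the (exact and discretized) position maps established in \cite{bou2023mixing}, and the only point requiring a moment's care is to apply that bijectivity result in the correct regime — $h>0$ for the Verlet map and the formal limit $h=0$ for the exact map — and to observe that both are covered by the single hypothesis $LT^2 \le \frac{2}{5}\pi^2$. (Uniqueness of $\Phi_{x,y}$, though not claimed in the statement, also follows from uniqueness of the two factors, and is what is actually used elsewhere; I would remark on it but it is not needed for the existence assertion.)
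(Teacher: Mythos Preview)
Your proposal is correct and follows essentially the same approach as the paper: construct $\Phi_{x,y}=\varphi_{x,y}\circ\phi_x$ via~\eqref{eq:TargetMapAsComposition} and invoke \cite[Corollary~14]{bou2023mixing} for the existence of each factor under Assumption~\ref{assumption:potential}(b) and $LT^2\le\frac{2}{5}\pi^2$. You add a little more detail than the paper (the explicit inverse formulas and the remark on uniqueness), but the argument is the same.
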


\begin{proof}
    From~\eqref{eq:TargetMapAsComposition} we know that for any $x,y \in \R^d$, a map $\Phi_{x,y}$ satisfying 
\[
\tilde q_{T,h}(x,v) = q_T(y, \Phi_{x,y}(v))
\]
is $\Phi_{x,y} = \varphi_{x,y} \circ \phi_x$ where $\varphi_{x,y}$ satisfies~\eqref{eq:MixingMap} and $\phi_x$ satisfies~\eqref{eq:BiasMap}. The existence of $\varphi_{x,y}$ and $\phi_x$ holds under $f$ being $L$--smooth and $LT^2 \leq \frac{2}{5}\pi^2$ due to~\cite[Corollary~14]{bou2023mixing}, thereby implying the existence of a map $\Phi_{x,y}$ satisfying~\eqref{eq:TargetMap} under the same conditions.
\end{proof}

Before proceeding to derive regularity bounds for the map $\Phi_{x,y}$, recall that pointwise and Jacobian regularity estimates for the map $\varphi_{x,y}$ are in Lemmas~\ref{lem:MixingMapPtWise} and~\ref{lem:MixingMapJacobian} respectively. Next we state similar bounds for the map $\phi_x$ satisfying~\eqref{eq:BiasMap}, from~\cite{bou2023mixing}.

\begin{lemma}\label{lem:BiasMapPtWise}
    Suppose Assumption~\ref{assumption:potential}(a)-(c) holds. Further suppose $h \geq 0$ and that $L(T^2+Th) \leq \frac{1}{6}$. Then for any $x,v \in \R^d$
    \[
    \|\phi_x(v)-v\| \leq 2h^2(LT^{-1}\|x\|+L\|v\|+MT^{-1}\|x\|^2+MT\|v\|^2)\,.
    \]
\end{lemma}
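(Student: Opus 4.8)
\textbf{Proof strategy for Lemma~\ref{lem:BiasMapPtWise}.}
The plan is to leverage the compositional structure already recorded in~\eqref{eq:TargetMapAsComposition}. Recall that the map $\phi_x$ satisfies $\tilde q_{T,h}(x,v) = q_T(x,\phi_x(v))$, so it transports an initial velocity for the \emph{discretized} flow at position $x$ to an initial velocity for the \emph{exact} Hamiltonian flow at the same position $x$, matching positions at time $T$. In the notation of this paper, $\phi_x = \Phi_{x,x}$; equivalently, $\phi_x$ is exactly the map appearing in the proof of the pointwise estimate for $\Phi_{x,y}$ once the Wasserstein displacement term $\|x-y\|$ is set to zero. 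The cleanest route is therefore to observe that the claimed bound is precisely the content of Lemma~25 (or the relevant displacement lemma) of~\cite{bou2023mixing}, restated in our notation and under our slightly different constants, specialized to the same-point case; the stated hypothesis $L(T^2+Th)\le\frac16$ matches the assumption under which that lemma is proved. So the bulk of the proof is simply to cite~\cite{bou2023mixing} and verify that the constants in the assumptions are compatible.

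If one instead wants a self-contained argument, I would proceed as follows. First, write out what $\phi_x(v)$ must satisfy: since both $q_T(x,\cdot)$ and $\tilde q_{T,h}(x,\cdot)$ are near-identity perturbations of the harmonic-oscillator position map $v\mapsto x\cos(\sqrt{\cdot}\,T)+\dots$ (more precisely, under $LT^2\le\frac25\pi^2$ the linearized position maps are invertible with quantitative bounds, by~\cite[Corollary~14]{bou2023mixing}), one can express $\phi_x(v)-v$ as the image under a bounded linear operator of the \emph{local discretization error} $q_T(x,v)-\tilde q_{T,h}(x,v)$ of the Verlet integrator. Second, invoke the local error bound for velocity Verlet — exactly the estimate~\eqref{eq:L23_statement}, i.e.,~\cite[Lemma~23]{bou2023mixing} — which gives $\|q_T(x,v)-\tilde q_{T,h}(x,v)\| \le h^2\big(\tfrac15 L\|x\| + \tfrac{9}{10}LT\|v\| + \tfrac1{120}M\|x\|^2 + \tfrac3{10}MT^2\|v\|^2\big)$. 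Third, the inverse of the linearized position map introduces a factor of order $T^{-1}$ (the position-to-velocity sensitivity of the flow over time $T$ scales like $1/T$, as reflected in the $3/(2T)$ constant in Lemma~\ref{lem:MixingMapPtWise}), so dividing the displacement bound by the appropriate time factor and absorbing the smallness condition $L(T^2+Th)\le\frac16$ into the constants yields the stated form $2h^2(LT^{-1}\|x\| + L\|v\| + MT^{-1}\|x\|^2 + MT\|v\|^2)$. Finally, a Gr\"onwall/fixed-point argument (or the strong-convexity of the variational problem in~\cite{bou2023mixing}) is needed to upgrade the linearized heuristic to the genuine nonlinear map $\phi_x$; this is where the restriction $L(T^2+Th)\le\frac16$ is used to close the contraction.

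The main obstacle is the last step: controlling the nonlinearity when inverting the relation $\tilde q_{T,h}(x,v) = q_T(x,\phi_x(v))$. One cannot simply linearize, because the Jacobian of $q_T(x,\cdot)$ varies over the trajectory; one must either set up a fixed-point iteration whose contraction constant is controlled by $L(T^2+Th)$, or appeal to the uniqueness and regularity of the minimizer of the discrete variational problem as in~\cite[Corollary~14]{bou2023mixing}. Given that~\cite{bou2023mixing} already carries out exactly this analysis, the honest and economical proof is to state that the hypotheses of Lemma~\ref{lem:BiasMapPtWise} imply those of~\cite[Lemma~25]{bou2023mixing}, and that the cited result yields the displayed bound after simplifying constants; I would write the proof in one or two lines along those lines, parallel to the proofs of Lemmas~\ref{lem:MixingMapPtWise} and~\ref{lem:MixingMapJacobian} given just above.
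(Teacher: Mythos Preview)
Your approach is essentially the same as the paper's: the result is simply a restatement, with simplified constants, of the corresponding lemma in~\cite{bou2023mixing}. The only correction needed is the citation --- the relevant result is~\cite[Lemma~27]{bou2023mixing}, not Lemma~25; Lemma~25 there is the pointwise bound for $\varphi_{x,y}$ (i.e., Lemma~\ref{lem:MixingMapPtWise} here), whereas Lemma~27 is the bound for $\phi_x$. Your self-contained sketch is accurate and in fact mirrors the structure of that proof: one first shows $\|\phi_x(v)-v\|\le \tfrac{7}{6T}\max_{s\le T}\|\tilde q_{s,h}(x,v)-q_s(x,v)\|$ via the invertibility of the linearized position map, and then applies the second-order local error bound~\eqref{eq:L23_statement} (\cite[Lemma~23]{bou2023mixing}).
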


\begin{lemma}\label{lem:BiasMapJacobian}
    Suppose Assumption~\ref{assumption:potential}(a)-(d) holds. Further suppose $h \geq 0$ and that $L(T^2+Th) \leq \frac{1}{6}$. Then for any $x,v \in \R^d$
    \[
    \|\nabla \phi_x(v)-I\|_\op \leq \min \left\{ \frac{1}{2}, 2h^2(L+M\|x\|+MT\|v\|+(M^2T^2+N)\|x\|^2+(M^2T^4+NT^2)\|v\|^2)   \right\}\,.
    \]
\end{lemma}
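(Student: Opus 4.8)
\textbf{Proof proposal for Lemma~\ref{lem:BiasMapJacobian}.}

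The plan is to bound $\|\nabla\phi_x(v)-I\|_{\op}$ by exploiting the defining relation $\tilde q_{T,h}(x,v) = q_T(x,\phi_x(v))$ from~\eqref{eq:BiasMap}, which (under the stated smoothness and the stability bound $LT^2\le\frac{2}{5}\pi^2$) determines $\phi_x$ as a smooth bijection. Differentiating in $v$ gives the operator identity
\[
\nabla_v \tilde q_{T,h}(x,v) \;=\; \bigl(\nabla_v q_T\bigr)\!\bigl(x,\phi_x(v)\bigr)\,\nabla\phi_x(v),
\]
so that
\[
\nabla\phi_x(v) \;=\; \bigl[(\nabla_v q_T)(x,\phi_x(v))\bigr]^{-1}\,\nabla_v \tilde q_{T,h}(x,v).
\]
The strategy is then to show the right-hand side is close to $I$ by writing it as $A^{-1}B$ with $A=(\nabla_v q_T)(x,\phi_x(v))$ and $B=\nabla_v\tilde q_{T,h}(x,v)$, and controlling $\|B-A\|_{\op}$ together with a lower bound on the smallest singular value of $A$. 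Both $A$ and $B$ are the velocity-block Jacobians of, respectively, the exact flow and the Verlet flow; each is invertible and well-conditioned under $L(T^2+Th)\le\frac16$ because the position map is a strongly convex variational minimizer in the velocity variable (this is exactly the mechanism used in~\cite[Corollary~14]{bou2023mixing} and underlying Lemmas~\ref{lem:MixingMapPtWise}--\ref{lem:MixingMapJacobian}). The difference $B-A$ measures how far the Verlet velocity-Jacobian is from the exact one at the appropriate base points; since Verlet is second-order accurate, $\|B-A\|_{\op}$ should carry an $h^2$ factor, and a careful expansion of the second-order (Störmer--Verlet / trapezoidal) local error — differentiated once in $v$ — produces the polynomial $L+M\|x\|+MT\|v\|+(M^2T^2+N)\|x\|^2+(M^2T^4+NT^2)\|v\|^2$, where the $\|x\|^2,\|v\|^2$ and $N$ terms arise from differentiating $\nabla^2 f$ and $\nabla^3 f$ along the trajectory and the $M^2T^2$ terms from iterating the first-order sensitivity equations. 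One then absorbs the (bounded) discrepancy between the base point $\phi_x(v)$ and $v$ in $A$ using Lemma~\ref{lem:BiasMapPtWise}, which costs only constants under $L(T^2+Th)\le\frac16$.

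Concretely, the steps I would carry out, in order, are: (i) record the chain-rule identity above and reduce to estimating $\|A^{-1}B - I\|_{\op}\le \|A^{-1}\|_{\op}\,\|B-A\|_{\op}$; (ii) establish the uniform bound $\|A^{-1}\|_{\op}\le \frac{1}{1-cLT^2}$ for an explicit small constant $c$, using the strong convexity of the discrete/continuous variational problems from~\cite{bou2023mixing}, so that under $LT^2\le\frac16$ one has $\|A^{-1}\|_{\op}\le$ a universal constant close to $1$; (iii) derive the $h^2$-accurate bound on $\|B-A\|_{\op}$ by Taylor-expanding the velocity-Verlet one-step map and its $v$-derivative against the exact flow and its $v$-derivative, integrating the resulting remainder over the $N=T/h$ steps and using the Grönwall-type stability of the sensitivity equations (this is where the polynomial in $\|x\|,\|v\|$ with coefficients $L,M,N$ emerges); (iv) combine (ii) and (iii) to get the $2h^2(L+M\|x\|+\cdots)$ bound; (v) separately note the unconditional bound $\tfrac12$: under $L(T^2+Th)\le\frac16$ the implicit-function/variational argument already forces $\|\nabla\phi_x(v)-I\|_{\op}$ below $\tfrac12$ regardless of $x,v$ (one can see this by combining the global parts of Lemmas~\ref{lem:MixingMapJacobian} and the exact-flow analogue, since $\phi_x = \varphi_{x,y}\circ(\text{stuff})$ composition-type bounds give a fixed fraction), and then take the minimum of the two estimates.

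The main obstacle I anticipate is step (iii): getting the \emph{second-order} ($h^2$) accuracy for the \emph{differentiated} map rather than just the map itself. Differentiating the Verlet update in $v$ and comparing with the variational equation of the exact flow requires keeping track of how $\nabla^2 f$ (and its derivatives $\nabla^3 f,\nabla^4 f$, whence the appearance of $M$ and $N$ in Assumption~\ref{assumption:potential}(c)-(d)) enter the local truncation error of the sensitivity dynamics, and then summing $N$ such local errors while controlling the accumulating factor via the stability estimate $h\sqrt{L}<2$. The bookkeeping is delicate because the trajectory positions $x_j$ and velocities $v_j$ grow (at worst) linearly in $j$, so the polynomial bound must be stated in terms of $\|x\|$ and $\|v\|$ with $T$-dependent coefficients; I would handle this by first proving a priori bounds $\|x_j\|\lesssim \|x\|+T\|v\|$, $\|v_j\|\lesssim \|v\|+T\cdot(\text{stuff})$ uniformly over the $N$ Verlet steps (again using $L(T^2+Th)\le\frac16$), substituting these into the per-step error, and only then summing. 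Everything else — the chain rule, the inverse bound, taking the minimum with $\tfrac12$ — is routine given the tools already developed in the paper and in~\cite{bou2023mixing}.
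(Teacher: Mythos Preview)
Your approach is essentially the same as the one the paper invokes: the paper does not give an independent proof of Lemma~\ref{lem:BiasMapJacobian} but simply cites \cite[Lemma~28]{bou2023mixing} with simplified constants, and that lemma proceeds exactly via the chain-rule identity $\nabla\phi_x(v)=A^{-1}B$, the decomposition $B-A=(B-A_0)+(A_0-A)$ with $A_0=\nabla_2 q_T(x,v)$, a second-order ($h^2$) Jacobian discretization bound for $B-A_0$, and Lemma~\ref{lem:BiasMapPtWise} for $A_0-A$.

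One correction to your step~(ii): since $q_T(x,v)=x+Tv-\int_0^T(T-s)\nabla f(q_s)\,ds$, one has $A=\nabla_v q_T(x,\phi_x(v))=TI+O(LT^3)$, so $\|A^{-1}\|_{\op}\asymp T^{-1}$, not $\frac{1}{1-cLT^2}\approx 1$. This is exactly what appears in the formula the paper quotes in the proof of the first-order analogue (Lemma~\ref{lem:BiasMap-First-Jacobian}):
\[
\|\nabla\phi_x(v)-I\|_{\op}\le \frac{10}{9T}\,\|\nabla_2\tilde q_{T,h}(x,v)-\nabla_2 q_T(x,v)\|_{\op}+\frac{14}{15}MT^3\,\|\phi_x(v)-v\|,
\]
where the prefactor $\frac{10}{9T}$ is your $\|A^{-1}\|_{\op}$ and the $MT^3$ coefficient encodes $\|A^{-1}\|_{\op}$ times the Lipschitz constant of $v\mapsto\nabla_2 q_T(x,v)$. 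The scaling mismatch is harmless in the end --- the $T^{-1}$ is compensated by an extra factor of $T$ in $\|B-A_0\|_{\op}$ that emerges once you carry out step~(iii) carefully --- but tracking it will make the intermediate bounds and the final $T$-dependence of the polynomial $L+M\|x\|+MT\|v\|+\cdots$ come out correctly.
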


Lemmas~\ref{lem:BiasMapPtWise} and~\ref{lem:BiasMapJacobian} correspond to Lemmas~27 and~28 from~\cite{bou2023mixing} respectively, but with simplified constants.
We can now obtain pointwise and Jacobian estimates for the map $\Phi_{x,y}$ satisfying~\eqref{eq:TargetMap}.

\begin{proof}[Proof of Lemma~\ref{lem:TargetMapPtWise}]
    Fix an arbitrary $v \in \R^d$. Note that
    \[
    \|\Phi_{x,y}(v)-v\| = \|\varphi_{x,y}(\phi_x(v)) - \phi_x(v)+\phi_x(v)-v\| \leq \|\varphi_{x,y}(\phi_x(v)) - \phi_x(v)\| + \|\phi_x(v)-v\|\,.
    \]
    The result now follows from Lemmas~\ref{lem:MixingMapPtWise} and~\ref{lem:BiasMapPtWise}.
\end{proof}

\begin{proof}[Proof of Lemma~\ref{lem:TargetMapJacobian}]
    Fix an arbitrary $v \in \R^d$. We know that
    \[
    \nabla \Phi_{x,y}(v) = \underbrace{\nabla \varphi_{x,y}(\phi_x(v))}_A\cdot \underbrace{\nabla\phi_x(v)}_B\,.
    \]
    Therefore
    \begin{align*}
        \|\nabla\Phi_{x,y}(v)-I\|_\op &= \|AB-I\|_\op\\
        &= \|A(B-I) + (A-I)\|_\op\\
        &\leq \|A\|_{\op}\|B-I\|_\op + \|A-I\|_\op\\
        &\leq (1+\|A-I\|_\op)\|B-I\|_\op + \|A-I\|_\op\\
        &= \|A-I\|_\op + \|B-I\|_\op + \|A-I\|_\op\|B-I\|_\op\,.
    \end{align*}
    We now use Lemmas~\ref{lem:MixingMapJacobian} and~\ref{lem:BiasMapJacobian} as they provide bounds on $\|A-I\|_\op$ and $\|B-I\|_\op$ respectively.
    Define
    \[
    Q(x,v) \coloneqq L+M\|x\|+MT\|v\|+(M^2T^2+N)\|x\|^2+(M^2T^4+NT^2)\|v\|^2\,.
    \]
    Then we have that $\|\nabla\Phi_{x,y}(v)-I\|_\op \leq$
    \[
    \min \left\{ \frac{2}{9}, \frac{11}{2}MT^2 \|x-y\|   \right\} + \min \left\{ \frac{1}{2}, 2h^2 Q(x,v)   \right\} + \min \left\{ \frac{1}{9}, \frac{4}{9}h^2 Q(x,v),  \frac{11}{4}MT^2 \|x-y\|  \right\}\,.
    \]
    Hence we have that
    \[
    \|\nabla\Phi_{x,y}(v)-I\|_\op \leq \min \left\{ \frac{15}{18}, \frac{11}{2}MT^2\|x-y\| + \frac{22}{9}h^2 Q(x,v) \right\}\,.
    \]

\end{proof}

\subsection{First-order Regularity}\label{app:TargetMap-First}

Recall from~\eqref{eq:TargetMapAsComposition} how a map $\Phi_{x,y}$ satisfying~\eqref{eq:TargetMap} can arise from the composition of maps $\varphi_{x,y}$ and $\phi_x$ satisfying~\eqref{eq:MixingMap} and~\eqref{eq:BiasMap} respectively. The quadratic dependence on $v$ in Lemmas~\ref{lem:BiasMapPtWise} and~\ref{lem:BiasMapJacobian} is therefore propagated through to the regularity estimates for $\Phi_{x,y}$ in Lemmas~\ref{lem:TargetMapPtWise} and~\ref{lem:TargetMapJacobian}. This leads to sharp bounds on the asymptotic bias of \uhmc in KL divergence (Theorem~\ref{thm:KLTarget}), however extending the proof to Rényi divergence is challenging as obtaining an analogue of Lemma~\ref{lem:KL_Target_OneStepRegularity} requires evaluating $\E_{v \sim \gammad}[e^{\lambda\|v\|^4}]$ which diverges for any $\lambda > 0$. 

The quadratic dependence on $v$ in Lemmas~\ref{lem:BiasMapPtWise} and~\ref{lem:BiasMapJacobian} arises due to a second-order trapezoidal approximation~\cite[Lemma~22]{bou2023mixing}; therefore, to obtain milder dependence on $v$ in the regularity estimates of the map $\phi_x$, we consider a first-order trapezoidal approximation and state the corresponding estimates in Lemmas~\ref{lem:BiasMap-First-PtWise} and~\ref{lem:BiasMap-First-Jacobian}. We then push this through to the map $\Phi_{x,y}$ in Lemmas~\ref{lem:TargetMap-First-PtWise} and~\ref{lem:TargetMap-First-Jacobian}. This leads to asymptotic bias bounds for \uhmc in Rényi divergence in Theorem~\ref{thm:RenyiTarget}.
First-order error bounds for the Verlet scheme required to prove Lemmas~\ref{lem:BiasMap-First-PtWise},~\ref{lem:BiasMap-First-Jacobian},~\ref{lem:TargetMap-First-PtWise}, and~\ref{lem:TargetMap-First-Jacobian} are in Section~\ref{app:First-order-error-Verlet}.

\begin{lemma}\label{lem:BiasMap-First-PtWise}
    Suppose Assumption~\ref{assumption:potential}(a)-(b) holds. Further suppose $h \geq 0$ and that $L(T^2+Th) \leq \frac{1}{6}$. Then for any $x,v \in \R^d$
    \[
    \|\phi_x(v)-v\| \leq \frac{7}{5}h \left[ \frac{1}{5}T^{-1}\|v\| + \frac{7}{36}L\|x\| \right].
    \]
\end{lemma}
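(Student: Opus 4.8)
\textbf{Proof proposal for Lemma~\ref{lem:BiasMap-First-PtWise}.}

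The plan is to estimate $\|\phi_x(v)-v\|$ by using the defining relation~\eqref{eq:BiasMap}, namely $\tilde q_{T,h}(x,v) = q_T(x,\phi_x(v))$, and the fact that $q_T(x,\cdot)$ is (locally) invertible with a controlled Jacobian. Concretely, write $w := \phi_x(v)$. Then $q_T(x,v)$ and $q_T(x,w)$ differ by exactly the Verlet-versus-exact-flow position error at time $T$, i.e.\ $q_T(x,w) - q_T(x,v) = \tilde q_{T,h}(x,v) - q_T(x,v)$. The right-hand side is the one-step discretization error of the position component, which under a first-order trapezoidal analysis (the first-order error bounds for the Verlet scheme advertised in Section~\ref{app:First-order-error-Verlet}) should be bounded by something of the form $c\,h\,(T^{-1}\|v\| + L\|x\|)$ — linear in $h$, with only linear dependence on $\|v\|$ (this is precisely the point of using the first-order rather than the second-order approximation). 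On the other side, I would use the sensitivity of the exact Hamiltonian flow in its velocity argument: for $LT^2$ small, $v \mapsto q_T(x,v)$ is a bi-Lipschitz map whose inverse has Lipschitz constant close to $T$ (since $q_T(x,v) \approx x + Tv$ to leading order; more precisely, $\partial_v q_T = T^{-1}\sin(\text{something}) \cdot T$ in the harmonic model, and the general bound follows from Grönwall applied to the variational equation under $L(T^2+Th)\le \frac16$). This gives $\|w - v\| \le (\text{Lip constant of } q_T(x,\cdot)^{-1} \text{ in position}) \cdot \|q_T(x,w)-q_T(x,v)\|$.

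The key steps, in order: (i) invoke the existence/uniqueness of $\phi_x$ from~\cite[Corollary~14]{bou2023mixing} under $LT^2 \le \frac25\pi^2$ (implied by $L(T^2+Th)\le\frac16$); (ii) rewrite $\phi_x(v)-v$ implicitly via $q_T(x,\phi_x(v)) - q_T(x,v) = \tilde q_{T,h}(x,v) - q_T(x,v)$; (iii) lower bound $\|q_T(x,w) - q_T(x,v)\|$ from below by a constant times $T\|w-v\|$ using a mean-value/variational-equation argument for the $v$-derivative of $q_T$, i.e.\ show $\partial_v q_T(x,\cdot) \succeq c_1 T\,I$ in operator norm sense on the relevant domain, where $c_1$ is close to $1$ when $L(T^2+Th)$ is small (this is where the $\frac1{5T}$-type constant will come from, after tracking the numerics); (iv) upper bound $\|\tilde q_{T,h}(x,v) - q_T(x,v)\|$ using the first-order Verlet error estimate from Section~\ref{app:First-order-error-Verlet}, obtaining a bound $\lesssim h(T^{-1}\|v\| + L\|x\|)$ with explicit constants; (v) combine (iii) and (iv) and simplify the constants under $L(T^2+Th)\le\frac16$ to reach the stated $\frac{7}{5}h\big[\frac15 T^{-1}\|v\| + \frac{7}{36}L\|x\|\big]$.

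The main obstacle will be step (iv): producing a clean first-order error bound for the position component of the Verlet integrator with the right form of dependence — linear in $h$, linear in $\|v\|$, linear in $\|x\|$ — and with constants tight enough that the final numerology closes to the claimed coefficients $\tfrac15$ and $\tfrac7{36}$. This requires carefully unwinding the Verlet recursion (Algorithm~\ref{alg:VerletFlow}), comparing it term-by-term with the exact flow via a trapezoidal-rule remainder, and summing the per-step errors over the $N = T/h$ steps using a discrete Grönwall argument; the danger is that a naive bound produces an extra factor of $T$ or an unwanted $\|v\|^2$ term, which is exactly why the lemma is stated with the weaker first-order approximation rather than the sharper second-order one used in Lemma~\ref{lem:BiasMapPtWise}. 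A secondary technical point is making sure the lower bound in step (iii) holds uniformly over the velocity $w = \phi_x(v)$, which itself depends on $v$; this is handled by noting $\phi_x$ maps into a ball whose radius is controlled once we have any a priori bound, so a short bootstrapping remark suffices.
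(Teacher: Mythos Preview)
Your proposal is essentially the same as the paper's proof. The paper invokes \cite[Lemma~27]{bou2023mixing} to get directly $\|\phi_x(v)-v\|\le \frac{7}{6T}\max_{s\le T}\|\tilde q_{s,h}(x,v)-q_s(x,v)\|$ (this packages your step~(iii), so no bootstrapping is needed), and then applies Lemma~\ref{lem:L23analogue} for your step~(iv); multiplying $\frac{7}{6T}$ by $h\big(\frac{6}{25}\|v\|+\frac{7}{30}LT\|x\|\big)$ gives exactly the stated constants.
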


\begin{proof}
    Following~\cite[Lemma~27]{bou2023mixing}, we have that
    \[
    \| \phi_x(v)-v\| \leq \frac{7}{6T} \max_{s \leq T}\| \tilde q_{s,h}(x,v) - q_s (x,v)\|.
    \]
    Therefore, our Lemma~\ref{lem:L23analogue} implies the stated bound.
\end{proof}

\begin{lemma}\label{lem:BiasMap-First-Jacobian}
    Suppose Assumption~\ref{assumption:potential}(a)-(c) holds. Further suppose $h \geq 0$ and that $L(T^2+Th) \leq \frac{1}{6}$. Then for any $x,v \in \R^d$
    \[
    \|\nabla \phi_x(v)-I\|_\op \leq \min \left\{ \frac{1}{2}, \frac{2}{15}h \left( 2T^{-1} + \frac{16}{5}MT \|x\| + 20MT^2 \|v\|  \right)   \right\}\,.
    \]
\end{lemma}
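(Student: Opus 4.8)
The statement is the first-order counterpart of Lemma~\ref{lem:BiasMapJacobian}: it controls the Jacobian deviation of the one-shot map $\phi_x$ (defined by $\tilde q_{T,h}(x,v)=q_T(x,\phi_x(v))$, cf.~\eqref{eq:BiasMap}) using only the first-order accuracy of the velocity Verlet scheme, so that the dependence on $\|v\|$ remains linear rather than quadratic. Accordingly, I would follow the proof of Lemma~\ref{lem:BiasMapJacobian} (equivalently, of \cite[Lemma~28]{bou2023mixing}), but feed it the first-order flow-error estimates from Section~\ref{app:First-order-error-Verlet} in place of the second-order ones --- exactly as Lemma~\ref{lem:BiasMap-First-PtWise} re-runs the proof of Lemma~\ref{lem:BiasMapPtWise} with Lemma~\ref{lem:L23analogue} substituted for its second-order analogue.

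Concretely, I would first differentiate the defining identity $\tilde q_{T,h}(x,v)=q_T(x,\phi_x(v))$ in $v$. By \cite[Corollary~14]{bou2023mixing}, under the assumed $L(T^2+Th)\le\frac16$ the map $w\mapsto q_T(x,w)$ has an invertible Jacobian on the relevant range, so the chain rule yields
\[
\nabla\phi_x(v)-I \;=\; \bigl[D_v q_T(x,\phi_x(v))\bigr]^{-1}\Bigl(D_v\tilde q_{T,h}(x,v)-D_v q_T(x,\phi_x(v))\Bigr),
\]
where $D_v q_T(x,\cdot)$ denotes the Jacobian in the second (velocity) argument. I would then decompose the parenthesis as
\[
\bigl(D_v\tilde q_{T,h}(x,v)-D_v q_T(x,v)\bigr)+\bigl(D_v q_T(x,v)-D_v q_T(x,\phi_x(v))\bigr),
\]
bound the first term by the first-order Verlet Jacobian error from Section~\ref{app:First-order-error-Verlet} --- which is $O(h)$ with at most linear growth in $\|v\|$, this being precisely the place where first- rather than second-order accuracy is used --- and bound the second term by $\|D^2_v q_T\|_\op$ (which carries a factor $M$, from $\nabla^3 f$, and a power of $T$) times $\|\phi_x(v)-v\|$, the latter already estimated in Lemma~\ref{lem:BiasMap-First-PtWise}. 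Finally I would use $\|D_v q_T(x,\phi_x(v))^{-1}\|_\op=O(T^{-1})$ (comparison with the harmonic flow under $LT^2\le\frac16$; again \cite[Corollary~14]{bou2023mixing}), multiply the pieces, and collect the numerical constants to arrive at $\tfrac{2}{15}h\bigl(2T^{-1}+\tfrac{16}{5}MT\|x\|+20MT^2\|v\|\bigr)$.

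The global cap of $\tfrac12$ is inherited verbatim from the order-independent part of the argument behind Lemma~\ref{lem:BiasMapJacobian} (the crude uniform bound that already yields the $\tfrac12$ there, which does not use second-order accuracy); taking the minimum of the two bounds finishes the proof. The main obstacle is bookkeeping rather than conceptual: propagating the first-order estimates through the inverse-Jacobian and the composition structure while keeping the correct powers of $T$ and the explicit constants, and --- most importantly --- verifying that the first-order Verlet Jacobian error genuinely depends only linearly on $\|v\|$, since that linearity is the sole reason to state this weaker estimate (it is what later keeps Rényi moment integrals of the form $\E_{v\sim\gamma_d}[e^{\lambda\|v\|^2}]$ finite instead of the hopeless $\E_{v\sim\gamma_d}[e^{\lambda\|v\|^4}]$). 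A secondary point worth checking is that $L(T^2+Th)\le\frac16$ alone suffices both for the invertibility of $D_v q_T$ and for the $O(T^{-1})$ bound on its inverse.
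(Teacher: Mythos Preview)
Your proposal is correct and follows essentially the same approach as the paper. The paper simply quotes the intermediate inequality
\[
\|\nabla\phi_x(v)-I\|_\op \le \frac{10}{9T}\|\nabla_2\tilde q_{T,h}(x,v)-\nabla_2 q_T(x,v)\|_\op + \frac{14}{15}MT^3\|\phi_x(v)-v\|
\]
from \cite[Lemma~28]{bou2023mixing} as a black box and then substitutes Lemma~\ref{lem:L24analogue} and Lemma~\ref{lem:BiasMap-First-PtWise}; this inequality is precisely what your chain-rule decomposition, inverse-Jacobian bound, and second-derivative estimate would yield, and the global $\tfrac12$ cap is handled identically.
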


\begin{proof}
    The global bound of $\frac{1}{2}$ follows in the same way as in~\cite[Lemma~28]{bou2023mixing}. To get the local bound,~\cite[Lemma~28]{bou2023mixing} states
    \[
    \| \nabla \phi_x(v)-I\|_\op \leq \frac{10}{9T}\| \nabla_2 \tilde q_{T,h}(x,v) - \nabla_2 q_T(x,v)\|_\op + \frac{14}{15} MT^3\| \phi_x(v)-v\|\,.
    \]
    Substituting from Lemmas~\ref{lem:BiasMap-First-PtWise} and~\ref{lem:L24analogue} and simplifying gives us
    \[
    \|\nabla \phi_x(v)-I\|_\op \leq \frac{2}{15}h \left( 2T^{-1} + \frac{16}{5}MT \|x\| + 20MT^2 \|v\|  \right)\,.
    \]
    
\end{proof}

We can now prove Lemmas~\ref{lem:TargetMap-First-PtWise} and~\ref{lem:TargetMap-First-Jacobian}.

\begin{proof}[Proof of Lemma~\ref{lem:TargetMap-First-PtWise}]

The proof follows similarly to that of Lemma~\ref{lem:TargetMapPtWise}. 
Fix an arbitrary $v \in \R^d$. Note that
    \[
    \|\Phi_{x,y}(v)-v\| = \|\varphi_{x,y}(\phi_x(v)) - \phi_x(v)+\phi_x(v)-v\| \leq \|\varphi_{x,y}(\phi_x(v)) - \phi_x(v)\| + \|\phi_x(v)-v\|\,.
    \]
    The result now follows from Lemmas~\ref{lem:MixingMapPtWise} and~\ref{lem:BiasMap-First-PtWise}.

\end{proof}

\begin{proof}[Proof of Lemma~\ref{lem:TargetMap-First-Jacobian}]

The proof follows similarly to that of Lemma~\ref{lem:TargetMapJacobian}. 
Fix an arbitrary $v \in \R^d$.
We know that
\[
\|\nabla\Phi_{x,y}(v)-I\|_\op \leq \|A-I\|_\op + \|B-I\|_\op + \|A-I\|_\op\|B-I\|_\op
\]
where $A \coloneqq  \nabla \varphi_{x,y}(\phi_x(v))$, $B \coloneqq \nabla\phi_x(v)$.

We now use Lemmas~\ref{lem:MixingMapJacobian} and~\ref{lem:BiasMap-First-Jacobian} as they provide bounds on $\|A-I\|_\op$ and $\|B-I\|_\op$ respectively.
Define
\[
Q(x,v) \coloneqq 2T^{-1} + \frac{16}{5}MT \|x\| + 20MT^2 \|v\|.
\]
Then we have that $\|\nabla\Phi_{x,y}(v)-I\|_\op \leq$
\[
\min \left\{\frac{2}{9}, \frac{11}{2}MT^2\|x-y\| \right\} + \min \left\{ \frac{1}{2}, \frac{2}{15}h\, Q(x,v)   \right\} + \min \left\{  \frac{1}{9}, \frac{4}{135}h \, Q(x,v), \frac{11}{4} MT^2 \|x-y\| \right\}.
\]
Hence we have that
\[
    \|\nabla\Phi_{x,y}(v)-I\|_\op \leq \min \left\{ \frac{15}{18}, \frac{11}{2}MT^2\|x-y\| + \frac{22}{135}h\, Q(x,v) \right\}\,.
\]

\end{proof}

\subsection{First-order error bounds for Verlet scheme}\label{app:First-order-error-Verlet}

Recall from Section~\ref{subsec:HMC} that $\tilde q_{t_1, t_2}$ for any $t_2$ that divides $t_1$ is defined as $\tilde q_{t_1,t_2} \coloneqq \Pi_1 \circ \tilde \HF_{t_1,t_2}$. Similarly let $\tilde v_{t_1, t_2} \coloneqq \Pi_2 \circ \tilde \HF_{t_1,t_2}$. When $t_2 = 0$ we denote $\tilde v_{t_1, 0}$ as $v_{t_1}$. Further define
\[
\lb{t} \coloneqq \max\{ s\in h\mathbb{Z} \ : \ s\leq t \} \text{ ~~and~~ } \ub{t} \coloneqq \min\{ s\in
h\mathbb{Z} \ : \ s\geq t \} \text{ ~~for } h > 0.
\]

The following lemma is the first-order analogue of~\cite[Lemma~22]{bou2023mixing}.

\begin{lemma}\label{lem:trapezoidal}
    Fix $T > 0$ and let $u : [0,T] \to \R$ be a differentiable function such that $\max_{s \in [0,T]}|u'(s)| \leq B_1$. Then for any $h > 0$ such that $h$ divides $T$, we have
    \[
    \left\|\int_0^T (T-s)u(s) \ds - \frac{1}{2}\int_0^T (T-s)[u(\lfloor s \rfloor_h) + u(\lceil s \rceil_h)] \ds \right\| \leq \frac{B_1 h T^2}{2}\,.
    \]
\end{lemma}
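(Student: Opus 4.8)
The plan is to reduce the estimate to a pointwise bound on the integrand and then integrate against the nonnegative weight $(T-s)$. Fix $s\in[0,T]$ and set $a\coloneqq \lfloor s\rfloor_h$ and $b\coloneqq \lceil s\rceil_h$; since $h$ divides $T$ we have $0\le a\le s\le b\le T$ (so that $u(a),u(b)$ are defined) and $b-a\le h$. Because $u$ is differentiable with $|u'|\le B_1$ on $[0,T]$, the fundamental theorem of calculus gives $\|u(s)-u(a)\|\le B_1(s-a)$ and $\|u(s)-u(b)\|\le B_1(b-s)$ (in the vector-valued case, $\|u(t)-u(s)\|=\|\int_s^t u'(r)\,dr\|\le\int_s^t\|u'(r)\|\,dr$). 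The triangle inequality then yields
\[
\left\|\,u(s)-\tfrac12\bigl(u(a)+u(b)\bigr)\right\|
\le \tfrac12\|u(s)-u(a)\|+\tfrac12\|u(s)-u(b)\|
\le \tfrac{B_1}{2}(b-a)
\le \tfrac{B_1 h}{2}.
\]

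Next I would write the difference of the two integrals as a single integral of the pointwise difference (by linearity) and pull the norm inside:
\[
\left\|\int_0^T (T-s)u(s)\ds-\frac12\int_0^T (T-s)\bigl[u(\lfloor s\rfloor_h)+u(\lceil s\rceil_h)\bigr]\ds\right\|
\le \int_0^T (T-s)\,\left\|u(s)-\tfrac12\bigl(u(\lfloor s\rfloor_h)+u(\lceil s\rceil_h)\bigr)\right\|\ds,
\]
which is legitimate since $T-s\ge 0$ on $[0,T]$. Substituting the pointwise bound $\tfrac{B_1 h}{2}$ and using $\int_0^T(T-s)\ds=\tfrac{T^2}{2}$ gives the value $\tfrac{B_1 h}{2}\cdot\tfrac{T^2}{2}=\tfrac{B_1 h T^2}{4}\le \tfrac{B_1 h T^2}{2}$, which is the claimed inequality (in fact with a factor $2$ to spare).

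There is no substantive obstacle here; the estimate is a one-line Lipschitz bound on $u$ over an interval of length at most $h$, followed by integration. The only minor points to keep straight are: (i) for $s\in h\mathbb{Z}$ one has $a=b=s$ and the integrand vanishes, so the bound $\tfrac{B_1h}{2}$ is only needed for a.e.\ $s$, which is harmless; (ii) the nonnegativity of the weight $(T-s)$ is what allows the triangle inequality to be applied under the integral sign without tracking cancellations; and (iii) the divisibility hypothesis $h\mid T$ is used precisely to ensure $\lceil s\rceil_h\le T$ so that every term appearing is well defined. This is the direct first-order counterpart of the second-order trapezoidal estimate of~\cite[Lemma~22]{bou2023mixing}, trading the hypothesis $|u''|\le B_2$ and the gain $h^2$ for the hypothesis $|u'|\le B_1$ and the gain $h$.
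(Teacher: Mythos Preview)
Your argument is correct and in fact yields the sharper bound $\tfrac{B_1 h T^2}{4}$, which you then relax to the stated $\tfrac{B_1 h T^2}{2}$. The route, however, is genuinely different from the paper's. The paper does not bound the integrand pointwise; instead it follows the template of \cite[Lemma~22]{bou2023mixing}: on each subinterval $[t_k,t_{k+1}]$ it rewrites the trapezoidal error, via integration by parts, as $\bigl\|\int_{t_k}^{t_{k+1}}(\tfrac{s^2}{2}-Ts-\alpha_k)\,u'(s)\,ds\bigr\|$ for a specific constant $\alpha_k$, bounds $|u'|\le B_1$, and then estimates $\int_{t_k}^{t_{k+1}}|\tfrac{s^2}{2}-Ts-\alpha_k|\,ds$ by $h$ times the endpoint maximum of the quadratic, obtaining $\epsilon_k\le \tfrac{B_1 h^2 T}{2}$ per cell and summing. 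Your approach trades this machinery for a one-line Lipschitz estimate on $u$ over an interval of length $\le h$, which is more elementary and loses nothing (indeed gains a factor $2$). The paper's approach has the advantage of being a drop-in first-order variant of the second-order lemma it cites, which may explain the choice; yours is the cleaner standalone proof.
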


\begin{proof}

Let $t_k = kh$ and $t_{k+1} = (k+1)h$ and consider the interval $[t_k, t_{k+1}]$. As per~\cite[Lemma~22]{bou2023mixing}, the error over this interval is
\[
\epsilon_k = \left\| \int_{t_k}^{t_{k+1}} \left[ \frac{s^2}{2} - Ts - \alpha_k  \right] u'(s) \ds  \right\|
\]
where
\[
\alpha_k = \frac{1}{4} (t_{k+1}^2 - 2T t_{k+1} - 2T t_k + t_k^2)\,.
\]
We then have
\begin{align*}
    \epsilon_k \leq B_1 \int_{t_k}^{t_{k+1}} \left| \frac{s^2}{2} - Ts - \alpha_k \right| \ds\,.
\end{align*}
We plan to bound the integral by its maximum value times the width $h$. Let 
\[
F(s) = \frac{s^2}{2} - Ts - \alpha_k
\]
and note that $F$ is convex and decreasing on $[t_k, t_{k+1}]$ as $F'(s) = s-T \leq 0$ for $s \in [t_k, t_{k+1}]$. Hence the maximum value of $F$ over the interval $[t_k, t_{k+1}]$ will be achieved at one of its endpoints.
We see that
\[
F(t_k) = \frac{h}{2} \left( T - \frac{(t_k + t_{k+1})}{2} \right)
\]
and
\[
F(t_{k+1}) = -\frac{h}{2}\left( T - \frac{(t_k + t_{k+1})}{2} \right).
\]
Hence
\[
\epsilon_k \leq B_1 h \left[  \frac{h}{2} \left( T - \frac{(t_k + t_{k+1})}{2} \right) \right] \leq \frac{B_1 h^2 T}{2}\,.
\]
To get the overall error we multiply by $\frac{T}{h}$ to get that
\[
    \left\|\int_0^T (T-s)u(s) \ds - \frac{1}{2}\int_0^T (T-s)[u(\lfloor s \rfloor_h) + u(\lceil s \rceil_h)] \ds \right\| \leq \frac{B_1 h T^2}{2}\,.
\]

\end{proof}

The following lemma is the analogue of~\cite[Lemma~23]{bou2023mixing}.

\begin{lemma}\label{lem:L23analogue}
    Suppose Assumption~\ref{assumption:potential}(a)-(b) holds. Further suppose $h \geq 0$ and that $L(T^2 + Th) \leq \frac{1}{6}$. For any $x,v \in \R^d$
    \[
    \max_{s \leq T} \| q_s(x,v)- \tilde q_{s,h}(x,v)\| \leq h \left( \frac{6}{25}\|v\| + \frac{7}{30}LT\|x\| \right).
    \]
\end{lemma}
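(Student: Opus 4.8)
The plan is to mirror the proof of the second-order position-error bound \cite[Lemma~23]{bou2023mixing}, with the single change that the second-order trapezoidal estimate \cite[Lemma~22]{bou2023mixing} used there is replaced by the first-order estimate of Lemma~\ref{lem:trapezoidal}. This lowers the order in $h$ from $h^{2}$ to $h$, but in exchange removes any dependence on $\nabla^{3}f$ (only Assumption~\ref{assumption:potential}(b) is used) and, crucially, the quadratic dependence on $\|v\|$ — both features being essential for the downstream R\'enyi analysis.

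Concretely, I would start from the Duhamel representation of the exact position,
\[
q_s(x,v) = x + s v - \int_0^s (s-r)\,\nabla f\bigl(q_r(x,v)\bigr)\,dr,
\]
and the companion representation of the interpolated velocity Verlet position as a (modified) trapezoidal quadrature of the same force term used in \cite{bou2023mixing}: for grid times $s\in h\mathbb{Z}$ with $s\le T$,
\[
\tilde q_{s,h}(x,v) = x + s v - \tfrac12\int_0^s (s-r)\bigl[\nabla f\bigl(\tilde q_{\lb{r},h}(x,v)\bigr) + \nabla f\bigl(\tilde q_{\ub{r},h}(x,v)\bigr)\bigr]\,dr .
\]
Writing $e_s \coloneqq q_s(x,v) - \tilde q_{s,h}(x,v)$ and adding and subtracting the same quadrature evaluated along the \emph{exact} trajectory, $\|e_s\|$ splits for grid $s$ into: (i) a pure quadrature error, to which Lemma~\ref{lem:trapezoidal} applies with $u(r) = \nabla f(q_r(x,v))$ and $B_1 = L\max_{r\le T}\|\dot q_r(x,v)\|$ (since $\|\tfrac{d}{dr}\nabla f(q_r)\| = \|\nabla^2 f(q_r)\dot q_r\|\le L\|\dot q_r\|$), contributing at most $\tfrac12 B_1 h s^{2}\le\tfrac12 B_1 h T^{2}$; and (ii) a Lipschitz propagation term, bounded by $L$-smoothness of $f$ by $\tfrac{L}{2}\int_0^s (s-r)\bigl(\|e_{\lb{r}}\| + \|e_{\ub{r}}\|\bigr)\,dr$ (with $\ub{r}\le T$ because $h\mid T$). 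For non-grid $s$ one writes $e_s = (q_s - q_{\lb{s}}) + e_{\lb{s}} + (\tilde q_{\lb{s},h} - \tilde q_{s,h})$ and bounds the first and third pieces — a partial exact and a partial Verlet step — directly via smoothness and a crude discrete a priori bound on the Verlet trajectory; this is the step that forces the hypothesis to read $L(T^{2}+Th)\le\tfrac16$ rather than $LT^{2}\le\tfrac16$, and it is already carried out in \cite{bou2023mixing}.

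Next I would supply the a priori velocity bound that keeps the estimate linear in $\|x\|$ and $\|v\|$: from $\dot q_r = v - \int_0^r \nabla f(q_u)\,du$ and $q_r = x + r v - \int_0^r (r-u)\nabla f(q_u)\,du$, using $\|\nabla f(z)\|\le L\|z\|$ (a consequence of Assumption~\ref{assumption:potential}(a)--(b)) together with Gr\"onwall's inequality and $LT^{2}\le\tfrac16$, one gets $\|q_r\|\le e^{1/6}(\|x\|+T\|v\|)$ and hence $\|\dot q_r\|\le \bigl(1+\tfrac{e^{1/6}}{6}\bigr)\|v\| + e^{1/6}LT\|x\|$ for all $r\le T$; the weaker energy-conservation bound $\|\dot q_r\|\le\|v\|+\sqrt{L}\,\|x\|$ would produce the wrong dependence on $L$, so the Gr\"onwall bound is the one to use. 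Substituting this into (i) and using $LT^{2}\le\tfrac16$ makes the source term $O\bigl(h\|v\|+h\,LT\|x\|\bigr)$ with small explicit constants. Finally, with $M\coloneqq\max_{s\le T}\|e_s\|$, term (ii) is at most $\tfrac{L}{2}\cdot 2M\int_0^s(s-r)\,dr\le\tfrac{LT^{2}}{2}M\le\tfrac{1}{12}M$, so taking the supremum over $s\le T$ gives $M\le\tfrac{12}{11}\cdot\tfrac12 B_1 h T^{2}$; carrying the constants through this chain yields a bound comfortably below $h\bigl(\tfrac{6}{25}\|v\| + \tfrac{7}{30}LT\|x\|\bigr)$.

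The main obstacle here is bookkeeping rather than insight — there is no new idea beyond \cite{bou2023mixing} — with the only delicate point being the interpolated Verlet representation at non-grid times (and the attendant $Th$ term in the hypothesis). A minor pitfall is the temptation to bound the exact velocity by energy conservation, which degrades the $L$-dependence; once Gr\"onwall is used and the constants from Lemma~\ref{lem:trapezoidal} are tracked, there is ample numerical slack, and the output is precisely the first-order, $\nabla^{3}f$-free, linear-in-$v$ estimate needed for Lemma~\ref{lem:BiasMap-First-PtWise} and, ultimately, Theorem~\ref{thm:RenyiTarget}.
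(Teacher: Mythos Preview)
Your overall strategy is the same one the paper uses: follow the decomposition of \cite[Lemma~23]{bou2023mixing} and replace the second-order trapezoidal estimate by the first-order Lemma~\ref{lem:trapezoidal}. However, your displayed Verlet representation is wrong. The velocity Verlet position at a grid time $s$ is \emph{not} equal to $x+sv-\tfrac12\int_0^s(s-r)[\nabla f(\tilde q_{\lb{r}})+\nabla f(\tilde q_{\ub{r}})]\,dr$; already for a single step ($s=h$) this formula gives $x+hv-\tfrac{h^2}{4}[\nabla f(x_0)+\nabla f(x_1)]$, whereas Verlet gives $x+hv-\tfrac{h^2}{2}\nabla f(x_0)$. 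The correct representation carries an extra $(r-\lb{r})$-weighted correction, and this is precisely what produces the term $\rn{2}=-\tfrac12\int_0^T(s-\lb{s})[\nabla f(q_{\ub{s}})-\nabla f(q_{\lb{s}})]\,ds$ in the three-term decomposition $q_T-\tilde q_T=\rn{1}+\rn{2}+\rn{3}$ used both in \cite{bou2023mixing} and in the paper. Your two-term split (i)+(ii) therefore omits $\rn{2}$. The fix is easy --- $|\rn{2}|\le Lh^2(\tfrac{7}{72}\|x\|+\tfrac{3}{5}T\|v\|)$ by \cite[(61)]{bou2023mixing}, which is $O(h^2)$ and is absorbed by the $O(h)$ budget --- but as written the argument is incomplete.

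Two smaller remarks. First, the paper does not re-derive the velocity a priori bound via Gr\"onwall; it cites \cite[(57)]{bou2023mixing} directly, namely $\max_{s\le T}\|v_s(x,v)\|\le\tfrac{7}{6}LT\|x\|+\tfrac{6}{5}\|v\|$, which gives slightly cleaner constants than your $e^{1/6}$ factors. Second, once you include $\rn{2}$, the absorption step is $|\rn{1}|\le LT^2 M\le\tfrac16 M$ (the paper uses the looser $T^2+Th\le 2T^2$ here), so the final bound reads $M\le\tfrac{6}{5}(|\rn{2}|+|\rn{3}|)$, and substituting the $\rn{2}$ and $\rn{3}$ bounds and simplifying with $h\le T$, $L(T^2+Th)\le\tfrac16$ yields exactly the stated constants $\tfrac{6}{25}$ and $\tfrac{7}{30}$.
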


\begin{proof}
We begin with~\cite[Lemma~20]{bou2023mixing} with $h=0$ and under the assumption $LT^2 \leq \frac{1}{6}$ to obtain~\cite[(57)]{bou2023mixing}
\begin{equation}\label{eq:v-magnitude-bound}
    \max_{s \leq T}\|v_s(x,v)\| \leq \frac{7}{6}LT \|x\| + \frac{6}{5}\|v\|\,.
\end{equation}
Consider the shorthand $q_t$ and $\tilde q_t$ for $q_t(x,v)$ and $\tilde q_{t,h}(x,v)$ respectively. 
Following~\cite[Lemma~23]{bou2023mixing}, we have that
\begin{align*}
 q_T &  - \tilde q_T \ = \ \rn{1} + \rn{2} + \rn{3}  \text{~~where} \\
\rn{1} \ &:= \ \frac{1}{2} \int_0^T (T-s) [ \nabla f(\tilde q_{\lb{s}} ) - \nabla f(q_{\lb{s}} ) + \nabla f(\tilde q_{\ub{s}}) - \nabla f(q_{\ub{s}}) ] \ds   \\
&  - \frac{1}{2} \int_0^T ( s - \lb{s}) [ \nabla f(\tilde q_{\ub{s}}) - \nabla f(q_{\ub{s}}) - (\nabla f(\tilde q_{\lb{s}}) - \nabla f(q_{\lb{s}}) ) ] \ds   \\
\rn{2} \ &:= \ -\frac{1}{2} \int_0^T ( s - \lb{s}) [ \nabla f(q_{\ub{s}}) - \nabla f(q_{\lb{s}}) ] \ds     \\
\rn{3} \ &:= \ - \int_0^T (T-s) \nabla f(q_s) \ds + \frac{1}{2} \int_0^T (T-s) [ \nabla f(q_{\lb{s}} ) + \nabla f(q_{\ub{s}}) \ds  \,.
\end{align*} 
To bound $|\rn{1}|$ we have~\cite[(60)]{bou2023mixing}
\[
|\rn{1}| \leq L T^2 \max_{s \le T} \| q_s - \tilde q_s\| \leq \ \frac{1}{6} \, \max_{s \le T} \|q_s - \tilde q_s\|
\]
and $|\rn{2}|$ is bounded in~\cite[(61)]{bou2023mixing} by
\begin{align*}
    |\rn{2}| \leq Lh^2 \left( \frac{7}{72}\|x\| + \frac{3}{5}T\|v\| \right).
\end{align*}
Next we apply Lemma~\ref{lem:trapezoidal} with $u(s) = -\nabla f(q_s(x,v))$, for which
\[
|u'(s)| \leq L \|v_s\| \stackrel{\eqref{eq:v-magnitude-bound}}\leq \frac{7}{6}L^2T \|x\| + \frac{6}{5}L\|v\| \eqqcolon B_1\,.
\]
Therefore, additionally using $LT^2 \leq \frac{1}{6}$\,, we get
\[
|\rn{3}| \leq \frac{h}{12} \left[ \frac{6}{5}\|v\| + \frac{7}{6}LT\|x\| \right].
\]
Hence
\[
\max_{s \leq T} \| q_s(x,v)- \tilde q_{s,h}(x,v)\| \leq \frac{6}{5} \left[ |\rn{2}| + |\rn{3}| \right]
\]
which simplifies to
\[
\max_{s \leq T} \| q_s(x,v)- \tilde q_{s,h}(x,v)\| \leq \frac{6}{5}h \left[ \left( LhT \frac{3}{5} + \frac{1}{10} \right)\|v\| + \left( Lh \frac{7}{72} + LT \frac{7}{72}\right)\|x\| \right].
\]
Upper bounding $h \leq T$ and using $LTh \leq \frac{1}{6}$ yields the following upon simplification
\[
\max_{s \leq T} \| q_s(x,v)- \tilde q_{s,h}(x,v)\| \leq h \left( \frac{6}{25}\|v\| + \frac{7}{30}LT\|x\| \right).
\]

\end{proof}

The following lemma is the analogue of~\cite[Lemma~24]{bou2023mixing}.

\begin{lemma}\label{lem:L24analogue}
    Suppose Assumption~\ref{assumption:potential}(a)-(c) holds. Further suppose $h \geq 0$ and that $L(T^2 + Th) \leq \frac{1}{6}$. For any $x,v \in \R^d$
\[
\max_{s \leq T} \| \nabla_2q_s(x,v)- \nabla_2\tilde q_{s,h}(x,v)\|_\op \leq \frac{6}{25}h \left( 1 + \frac{7}{5}MT^2 \|x\| + \frac{216}{25}MT^3\|v\| \right).
\]
\end{lemma}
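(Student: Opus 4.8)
The plan is to follow the proof of Lemma~\ref{lem:L23analogue} line by line, replacing the scalar integrand $\nabla f(q_s)$ by the Jacobian integrand $\nabla^2 f(q_s)\,\nabla_2 q_s$ and invoking the first-order trapezoidal estimate of Lemma~\ref{lem:trapezoidal} (rather than a second-order quadrature bound) to control the smooth part of the error. First I would differentiate the variation-of-constants representation of the position map with respect to the initial velocity to obtain
\[
\nabla_2 q_T(x,v) = T I - \int_0^T (T-s)\,\nabla^2 f(q_s(x,v))\,\nabla_2 q_s(x,v)\ds,
\]
together with its velocity-Verlet counterpart from~\cite[Lemma~24]{bou2023mixing}, in which the integral is replaced by the trapezoidal quadrature $\tfrac12\int_0^T(T-s)[\nabla^2 f(\tilde q_{\lb{s}})\nabla_2\tilde q_{\lb{s}}+\nabla^2 f(\tilde q_{\ub{s}})\nabla_2\tilde q_{\ub{s}}]\ds$ plus the same $\tfrac12\int_0^T(s-\lb{s})[\,\cdots]\ds$ correction appearing in Lemma~\ref{lem:L23analogue}. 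Subtracting produces a decomposition $\nabla_2 q_T-\nabla_2\tilde q_{T,h}=\rn{1}+\rn{2}+\rn{3}$ of exactly the same shape as in Lemma~\ref{lem:L23analogue}: $\rn{1}$ collects the terms driven by the mismatch $\nabla^2 f(q_\bullet)\nabla_2 q_\bullet-\nabla^2 f(\tilde q_\bullet)\nabla_2\tilde q_\bullet$ at discrete times, $\rn{2}$ is the one-step quadrature error along the exact trajectory over intervals of length $\le h$, and $\rn{3}$ is the trapezoidal error for the exact integrand.

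Next I would estimate the three pieces. For $\rn{1}$, split each difference as $\nabla^2 f(q)(\nabla_2 q-\nabla_2\tilde q)+(\nabla^2 f(q)-\nabla^2 f(\tilde q))\nabla_2\tilde q$; using $\|\nabla^2 f\|_\op\le L$ the first piece is bounded by $L(T^2+Th)\,\max_{s\le T}\|\nabla_2 q_s-\nabla_2\tilde q_{s,h}\|_\op\le\tfrac16\max_{s\le T}\|\,\cdots\|_\op$ and is absorbed on the left-hand side (this is the source of the $\tfrac65$ overall factor), while the second piece is controlled via the $M$-Lipschitz bound on $\nabla^2 f$, the position error bound $\|\tilde q_{\lb{s}}-q_{\lb{s}}\|\le h(\tfrac{6}{25}\|v\|+\tfrac{7}{30}LT\|x\|)$ from Lemma~\ref{lem:L23analogue}, and the a priori Jacobian bounds $\|\nabla_2 q_s\|_\op,\|\nabla_2\tilde q_{s,h}\|_\op\lesssim T$ and $\|\nabla_2 v_s\|_\op=O(1)$ valid under $LT^2\le\tfrac16$ (which themselves follow from the discrete and continuous variation-of-constants formulas, cf.~\cite{bou2023mixing}). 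The term $\rn{2}$ is handled exactly as~\cite[(61)]{bou2023mixing}, additionally invoking $\|v_s\|\le\tfrac76 LT\|x\|+\tfrac65\|v\|$ from~\eqref{eq:v-magnitude-bound}. For $\rn{3}$ I would apply Lemma~\ref{lem:trapezoidal} with $u(s)=-\nabla^2 f(q_s(x,v))\,\nabla_2 q_s(x,v)$, so that $u'(s)=-\nabla^3 f(q_s)[v_s]\,\nabla_2 q_s-\nabla^2 f(q_s)\,\nabla_2 v_s$ and hence $\max_{s\le T}\|u'(s)\|\le M\,\max_s\|v_s\|\,\max_s\|\nabla_2 q_s\|_\op+L\,\max_s\|\nabla_2 v_s\|_\op\eqqcolon B_1$; plugging in the a priori bounds gives $|\rn{3}|\le\tfrac12 B_1 hT^2$. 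Summing the three contributions, using $h\le T$ and $LTh\le\tfrac16$ to convert the spare factors of $LT^2$ into constants, and simplifying yields the claimed estimate $\tfrac{6}{25}h(1+\tfrac75 MT^2\|x\|+\tfrac{216}{25}MT^3\|v\|)$.

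The main obstacle I anticipate is purely one of bookkeeping: assembling the $\rn{1}+\rn{2}+\rn{3}$ decomposition cleanly — in particular isolating the Grönwall-absorbable piece of $\rn{1}$ while keeping the base-point mismatch $q_{\lb{s}}$ versus $\tilde q_{\lb{s}}$ under control through Lemma~\ref{lem:L23analogue} — and, above all, obtaining sharp enough a priori bounds on $\|\nabla_2 q_s\|_\op$, $\|\nabla_2\tilde q_{s,h}\|_\op$, and $\|\nabla_2 v_s\|_\op$ so that, after the $\tfrac16$ absorption and the $h\le T$, $LTh\le\tfrac16$ simplifications, the numerical constants collapse to precisely $\tfrac{6}{25}$, $\tfrac75$, and $\tfrac{216}{25}$. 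The trapezoidal step itself is routine once $B_1$ is identified, being a direct application of Lemma~\ref{lem:trapezoidal}.
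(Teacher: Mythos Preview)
Your proposal is correct and follows essentially the same route as the paper's proof. The only cosmetic difference is that the paper (following~\cite[(64)]{bou2023mixing}) writes a four-term decomposition $\rn{1}+\rn{2}+\rn{3}+\rn{4}$, separating your $\rn{1}$ into two pieces---an absorbable term $\|\rn{1}\|_\op\le\tfrac16\max_s\|\nabla_2 q_s-\nabla_2\tilde q_s\|_\op$ and a position-mismatch term $\|\rn{2}\|_\op\le\tfrac65 MT^3\max_s\|q_s-\tilde q_s\|$ handled via Lemma~\ref{lem:L23analogue}---so that the paper's $\rn{3},\rn{4}$ play the role of your $\rn{2},\rn{3}$; the trapezoidal step, the $\tfrac65$ absorption factor, and the final simplifications are identical.
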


\begin{proof}
Consider the shorthand $q_t$ and $\tilde q_t$ for $q_t(x,v)$ and $\tilde q_{t,h}(x,v)$ respectively. 
Following~\cite{bou2023mixing}, we can write $\nabla_2 q_T - \nabla_2 \tilde q_T = \rn{1} + \rn{2} + \rn{3} + \rn{4}$ with $\rn{1}, \rn{2} , \rn{3}, \rn{4}$ given in~\cite[(64)]{bou2023mixing}.

Equations $(65)$, $(66)$, and $(67)$ in~\cite{bou2023mixing} give bounds on $\|\rn{1}\|_\op$, $\|\rn{2}\|_\op$, and $\|\rn{3}\|_\op$ respectively, which are as follows
\begin{align*}
&\|\rn{1}\|_\op \leq \frac{1}{6} \max_{s \leq T}\|\nabla_2 q_s - \nabla_2 \tilde q_s\|_\op \\
&\|\rn{2}\|_\op \leq \frac{6}{5}MT^3 \max_{s \leq T}\|q_s - \tilde q_s \|\\
&\|\rn{3}\|_\op \leq \frac{3}{5}h^2 LT + h^2 M \left( \frac{7}{60}T \|x\| + \frac{18}{25}T^2 \|v\| \right)\,.
\end{align*}

To bound $\|\rn{4}\|_\op$ we apply Lemma~\ref{lem:trapezoidal} with $u(s) = -\nabla^2 f(q_s) \nabla_2 q_s$ for which 
\[
\|u'(s)\|_\op \leq \frac{6}{5}\left( M \left( \frac{7}{36}\|x\| + \frac{6}{5}T\|v\| \right) + L \right) \eqqcolon B_1\,.
\]
Hence
\[
\|\rn{4}\|_\op \leq \frac{3hT^2}{5}\left( M \left( \frac{7}{36}\|x\| + \frac{6}{5}T\|v\| \right) + L \right).
\]
Therefore we get
\[
\max_{s \leq T} \| \nabla_2q_s(x,v)- \nabla_2\tilde q_{s,h}(x,v)\|_\op \leq \frac{6}{5} \left[\|\rn{2}\|_\op + \|\rn{3}\|_\op + \|\rn{4}\|_\op  \right].
\]
Using Lemma~\ref{lem:L23analogue} to bound $\|\rn{2}\|_\op$ and simplifying, we get
\begin{align*}
    \max_{s \leq T} \| \nabla_2q_s(x,v)- \nabla_2\tilde q_{s,h}(x,v)\|_\op \leq \frac{6}{25}hT &\Bigg[ 3hL + 3LT + \|x\| \Big( \frac{7}{10}LMT^2 (h+T) + \frac{7}{12}hM + \frac{7}{12}TM \Big)\\
    & + \|v\| \Big( \frac{36}{5}MT^2 \big(\frac{3}{5} LhT + \frac{1}{10} \big) + \frac{18}{5}hMT + \frac{18}{5}MT^2 \Big)\Bigg].
\end{align*}
Upper bounding $h \leq T$ and using $L(T^2 + Th) \leq \frac{1}{6}$ we get
\[
\max_{s \leq T} \| \nabla_2q_s(x,v)- \nabla_2\tilde q_{s,h}(x,v)\|_\op \leq \frac{6}{25}h \left( 1 + \frac{7}{5}MT^2 \|x\| + \frac{216}{25}MT^3\|v\| \right).
\]

\end{proof}

\section{Mixing Time in KL Divergence}

\subsection{Proof of Lemma~\ref{lem:KL_Perturbed_Mixing}}\label{app:PfOf_KL_Perturbed_Mixing}

\begin{proof}
    By the change of variables formula, for any $v \in \R^d$
    \[
    ((\psi^{-1})_\# \gammad )(v) = \gammad(\psi (v)) \det \nabla \psi(v)
    \]
    where $\det\nabla\psi(v)>0$ since $\|\nabla\psi(v)-I\|_{\op}<1$.
    Using invariance of the KL divergence under simultaneous
bijective transformations \cite[Corollary~2.18]{polyanskiywu_book}, we bound $\KL (\gammad \dvert (\psi^{-1})_\# \gammad)$ instead of $\KL(\psi_\# \gammad \dvert \gammad)$. 
    We have
    \begin{align*}
        \KL(\psi_\# \gammad \dvert \gammad) &= \KL (\gammad \dvert (\psi^{-1})_\# \gammad)\\
        &= \E_{v \sim \gammad} \Big[ \log \frac{\gammad(v)}{((\psi^{-1})_\# \gammad )(v)} \Big]\\
        &= \E_{v \sim \gammad} \Big[ \log \frac{\gammad(v)}{\gammad(\psi (v))|\det \nabla \psi(v)|} \Big]\\
        &= \E_{v \sim \gammad} \Big[  -\frac{1}{2}\|v\|^2 + \frac{1}{2}\|\psi(v)\|^2 - \log |\det \nabla \psi(v)|  \Big]\\
        &= \E_{v \sim \gammad} \Big[ \frac{1}{2}\|\psi(v)-v\|^2 + \langle \psi(v)-v, v\rangle - \log |\det \nabla \psi(v)| \Big]\\
        &= \E_{v \sim \gammad} \Big[ \frac{1}{2}\|\psi(v)-v\|^2 + \Tr (\nabla \psi (v) - I) - \log |\det \nabla \psi(v)| \Big],
    \end{align*}
    where the last step is by integration by parts.
    As $\|\nabla \psi(v) - I\|_{\op} \leq \mfm_2 < 1$ for all $v \in \R^d$,~\cite[Theorem~1.1]{rump2018estimates} implies
    \[
    \Tr (\nabla \psi(v) - I) - \log |\det \nabla \psi (v)| \leq \frac{\|\nabla \psi (v) - I \|_F^2}{2(1-\|\nabla \psi (v) - I\|_\op)}\,.
    \]
    Using $\|\cdot\|_F^2 \leq d \|\cdot\|_\op^2$ and the bounds $\mfm_1$ and $\mfm_2$, we get
    \[
    \KL(\psi_\# \gammad \dvert \gammad) \leq \frac{1}{2}\mfm_1^2 + \frac{d\, \mfm_2^2}{2(1-\mfm_2)}\,.
    \]

\end{proof}

\subsection{Proof of Lemma~\ref{lem:KL_Mixing_OneStepRegularity}}\label{app:PfOf_KL_Mixing_OneStepRegularity}

\begin{proof}
    Applying Lemmas~\ref{lem:DiracToPerturbed_Mixing} and~\ref{lem:KL_Perturbed_Mixing} with $\psi = \varphi_{x,y}$~\eqref{eq:MixingMap} and bounds $\mfm_1$ and $\mfm_2$ given by Lemmas~\ref{lem:MixingMapPtWise} and~\ref{lem:MixingMapJacobian} respectively, gives us
    \[
    \KL(\delta_x \tilde \bP_{T,h} \dvert \delta_y \tilde \bP_{T,h}) \leq \frac{9}{4T^2}\|x-y\|^2 + \frac{1089}{56} dM^2T^4 \|x-y\|^2.
    \]
    Upper bounding $\frac{1089}{56} < 20$ finishes the proof.
\end{proof}

\subsection{Proof of Lemma~\ref{lem:KL_Mixing_OneStepGeneral}}\label{app:PfOf_KL_Mixing_OneStepGeneral}

\begin{proof}
    Combining Lemmas~\ref{lem:KL_Mixing_OneStepRegularity} and~\ref{lem:KLConvexityOneStep}, and picking $\gamma$ to be the optimal $W_2$--coupling between $\mu$ and $\pi$ completes the proof.
\end{proof}

\subsection{Proof of Theorem~\ref{thm:KLMixing}}\label{app:PfOf_ThmKLMixing}

\begin{proof}
    When $k=0$, the result holds via Lemma~\ref{lem:KL_Mixing_OneStepGeneral} with $\pi = \tilde \nu_h$. 
    For $k \geq 1$, note that
    \[
    \KL(\mu \tilde \bP_{T,h}^{k+1} \dvert \tilde \nu_h) = \KL((\mu \tilde \bP_{T,h}^{k})\tilde \bP_{T,h} \dvert \tilde \nu_h) \leq \left( \frac{9}{4T^2} + 20dM^2T^4 \right) W_2^2(\mu \tilde \bP_{T,h}^{k}, \tilde \nu_h)
    \]
    where the inequality is by Lemma~\ref{lem:KL_Mixing_OneStepGeneral}. Applying Assumption~\ref{assumption:W2Mixing} finishes the proof.
\end{proof}

\section{Asymptotic Bias in KL Divergence}

\subsection{Proof of Lemma~\ref{lem:KL_Target_OneStepRegularity}}\label{app:PfOf_KL_Target_OneStepRegularity}

\begin{proof}

Fix arbitrary points $x,y \in \R^d$. Following the change of variable and simplification as in the proof of Lemma~\ref{lem:KL_Perturbed_Mixing} in Section~\ref{app:PfOf_KL_Perturbed_Mixing}, we get that
\[
\KL((\Phi_{x,y})_\# \gammad \dvert \gammad) = \E_{v \sim \gammad} \Big[ \frac{1}{2}\|\Phi_{x,y}(v)-v\|^2 + \Tr (\nabla \Phi_{x,y} (v) - I) - \log |\det \nabla \Phi_{x,y}(v)| \Big]\,.
\]
Due to Lemma~\ref{lem:TargetMapJacobian}, $\|\nabla \Phi_{x,y}(v) - I\|_{\op}  < 1$ for all $v \in \R^d$, hence~\cite[Theorem~1.1]{rump2018estimates} implies
    \begin{align*}
    \Tr (\nabla \Phi_{x,y}(v) - I) - \log |\det \nabla \Phi_{x,y} (v)| &\leq \frac{\|\nabla \Phi_{x,y} (v) - I \|_F^2}{2(1-\|\nabla \Phi_{x,y} (v) - I\|_\op)}\\
    &\leq \frac{ d\|\nabla \Phi_{x,y} (v) - I \|_\op^2}{2(1-\|\nabla \Phi_{x,y} (v) - I\|_\op)}\\
    &\leq 3d\|\nabla \Phi_{x,y} (v) - I \|_\op^2
    \end{align*}
    where in the second inequality we use that $\| \cdot\|_F^2 \leq d \| \cdot\|_\op^2$ and in the last inequality we use Lemma~\ref{lem:TargetMapJacobian}.
    Therefore we have that
    \[
    \KL((\Phi_{x,y})_\# \gammad \dvert \gammad) \leq \E_{v \sim \gammad} \Big[ \frac{1}{2}\|\Phi_{x,y}(v)-v\|^2 + 3d\|\nabla \Phi_{x,y} (v) - I \|_\op^2 \Big]\,.
    \]
    Lemma~\ref{lem:TargetMapPtWise} states that
    \[
    \|\Phi_{x,y}(v)-v\| \leq \frac{3}{2T}\|x-y\| + 2h^2R(x,v)\,,
    \]
    where
    \[
    R(x,v) \coloneqq LT^{-1}\|x\|+L\|v\|+MT^{-1}\|x\|^2+MT\|v\|^2\,,
    \]
    and Lemma~\ref{lem:TargetMapJacobian} states that
    \[
    \|\nabla\Phi_{x,y}(v)-I\|_\op \leq \frac{11}{2}MT^2\|x-y\| + \frac{22}{9}h^2 Q(x,v) \,,
    \]
    where 
    \[
    Q(x,v) \coloneqq L+M\|x\|+MT\|v\|+(M^2T^2+N)\|x\|^2+(M^2T^4+NT^2)\|v\|^2\,.
    \]
    Therefore we have that
    \[
    \KL((\Phi_{x,y})_\# \gammad \dvert \gammad) \leq \E_{v \sim \gammad} \left[ \frac{1}{2} \left( \frac{3}{2T}\|x-y\| + 2h^2R(x,v) \right)^2 + 3d \left(  \frac{11}{2}MT^2\|x-y\| + \frac{22}{9}h^2 Q(x,v)   \right)^2 \right]\,.
    \]
    Using $(a_1+a_2)^2 \leq 2a_1^2 + 2a_2^2$ for $a_1,a_2 \in \R$ we get
    \[
    \KL((\Phi_{x,y})_\# \gammad \dvert \gammad) \leq \left( \frac{9}{4T^2} + \frac{363}{2}dM^2T^4 \right)\|x-y\|^2 + 4h^4 \E_{v \sim \gammad} \left[ R^2(x,v) + d\frac{242}{27} Q^2(x,v)  \right]\,.
    \]
    Applying $(a_1 + a_2 + \dots + a_k)^2 \leq k(a_1^2 + a_2^2 + \dots + a_k^2)$ for $a_1,\dots,a_k \in \R$ with $k=4$ and $5$ on $R^2(x,v)$ and $Q^2(x,v)$ respectively, and simplifying constants, we get that
    \begin{align*}
        \E_{v \sim \gammad}\Big[R^2(x,v) + d \frac{242}{27} Q^2(x,v)\Big] \leq ~&45 dL^2 + \Big(4 \frac{L^2}{T^2} + 45dM^2 \Big)\|x\|^2 + \Big(4\frac{M^2}{T^2} + 45d(M^2T^2 +N)^2\Big)\|x\|^4\\
        &+\E_{v \sim \gammad} \Big[ (4L^2 + 45dM^2T^2)\|v\|^2 + \big(4M^2T^2 + 45d(M^2T^4 + NT^2)^2\big)\|v\|^4 \Big]\,.
    \end{align*}
    Using $\E_{\gammad}[\|v\|^2]= d$ and $\E_{\gammad}[\|v\|^4] = d(d+2)$ gives us the desired bound.

\end{proof}

\subsection{Proof of Lemma~\ref{lem:KL_Target_OneStepGeneral}}\label{app:PfOf_KL_Target_OneStepGeneral}

\begin{proof}
We start with Lemma~\ref{lem:KLConvexityOneStep} with $\bP = \tilde \bP_{T,h}$ and $\bQ = \bP_T$. Picking $\gamma$ to be the optimal $W_2$--coupling between $\mu$ and $\pi$, the statement follows from Lemmas~\ref{lem:DiracToPerturbed_Target} and~\ref{lem:KL_Target_OneStepRegularity}.

\end{proof}

\subsection{Proof of Theorem~\ref{thm:KLTarget}}\label{app:PfOf_ThmKLTarget}

\begin{proof}
    Let $\tilde \bP$ and $\bP$ be shorthand for $\tilde \bP_{T,h}$ and $\bP_T$ respectively.
    
    For $k=0$, Lemma~\ref{lem:KL_Target_OneStepGeneral} gives a bound on $\KL(\mu \tilde \bP \dvert \nu)$ and therefore Theorem~\ref{thm:KLTarget} holds.

    Now suppose $k \geq 1$. 
    As $\KL(\mu \tilde \bQ^k \tilde \bP \dvert \nu) = \KL((\mu \tilde \bQ^k)\tilde \bP \dvert \nu \bP)$ we can apply Lemma~\ref{lem:KL_Target_OneStepGeneral} to obtain
    \begin{align*}
        \KL(\mu \tilde \bQ^k \tilde \bP \dvert \nu) \leq \Big( \frac{9}{4T^2} + &\frac{363}{2}dM^2T^4 \Big)W_2^2(\mu \tilde \bQ^k, \nu) +\\
        4h^4 \Bigg[ 45 dL^2& + \Big(4 \frac{L^2}{T^2} + 45dM^2 \Big)m_2(\mu \tilde \bQ^k) + \Big(4\frac{M^2}{T^2} + 45d(M^2T^2 +N)^2\Big)m_4(\mu \tilde \bQ^k) \\
         +(4&L^2 + 45dM^2T^2)d + \big(4M^2T^2 + 45d(M^2T^4 + NT^2)^2\big)d(d+2) \Bigg]\,.
    \end{align*}
    Using triangle inequality and $(a_1+a_2)^2 \leq 2a_1^2 + 2a_2^2$ for $a_1,a_2 \in \R$, we have that
    \begin{align*}
    W_2^2(\mu \tilde \bQ^k, \nu) &\leq 2 W_2^2(\mu \tilde \bQ^k, \tilde \nu_{\tilde \bQ}) + 2W_2^2(\tilde \nu_{\tilde \bQ}, \nu)\\
    &\leq 2 c_1^2 e^{-2c_2 k}\,W_2^2(\mu, \tilde \nu_{\tilde \bQ}) + 2 \Delta_h^2
    \end{align*}
    where the last inequality follows from Assumptions~\ref{assumption:W2Mixing} and~\ref{assumption:W2Bias}.
    Therefore we obtain

    \[
    \KL(\mu \tilde \bQ^k \tilde \bP \dvert \nu) \leq 2c_1^2 e^{-2c_2 k} \Big( \frac{9}{4T^2} + \frac{363}{2}dM^2T^4 \Big) W_2^2(\mu, \tilde \nu_{\tilde \bQ}) + \mathsf{bias}
    \]
where 
\begin{align*}
        &\mathsf{bias} \leq 2 \Delta_h^2\Big( \frac{9}{4T^2} + \frac{363}{2}dM^2T^4 \Big) + 4h^4 \Bigg[ 45 dL^2 + \Big(4 \frac{L^2}{T^2} + 45dM^2 \Big)m_2(\mu \tilde \bQ^{k})\\
        & + \Big(4\frac{M^2}{T^2} + 45d(M^2T^2 +N)^2\Big)m_4(\mu \tilde \bQ^{k}) +(4L^2 + 45dM^2T^2)d + \big(4M^2T^2 + 45d(M^2T^4 + NT^2)^2\big)d(d+2) \Bigg]\,.
\end{align*}

\end{proof}

\section{Mixing Time in Rényi Divergence}

\subsection{Proof of Lemma~\ref{lem:Renyi_Perturbed_Mixing}}\label{app:PfOf_Renyi_Perturbed_Mixing}

\begin{proof}
Recall that both $\gammad$ and $\psi_\#\gammad$ admit densities with respect to
Lebesgue measure, which we denote by $\gammad(v)$ and $(\psi_\#\gammad)(v)$,
respectively.  Since $\|\nabla\psi(v)-I\|_{\op}<1$, we have
$\det\nabla\psi(v)>0$ for all $v\in\R^d$, and the change-of-variables formula
yields
\begin{equation}\label{eq:ChangeOfVariableRenyi}
(\psi_\#\gammad)(\psi(v))
=
\frac{\gammad(v)}{\det\nabla\psi(v)},
\qquad v\in\R^d.
\end{equation}

By definition of R\'enyi divergence (Definition~\ref{def:RenyiDivergence}),
\[
\sfR_q(\psi_\#\gammad \dvert \gammad)
=
\frac{1}{q-1}
\log
\E_{V\sim\gammad}
\left[
\left(
\frac{(\psi_\#\gammad)(V)}{\gammad(V)}
\right)^q
\right].
\]
Rewriting the integrand,
\[
\left(
\frac{(\psi_\#\gammad)(v)}{\gammad(v)}
\right)^q
\gammad(v)
=
\left(
\frac{(\psi_\#\gammad)(v)}{\gammad(v)}
\right)^{q-1}
(\psi_\#\gammad)(v),
\]
we may equivalently express the R\'enyi divergence as
\[
\sfR_q(\psi_\#\gammad \dvert \gammad)
=
\frac{1}{q-1}
\log
\E_{U\sim\psi_\#\gammad}
\left[
\left(
\frac{(\psi_\#\gammad)(U)}{\gammad(U)}
\right)^{q-1}
\right].
\]

We now use the defining property of the pushforward measure: for any measurable
function $F:\R^d\to\R$,
\[
\E_{U\sim\psi_\#\gammad}[F(U)]
=
\E_{V\sim\gammad}[F(\psi(V))].
\]
Applying this identity with
\[
F(u)
=
\left(
\frac{(\psi_\#\gammad)(u)}{\gammad(u)}
\right)^{q-1},
\]
we obtain
\[
\sfR_q(\psi_\#\gammad \dvert \gammad)
=
\frac{1}{q-1}
\log
\E_{V\sim\gammad}
\left[
\left(
\frac{(\psi_\#\gammad)(\psi(V))}{\gammad(\psi(V))}
\right)^{q-1}
\right].
\]

Substituting the change-of-variables identity~\eqref{eq:ChangeOfVariableRenyi}
then yields an expression entirely in terms of $V\sim\gammad$, which we bound
in the remainder of the proof.  In particular,
\begin{align*}
    \sfR_q(\psi_\#\gammad \dvert \gammad) 
    &= \frac{1}{q-1} \log \E_{v \sim \gammad} \left[ \left(\frac{\gammad(v)|\det \nabla \psi (v)|^{-1}}{\gammad (\psi(v))}\right)^{q-1} \right]\\
    &= \frac{1}{q-1} \log \E_{v \sim \gammad} \left[ \exp \left( \frac{q-1}{2}(\|\psi(v)\|^2 - \|v\|^2) \right) | \det \nabla \psi(v)|^{-(q-1)}   \right]
\end{align*}
We also have the following bounds
\begin{align*}
    \|\psi(v)\|^2 - \|v\|^2 &= 2 \langle v, \psi(v)-v\rangle + \| \psi(v)-v\|^2\\
    &\leq 2 \|v\| \| \psi(v)-v\| + \mfm_1^2\\
    &\leq 2 \mfm_1 \|v\| + \mfm_1^2
\end{align*}
and
\[
| \det \nabla \psi (v)| \geq (1-\mfm_2)^d\,.
\]
Therefore we obtain
\begin{align*}
    \sfR_q(\psi_\#\gammad \dvert \gammad) &= \frac{1}{q-1} \log \E_{v \sim \gammad} \left[ \exp \left( \frac{q-1}{2}(\|\psi(v)\|^2 - \|v\|^2) \right) | \det \nabla \psi(v)|^{-(q-1)}   \right]\\
    &\leq \frac{1}{q-1} \log \E_{v \sim \gammad} \left[ \exp \left( \frac{q-1}{2}(2 \mfm_1 \|v\| + \mfm_1^2) \right) (1-\mfm_2)^{-d(q-1)}   \right]\\
    &= \frac{1}{q-1} \left[ -d(q-1) \log (1-\mfm_2) + \frac{q-1}{2} \mfm_1^2 + \log \E_{v \sim \gammad} \left[e^{(q-1) \mfm_1\|v\|} \right] \right].
\end{align*}
Applying Lemma~\ref{lem:Helper_Chi_d} with $c=(q-1) \mfm_1$, we get
\begin{align*}
    \sfR_q(\psi_\#\gammad \dvert \gammad) &\leq d \log \frac{1}{1-\mfm_2} + \frac{\mfm_1^2}{2} + \sqrt{d}\, \mfm_1  + \frac{(q-1)\mfm_1^2}{2}\\
    &= d \log \frac{1}{1-\mfm_2} + \sqrt{d} \, \mfm_1  + \frac{q\mfm_1^2}{2}
\end{align*}
Using $\log \frac{1}{1-x} \leq \frac{x}{1-x}$ for $x \in [0, 1)$ we get
\begin{align*}
    \sfR_q(\psi_\#\gammad \dvert \gammad) &\leq \frac{d \mfm_2}{1-\mfm_2} + \sqrt{d}\,\mfm_1  + \frac{q\mfm_1^2}{2}\,.
\end{align*}

\end{proof}

\subsection{Proof of Lemma~\ref{lem:Renyi_Mixing_OneStepRegularity}}\label{app:PfOf_Renyi_Mixing_OneStepRegularity}

\begin{proof}
    Applying Lemmas~\ref{lem:DiracToPerturbed_Mixing} and~\ref{lem:Renyi_Perturbed_Mixing} with $\psi = \varphi_{x,y}$~\eqref{eq:MixingMap} and bounds $\mfm_1$ and $\mfm_2$ given by Lemmas~\ref{lem:MixingMapPtWise} and~\ref{lem:MixingMapJacobian} respectively, gives us
    \[
    \sfR_q(\delta_x \tilde \bP_{T,h} \dvert \delta_y \tilde \bP_{T,h}) \leq \frac{99}{14}d M T^2 \|x-y\| + \frac{3\sqrt{d}}{2T}\|x-y\| + \frac{9q}{8T^2}\|x-y\|^2
    \]
    Simplifying the constants gives us an upper bound of
    \[
    \sfR_q(\delta_x \tilde \bP_{T,h} \dvert \delta_y \tilde \bP_{T,h}) \leq \left( 8d MT^2 + \frac{3\sqrt{d}}{2T} \right)\|x-y\| + \frac{9q}{8T^2}\|x-y\|^2\,.
    \]
\end{proof}

\subsection{Proof of Lemma~\ref{lem:Renyi_Mixing_OneStepGeneral}}\label{app:PfOf_Renyi_Mixing_OneStepGeneral}

\begin{proof}
    Direct application of Lemma~\ref{lem:RenyiConvexityOneStep} with Lemma~\ref{lem:Renyi_Mixing_OneStepRegularity} proves the result.
\end{proof}

\subsection{Proof of Theorem~\ref{thm:RenyiMixing}}\label{app:PfOf_ThmRenyiMixing}

\begin{proof}
    Let $\tilde \bP$ and $\tilde \nu$ be shorthand for $\tilde \bP_{T,h}$ and $\tilde \nu_h$ respectively.
    We begin by applying Lemma~\ref{lem:Renyi_Mixing_OneStepGeneral} to obtain
    \[
    \sfR_q(\mu \tilde \bP^{k+1} \dvert \tilde \nu) \leq \inf_{\gamma \in \Gamma(\mu \tilde \bP^{k}, \tilde \nu)} \frac{1}{q-1} \log \E_{(x,y) \sim \gamma} \left[ e^{(q-1)\left[ \left( 8d MT^2 + \frac{3 \sqrt{d}}{2T} \right)\|x-y\| + \frac{9q}{8T^2}\|x-y\|^2   \right]} \right]\,.
    \]
    Let $\delta_1 \coloneqq (q-1)\left( 8d MT^2 + \frac{3\sqrt{d}}{2T} \right)$ and $\delta_2 \coloneqq \frac{9q(q-1)}{8T^2}$. Then taking the infimum inside the logarithm, we obtain
    \[
    \sfR_q(\mu \tilde \bP^{k+1} \dvert \tilde \nu) \leq \frac{1}{q-1} \log \inf_{\gamma \in \Gamma(\mu \tilde \bP^{k}, \tilde \nu)} \E_{(x,y) \sim \gamma} \left[ e^{\delta_1 \|x-y\| + \delta_2 \|x-y\|^2}  \right]\,.
    \]
    For any $\beta > 0$, it holds that $\delta_1\|x-y\| \leq \frac{\beta^2 \delta_1^2}{2} + \frac{\|x-y\|^2}{2 \beta^2}$. We will choose $\beta$ (independent of $x,y$) later. Applying this, we obtain
    \begin{align*}
    \sfR_q(\mu \tilde \bP^{k+1} \dvert \tilde \nu) &\leq \frac{1}{q-1} \log \inf_{\gamma \in \Gamma(\mu \tilde \bP^{k}, \tilde \nu)} \E_{(x,y) \sim \gamma} \left[ e^{\frac{\beta^2 \delta_1^2}{2} + \frac{\|x-y\|^2}{2 \beta^2} + \delta_2 \|x-y\|^2}  \right]\\
    &= \frac{1}{q-1} \log \inf_{\gamma \in \Gamma(\mu \tilde \bP^{k}, \tilde \nu)} \E_{(x,y) \sim \gamma} \left[ e^{\frac{\beta^2 \delta_1^2}{2} + (\delta_2 + \frac{1}{2\beta^2})\|x-y\|^2}  \right]\,.
    \end{align*}
    Breaking up the exponential and the logarithm, we get
    \[
    (q-1)\sfR_q(\mu \tilde \bP^{k+1} \dvert \tilde \nu) \leq \frac{\beta^2 \delta_1^2}{2} +   \log \inf_{\gamma \in \Gamma(\mu \tilde \bP^{k}, \tilde \nu)} \E_{(x,y) \sim \gamma} \left[ e^{(\delta_2 + \frac{1}{2\beta^2})\|x-y\|^2}  \right]\,.
    \]
    Under Assumption~\ref{assumption:OWMixing} for $\tilde \bQ = \tilde \bP_{T,h}$, we know that
    \[
    W_{\psi}(\mu \tilde \bP^k, \tilde \nu) \leq c_1 e^{-c_2 k}\, W_{\psi} (\mu, \tilde \nu)\,.
    \]
    Due to Definition~\ref{def:OrliczWassersteinDistance}, this means that there exists $\gamma^* \in \Gamma (\mu \tilde \bP^k, \tilde \nu)$ and some $\lambda^* \leq c_1 e^{-c_2 k}\, W_{\psi} (\mu, \tilde \nu)$ such that 
    \[
\E_{(x,y) \sim \gamma^*} \left[ e^{\frac{\|x-y\|^2}{{\lambda^{*}}^2}} \right] \leq 2\,.
    \]
    Furthermore, using Lemma~\ref{lem:Helper_Orlicz}, we know that for any $a \in [0, \frac{1}{{\lambda^*}^2}]$
    \[
    \E_{(x,y) \sim \gamma^*} \left[ e^{a\|x-y\|^2} \right] \leq 2^{a {\lambda^*}^2}\,.
    \]
    Hence, as long as $\delta_2 + \frac{1}{2\beta^2} \leq \frac{1}{{\lambda^*}^2}$, we can write
    \begin{align*}
       (q-1) \sfR_q(\mu \tilde \bP^{k+1} \dvert \tilde \nu) &\leq \frac{\beta^2 \delta_1^2}{2} + \log \E_{(x,y) \sim \gamma^*} \left[ e^{(\delta_2 + \frac{1}{2\beta^2})\|x-y\|^2}  \right]\\
        &\leq \frac{\beta^2 \delta_1^2}{2} + \log 2^{(\delta_2 + \frac{1}{2\beta^2}){\lambda^*}^2}\\
        &= \frac{\beta^2 \delta_1^2}{2} + \left(\delta_2 + \frac{1}{2\beta^2}\right){\lambda^*}^2 \log 2 \\
        &\leq \delta_2 (\log 2) c_1^2 e^{-2c_2 k}\, W_{\psi}^2 (\mu, \tilde \nu) + \frac{\beta^2 \delta_1^2}{2} + c_1^2 e^{-2c_2 k}\, W_{\psi}^2 (\mu, \tilde \nu)\frac{\log 2}{2 \beta^2}\,.
    \end{align*}
    To minimize the upper bound, we pick 
    \[
    \beta^2 = \frac{c_1 e^{-c_2 k}\, W_{\psi}(\mu, \tilde \nu) \sqrt{\log 2}}{\delta_1}\,.
    \]
    Plugging this in, our upper bound is
    \[
    (q-1)\sfR_q(\mu \tilde \bP^{k+1} \dvert \tilde \nu) \leq \delta_2 (\log 2) c_1^2 e^{-2c_2 k}\, W_{\psi}^2 (\mu, \tilde \nu) + \delta_1 \sqrt{\log 2}\, c_1 e^{-c_2 k}\, W_{\psi}(\mu, \tilde \nu)\,.
    \]
    Basic simplifications lead to the upper bound
    \[
    (q-1)\sfR_q(\mu \tilde \bP^{k+1} \dvert \tilde \nu) \leq (\delta_1 + \delta_2) \sqrt{\log 2}\, c_1^2 e^{-c_2 k}\, \max\{1, W_{\psi}^2 (\mu, \tilde \nu) \}\,.
    \]
    However this holds provided 
    \[
    \delta_2 + \frac{\delta_1 e^{c_2 k}}{2 c_1 W_{\psi}(\mu, \tilde \nu) \sqrt{\log 2}} \leq \frac{e^{2 c_2 k}}{c_1^2 W_{\psi}^2(\mu, \tilde \nu)}
    \]
    where we have plugged in the optimal value of $\beta^2$ and also used that $\lambda^* \leq c_1 e^{-c_2 k}\, W_{\psi} (\mu, \tilde \nu)$.
    This holds so long as
    \[
    k \geq \frac{1}{c_2} \log \left( \frac{c_1 W_{\psi}(\mu, \tilde \nu)}{4 \sqrt{\log 2}} \left(\delta_1 + \sqrt{\delta_1^2 + 16 (\log 2) \delta_2}\right) \right)\,.
    \]

\end{proof}

\section{Asymptotic Bias in Rényi Divergence}

\subsection{Proof of Lemma~\ref{lem:Renyi_Target_OneStepRegularity}}\label{app:PfOf_Renyi_Target_OneStepRegularity}

\begin{proof}
    We use $\Phi$ as shorthand for $\Phi_{x,y}$.
    We start with a change of variable step similar to the proof of Lemma~\ref{lem:Renyi_Perturbed_Mixing}, i.e., we have that
    \[
    \sfR_q (\Phi_\# \gammad \dvert \gammad) = \frac{1}{q-1} \log \E_{v \sim \gammad} \left[ \exp \left( \frac{q-1}{2}(\|\Phi(v)\|^2 - \|v\|^2) \right) | \det \nabla \Phi(v)|^{-(q-1)}   \right].
    \]
    Lemma~\ref{lem:TargetMap-First-PtWise} states that
    \[
    \|\Phi(v)-v\| \leq \sfp_{xy}\|x-y\| + h\sfp_v\|v\| + h\sfp_x\|x\|
    \]
    where $\sfp_{xy} \coloneqq \frac{3}{2T}$, $\sfp_v \coloneqq \frac{7}{25T}$, and $\sfp_x \coloneqq \frac{49}{180}L$\,.
    Lemma~\ref{lem:TargetMap-First-Jacobian} states that
    \[
    \|\nabla \Phi(v)-I\|_\op \leq \min \left\{ \frac{15}{18}, \sfj_{xy}\|x-y\| + h\sfj_c + h\sfj_v\|v\| + h\sfj_x\|x\| \right\}
    \]
    where $\sfj_{xy} \coloneqq \frac{11}{2}MT^2$, $\sfj_c \coloneqq \frac{44}{135T}$, $\sfj_v \coloneqq \frac{440}{135}MT^2$, and $\sfj_x \coloneqq \frac{352}{675}MT$\,.

    We have that
    \begin{align*}
        \|\Phi(v)\|^2 - \|v\|^2 &\leq 2 \|v\| \|\Phi(v)-v\| + \|\Phi(v)-v\|^2 \\
        &\leq 2\|v\| (\sfp_{xy}\|x-y\| + h\sfp_v\|v\| + h\sfp_x\|x\|) + (\sfp_{xy}\|x-y\| + h\sfp_v\|v\| + h\sfp_x\|x\|)^2 \\
        &=  h \sfp_v\|v\|^2 (h \sfp_v + 2) + 2(h \sfp_v + 1)\|v\|(\sfp_{xy}\|x-y\| + h \sfp_x\|x\|) + (\sfp_{xy}\|x-y\| + h \sfp_x\|x\|)^2
    \end{align*}

    To bound the Jacobian, note that for any matrix $A$ with $\|A-I\|_\op < 1$, $|\det A| \geq (1-\|A-I\|_\op)^d$. Hence
    \begin{align*}
    |\det\nabla \Phi(v)|^{-(q-1)} &\leq (1-\|\nabla \Phi(v)-I\|_\op)^{-(q-1)d}\\
    &\leq \exp{ \left( (q-1)d \frac{\|\nabla \Phi(v)-I\|_\op}{1-\|\nabla \Phi(v)-I\|_\op}\right)}\\
    &\leq \exp{\left( 6(q-1)d \|\nabla \Phi(v)-I\|_\op \right)}\\
    &\leq \exp{\left( 6(q-1)d (\sfj_{xy}\|x-y\| + h\sfj_c + h\sfj_v\|v\| + h\sfj_x\|x\|) \right)}
    \end{align*}

    Putting it all together we get that
    \[
    \exp \left( \frac{q-1}{2}(\|\Phi(v)\|^2 - \|v\|^2) \right) | \det \nabla \Phi(v)|^{-(q-1)}    \leq \exp{ \left( (q-1)(\sfc_0 + \sfc_1\|v\| + \sfc_2\|v\|^2) \right)}
    \]
    where
    \begin{align*}
        \sfc_0 &\coloneqq \frac{(\sfp_{xy}\|x-y\| + h \sfp_x \|x\|)^2}{2} + 6d\sfj_{xy}\|x-y\| + 6d h \sfj_c + 6d h \sfj_x\|x\|\\
        \sfc_1 &\coloneqq (h \sfp_v +1)(\sfp_{xy}\|x-y\| + h \sfp_x \|x\|) + 6dh \sfj_v\\
        \sfc_2 &\coloneqq \frac{h \sfp_v(h \sfp_v +2)}{2}\,.
    \end{align*}
    Let $s \coloneqq q-1$. Hence we have that
    \begin{align*}
        \E_{v \sim \gammad} \left[ e^{ \left( s(\sfc_0 + \sfc_1\|v\| + \sfc_2\|v\|^2) \right)} \right] &= e^{s \sfc_0} \E_{v \sim \gammad} \left[ e^{ \left( s(\sfc_1\|v\| + \sfc_2\|v\|^2) \right)} \right]\\
        &\leq e^{s \sfc_0} \left( \E_{v \sim \gammad} \left[ e^{2s\sfc_1\|v\|} \right] \right)^{\frac{1}{2}} \left( \E_{v \sim \gammad} \left[ e^{2s\sfc_2\|v\|^2} \right] \right)^{\frac{1}{2}}\\
        &\leq e^{s \sfc_0} e^{s \sfc_1 \sqrt{d}+ s^2\sfc_1^2} (1-4s\sfc_2)^{-\frac{d}{4}}
    \end{align*}
    where the last inequality is by Lemma~\ref{lem:Helper_Chi_d} and properties of Gaussian distribution and holds whenever 
    \[
    s\sfc_2 < \frac{1}{4}\,.
    \]
    Taking log and dividing by $s$ we get
    \[
    \sfR_q (\Phi_\# \gammad \dvert \gammad) \leq \sfc_0 + \sqrt{d}\,\sfc_1  + s \sfc_1^2 - \frac{d}{4s}\log(1-4s\sfc_2)\,.
    \]
    Using $-\log{(1-x)} \leq \frac{x}{1-x} \leq 2x$ for $x \in [0, \frac{1}{2}]$, we get
    \[
    \sfR_q (\Phi_\# \gammad \dvert \gammad) \leq \sfc_0 + \sqrt{d}\,\sfc_1 + s \sfc_1^2 + 2d \sfc_2
    \]
    which holds so long as 
    \[
    s\sfc_2 \leq \frac{1}{8}\,.
    \]
    The condition that $s\sfc_2 \leq \frac{1}{8}$ is satisfied when
    \begin{align*}
    0 < h \leq \frac{1}{\sfp_v} \left(\left(1 + \frac{1}{4s}\right)^{\frac{1}{2}}-1 \right) = \frac{25T}{7} \left(\left(1 + \frac{1}{4s}\right)^{\frac{1}{2}}-1 \right).
    \end{align*}
    To simplify the upper bound
    \[
    \sfR_q (\Phi_\# \gammad \dvert \gammad) \leq \sfc_0 + \sqrt{d}\,\sfc_1  + s \sfc_1^2 + 2d \sfc_2
    \]
    note that
    \begin{align*}
    \sfc_0 &\leq \sfp_{xy}^2\|x-y\|^2 + 6d\sfj_{xy}\|x-y\| + h^2 \sfp_x^2\|x\|^2 + 6dh \sfj_x\|x\| + 6dh \sfj_c\\[0.7em]
    \sqrt{d}\,\sfc_1 &= \sqrt{d}(h \sfp_v +1)\sfp_{xy}\|x-y\| + \sqrt{d}(h \sfp_v +1)h \sfp_x \|x\| + 6d^{3/2}h \sfj_v\\[0.7em]
    s \sfc_1^2 &\leq 3s(h \sfp_v +1)^2\sfp_{xy}^2\|x-y\|^2 + 3s(h \sfp_v +1)^2h^2 \sfp_x^2 \|x\|^2 + 108sd^2h^2 \sfj_v^2 \\[0.7em]
    2d\sfc_2 &\leq dh^2 \sfp_v^2 + 2dh \sfp_v\,.
    \end{align*}
    Therefore, grouping like terms together we get that
    \begin{align*}
        \sfR_q (\Phi_\# \gammad \dvert \gammad) &\leq \sfp_{xy}^2\|x-y\|^2(1+3s(h\sfp_v+1)^2) + \sqrt{d}\,\|x-y\|(6\sqrt{d}\,\sfj_{xy} + \sfp_{xy}(h \sfp_v+1))\\
        &+ h^2\sfp_x^2\|x\|^2(1+3s(h\sfp_v+1)^2) + h\sqrt{d}\, \|x\|(6\sqrt{d}\, \sfj_x + \sfp_x(h \sfp_v +1))\\
        &+dh(6 \sfj_c + 6\sqrt{d}\, \sfj_v + 108sdh \sfj_v^2 + h \sfp_v^2 +2 \sfp_v)\,.
    \end{align*}
    
\end{proof}

\subsection{Proof of Lemma~\ref{lem:Renyi_Target_OneStepGeneral}}\label{app:PfOf_Renyi_Target_OneStepGeneral}

\begin{proof}
    The proof follows by applying Lemma~\ref{lem:RenyiConvexityOneStep} with $\bP = \tilde \bP_{T,h}$ and $\bQ = \bP_T$ along with Lemma~\ref{lem:Renyi_Target_OneStepRegularity}, which states that
    \[
    \sfR_q(\mu \tilde \bP_{T,h} \dvert \pi \bP_T) \leq \inf_{\gamma \in \Gamma(\mu, \pi)} \frac{1}{q-1} \log \E_{(x,y) \sim \gamma} \left[ e^{(q-1)(\alpha_0 + \alpha_1 \|x-y\| + \alpha_2 \|x-y\|^2 + \alpha_3 \|x\| + \alpha_4 \|x\|^2)} \right]
    \]
    with $\alpha_0, \alpha_2, \alpha_3, \alpha_4$ defined as the coefficients of the result from Lemma~\ref{lem:Renyi_Target_OneStepRegularity}.

\end{proof}

\subsection{Proof of Theorem~\ref{thm:RenyiTarget}}\label{app:PfOf_ThmRenyiTarget}

We first state the following corollary, the proof for which closely follows that of Theorem~\ref{thm:RenyiMixing} and is therefore omitted.
The corollary is helpful in the proof of Theorem~\ref{thm:RenyiTarget}.

\begin{corollary}\label{cor:RenyiMixingGeneral}
    Consider the same conditions as Theorem~\ref{thm:RenyiMixing} and suppose Assumption~\ref{assumption:OWMixing} holds for a general Markov kernel $\tilde \bQ$ with stationary distribution $\tilde \nu_{\tilde \bQ}$. Further define $\delta_1$ and $\delta_2$ as in Theorem~\ref{thm:RenyiMixing}. Then for any integer 
    \[
    k \geq \frac{1}{c_2} \log \left( \frac{c_1 W_{\psi}(\mu, \tilde \nu_{\tilde \bQ})}{4 \sqrt{\log 2}} \left(\delta_1 + \sqrt{\delta_1^2 + 16 (\log 2) \delta_2}\right) \right)
    \]
    it holds that
    \[
    \sfR_q(\mu \tilde \bQ^k \bP_{T,h}\dvert \tilde \nu_{\tilde \bQ} \tilde \bP_{T,h}) \leq  \frac{\delta_1 + \delta_2}{q-1} \sqrt{\log 2}\, c_1^2 e^{-c_2 k}\, \max\{1, W_{\psi}^2 (\mu, \tilde \nu_{\tilde \bQ}) \}\,.
    \]
\end{corollary}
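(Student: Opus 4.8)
\textbf{Proof proposal for Corollary~\ref{cor:RenyiMixingGeneral}.}
The plan is to transcribe the proof of Theorem~\ref{thm:RenyiMixing} almost verbatim, replacing the iterated velocity-Verlet kernel $\tilde\bP_{T,h}^{k}$ by the general iterated kernel $\tilde\bQ^{k}$ and the invariant distribution $\tilde\nu_h$ by $\tilde\nu_{\tilde\bQ}$. This works because that argument used only two structural inputs: the one-step R\'enyi regularization bound for a \emph{single} $\tilde\bP_{T,h}$ transition (Lemma~\ref{lem:Renyi_Mixing_OneStepGeneral}), which remains available here because the final step of the chain $\mu\tilde\bQ^{k}\tilde\bP_{T,h}$ is exactly one $\tilde\bP_{T,h}$ update and the hypotheses of Theorem~\ref{thm:RenyiMixing} (Assumption~\ref{assumption:potential}(a)--(c), $h\ge 0$, $L(T^2+Th)\le\frac1{12}$) are in force; and an Orlicz--Wasserstein contraction of the iterated kernel, which is now supplied by Assumption~\ref{assumption:OWMixing} for $\tilde\bQ$ rather than for $\tilde\bP_{T,h}$. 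If $W_\psi(\mu,\tilde\nu_{\tilde\bQ})=\infty$ the asserted bound is vacuous, so I may assume it is finite, in which case Assumption~\ref{assumption:OWMixing} forces $W_\psi(\mu\tilde\bQ^{k},\tilde\nu_{\tilde\bQ})<\infty$ as well.

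First I would apply Lemma~\ref{lem:Renyi_Mixing_OneStepGeneral} to the pair of measures $(\mu\tilde\bQ^{k},\tilde\nu_{\tilde\bQ})$, using $\mu\tilde\bQ^{k}\tilde\bP_{T,h}=(\mu\tilde\bQ^{k})\tilde\bP_{T,h}$ and $\tilde\nu_{\tilde\bQ}\tilde\bP_{T,h}=(\tilde\nu_{\tilde\bQ})\tilde\bP_{T,h}$, to obtain
\[
\sfR_q(\mu\tilde\bQ^{k}\tilde\bP_{T,h}\dvert\tilde\nu_{\tilde\bQ}\tilde\bP_{T,h})\le\inf_{\gamma\in\Gamma(\mu\tilde\bQ^{k},\tilde\nu_{\tilde\bQ})}\frac{1}{q-1}\log\E_{(x,y)\sim\gamma}\Big[e^{(q-1)\left[(8dMT^2+\frac{3\sqrt d}{2T})\|x-y\|+\frac{9q}{8T^2}\|x-y\|^2\right]}\Big].
\]
With $\delta_1=(q-1)(8dMT^2+\frac{3\sqrt d}{2T})$ and $\delta_2=\frac{9q(q-1)}{8T^2}$ as in Theorem~\ref{thm:RenyiMixing}, I would pull the infimum inside the logarithm, introduce a free parameter $\beta>0$ through the Young-type inequality $\delta_1\|x-y\|\le\frac{\beta^2\delta_1^2}{2}+\frac{\|x-y\|^2}{2\beta^2}$, and reduce matters to controlling $\E_\gamma[e^{(\delta_2+\frac{1}{2\beta^2})\|x-y\|^2}]$. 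Then I would invoke Assumption~\ref{assumption:OWMixing} for $\tilde\bQ$ to produce a coupling $\gamma^*$ with $\E_{\gamma^*}[e^{\|x-y\|^2/(\lambda^*)^2}]\le 2$ for some $\lambda^*\le c_1 e^{-c_2 k}W_\psi(\mu,\tilde\nu_{\tilde\bQ})$, and apply Lemma~\ref{lem:Helper_Orlicz} to bound $\E_{\gamma^*}[e^{a\|x-y\|^2}]\le 2^{a(\lambda^*)^2}$ for $a=\delta_2+\frac{1}{2\beta^2}\le(\lambda^*)^{-2}$. Substituting $(\lambda^*)^2\le c_1^2 e^{-2c_2 k}W_\psi^2(\mu,\tilde\nu_{\tilde\bQ})$, optimizing over $\beta$ with the choice $\beta^2=c_1 e^{-c_2 k}W_\psi(\mu,\tilde\nu_{\tilde\bQ})\sqrt{\log 2}/\delta_1$, and performing the same elementary simplification as in Theorem~\ref{thm:RenyiMixing} yields $(q-1)\sfR_q\le(\delta_1+\delta_2)\sqrt{\log 2}\,c_1^2 e^{-c_2 k}\max\{1,W_\psi^2(\mu,\tilde\nu_{\tilde\bQ})\}$, which is the claim after dividing by $q-1$. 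Finally I would verify that the admissibility condition $\delta_2+\frac{1}{2\beta^2}\le(\lambda^*)^{-2}$, after substituting the optimal $\beta$ and $\lambda^*\le c_1 e^{-c_2 k}W_\psi(\mu,\tilde\nu_{\tilde\bQ})$, becomes a quadratic inequality in $r:=c_1 e^{-c_2 k}W_\psi(\mu,\tilde\nu_{\tilde\bQ})$ that holds precisely when $k$ exceeds the threshold displayed in the corollary.

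Since this is a line-by-line copy of the proof of Theorem~\ref{thm:RenyiMixing} with the substitutions $\tilde\bP_{T,h}^{k}\mapsto\tilde\bQ^{k}$, $\tilde\nu_h\mapsto\tilde\nu_{\tilde\bQ}$, there is no genuine obstacle. The only points meriting a word of care are that Lemma~\ref{lem:Renyi_Mixing_OneStepGeneral} is stated for arbitrary input measures, so the pair $(\mu\tilde\bQ^{k},\tilde\nu_{\tilde\bQ})$ is admissible, and that the coupling $\gamma^*$ must be extracted from the Orlicz--Wasserstein contraction of $\tilde\bQ$ rather than of $\tilde\bP_{T,h}$ — which is exactly what the weakened hypothesis Assumption~\ref{assumption:OWMixing} (for a general kernel $\tilde\bQ$) provides.
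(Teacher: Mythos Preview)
Your proposal is correct and matches the paper's approach exactly: the paper omits the proof and simply states that it ``closely follows that of Theorem~\ref{thm:RenyiMixing},'' which is precisely the line-by-line transcription with the substitutions $\tilde\bP_{T,h}^{k}\mapsto\tilde\bQ^{k}$ and $\tilde\nu_h\mapsto\tilde\nu_{\tilde\bQ}$ that you describe.
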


\begin{proof}[Proof of Theorem~\ref{thm:RenyiTarget}]
Let $\tilde \bP$ be shorthand for $\tilde \bP_{T,h}$.
Applying the weak triangle inequality for Rényi divergence~\eqref{eq:RenyiWTI} together with the monotonicity in the order~\eqref{eq:RenyiMonotonicity}, we have
\[
\sfR_q(\mu \tilde \bQ^k \tilde \bP \dvert \nu) \leq \frac{3}{2}\sfR_{2q}(\mu \tilde \bQ^k \tilde \bP \dvert \tilde \nu_{\tilde \bQ} \tilde \bP) + \sfR_{2q}(\tilde \nu_{\tilde \bQ} \tilde \bP \dvert \nu)\,.
\]

The first term is bounded directly by Corollary~\ref{cor:RenyiMixingGeneral} and this leads to the first term in the statement of Theorem~\ref{thm:RenyiTarget}. To bound the second term, Lemma~\ref{lem:Renyi_Target_OneStepGeneral} implies
\[
\sfR_{2q}(\tilde \nu_{\tilde \bQ} \tilde \bP \dvert \nu) \leq \inf_{\gamma \in \Gamma(\tilde \nu_{\tilde \bQ},  \nu)} \frac{1}{2q-1} \log \E_{(x,y) \sim \gamma} \left[ e^{(2q-1)(\alpha_0 + \alpha_1 \|x-y\| + \alpha_2 \|x-y\|^2 + \alpha_3 \|x\| + \alpha_4 \|x\|^2)} \right]
\]
where 
\begin{equation}
\label{eq:alpha-defs}
\begin{aligned}
        \alpha_0 &\coloneqq dh(6 \sfj_c + 6\sqrt{d}\, \sfj_v + 108(2q-1)dh \sfj_v^2 + h \sfp_v^2 +2 \sfp_v)\\
        \alpha_1 &\coloneqq \sqrt{d}\,(6\sqrt{d}\,\sfj_{xy} + \sfp_{xy}(h \sfp_v+1))\\
        \alpha_2 &\coloneqq \sfp_{xy}^2(1+3(2q-1)(h\sfp_v+1)^2)\\
        \alpha_3 &\coloneqq h\sqrt{d}\, (6\sqrt{d}\, \sfj_x + \sfp_x(h \sfp_v +1))\\
        \alpha_4 &\coloneqq h^2\sfp_x^2(1+3(2q-1)(h\sfp_v+1)^2)\,.
\end{aligned}
\end{equation}

    Separating out the constant term in the exponent, we get
    \[
    (2q-1)\sfR_{2q} (\tilde \nu_{\tilde \bQ} \tilde \bP \dvert \nu) \leq (2q-1)\alpha_0 + \log \inf_{\gamma \in \Gamma(\tilde \nu_{\tilde \bQ},  \nu)} \E_{(x,y) \sim \gamma} \left[ e^{(2q-1)(\alpha_1 \|x-y\| + \alpha_2 \|x-y\|^2 + \alpha_3 \|x\| + \alpha_4 \|x\|^2)} \right]\,.
    \]
    Applying Young's inequality on the terms $\alpha_1 \|x-y\|$ and $\alpha_3 \|x\|$ we get
    \[
    \alpha_1 \|x-y\| \leq \frac{\beta_1^2 \alpha_1^2}{2} + \frac{\|x-y\|^2}{2 \beta_1^2}
    \]
    for any $\beta_1 > 0$ which we will choose later, and
    \[
    \alpha_3 \|x\| \leq \frac{\beta_3^2 \alpha_3^2}{2} + \frac{\|x\|^2}{2\beta_3^2}
    \]
    for any $\beta_3 > 0$ which we will also choose later.
    Define the following
    \begin{align*}
        \eta_0 &\coloneqq \frac{\beta_1^2 \alpha_1^2}{2} + \frac{\beta_3^2 \alpha_3^2}{2}\\
        \eta_1 &\coloneqq \alpha_2 + \frac{1}{2\beta_1^2}\\
        \eta_2 &\coloneqq \alpha_4 + \frac{1}{2\beta_3^2}
    \end{align*}
    to get that
    \begin{equation}\label{eq:RenyiTargetPf-1}
    (2q-1)\sfR_{2q} (\tilde \nu_{\tilde \bQ} \tilde \bP \dvert \nu) \leq (2q-1)(\alpha_0 + \eta_0) + \log \inf_{\gamma \in \Gamma(\tilde \nu_{\tilde \bQ},  \nu)} \E_{(x,y) \sim \gamma} \left[ e^{(2q-1)(\eta_1 \|x-y\|^2 + \eta_2 \|x\|^2)} \right]\,.
    \end{equation}
    Using $\eta_2 \|x\|^2 \leq 2\eta_2\|x-y\|^2 + 2\eta_2 \|y\|^2$, we get that
    \begin{equation}\label{eq:RenyiTargetPf-2}
    \E_{(x,y) \sim \gamma} \left[ e^{(2q-1)(\eta_1 \|x-y\|^2 + \eta_2 \|x\|^2)} \right] \leq \E_{(x,y) \sim \gamma} \left[ e^{(2q-1)(\eta_1 + 2\eta_2) \|x-y\|^2} e^{ 2(2q-1)\eta_2 \|y\|^2)} \right]\,.
    \end{equation}

    Using Hölder's inequality we get that
    \begin{equation}\label{eq:RenyiTargetPf-3}
    \E_{(x,y) \sim \gamma} \left[ e^{(2q-1)(\eta_1 + 2\eta_2) \|x-y\|^2} e^{ 2(2q-1)\eta_2 \|y\|^2)} \right] \leq \underbrace{\left( \E_{(x,y) \sim \gamma} \left[ e^{2(2q-1)(\eta_1 + 2\eta_2) \|x-y\|^2}  \right] \right)^{1/2}}_{\rn{1}} \underbrace{\left( \E_{y \sim \nu} \left[ e^{ 4(2q-1)\eta_2 \|y\|^2)}  \right] \right)^{1/2}}_{\rn{2}}\,.
    \end{equation}

    To handle $\rn{1}$, note that Assumption~\ref{assumption:OWBias} tells us
    \[
    W_{\psi}(\tilde \nu_{\tilde \bQ}, \nu) \leq  \Delta_h\,.
    \]

    By Definition~\ref{def:OrliczWassersteinDistance}, this means that there exists a coupling $ \gamma^* \in \Gamma(\tilde \nu_{\tilde \bQ}, \nu)$ and some $\lambda^* \leq  \Delta_h$ such that
    \[
\E_{(x,y) \sim \gamma^*} \left[ e^{\frac{\|x-y\|^2}{{\lambda^{*}}^2}} \right] \leq 2\,.
    \]

    Furthermore, using Lemma~\ref{lem:Helper_Orlicz}, we know that for any $a \in [0, \frac{1}{{\lambda^*}^2}]$
    \[
    \E_{(x,y) \sim \gamma^*} \left[ e^{a\|x-y\|^2} \right] \leq 2^{a {\lambda^*}^2}\,.
    \]

    Hence, as long as
    \begin{equation}\label{eq:Constraint-RenyiTarget-1}
    2(2q-1)(\eta_1 + 2\eta_2) \leq \frac{1}{{\lambda^*}^2}
    \end{equation}
    we get that 
    \begin{equation}\label{eq:RenyiTargetPf-4}
    \rn{1} \leq 2^{(2q-1)(\eta_1 + 2\eta_2){\lambda^*}^2}\,.
    \end{equation}
    We verify~\eqref{eq:Constraint-RenyiTarget-1} at the end of the proof.

    To handle $\rn{2}$, recall that $\nu$ has finite $\psi$--norm, i.e., for $X \sim \nu$, $\|X\|_{\psi} \leq K_\nu$.
    Hence, using Lemma~\ref{lem:Helper_Orlicz}, as long as 
    \begin{equation}\label{eq:Constraint-RenyiTarget-2}
    4(2q-1) \eta_2 \leq \frac{1}{K_\nu^2}\,,
    \end{equation}
    we can conclude that
    \begin{equation}\label{eq:RenyiTargetPf-5}
    \rn{2} \leq 2^{2(2q-1)\eta_2 K_\nu^2}\,.
    \end{equation}
    We verify~\eqref{eq:Constraint-RenyiTarget-2} at the end of the proof.
    
    Therefore,~\eqref{eq:RenyiTargetPf-1},~\eqref{eq:RenyiTargetPf-2},~\eqref{eq:RenyiTargetPf-3},~\eqref{eq:RenyiTargetPf-4}, and~\eqref{eq:RenyiTargetPf-5} yield
    \[
    (2q-1)\sfR_{2q} (\tilde \nu_{\tilde \bQ}\tilde \bP \dvert \nu) \leq (2q-1)(\alpha_0 + \eta_0) + (2q-1)(\eta_1 + 2\eta_2){\lambda^*}^2 \log 2 + 2(2q-1)\eta_2 K_\nu^2 \log 2\,.
    \]
    Dividing by $2q-1$ throughout gives us that
    \[
    \sfR_{2q} (\tilde \nu_{\tilde \bQ}\tilde \bP \dvert \nu) \leq \alpha_0 + \eta_0 + (\eta_1 + 2\eta_2){\lambda^*}^2 \log 2 + 2\eta_2 K_\nu^2 \log 2\,.
    \]
The optimal values of $\beta_1$ and $\beta_3$ are therefore $\beta_1^*$ and $\beta_3^*$ given by
    \[
    {\beta_1^*}^2 = \frac{\lambda^* \sqrt{\log 2}}{\alpha_1}\,\,,\,\, {\beta_3^*}^2 = \frac{\sqrt{2({\lambda^*}^2 + K_\nu^2) \log 2}}{\alpha_3}\,.
    \]
    At these optimal choices
    \[
    \eta_0 = \frac{\alpha_1 \lambda^* \sqrt{\log 2}}{2} + \frac{\alpha_3\sqrt{2({\lambda^*}^2 + K_\nu^2) \log 2}}{2}
    \]
    and
    \[
    \eta_1 + 2\eta_2 = \alpha_2 + \frac{\alpha_1}{2 \lambda^* \sqrt{\log 2}} + 2 \alpha_4 + \frac{\alpha_3}{\sqrt{2({\lambda^*}^2 + K_\nu^2) \log 2}}
    \]
    and
    \[
    \eta_2 = \alpha_4  + \frac{\alpha_3}{2\sqrt{2({\lambda^*}^2 + K_\nu^2) \log 2}}\,. 
    \]
    Hence, simplifying constants and using $\lambda^*, K_\nu \leq \sqrt{{\lambda^*}^2 + K_\nu^2}$, we get
    \begin{align*}
        \sfR_{2q} (\tilde \nu_{\tilde \bQ}\tilde \bP \dvert \nu) \leq \alpha_0 + 2 \alpha_1 \lambda^* + \alpha_2 {\lambda^*}^2 + 2 \alpha_3\Big(\lambda^* + K_\nu + \sqrt{{\lambda^*}^2 + K_\nu^2}\, \Big) + 2 \alpha_4({\lambda^*}^2 + K_\nu^2)\,.
    \end{align*}
    Upper bounding $\lambda^* \leq \Delta_h$ gives us
    \[
    \sfR_{2q} (\tilde \nu_{\tilde \bQ}\tilde \bP \dvert \nu) \leq \alpha_0 + 2 \alpha_1 \Delta_h + \alpha_2 \Delta_h^2 + 2 \alpha_3\Big(\Delta_h + K_\nu + \sqrt{\Delta_h^2 + K_\nu^2}\, \Big) + 2 \alpha_4(\Delta_h^2 + K_\nu^2)\,.
    \]
    To simplify further, 
    note that the step-size constraint in Lemma~\ref{lem:Renyi_Target_OneStepGeneral} can be written as  
    \[
    h\sfp_v +1 \leq \left( 1 + \frac{1}{4(2q-1)} \right)^{\frac{1}{2}} \eqqcolon u
    \]
    and further define $s \coloneqq 2q-1$. 
    This provides the following bounds
\begin{equation}
\label{eq:alpha-upperbd}
\begin{aligned}
        \alpha_1 &\leq \sqrt{d}\,(6\sqrt{d}\,\sfj_{xy} + \sfp_{xy}u)\\
        \alpha_2 &\leq \sfp_{xy}^2(1+3su^2)\\
        \alpha_3 &\leq h\sqrt{d}\, (6\sqrt{d}\, \sfj_x + \sfp_xu)\\
        \alpha_4 &\leq h^2\sfp_x^2(1+3su^2)\,.
\end{aligned}
\end{equation}

    It remains to check what conditions are required for the constraints to hold.

\paragraph{\underline{Ensuring~\eqref{eq:Constraint-RenyiTarget-1}, i.e., $2(2q-1)(\eta_1 + 2\eta_2) \leq \frac{1}{{\lambda^*}^2}$}}

To ensure this criterion, we require
\[
2(2q-1)\left(\alpha_2 + 2 \alpha_4 +\frac{\alpha_1}{2 \lambda^* \sqrt{\log 2}} + \frac{\alpha_3}{\sqrt{2({\lambda^*}^2 + K_\nu^2) \log 2}} \right) \leq \frac{1}{{\lambda^*}^2}
\]
Using $\lambda^* \leq \sqrt{{\lambda^*}^2 + K_\nu^2}$ and that $\lambda^* \leq \Delta_h$, a sufficient condition is that
\[
2(2q-1)\left[ \alpha_2 \Delta_h^2 + 2 \alpha_4 \Delta_h^2 + \frac{\alpha_1}{2\sqrt{\log2}}\Delta_h + \frac{\alpha_3}{\sqrt{2 \log 2}}\Delta_h \right] \leq 1
\]
which can be rewritten as
\[
(\alpha_2 + 2 \alpha_4) \Delta_h^2 + \frac{\Delta_h}{\sqrt{2 \log 2}} \Big( \frac{\alpha_1}{\sqrt{2}} + \alpha_3 \Big) - \frac{1}{2(2q-1)} \leq 0\,.
\]
This is a quadratic in $\Delta_h$ and using Lemma~\ref{lem:QuadraticFact} we get that
\[
\Delta_h \leq \min \left\{ \frac{\sqrt{2 \log 2}}{4(2q-1)(\frac{\alpha_1}{\sqrt{2}}+\alpha_3)} , \frac{1}{\sqrt{8(2q-1)(\alpha_2 + 2 \alpha_4)}}  \right\}\,.
\]
A sufficient condition for this to hold is when we substitute the upper bounds from~\eqref{eq:alpha-upperbd}. Substituting these values and using $s \coloneqq 2q-1$ gives
\[
\Delta_h \leq \min \left\{ \frac{\sqrt{2 \log 2}}{4s(\frac{\sqrt{d}\,(6\sqrt{d}\,\sfj_{xy} + \sfp_{xy}u)}{\sqrt{2}}+h\sqrt{d}\, (6\sqrt{d}\, \sfj_x + \sfp_xu))} , \frac{1}{\sqrt{8s(\sfp_{xy}^2(1+3su^2) + 2 h^2\sfp_x^2(1+3su^2))}}  \right\}\,.
\]

\paragraph{\underline{Ensuring~\eqref{eq:Constraint-RenyiTarget-2}, i.e., $4(2q-1) \eta_2 \leq \frac{1}{K_\nu^2}$}}

To ensure this criterion, we require
\[
4(2q-1) \left( \alpha_4  + \frac{\alpha_3}{2\sqrt{2({\lambda^*}^2 + K_\nu^2) \log 2}} \right) \leq \frac{1}{K_\nu^2}
\]
which is the same as
\[
\frac{2\alpha_3(2q-1)K_\nu^2   }{ \sqrt{2 \log 2} (1-4\alpha_4(2q-1)K_\nu^2)    } \leq \sqrt{{\lambda^*}^2 + K_\nu^2}
\]
which is ensured by
\[
\frac{2\alpha_3(2q-1)K_\nu^2   }{ \sqrt{2 \log 2} (1-4\alpha_4(2q-1)K_\nu^2)    } \leq K_\nu\,.
\]
Rearranging this, we get that
\[
4 \sqrt{\log 2} \alpha_4(2q-1)K_\nu^2 + \sqrt{2}(2q-1)\alpha_3 K_\nu - \sqrt{\log 2} \leq 0\,.
\]
A sufficient condition for this to hold is if we substitute the upper bounds on $\alpha_3, \alpha_4$ from~\eqref{eq:alpha-upperbd}. Substituting these values and using $s \coloneqq 2q-1$ yields
\[
4 \sqrt{\log 2}\, h^2\sfp_x^2(1+3su^2)sK_\nu^2 + \sqrt{2}sh\sqrt{d}\, (6\sqrt{d}\, \sfj_x + \sfp_xu) K_\nu - \sqrt{\log 2} \leq 0
\]
which is a quadratic in $h$. Lemma~\ref{lem:QuadraticFact} then gives that
\[
h \leq \min \left\{ \frac{\sqrt{\log 2}}{2 \sqrt{2}s\sqrt{d}\, (6\sqrt{d}\, \sfj_x + \sfp_xu) K_\nu} , \frac{1}{4K_\nu\sfp_x \sqrt{(1+3su^2)s}} \right\}\,.
\]

\end{proof}

\begin{remark}[Proof strategy]
    The proof of Theorem~\ref{thm:RenyiTarget} also goes through by applying Lemma~\ref{lem:Renyi_Target_OneStepGeneral} to bound $\sfR_q(\mu \tilde \bQ^k \tilde \bP \dvert \nu)$ from the beginning. Instead, we apply the weak triangle inequality for Rényi divergence to make the contribution of the mixing time guarantee, which is via Corollary~\ref{cor:RenyiMixingGeneral}, explicit. This leads to the exponentially decaying term in Theorem~\ref{thm:RenyiTarget} closely resemble the exponential contraction in Theorem~\ref{thm:RenyiMixing}. 
\end{remark}

\section{Information Contraction}\label{app:InformationContraction}

\begin{proof}[Proof of Corollary~\ref{cor:InformationContraction-Verlet}]
    Let the joint distribution of $(X_0, X_k)$ be $(X_0, X_k) \sim \rho_{0,k}$ and the marginal distributions be $X_0 \sim \rho_0 = \mu$ and $X_k \sim \rho_k = \mu \tilde \bP_{T,h}^k$\,. Starting from Definition~\ref{def:MutualInformation}, we have
    \begin{align*}
        \MI(X_0; X_k) &= \KL(\rho_{0,k} \dvert \rho_0 \otimes \rho_k)\\
        &= \E_{x \sim \rho_0} \left[ \KL (\rho_{k \mid 0=x} \dvert \rho_k)  \right]\\
        &= \E_{x \sim \rho_0} \left[ \KL (\rho_{k \mid 0=x} \dvert \tilde \nu_h)  \right] - \KL(\rho_k \dvert \tilde \nu_h)\\
        &\leq \E_{x \sim \rho_0} \left[ \KL (\rho_{k \mid 0=x} \dvert \tilde \nu_h)  \right]\\
        &= \E_{x \sim \rho_0} \left[ \KL (\delta_x \tilde \bP_{T,h}^k \dvert \tilde \nu_h)  \right]\\
        &\leq e^{-\frac{\alpha T^2}{5}(k-1)}\Big(\frac{9}{4T^2} + 20dM^2T^4\Big) \E_{x \sim \rho_0} \E_{y \sim \tilde \nu_h} \|x-y\|^2\\
        &= e^{-\frac{\alpha T^2}{5}(k-1)}\Big(\frac{9}{4T^2} + 20dM^2T^4\Big) \E_{(X,Y) \sim \rho_0 \otimes \tilde \nu_h} \left[\|X-Y\|^2\right]
    \end{align*}
    where the second equality is by chain rule of KL divergence, the third equality can be verified by expanding both sides, the first inequality is by non-negativity of KL divergence, and the second inequality is by Theorem~\ref{thm:KLTarget} and Proposition~\ref{prop:W2mixing-Verlet}.

\end{proof}

\section{Convexity of KL Divergence and Rényi Divergence}

The following lemmas help upgrade regularization and cross-regularization guarantees from Dirac initializations to general initializations. Setting $\bP = \bQ$ helps obtain regularization guarantees (Lemmas~\ref{lem:KL_Mixing_OneStepGeneral} and~\ref{lem:Renyi_Mixing_OneStepGeneral}) which yield mixing time results, and $\bP \neq \bQ$ is used to derive cross-regularization guarantees (Lemmas~\ref{lem:KL_Target_OneStepGeneral} and~\ref{lem:Renyi_Target_OneStepGeneral}) which imply asymptotic bias bounds. The following lemmas for the case of $\bP = \bQ$ are mentioned in~\cite[Theorem~3.7]{altschuler2023shifted}.

\begin{lemma}\label{lem:KLConvexityOneStep}
    Let $\mu, \pi \in \P(\R^d)$. Then for any two Markov kernels $\bP$ and $\bQ$
    \[
    \KL(\mu \bP \dvert \pi \bQ) = \KL(\E_{x \sim \mu} [\delta_x \bP] \dvert \E_{y \sim \pi} [\delta_y \bQ]) \leq \inf_{\gamma \in \Gamma(\mu, \pi)} \E_{(x,y) \sim \gamma} \KL(\delta_x \bP \dvert \delta_y \bQ).
    \]
\end{lemma}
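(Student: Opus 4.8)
The plan is to prove this by exhibiting $\KL(\mu\bP \dvert \pi\bQ)$ as a joint convexity bound, exactly as one does for the data-processing / joint-convexity properties of relative entropy. First I would fix an arbitrary coupling $\gamma \in \Gamma(\mu,\pi)$ and write $\mu\bP = \E_{(x,y)\sim\gamma}[\delta_x\bP]$ and $\pi\bQ = \E_{(x,y)\sim\gamma}[\delta_y\bQ]$, using that the $x$-marginal of $\gamma$ is $\mu$ and the $y$-marginal is $\pi$. The middle equality in the statement is then immediate from these two identities. The inequality is the joint convexity of $\KL$: for any probability space $(\Omega,\mathbb{P})$ and measurable families $\omega \mapsto \rho_\omega$, $\omega \mapsto \sigma_\omega$ of probability measures,
\[
\KL\!\Bigl(\textstyle\int \rho_\omega \,\mathbb{P}(d\omega) \,\Big\|\, \int \sigma_\omega\,\mathbb{P}(d\omega)\Bigr) \;\le\; \int \KL(\rho_\omega \dvert \sigma_\omega)\,\mathbb{P}(d\omega).
\]
Applying this with $\Omega = \R^d\times\R^d$, $\mathbb{P} = \gamma$, $\rho_{(x,y)} = \delta_x\bP$, and $\sigma_{(x,y)} = \delta_y\bQ$ yields $\KL(\mu\bP\dvert\pi\bQ) \le \E_{(x,y)\sim\gamma}\KL(\delta_x\bP\dvert\delta_y\bQ)$. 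Since $\gamma$ was arbitrary, taking the infimum over $\gamma \in \Gamma(\mu,\pi)$ gives the claim.

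The only real content is justifying the joint-convexity inequality above. The cleanest route is the Donsker–Varadhan variational formula (already recalled in the excerpt as~\eqref{eq:DV}): for any measurable $g$,
\[
\E_{\mu\bP}[g] - \log\E_{\pi\bQ}[e^g] = \E_{(x,y)\sim\gamma}\Bigl(\E_{\delta_x\bP}[g] - \log\E_{\pi\bQ}[e^g]\Bigr) \le \E_{(x,y)\sim\gamma}\Bigl(\E_{\delta_x\bP}[g] - \log\E_{\delta_y\bQ}[e^g]\Bigr) \le \E_{(x,y)\sim\gamma}\KL(\delta_x\bP\dvert\delta_y\bQ),
\]
where the first inequality uses that $\log\E_{\delta_y\bQ}[e^g] \le \log\E_{\pi\bQ}[e^g]$ fails in general — so instead I would argue more carefully: write $\log \E_{\pi\bQ}[e^g] = \log \E_{(x,y)\sim\gamma} \E_{\delta_y\bQ}[e^g] \ge \E_{(x,y)\sim\gamma}\log\E_{\delta_y\bQ}[e^g]$ by Jensen applied to the concave function $\log$. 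Combining, $\E_{\mu\bP}[g] - \log\E_{\pi\bQ}[e^g] \le \E_{(x,y)\sim\gamma}\bigl(\E_{\delta_x\bP}[g] - \log\E_{\delta_y\bQ}[e^g]\bigr) \le \E_{(x,y)\sim\gamma}\KL(\delta_x\bP\dvert\delta_y\bQ)$, and taking the supremum over $g$ on the left yields $\KL(\mu\bP\dvert\pi\bQ) \le \E_{(x,y)\sim\gamma}\KL(\delta_x\bP\dvert\delta_y\bQ)$.

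I expect the main (minor) obstacle to be bookkeeping around the possibility that $\KL(\delta_x\bP\dvert\delta_y\bQ) = \infty$ on a non-null set, in which case the right-hand side is $+\infty$ and there is nothing to prove, so one may assume it is $\gamma$-integrable; and the measurability of $(x,y)\mapsto\KL(\delta_x\bP\dvert\delta_y\bQ)$, which follows from the variational representation as a supremum of measurable functions (taken over a countable separating family of bounded $g$). An alternative to Donsker–Varadhan is to invoke the standard joint convexity of $(\rho,\sigma)\mapsto\KL(\rho\dvert\sigma)$ together with its behavior under mixtures (e.g.\ \cite[Theorem~2.7.2]{polyanskiywu_book}-type statements), which the paper already cites; I would pick whichever the authors prefer for consistency. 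No deep idea is needed — this is the abstract mechanism that makes the ``Dirac-to-general'' upgrade work, and the proof is three or four lines once the variational formula is in hand.
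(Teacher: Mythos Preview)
Your proposal is correct and follows essentially the same approach as the paper: fix an arbitrary coupling $\gamma$, write $\mu\bP$ and $\pi\bQ$ as $\gamma$-mixtures of the kernels $\delta_x\bP$ and $\delta_y\bQ$, invoke joint convexity of $\KL$, and take the infimum over $\gamma$. The only difference is that the paper treats joint convexity as a black-box fact, whereas you additionally spell out a Donsker--Varadhan justification (via Jensen on $\log\E_{\pi\bQ}[e^g]$), which is correct and a reasonable expansion of the paper's one-line argument.
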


\begin{proof}
    The equality follows by definition of $\mu \bP$ and $\pi \bQ$. To see the inequality, fix any coupling $\gamma \in \Gamma(\mu, \pi)$. The inequality then holds for $\gamma$ due to the joint convexity of KL divergence. As it holds for any coupling, taking the infimum over couplings yields the result.
\end{proof}

\begin{lemma}\label{lem:RenyiConvexityOneStep}
    Let $\mu, \pi \in \P(\R^d)$. Then for $q > 1$ and any two Markov kernels $\bP$ and $\bQ$
    \[
    \sfR_q(\mu \bP \dvert \pi \bQ)  \leq \inf_{\gamma \in \Gamma(\mu, \pi)} \frac{1}{q-1} \log \E_{(x,y) \sim \gamma} \left[ e^{(q-1) \sfR_q (\delta_x \bP \dvert \delta_y \bQ)}  \right]\,.
    \]
\end{lemma}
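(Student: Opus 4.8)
The plan is to fix an arbitrary coupling $\gamma\in\Gamma(\mu,\pi)$ and exploit the joint convexity of the exponentiated Rényi functional $(\rho_1,\rho_2)\mapsto\int\rho_1^q\rho_2^{1-q}$ in the pair of densities. Denote by $p_x(\cdot)$ the density of $\delta_x\bP$ and by $q_y(\cdot)$ the density of $\delta_y\bQ$. Since $\gamma$ has first marginal $\mu$ and second marginal $\pi$, we have $(\mu\bP)(z)=\E_{(x,y)\sim\gamma}[p_x(z)]$ and $(\pi\bQ)(z)=\E_{(x,y)\sim\gamma}[q_y(z)]$. By Definition~\ref{def:RenyiDivergence},
\[
e^{(q-1)\sfR_q(\mu\bP\dvert\pi\bQ)}=\int_{\R^d}(\mu\bP)(z)^q\,(\pi\bQ)(z)^{1-q}\,dz,
\]
and likewise $e^{(q-1)\sfR_q(\delta_x\bP\dvert\delta_y\bQ)}=\int_{\R^d}p_x(z)^q q_y(z)^{1-q}\,dz$, with the usual conventions placing both quantities in $[0,\infty]$ when absolute continuity fails.

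The key step is that, for $q>1$, the function $F(a,b):=a^q b^{1-q}$ is jointly convex on $(0,\infty)^2$: writing $F(a,b)=b\,(a/b)^q$ exhibits it as the perspective of the convex map $t\mapsto t^q$, and the perspective of a convex function is jointly convex. Fixing $z$ and applying Jensen's inequality to the random variables $a=p_x(z)$, $b=q_y(z)$ under $\gamma$ gives
\[
(\mu\bP)(z)^q\,(\pi\bQ)(z)^{1-q}=F\big(\E_\gamma[p_x(z)],\,\E_\gamma[q_y(z)]\big)\le\E_{(x,y)\sim\gamma}\big[p_x(z)^q q_y(z)^{1-q}\big],
\]
valid in $[0,\infty]$. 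Integrating over $z\in\R^d$ and interchanging the order of integration by Tonelli's theorem (all integrands are nonnegative) yields
\[
e^{(q-1)\sfR_q(\mu\bP\dvert\pi\bQ)}\le\E_{(x,y)\sim\gamma}\Big[\int_{\R^d}p_x(z)^q q_y(z)^{1-q}\,dz\Big]=\E_{(x,y)\sim\gamma}\big[e^{(q-1)\sfR_q(\delta_x\bP\dvert\delta_y\bQ)}\big].
\]
Taking $\tfrac{1}{q-1}\log$ of both sides (monotone; trivial if the right side is $+\infty$) gives the claimed bound for the fixed $\gamma$, and taking the infimum over $\gamma\in\Gamma(\mu,\pi)$ completes the proof. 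This mirrors the proof of Lemma~\ref{lem:KLConvexityOneStep}, with joint convexity of the KL functional replaced by joint convexity of the Hellinger-type integral.

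The only delicate point — and the main (mild) obstacle — is the bookkeeping of the $[0,\infty]$-valued arithmetic and the absolute-continuity edge cases: one should extend $F$ to the boundary of $[0,\infty)^2$ by lower semicontinuity (so $F(a,0)=+\infty$ for $a>0$), verify that Jensen's inequality still applies to this extended convex function, and observe that finiteness of the right-hand side forces $\delta_x\bP\ll\delta_y\bQ$ for $\gamma$-a.e.\ $(x,y)$ and hence $\mu\bP\ll\pi\bQ$, so the inequality is consistent with Definition~\ref{def:RenyiDivergence}. Joint measurability of $(x,y,z)\mapsto p_x(z)^q q_y(z)^{1-q}$, which is needed to make sense of the outer expectation and to invoke Tonelli, follows from the measurability hypotheses on the kernels $\bP,\bQ$. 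Everything else is routine.
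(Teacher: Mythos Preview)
Your proof is correct and follows essentially the same approach as the paper: both arguments define the Hellinger-type functional $F_q(\rho\dvert\nu)=\int\rho^q\nu^{1-q}$, invoke the joint convexity of $(a,b)\mapsto a^q b^{1-q}$ to apply Jensen's inequality under an arbitrary coupling $\gamma$, then take $\tfrac{1}{q-1}\log$ and infimize over couplings. Your write-up is slightly more detailed (the perspective-function justification of convexity, the explicit Tonelli step, and the $[0,\infty]$ edge cases), but the core idea is identical to the paper's.
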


\begin{proof}

    For distributions $\rho, \nu \in \P(\R^d)$, define
    \[
    F_q(\rho \dvert \nu) \coloneqq  \E_{\nu} \left[ \left( \frac{\rho}{\nu} \right)^q \right] = \int \rho(z)^q \nu(z)^{1-q} \dz\,,
    \]
    so that the Rényi divergence (Definition~\ref{def:RenyiDivergence}) is $\sfR_q(\rho \dvert \nu) = \frac{1}{q-1} \log F_q(\rho \dvert \nu)$\,.
    Note that the functional $F_q(\rho \dvert \nu)$ is jointly convex with respect to $(\rho, \nu)$ as the function $g(u,v) = u^q v^{1-q}$ is jointly convex for $u,v \geq 0$, $q>1$.

    Fix a coupling $\gamma \in \Gamma (\mu, \pi)$. Since $F_q$ is jointly convex, we have
    \[
    F_q(\mu \bP \dvert \pi \bQ) \leq \E_{(x,y) \sim \gamma}[F_q(\delta_x \bP \dvert \delta_q \bQ)]\,.
    \]
    
    From the definition of $F_q$, we get
    \[
    \exp((q-1) \sfR_q(\mu \bP \dvert \pi \bQ)) \leq \E_{(x,y) \sim \gamma} [\exp((q-1) \sfR_q(\delta_x \bP \dvert \delta_y \bQ))]\,.
    \]
    
    Taking logarithm on both side and dividing by $q-1$ yields
    \[
    \sfR_q(\mu \bP \dvert \pi \bQ)  \leq  \frac{1}{q-1} \log \E_{(x,y) \sim \gamma} \left[ e^{(q-1) \sfR_q (\delta_x \bP \dvert \delta_y \bQ)}  \right]\,.
    \]

    As the inequality holds for an arbitrary coupling $\gamma \in \Gamma(\mu, \pi)$, it must hold for the infimum over couplings, yielding the lemma.

\end{proof}

\section{Cross-regularization via uLA}\label{app:uLA-CrossRegularization}

The unadjusted Langevin Algorithm (uLA)~\cite{roberts1996exponential, roberts1998optimal, dalalyan2017further,  durmus2019high, VW23} to sample from target distribution $\nu \propto e^{-f}$ on $\R^d$, starting from $X_0 \sim \mu$, is defined by the following update 
\[
X_{k+1} = X_k - \eta \nabla f(X_k) + \sqrt{2\eta}\,  \xi_k
\]
where $k \geq 0$ is an integer, $\eta > 0$ is the step size, and $\xi_k \sim \gammad$ for all $k \geq 0$. As $\eta \to 0$, we obtain the Langevin dynamics given by
\[
\d X_t = -\nabla f(X_t) \dt + \sqrt{2} \d W_t
\]
where $W_t$ is standard Brownian motion. The Langevin dynamics admits the target distribution $\nu$ as its stationary distribution, and we denote the stationary distribution of uLA by $\tilde \nu_{\eta}$. Let $\tilde \bP_{\eta}$ denote the transition kernel for one step of uLA, and let $\bP_{\eta}$ denote the transition kernel for the Langevin dynamics for time $\eta$. Recall that $\tilde \bP_{T,h}$ denotes the \uhmcv transition kernel, $\tilde \bQ_{T,h}$ denotes the \uhmcs transition kernel, and the kernel for \ehmc is $\bP_T$. 

Our goal is to obtain asymptotic bias bounds for two-step Markov chains where the Wasserstein contraction is driven by \uhmc and the cross-regularization is provided by uLA (see Section~\ref{sec:Discussion}), and study the corresponding gradient complexities to output a sample $X$ such that $\sfD(\law(X) \dvert \nu) \leq \epsilon$ for a strongly log-concave target distribution $\nu$, accuracy $\epsilon > 0$, and divergence $\sfD \in \{ \KL, \sfR_q\}$.
As we want to study the dependence on the dimension $d$ and accuracy $\epsilon$ in the final gradient complexities, we simplify the subsequent analysis and include dependence only on the relevant parameters which are the dimension $d$, uLA step size $\eta$, and the \uHMC step size $h$.
We also use $\gtrsim, \lesssim$ to hide absolute constants.

\subsection{Asymptotic bias in KL divergence}\label{app:uLA-CR-KL}

A cross-regularization bound for uLA is obtained in~\cite[Lemma~4.7]{altschuler2024shifted} and they show that under Assumption~\ref{assumption:potential}(b), and for $\eta \lesssim \frac{1}{L}$, the following holds for all $x,y \in \R^d$
\[
\KL(\delta_x \tilde \bP_\eta \dvert \delta_y \bP_\eta) \lesssim \frac{\|x-y\|^2}{\eta} + \eta^3 \|\nabla f(x)\|^2 + \eta^2 d\,.
\]
Taking expectation on both sides with respect to $(x,y)$ drawn from the optimal Wasserstein--$2$ coupling between two distributions $\mu, \pi \in \P(\R^d)$, and using Lemma~\ref{lem:KLConvexityOneStep} yields
\begin{equation}\label{eq:ULA_RegularityGeneral}
\KL(\mu \tilde \bP_{\eta} \dvert \pi \bP_\eta) \lesssim \frac{W_2^2(\mu, \pi)}{\eta} + \eta^3 \E_{\mu}[\| \nabla f(x)\|^2] + \eta^2 d\,,
\end{equation}
giving us the analogue of Lemma~\ref{lem:KL_Target_OneStepGeneral} for uLA. Similar to the proof of Theorem~\ref{thm:KLTarget}, we will now incorporate Wasserstein--$2$ mixing and Wasserstein--$2$ bias guarantees. We consider \uhmcv and \uhmcs separately, assuming strong log-concavity of the target distribution.

\paragraph{Wasserstein contractivity driven by \uhmcv.} Taking $\mu = \rho \tilde \bP_{T,h}^k$ and $\pi = \nu$ (for an initial distribution $\rho$ and the target distribution $\nu$) in~\eqref{eq:ULA_RegularityGeneral}, together with triangle inequality yields
\[
\KL(\rho \tilde \bP_{T,h}^k \tilde \bP_\eta \dvert \nu) \lesssim \frac{W_2^2(\rho \tilde \bP_{T,h}^k, \tilde \nu_h)}{\eta} + \frac{W_2^2(\tilde \nu_h, \nu)}{\eta} + \eta^3 \E_{\rho \tilde \bP_{T,h}^k}[\| \nabla f(x)\|^2] + \eta^2 d\,.
\]
Propositions~\ref{prop:W2mixing-Verlet} and~\ref{prop:W2bias-Verlet}, in addition with $\E_{\rho \tilde \bP_{T,h}^k}[\| \nabla f(x)\|^2] = O(d)$ (see~\cite[Lemma~C.3]{altschuler2024shifted}) yields the following
\[
\KL(\rho \tilde \bP_{T,h}^k \tilde \bP_\eta \dvert \nu) \lesssim \frac{e^{-k} W_2^2(\rho , \tilde \nu_h)}{\eta} + \frac{h^4d^2}{\eta} + \eta^3 d + \eta^2 d\,.
\]
To make the right hand side within the desired accuracy $\epsilon$, take $h \asymp \epsilon^{\nicefrac{3}{8}} d^{-\nicefrac{5}{8}}$ and $\eta \asymp \sqrt{\nicefrac{\epsilon}{d}}$, which combined with $k = O(\log(\nicefrac{d}{\epsilon}))$ many iterations, yields a first-order oracle complexity of $O(d^{\nicefrac{5}{8}}\epsilon^{-\nicefrac{3}{8}} \log (\nicefrac{d}{\epsilon}))$. This is presented in Table~\ref{table:complexity}, Row~8.

\paragraph{Wasserstein contractivity driven by \uhmcs.} We start off by taking $\mu = \rho \tilde \bP_{T,h}^k$ and $\pi = \nu$ (for an initial distribution $\rho$ and the target distribution $\nu$) in~\eqref{eq:ULA_RegularityGeneral}, which together with triangle inequality yields
\[
\KL(\rho \tilde \bQ_{T,h}^k \tilde \bP_\eta \dvert \nu) \lesssim \frac{W_2^2(\rho \tilde \bQ_{T,h}^k, \tilde \nu_{\tilde \bQ_{T,h}})}{\eta} + \frac{W_2^2(\tilde \nu_{\tilde \bQ_{T,h}}, \nu)}{\eta} + \eta^3 \E_{\rho \tilde \bQ_{T,h}^k}[\| \nabla f(x)\|^2] + \eta^2 d\,.
\]
Propositions~\ref{prop:shmc-mixing} and~\ref{prop:shmc-bias} together with~\cite[Lemma~C.3]{altschuler2024shifted} yields
\[
\KL(\rho \tilde \bP_{T,h}^k \tilde \bP_\eta \dvert \nu) \lesssim \frac{e^{-k} W_2^2(\rho , \tilde \nu_{\tilde \bQ_{T,h}})}{\eta} + \frac{h^3d}{\eta} + \eta^3 d + \eta^2 d\,.
\]
Given desired accuracy $\epsilon$, pick $h \asymp \sqrt{\nicefrac{\epsilon}{d}}$, $\eta \asymp \sqrt{\nicefrac{\epsilon}{d}}$, which for $k = O(\log(\nicefrac{d}{\epsilon}))$ iterations yields an oracle complexity of $O(d^{\nicefrac{1}{2}}\epsilon^{-\nicefrac{1}{2}} \log (\nicefrac{d}{\epsilon}))$. This corresponds to Table~\ref{table:complexity}, Row~9.

\subsection{Asymptotic bias in Rényi divergence}\label{app:uLA-CR-Renyi}

A regularization property for uLA in Rényi divergence is described in~\cite[(3.14)]{altschuler2023shifted}, where they show that for all $x,y \in \R^d$
\[
\sfR_q (\delta_x \tilde \bP_\eta \dvert \delta_y \tilde \bP_\eta) \lesssim \frac{\|x-y\|^2}{\eta}\,.
\]
Applying Lemma~\ref{lem:RenyiConvexityOneStep} implies that for any $\mu, \pi \in \P(\R^d)$,
\[
\sfR_q(\mu \tilde \bP_\eta \dvert \pi \tilde \bP_\eta) \lesssim \inf_{\gamma \in \Gamma(\mu, \pi)} \log \E_{(x,y) \sim \gamma} \left[ e^{\frac{\|x-y\|^2}{\eta}}   \right]\,.
\]
Applying this to $\mu = \rho \tilde \bP_{T,h}^k$ (for some initial distribution $\rho$) and $\pi = \tilde \nu_h$ (where $\tilde \nu_h$ is the stationary distribution of \uhmcv) yields
\[
\sfR_q(\rho \tilde \bP_{T,h}^k \tilde \bP_\eta \dvert \tilde \nu_h \tilde \bP_\eta) \lesssim \inf_{\gamma \in \Gamma(\rho \tilde \bP_{T,h}^k, \tilde \nu_h)} \log \E_{(x,y) \sim \gamma} \left[ e^{\frac{\|x-y\|^2}{\eta}}   \right] = \log \inf_{\gamma \in \Gamma(\rho \tilde \bP_{T,h}^k, \tilde \nu_h)} \E_{(x,y) \sim \gamma} \left[ e^{\frac{\|x-y\|^2}{\eta}}   \right].
\]
Under an Orlicz-Wasserstein mixing assumption for \uhmcv (Proposition~\ref{prop:OWmixing-Verlet}), and following similar calculations as in the proof of Theorem~\ref{thm:RenyiMixing}, we see that
\begin{equation}\label{eq:uLA-CR-Renyi-Mixing}
\sfR_q(\rho \tilde \bP_{T,h}^k \tilde \bP_\eta \dvert \tilde \nu_h \tilde \bP_\eta) \lesssim  \frac{e^{-k}\,W_\psi (\rho, \tilde \nu_h)}{\eta}
\end{equation}
as long as $k \gtrsim \log \frac{W_\psi(\rho, \tilde \nu_h)}{\eta}$\,.
This yields the analogue of Corollary~\ref{cor:RenyiMixingGeneral} for uLA cross-regularization and specified to Wasserstein contraction driven by \uhmcv.

We now want to prove the analogue of Theorem~\ref{thm:RenyiTarget} for our setting and begin by stating a cross-regularization property for uLA in Rényi divergence~\cite[Lemma~C.2]{altschuler2024shifted}, where they show that under Assumption~\ref{assumption:potential}(b) and for $\eta \lesssim \frac{1}{L}$, for all $x,y \in \R^d$,
\begin{equation}\label{eq:uLA-CR-Dirac-Renyi}
\sfR_q(\delta_x \tilde \bP_\eta \dvert \delta_y \bP_\eta) \lesssim \frac{\|x-y\|^2}{\eta} + \eta^3 \|\nabla f(x)\|^2 + \eta^2 d\,.
\end{equation}

Applying the weak triangle inequality for Rényi divergence~\eqref{eq:RenyiWTI} we have that
\[
\sfR_q(\rho \tilde \bP_{T,h}^k \tilde \bP_\eta \dvert \nu) \lesssim \sfR_{2q}(\rho \tilde \bP_{T,h}^k \tilde \bP_\eta \dvert \tilde \nu_h \tilde \bP_\eta) + \sfR_{2q}(\tilde \nu_h \tilde \bP_\eta \dvert \nu \bP_\eta)\,.
\]
The first term on the right-hand side is controlled by~\eqref{eq:uLA-CR-Renyi-Mixing}, and for the second term we will proceed from~\eqref{eq:uLA-CR-Dirac-Renyi}.
To that end, Lemma~\ref{lem:RenyiConvexityOneStep} along with~\eqref{eq:uLA-CR-Dirac-Renyi} yields
\[
\sfR_{2q} (\tilde \nu_h \tilde \bP_\eta \dvert \nu) \lesssim  \inf_{\gamma \in \Gamma (\tilde \nu_h, \nu)} \log \E_{(x,y) \sim \gamma} \left[  e^{\frac{\|x-y\|^2}{\eta} + \eta^3 \|\nabla f(x)\|^2 + \eta^2 d} \right]
\]
Assumptions~\ref{assumption:potential}(a)--(b) allow us to bound this by
\[
\sfR_{2q} (\tilde \nu_h \tilde \bP_\eta \dvert \nu ) \lesssim  \inf_{\gamma \in \Gamma (\tilde \nu_h, \nu)} \log \E_{(x,y) \sim \gamma} \left[  e^{\frac{\|x-y\|^2}{\eta} + \eta^3 \|x\|^2 + \eta^2 d} \right]
\]
which can be further simplified to
\[
\sfR_{2q} (\tilde \nu_h \tilde \bP_\eta \dvert \nu ) \lesssim \eta^2 d +  \log \inf_{\gamma \in \Gamma (\tilde \nu_h, \nu)} \E_{(x,y) \sim \gamma} \left[  e^{\|x-y\|^2(\frac{1}{\eta} + 2 \eta^3) + 2\|y\|^2 \eta^3} \right]\,.
\]
Using Hölder's inequality we have
\[
\sfR_{2q} (\tilde \nu_h \tilde \bP_\eta \dvert \nu ) \lesssim \eta^2 d +  \log \inf_{\gamma \in \Gamma (\tilde \nu_h, \nu)} \underbrace{\left(\E_{(x,y) \sim \gamma} \left[  e^{2\|x-y\|^2(\frac{1}{\eta} + 2 \eta^3)} \right]\right)^{1/2}}_{\rn{1}} \underbrace{\left( \E_{y \sim \nu} \left[ e^{ 4\|y\|^2 \eta^3} \right] \right)^{1/2}}_{\rn{2}}\,.
\]
Following similar steps as in the proof of Theorem~\ref{thm:RenyiTarget} to handle $\rn{1}$ and $\rn{2}$, we obtain the following, which holds under suitable assumptions on the step sizes and also uses the Orlicz-Wasserstein bias guarantee for \uhmcv (Proposition~\ref{prop:OWbias-Verlet})
\[
\sfR_{2q} (\tilde \nu_h \tilde \bP_\eta \dvert \nu ) \lesssim \eta^2 d + \Big(\frac{1}{\eta} +  \eta^3\Big) h^2 d + \eta^3 K_\nu^2\,.
\]
Considering $K_\nu = O(\sqrt{d})$, we therefore obtain
\[
\sfR_{2q} (\tilde \nu_h \tilde \bP_\eta \dvert \nu ) \lesssim \eta^2 d + \Big(\frac{1}{\eta} +  \eta^3\Big) h^2 d + \eta^3 d\,.
\]

This yields to an overall asymptotic bias guarantee of
\[
\sfR_q(\rho \tilde \bP_{T,h}^k \tilde \bP_\eta \dvert \nu) \lesssim  \frac{e^{-k}\,W_\psi (\rho, \tilde \nu_h)}{\eta} + \eta^2 d + \Big(\frac{1}{\eta} +  \eta^3\Big) h^2 d + \eta^3 d\,.
\]

To make the total error within accuracy $\epsilon$, we set $\eta \asymp (\epsilon/d)^{1/2}$ and $h \asymp (\epsilon / d)^{3/4} $ to get an overall gradient complexity of $O(d^{3/4} \epsilon^{-3/4} \log (d/\epsilon))$, shown in Table~\ref{table:complexity}, Row~10.

\section{Helper Lemmas}

\begin{lemma}\label{lem:Helper_Chi_d}
Let $V\sim\gammad$ and set $m_d \coloneqq \E\|V\|$.
Then for any $c\ge 0$,
\[
\E\big[e^{c\|V\|}\big]
\le
\exp\!\left(c\,m_d+\frac{c^2}{2}\right).
\]
In particular, since $m_d\le \sqrt d$,
\[
\E\big[e^{c\|V\|}\big]
\le
\exp\!\left(c\sqrt d+\frac{c^2}{2}\right).
\]
\end{lemma}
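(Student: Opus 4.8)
The plan is to exploit the fact that $v\mapsto\|v\|$ is a $1$-Lipschitz function on $\R^d$ and combine this with the standard Gaussian concentration bound for Lipschitz functions. First I would record the Lipschitz property: by the reverse triangle inequality, the map $g(v)\coloneqq\|v\|$ satisfies $|g(v)-g(w)|\le\|v-w\|$ for all $v,w\in\R^d$. Then the key input is the sub-Gaussian moment generating function estimate for Lipschitz functions of a Gaussian vector: for any $1$-Lipschitz $g:\R^d\to\R$ and $V\sim\gammad$,
\[
\E\bigl[e^{t(g(V)-\E g(V))}\bigr]\le e^{t^2/2},\qquad t\in\R .
\]
This may be cited directly as the classical Tsirelson--Ibragimov--Sudakov concentration inequality (see the Gaussian concentration material in~\cite{vershynin2018high}); alternatively, a self-contained derivation runs the Herbst argument on the Gaussian log-Sobolev inequality, applying it to $f=e^{tg/2}$ (using $\|\nabla g\|\le 1$ a.e.) to get $\Ent_{\gammad}(e^{tg})\le\tfrac{t^2}{2}\E[e^{tg}]$, which with $F(t)\coloneqq\log\E[e^{tg(V)}]$ rearranges to $\tfrac{d}{dt}\bigl(F(t)/t\bigr)\le\tfrac12$ and integrates (using $F(t)/t\to\E g(V)$ as $t\to0^+$) to $F(t)\le t\,\E g(V)+t^2/2$.

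Given this, the remaining steps are purely mechanical. Applying the display with $t=c\ge0$, $g=\|\cdot\|$, and $\E g(V)=m_d$ gives
\[
\E\bigl[e^{c\|V\|}\bigr]=e^{c\,m_d}\,\E\bigl[e^{c(\|V\|-m_d)}\bigr]\le e^{c\,m_d}e^{c^2/2}=\exp\!\Bigl(c\,m_d+\tfrac{c^2}{2}\Bigr),
\]
which is the first claim. For the "in particular" statement, Jensen's inequality (concavity of $\sqrt{\,\cdot\,}$) yields $m_d=\E\|V\|\le(\E\|V\|^2)^{1/2}=\bigl(\sum_{i=1}^d\E V_i^2\bigr)^{1/2}=\sqrt d$, and since $c\ge0$ this gives $\exp(c\,m_d+c^2/2)\le\exp(c\sqrt d+c^2/2)$.

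The only nontrivial ingredient is the sub-Gaussian MGF bound of the first step; everything else is elementary rearrangement and a one-line Jensen estimate. Thus the main obstacle is essentially a presentational choice — whether to cite Gaussian concentration as a black box or to include the short Herbst/log-Sobolev derivation — rather than a genuine mathematical difficulty.
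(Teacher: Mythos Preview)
Your proposal is correct and follows essentially the same approach as the paper: both observe that $v\mapsto\|v\|$ is $1$-Lipschitz, invoke the Gaussian concentration MGF bound $\E[e^{c(g(V)-\E g(V))}]\le e^{c^2/2}$, and finish with Jensen's inequality for $m_d\le\sqrt d$. The only differences are cosmetic (you cite Tsirelson--Ibragimov--Sudakov/Vershynin and sketch the optional Herbst derivation, while the paper cites Ledoux).
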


\begin{proof}
The function $f(v)=\|v\|$ is $1$--Lipschitz. For a standard Gaussian vector
$V\sim\gammad$, the Gaussian concentration inequality \cite{Ledoux2001}
states that for any $c\in\R$,
\[
\E\exp\!\big(c(f(V)-\E f(V))\big)\le \exp\!\left(\frac{c^2}{2}\right).
\]
Multiplying by $\exp(c\,\E f(V))$ and using $f(V)=\|V\|$ gives
\[
\E e^{c\|V\|}
\le
\exp\!\left(c\,\E\|V\|+\frac{c^2}{2}\right).
\]
Finally, Jensen's inequality yields $\E\|V\|\le (\E\|V\|^2)^{1/2}=\sqrt d$.
\end{proof}

\begin{lemma}\label{lem:QuadraticFact}
    Let $f(x) = ax^2 + bx + c$ be a quadratic function in $x \in \R_{\geq 0}$ with $a > 0$, $b>0$, $c<0$. Then for $f(x) \leq 0$ it suffices to have
    \[
    x \leq \min \left\{ \frac{-c}{2b}, \sqrt{\frac{-c}{4a}}  \right\}\,.
    \]
\end{lemma}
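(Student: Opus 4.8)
The statement is a simple sufficient-condition lemma for a quadratic inequality, so the proof is short and elementary. The plan is to show that if $x$ is bounded by the minimum of the two proposed quantities, then each of the two "active" terms $bx$ and $ax^2$ is at most $-c/2$, so their sum is at most $-c$, and adding $c$ gives $f(x)\le 0$.

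\begin{proof}
Assume $x \ge 0$ satisfies
\[
x \le \min\left\{ \frac{-c}{2b},\ \sqrt{\frac{-c}{4a}} \right\}.
\]
Since $c < 0$ we have $-c > 0$, and since $a,b > 0$ both quantities on the right-hand side are well-defined and positive.

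From $x \le \frac{-c}{2b}$ and $b > 0$ we obtain $bx \le \frac{-c}{2}$. From $x \le \sqrt{\frac{-c}{4a}}$ and $a > 0$ we obtain, upon squaring (valid since both sides are nonnegative), $x^2 \le \frac{-c}{4a}$, hence $ax^2 \le \frac{-c}{4} \le \frac{-c}{2}$, where the last inequality uses $-c > 0$. Adding these two bounds gives
\[
ax^2 + bx \le \frac{-c}{2} + \frac{-c}{2} = -c,
\]
and therefore
\[
f(x) = ax^2 + bx + c \le -c + c = 0,
\]
as claimed.
\end{proof}

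The argument is entirely routine; there is no real obstacle. The only point requiring a moment's care is that squaring the bound $x \le \sqrt{-c/(4a)}$ preserves the inequality, which holds because $x \ge 0$ and the right-hand side is nonnegative. One could also note that the lemma does not claim this condition is necessary, only sufficient, so no sharpness analysis is needed.
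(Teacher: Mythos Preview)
Your proof is correct and follows essentially the same approach as the paper: both bound $bx$ and $ax^2$ separately using the two conditions and then add. The paper keeps the slightly sharper estimate $ax^2 \le -c/4$ to conclude $f(x) \le c/4 < 0$, whereas you relax this to $ax^2 \le -c/2$ and obtain $f(x) \le 0$, but this is an inessential difference.
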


\begin{proof}
    Suppose 
    \[
    x \leq \min \left\{ \frac{-c}{2b}, \sqrt{\frac{-c}{4a}}  \right\}\,.
    \]
    Then $bx \leq -\frac{c}{2}$ and $ax^2 \leq -\frac{c}{4}$. Hence $f(x) = ax^2 + bx + c \leq \frac{c}{4}$ which is $\leq 0$ as $c < 0$. 
\end{proof}

\addcontentsline{toc}{section}{References}
\bibliographystyle{alpha}
\bibliography{refs}

\end{document}